\newtheorem{theorem}{Theorem}[section]
\newtheorem{proposition}{Proposition}[section]
{
	\theoremstyle{plain}
	\newtheorem{assumption}{A.\hspace{-1.1mm}}
}
\newtheorem{lemma}[theorem]{Lemma}
\theoremstyle{definition}
\newtheorem{definition}[theorem]{Definition}
\definecolor{Gray}{gray}{0.5}
\definecolor{LightCyan}{rgb}{0.88,1,1}
\newcommand{\expect}{{\rm I\!E}}
\newcommand{\real}{{\rm I\!R}}
\newcommand{\integer}{{\rm I\!N}}
\newcommand{\prob}{{\rm I\!P}}
\newcommand{\median}{{\rm I\!M}}
\newcommand{\ball}{{\rm I\!B}}
\let\exp\relax
\DeclareMathOperator{\exp}{exp}
\algnewcommand{\IfThenElse}[3]{
	\State \algorithmicif\ #1\ \algorithmicthen\ #2\ \algorithmicelse\ #3}
\let \expect \relax
\newcommand{\expect}{{\rm I\!E}}
\newcommand{\df}{\stackrel{\text{def}}{=}}
\theoremstyle{remark}
\newtheorem{remark}[theorem]{Remark}
\definecolor{Gray}{RGB}{192,192,192}
\newcommand{\vertiii}[1]{{\left\vert\kern-0.25ex\left\vert #1 
		\right\vert\kern-0.25ex\right\vert\kern}}
\numberwithin{equation}{section}
\begin{document}
	
	\title[A Robust Optimization Method for Adversarial Multiple Kernel Learning]{A Distributionally Robust Optimization Method for Adversarial Multiple Kernel Learning}

	
	

	\author[Masoud Badiei Khuzani, et al.]{\hspace{-1mm}Masoud Badiei Khuzani$^{\dagger}$, Hongyi Ren$^{\ddagger}$, MD Tauhidul Islam$^{\ddagger}$, Lei Xing$^{\ddagger}$
}

	\address{Stanford University, 450 Serra Mall, Stanford, CA 94305\\
		\\	
	}
	\curraddr{}
	\email{mbadieik,hongyi,devan,tauhid,lei@stanford.edu}

	{\let\thefootnote\relax\footnotetext{
			\hspace{-4mm}$^{\dagger}$Department of Management Science, 	Stanford University, CA 94043, USA.\\
			$^{\ddagger}$Department of Radiation Oncology, Stanford University, CA 94043, USA.\\
			\textit{Revised paper. Current Revision: April 2021, Last Edition: December 2018.}
			\vspace{2mm}
		\vspace{-.2mm}
	}}

\begin{abstract}
We propose a novel data-driven method to learn a mixture of multiple kernels with random features that is certifiabaly robust against adverserial inputs. Specifically, we consider a distributionally robust optimization of the kernel-target alignment with respect to the distribution of training samples over a distributional ball defined by the Kullback-Leibler (KL) divergence. The distributionally robust optimization problem can be recast as a min-max optimization whose objective function includes a log-sum term. We develop a mini-batch biased stochastic primal-dual proximal method to solve the min-max optimization. To debias the minibatch algorithm, we use the Gumbel perturbation technique to estimate the log-sum term. We establish theoretical guarantees for the performance of the proposed multiple kernel learning method. In particular, we prove the consistency, asymptotic normality, stochastic equicontinuity, and the minimax rate of the empirical estimators.  In addition, based on the notion of Rademacher and Gaussian complexities, we establish distributionally robust generalization bounds that are tighter than previous known bounds. More specifically, we leverage matrix concentration inequalities to establish distributionally robust generalization bounds. We validate our kernel learning approach for classification with the kernel SVMs on synthetic dataset generated by sampling multvariate Gaussian distributions with differernt variance structures. We also apply our kernel learning approach to the MNIST data-set and evaluate its robustness to perturbation of input images under different adversarial models. More specifically, we examine the robustness of the proposed kernel model selection technique against FGSM, PGM, C\&W, and DDN adversarial perturbations, and compare its performance with alternative state-of-the-art multiple kernel learning paradigms. 
\end{abstract}
	\maketitle
\pagestyle{plain}

\section{Introduction}

\lettrine{\textbf{K}}ernel methods are a class of statistical machine learning algorithms that capture the non-linear relationship between the representations of input data and class labels. The kernel methods circumvent explicit feature mapping that is required to learn a non-linear function or decision boundary in linear learning algorithms. Instead, these methods only rely on the inner product of feature maps in the feature space, which is often known as the ``kernel trick" in machine learning literatures. In the kernel methods, the underlying kernel function must be specified by the user which poses a model selection problem. In practice, good representations of data are unknown a priori, and it is thus unclear how to select a good kernel function. In particular, choosing a good kernel often entails some domain knowledge about the underlying learning application. Multiple kernel learning (MKL) methods aim to address this model selection issue by learning a mixture model of a set of user-defined base kernels.

While MKL techniques in conjunction with the kernel SVMs often yield a good accuracy in experiments with standard test (out-of-sample) data, they are susceptible to perturbed inputs commonly referred to as \textit{adversarial examples} \cite{szegedy2013intriguing}. For instance, in imaging applications, it has been observed that a certain hardly perceptible perturbation to the input image, which is found by maximizing the classifer’s prediction error, causes the machine learning model to mis-classify an image that is conveniently recognizable by human.  Nevertheless, such adversarial examples are primarily devised to showcase the vulnerabilities of the deep neural networks to adversarial inputs rather than the kernel SVMs. As a result, methodologies that are developed in the literature to mitigate the impact of such adversarial examples on the prediction accuracy of deep neural networks are inadequate for the kernel SVMs.

To ensure the robustness of the kernel SVM models against adversarial inputs, we propose a novel MKL procedure. In particular, we propose a novel kernel learning algorithm based on a distributionally robust optimization procedure with respect to a distribution ball measured by the KL divergence, and centered at the distribution of training data. We show that such distributional optimization procedure can be characterized as the standard min-max optimization problem that can be solved efficiently via the primal-dual proximal algorithms in conjunction with the Monte-Carlo sample average approximation. We provide theoretical guarantees for the consistency and the asymptotic normlality of the proposed optimization procedure. We also establish novel distributionally robust generalization bounds based on the notions of Rademacher and Gaussian complexities of function classes.

\subsection{Related works}
We situate our results in the connection to the multiple kernel learning, adversarial training, and distributionally robust optimization problem.
\subsubsection{Multiple kernel learning} Multiple kernel learning for classification problems using kernel SVMs has been studied extensively over the past decade. Cortes, \textit{et al}. \cite{cortes2009new,cortes2010two,cortes2012algorithms} studied a kernel learning procedure from the class of mixture of base kernels.  They have also studied the generalization bounds of the proposed methods. The same authors have also studied a two-stage kernel learning in \cite{cortes2010two} based on a notion of alignment. The first stage of this technique consists of learning a kernel that is a convex combination of a set of base kernels. The second stage consists of using the learned kernel with a standard kernel-based learning algorithm such as SVMs to select a prediction hypothesis.  In \cite{lanckriet2004learning}, the authors have proposed a semi-definite programming for the kernel-target alignment problem. An alternative approach to kernel learning is to sample random features from an arbitrary distribution (without tuning its hyper-parameters) and then apply a supervised feature screening method to distill random features with high discriminative power from redundant random features.

The current paper is also closely related to the multiple kernel learning of Yang, \textit{et al.} \cite{yang2012multiple} and Sinha and Duchi \cite{sinha2016learning}. In \cite{yang2012multiple}, a primal-dual proximal algorithm for the multiple kernel learning from training data with noisy class labels is proposed. More specifically, the work of \cite{yang2012multiple} proposes a chance constraint optimization problem to deal with data-sets in which binary class labels are noisy. Our multiple kernel learning framework generalizes \cite{yang2012multiple} by providing robustness against perturbations of both class labels and input features. The work of Sinha and Duchi \cite{sinha2016learning} proposes a distributionally robust optimization with respect to $\chi^{2}$-divergence for the weights of the mixture model in multiple kernel learning. In contrast, in this paper, we consider a distributionally robust optimization with respect to KL divergence of the training data distribution. Therefore, while the work of Sinha and Duchi \cite{sinha2016learning} can be considered as a form of robust feature selection for the Rahimi and Recht random features \cite{rahimi2008random,rahimi2009weighted}, our work can be viewed through the lense of adverserial robustness against the perturbation of the input features and their class labels.

\subsubsection{Distributionally robust optimization} There are different paradigms in the distributional robust optimization literature corresponding to different methodologies that are used to characterize the uncertainty distribution set, such as constraint sets for moments, support, or directional deviations  \cite{chen2007robust,delage2010distributionally,duchi2016statistics}, distributional balls with respect to the $f$-divergences \cite{ben2013robust,bertsimas2018data,lam2015quantifying,namkoong2016stochastic,sinha2016learning}, and  with respect to the Wasserstein distance \cite{esfahani2018data,blanchet2019robust,gao2016distributionally,blanchet2017data,abadeh2015distributionally,sinha2017certifying}.  In this paper, we leverage a distributionally robust optimization with respect to the KL divergence distributional ball. Such distributional optimization procedure is closely related to the notion of the \textit{tilted empirical risk minimization} (TERM) \cite{li2020tilted}. Specifically, TERM is a generalization of the standard empirical risk minimization, where the empirical loss function is parameterized with ``tilt".  In the work of \cite{li2020tilted}, the tilt parameter can take any real values, and its value is determined by a grid search over some set of candidate values.  In our framework, the tilt is a non-negative value that emerges as a dual variable in the primal-dual characterisation of the distributionally robust optimization. Therefore, the value of the tilt in our framework can be determined in a principled manner from an optimization problem.

\subsubsection{Adversarial training}  In the past few years, several empirical defenses have been proposed for training classifiers to be robust against adversarial perturbations \cite{kurakin2016adversarial,madry2017towards,miyato2018virtual,papernot2016distillation,samangouei2018defense,zhang2018efficient}. These robust classifiers are devised to mitigate a particular adversarial model, and as such, they can still be vulnerable against alternative adversarial examples \cite{athalye2018robustness,athalye2018obfuscated,uesato2018adversarial}. To overhaul the dichotomy between defenses and attacks, certified defenses have been popularized, where the objective is to train classifiers whose predictions are provably robust; see, \textit{e.g.}, \cite{bunel2018unified,carlini2017adversarial,cheng2017maximum,croce2019provable,dutta2018output,dvijotham2018training,ehlers2017formal} for a non-exhasutive list.  Another line of defense work focuses on randomized smoothing where the prediction is robust within some region around the input with a user-defined probability \cite{cao2017mitigating,cohen2019certified,lecuyer2019certified,li2018certified}. All the aforementioned methods, however, are primarily concerned with deep neural networks training, and cannot be applied to kernel SVMs.

\subsection{Paper outline} The rest of this paper is organized as follows:

\begin{itemize}[leftmargin=*]
	\item \textit{Empirical Risk Minimization in Reproducing Kernel Hilbert Spaces}: In Section \ref{Section:Preliminaries of Kernel Methods in Classification and Regression}, we review some preliminaries regarding the empirical risk minimization in reproducing kernel Hilbert spaces. We also provide the notion of the kernel-target alignment for optimizing the kernel in support vector machines (SVMs). We then characterize a distributionally robust optimization problem for multiple kernel learning. 
	
	\item \textit{Primal-Dual Method for Distributionally Robust Optimization}: In Section \ref{Section:Primal-Dual Method},  we describe a procedure to transform the distributionally robust optimization into a min-max optimization problem. We then apply a Gumbel perturbation in conjunction with SAA to devise a stochastic primal-dual method.
	
	\item \textit{Consistency, Min-Max Rate, and Robust Generalization Bounds}: In Section \ref{Section:Theoretical_Results}, we provide the theoretical guarantees for the kernel learning algorithm. In particular, we establish the non-asymptotic consistency of the finite sample approximations. 
	
	\item \textit{Empirical Evaluation on Synthetic and Benchmark Data-Sets}: In Section \ref{Section:Empirical Evaluation for Classification of Real-World Datasets}, we evaluate the performance of  
    our proposed kernel learning model on synthetic and benchmark data-sets. 	
    
\end{itemize}

\section{Preliminaries and the Optimization Problem for Kernel Learning
} 
\label{Section:Preliminaries of Kernel Methods in Classification and Regression}

In this section, we review preliminaries of kernel methods in classification and regression problems. 

\subsection{Reproducing kernel Hilbert spaces (RKHS)}

Let $\mathcal{X}$ be a metric space. A \textit{Mercer kernel} on $\mathcal{X}$ is a continuous and symmetric function $K:\mathcal{X}\times \mathcal{X}\rightarrow \real$ such that for any finite set of points $\{\bm{x}_{1},\cdots,\bm{x}_{N}\}\subset \mathcal{X}$, the kernel matrix $(K(\bm{x}_{i},\bm{x}_{j}))_{1\leq i,j\leq N}$ is positive semi-definite.  The \textit{reproducing kernel Hilbert space} (RKHS) $\mathcal{H}_{K}$ associated with the kernel $K$ is  the completion of the linear span of the set of functions $\{K_{\bm{x}}\df K(\bm{x},\cdot), \bm{x}\in \mathcal{X}\}$ with the inner product structure $\langle\cdot,\cdot \rangle_{\mathcal{H}_{K}}$ defined by $\langle K_{\bm{x}_{i}},K_{\bm{x}_{j}} \rangle_{\mathcal{H}_{K}}=K(\bm{x}_{i},\bm{x}_{j})$. That is 
\begin{align}
\left\langle \sum_{i=1}^{N}\alpha_{i}K_{\bm{x}_{i}},\sum_{j=1}^{N}\beta_{j}K_{\bm{x}_{j}} \right\rangle_{\mathcal{H}_{K}}=\sum_{i,j=1}^{N}\alpha_{i}\beta_{j}K(\bm{x}_{i},\bm{x}_{j}).
\end{align}
The \textit{reproducing property} takes the following form
\begin{align}
\langle K_{\bm{x}},f\rangle_{\mathcal{H}_{K}}=f(\bm{x}), \quad \forall \bm{x}\in \mathcal{X}, f\in \mathcal{H}_{K}.
\end{align}

In this paper, we focus on the supervised learning of classifiers from RKHS. In the classical supervised learning settings, we are given $n$ feature vectors and their corresponding univariate class labels  $(\bm{x}_{1},y_{1}),\cdots,(\bm{x}_{n},y_{n})\sim_{\text{i.i.d.}} P^{0}_{\bm{X},Y}$,  $(\bm{x}_{i},y_{i})\in \mathcal{Z}\df \mathcal{X}\times \mathcal{Y}\subset \real^{d}\times \real$. For the binary classification and regression tasks, the target spaces are given by $\mathcal{Y}=\{-1,1\}$ and $\mathcal{Y}=\real$, respectively. Given a loss function $\ell:\mathcal{Y}\times \real\rightarrow \real$, a classifier $f$ is learned from the function class $\mathcal{F}$ by minimizing the empirical risk with a quadratic regularization term,
\begin{align}
\label{Eq:Empirical_Loss_Minimization}
\inf_{f\in \mathcal{F}}\widehat{R}[f]\df \dfrac{1}{n}\sum_{i=1}^{n} \ell(y_{i},f(\bm{x}_{i}))+\dfrac{\lambda}{2}\|f\|_{\mathcal{F}}^{2},
\end{align}
where $\|\cdot\|_{\mathcal{F}}$ is a function norm, and $\lambda>0$ is the parameter of the regularization.  In the adversarial learning models, the out-of-sample (test) data distribution $\bm{z}_{i}=(y_{i},\bm{x}_{i})\sim P\df P_{\bm{X},Y}$ deviate from the in sample data distribution $Q\df P_{\bm{X}^{0},Y^{0}}$, and there is a ``cost" $c:\mathcal{Z}\times \mathcal{Z}\rightarrow \real_{+}\cup \{+\infty\}$ associated with such a perturbation.  In the adversarial learning litearture,  the quadratic norm $c(\bm{z}_{i},\bm{z}_{i}^{0})=\|\bm{z}_{i}-\bm{z}_{i}^{0}\|_{p}^{2}$ with $p=\infty$ or $p=2$ is typically imposed on the actual test samples $\bm{z}_{i}^{0}\sim Q$ and their corresponding perturbations $\bm{z}_{i}\sim P$ \cite{sinha2017certifying}. However, such a cost function is oblivious of the underlying generative distribution of samples. Alternatively, the cost function can penalize the shift of the joint distribution of the test samples $c(\bm{z},\bm{z}^{0})=D_{f}(P||Q)$ for all realizations of the samples and their perturbations $\bm{z},\bm{z}^{0}\in \mathcal{Z}\times \mathcal{Z}$, where $D_{f}(\cdot||\cdot)$ is the $f$-divergence. In the sequel, we focus on the Kullback-Leibler divergence  $D_{\mathrm{KL}}(\cdot||\cdot)$ between two distributions $P$ and $Q$, namely
\begin{align}
	\nonumber
	D_{\mathrm{KL}}(P||Q)\df \int_{\mathcal{X}\times \mathcal{Y}} \log\left(\dfrac{\mathrm{d}P}{\mathrm{d}Q} \right)\mathrm{d}P,
\end{align}
when $P$ is absolutely continuous with respect to $Q$ (denoted by $P\ll Q$), and $+\infty$ otherwise.  Now, consider a Reproducing Kernel Hilbert Space (RKHS) $\mathcal{H}_{K}$ with the kernel function $K:\mathcal{X}\times \mathcal{X}\rightarrow \real$, and suppose $\mathcal{F}=\mathcal{H}_{K}\oplus 1$. Then, using the expansion $f(\bm{x})=\omega_{0}+\sum_{i=1}^{n-1}\omega_{i}K(\bm{x},\bm{x}_{i})$, and optimizing the kernel over the kernel class $\mathcal{K}$ yields the following penalized primal population optimization
	\begin{align}
\label{Eq:setup}
\min_{\bm{\omega}\in \real^{n}}\min_{K\in \mathcal{K}} \expect_{P}\left[\ell\left(y,\omega_{0}+\sum_{i=1}^{n-1}\omega_{i}K(\bm{x},\bm{x}_{i}) \right)+\dfrac{\lambda}{2}\|\bm{\omega}\|_{2}^{2}\right]- \rho D_{\mathrm{KL}}(P||Q),
	\end{align}
where the risk function is caliberated with the cost function $c(\bm{z},\bm{z}^{0})=D_{\mathrm{KL}}(P(\bm{z})||Q(\bm{z}^{0}))$ associated with the perturbations. The risk minimization in Eq. \eqref{Eq:setup} is the Lagrangian form of the following distributionally robust optimization problem
	\begin{align}
	\label{Eq:setup}
	\min_{\bm{\omega}\in \real^{n}}\min_{K\in \mathcal{K}}\inf_{P\in \mathcal{P}} \expect_{P}\left[\ell\left(y,\omega_{0}+\sum_{i=1}^{n-1}\omega_{i}K(\bm{x},\bm{x}_{i}) \right)+\dfrac{\lambda}{2}\|\bm{\omega}\|_{2}^{2}\right],
\end{align}
where $\mathcal{P}\df \{P\in \mathcal{M}(\mathcal{Z}): D_{\mathrm{KL}}(P||Q)\leq r, P\ll Q \}$ is a distributional ball centered at $Q$. In the particular case of the soft margin SVM classifier $\ell(y,z)=[1-yz]_{+}\df \max\{0,1-yz\}$, the primal and dual \textit{empirical} risk optimizations can be written as follows
\begin{subequations}
	\label{Eq:Suggest}
	\begin{align}
	&\text{Primal:}\min_{\bm{\omega}\in \real^{n}}\min_{K\in \mathcal{K}}\inf_{\widehat{P}^{n}\in \mathcal{P}^{n}}\expect_{\widehat{P}^{n}}\left[1-\omega_{0}y+\sum_{i=1}^{n-1}\omega_{i}yK(\bm{x},\bm{x}_{i})\right]_{+}\hspace{-3mm}+\dfrac{\lambda}{2}\|\bm{\omega}\|_{2}^{2},    \\ 
	&\text{Dual:}\max_{\bm{\alpha}\in \real^{n}}\min_{\gamma_{0},\bm{\gamma},\tilde{\bm{\gamma}} } \min_{K\in \mathcal{K}} \inf_{\widehat{P}^{n}\in \mathcal{P}^{n}} F(\bm{\alpha},\bm{y},\bm{\gamma},\tilde{\bm{\gamma}},\gamma_{0}) -\dfrac{1}{2}\mathrm{Tr}(\bm{K}(\bm{\alpha}\odot \bm{y})(\bm{\alpha}\odot \bm{y})^{T}),
	\end{align}
\end{subequations}
where $\odot$ is Hadamard's (element-wise) product,  and $F(\bm{\alpha},\bm{y},\bm{\gamma},\tilde{\bm{\gamma}},\gamma_{0})\df \langle \bm{\alpha},\bm{1} \rangle-\gamma_{0}\langle \bm{\alpha},\bm{y} \rangle-\bm{\gamma}(C\bm{1}-\bm{\alpha})^{T}+\tilde{\bm{\gamma}}\bm{\alpha}^{T} $, where $\gamma_{0}\in \real_{+},\bm{\gamma}\in \real_{+}^{n}$, and $\tilde{\bm{\gamma}}\in \real_{+}^{n}$. In Equation \eqref{Eq:Suggest}, $\widehat{P}^{n}={1\over n}\sum_{i=1}^{n}\delta_{\bm{x}_{i},y_{i}}(\bm{x},y)$ is the empirical measure associated with the observed (training) samples, and $\widehat{\mathcal{P}}^{n}\df \{\widehat{P}^{n}\in \mathcal{M}(\mathcal{Z}): D_{\mathrm{KL}}(\widehat{P}^{n}||\widehat{Q}^{n})\leq r, \widehat{P}^{n}\ll \widehat{Q}^{n} \}$ is the empirical distribution ball centered at the empirical measure $\widehat{Q}^{n}$. 

\subsection{MKL as the importance sampling of the random features}
The form of the dual optimization in Eq. \eqref{Eq:Suggest} suggests that for a fixed dual vector $\bm{\alpha}\in \real^{n}$ and a tuple of Lagrange multipliers $(\gamma_{0},\bm{\gamma},\tilde{\bm{\gamma}})\in \real_{+}\times \real_{+}^{n}\times \real_{+}^{n}$, the optimal kernel can be computed by optimizing the following unbiased statistics known as the \textit{kernel-target alignment}, \textit{i.e.}
\begin{align}
\label{Eq:kernel_target}
&\max_{K\in \mathcal{K}}\min_{\widehat{P}^{n}\in \mathcal{P}^{n}} \expect_{\widehat{P}^{n^{\otimes 2}}}[y\tilde{y}K(\tilde{\bm{x}},\bm{x})]=\dfrac{2}{n(n-1)} \sum_{1\leq i\leq j\leq n}y_{i}y_{j}K(\bm{x}_{i},\bm{x}_{j}).
\end{align}
The empirical optimization problem in Eq. \eqref{Eq:kernel_target} is a unbiased estimator of the following population value
\begin{align}
\label{Eq:Maximization_over_kernel}
\max_{K\in \mathcal{K}} \inf_{P\in \mathcal{P}} \expect_{P^{\otimes 2}}\big[y\tilde{y}K(\bm{x},\tilde{\bm{x}})\big],
\end{align}
Now, consider the class of the mixture kernels 
\begin{align}
\nonumber
&\mathcal{K}_{m}\df \Big\{K:\mathcal{X}\times \mathcal{X}\rightarrow \real:K(\bm{x},\bm{x}')=\sum_{i=1}^{m}\omega_{i}K_{i}(\bm{x},\bm{x}'),\bm{\omega}\in \mathrm{S}^{+}_{m} \Big\},
\end{align}
where $K_{i},i=1,2,\cdots,m$ are the set of known base kernels (\textit{e.g.} Gaussian kernels with different bandwidth parameters), and $\mathrm{S}_{m}^{+}\df \{\bm{\omega}\in \real^{m}:\langle \bm{\omega},\bm{1} \rangle=1,\bm{\omega}\succeq \bm{0}\}$ is the simplex of the probability distribution. To establish the connection between the multiple kernel learning and the importance sampling of the random feature distribution, we require the following two definitions:

\begin{definition}(\textsc{Bochner's Representation of the Shift Invariant Kernels})
\label{Definition:Translation_Invariant_Kernel}	
A kernel $K:\mathcal{X}\rightarrow \mathcal{X}\rightarrow \real$ is \textit{translation invariant} if there exists a symmetric positive definite function, $\psi$ such that $K(\bm{x},\bm{y})=\psi(\bm{x}-\bm{y})$ for all $\bm{x},\bm{y}\in \real^{d}$. Bochner's theorem \cite{bochner2005harmonic} provides a complete characterization of a
positive definite function $\psi:\real^{d}\rightarrow \real$. A continuous function $\psi:\real^{d}\rightarrow \real$ is positive definite if it admits the integral representation
\begin{align}
\psi(\bm{x})=\dfrac{1}{(2\pi)^{d/2}}\int_{\real^{d}}e^{-i\langle \bm{\xi},\bm{x}\rangle}\mu(\mathrm{d}\bm{\xi}), \quad \forall\bm{x}\in \real^{d}, \mu\in \mathcal{B}_{+}(\real^{d}),
\end{align}
where $\mathcal{B}_{+}(\real^{d})$ is the set of all finite  non-negative
Borel measures on $\real^{d}$.
\end{definition}

\begin{definition}(\textsc{Sch\"{o}nberg's representation of the Radial Kernels})
		\label{Definition:Radial_Kernel}	
A translation invariant kernel $K:\mathcal{X}\times \mathcal{X}\rightarrow \real$ is a \textit{radial kernel} if there exists a function $\psi:\real^{d}\rightarrow \real$ such that $K(\bm{x},\bm{y})=\psi(\|\bm{x}-\bm{y}\|_{2})$. The Sch\"{o}nberg's representation theorem \cite{schoenberg1938metric} provides a complete characterization of a
positive definite RBF kernel $\psi:\real\rightarrow \real$. A radial basis function $\psi:\real\rightarrow \real$ is positive definite if it admits the integral representation
\begin{align}
\label{Eq:Sch}
\psi(\|\bm{x}\|_{2})=\int_{\real_{+}}e^{-s\|\bm{x}\|_{2}^{2}}\nu(\mathrm{d}s), \quad \forall\bm{x}\in \real^{d}, \nu\in \mathcal{B}_{+}(\real_{+}).
\end{align}
\end{definition}

We consider a kernel optimization scheme in conjunction with the random features model of Rahimi and Recht \cite{rahimi2008random,rahimi2009weighted} for the translation invariant kernels. Specifically, let $\varphi: \real^{d}\times \real^{D} \rightarrow [-1, 1]$ denotes the explicit feature maps and $\mu\in \mathcal{M}(\real^{D})$ denotes a probability measure from the space of probability measures $\mathcal{M}(\real^{D})$ on $\real^{D}$. The kernel function is characterized via the random feature maps using Bochner's theorem (cf. \cite{rahimi2008random,rahimi2009weighted})
\begin{align}
\label{Eq:Generative_Distribution}
K_{\mu}(\bm{x},\tilde{\bm{x}})&=\int_{\Xi} \varphi(\bm{x};\bm{\xi})\varphi(\tilde{\bm{x}};\bm{\xi})\mu(\mathrm{d}\bm{\xi}), \quad \forall \bm{x},\tilde{\bm{x}}\in \mathcal{X}\subset \real^{d}.
\end{align}
For RBF kernels in Definition \ref{Definition:Radial_Kernel} with the spectral density $\nu$, it can be shown that 
\begin{subequations}
\label{Eq:II}	
\begin{align}
\bm{\xi}&=(\bm{\zeta},b)\in \Xi=\real^{d}\times [-\pi,\pi], \\
 \bm{\xi}&\sim  \mu(\mathrm{d}\bm{\xi})=\pi(\bm{\zeta})\otimes \mathrm{Uniform}[-\pi,\pi],\\
\varphi(\bm{x},\bm{\xi})&=\cos(\langle (\bm{x},1),(\bm{\zeta},b)\rangle),
\end{align}
\end{subequations}
where $\pi(\mathrm{d\bm{\zeta}})\df \int_{0}^{\infty}\pi_{s}(\mathrm{d}\bm{\zeta})\nu(\mathrm{d}s)$, and $\pi_{s}(\mathrm{d}\bm{\zeta})={1\over (4\pi s)^{d\over 2}}\exp\left(-{ \|\bm{\zeta}\|_{2}^{2}\over 4s}\right)\mathrm{d}\bm{\zeta}$. Let $\mu_{1},\cdots,\mu_{m}$ denotes the distribution of random features corresponding to the kernels $K_{1},\cdots,K_{m}$, \textit{i.e.}, $K_{i}=K_{\mu_{i}},i=1,2,\cdots,m$. Then, we can recast the multiple kernel learning in terms of the distribution of random features random features
\begin{align}
	\label{Eq:Jeff}
	\hspace{-4mm}\max_{\bm{\omega}\in \mathrm{S}_{m}^{+}}\inf_{P\in \mathcal{P}}\expect_{P^{\otimes 2}}\left[{y\tilde{y}}\int_{\Xi} \varphi(\bm{x};\bm{\xi})\varphi(\tilde{\bm{x}};\bm{\xi})\mu_{\bm{\omega}}(\mathrm{d}\bm{\xi})\right].
	\end{align}
where the optimization in Eq. \eqref{Eq:Jeff} corresponds to the importance sampling of the random feature distribution $\mu_{\bm{\omega}}(\bm{\xi})\df \sum_{i=1}^{m}\omega_{i}\mu_{i}(\bm{\xi})$.

\subsection{Connections to the maximum mean discrepancy (MMD)}
\label{Eq:Connections to the maximum mean discrepancy (MMD)}
The distributionally robust optimization in Eq. \eqref{Eq:Jeff} can also be interpreted as the problem of finding a kernel that maximizes the distance between the marginal distributions of the features given their class labels in the settings that the marginals are adversarially chosen to minimize their distance. To provide a more precise description, in the following definition we formalize the notion of discrepancy between two distributions:
\begin{definition}\textsc{(Maximum Mean Discrepancy \cite{gretton2012kernel})}
	\label{Def:MMD}
	Let $(\mathcal{Z},d)$ be a metric space, $\mathcal{F}$ be a class of functions $f:\mathcal{Z}\rightarrow \real$, and $P,Q\in \mathcal{M}(\mathcal{Z})$ be two probability measures from the set of all Borel probability measures $\mathcal{M}(\mathcal{Z})$ on $\mathcal{Z}$. The maximum mean discrepancy (MMD) between the distributions $P$ and $Q$ with respect to the function class $\mathcal{F}$ is defined below
	\begin{align}
	\label{Eq:different_choices}
	\mathrm{MMD}_{\mathcal{F}}[P,Q]\df \sup_{f\in \mathcal{F}}\int_{\mathcal{Z}} f(\bm{z})(P-Q)(\mathrm{d}\bm{z}).
	\end{align}
\end{definition}

When the function class $\mathcal{F}$ is restricted to a RKHS norm ball $\mathcal{F}\df \{f\in \mathcal{H}_{K}:\|f\|_{\mathcal{H}_{K}}\leq 1\}$, the kernel MMD can be rewritten as follows
\begin{align}
\nonumber
\mathrm{MMD}_{K}[P,Q]&=\sup_{f:\|f\|_{\mathcal{H}_{K}}\leq 1}\expect_{P}[f]-\expect_{Q}[f] \\ \nonumber
&=\|\mu_{P}-\mu_{Q}\|_{\mathcal{H}_{K}}\\ 
&=\Big(\expect_{P^{\otimes 2}}[K(\bm{z},\bm{z}')]+\expect_{Q^{\otimes 2}}[K(\bm{s},\bm{s}')]-2\expect_{P,Q}[K(\bm{z},\bm{s})]\Big)^{1\over 2},
\end{align}
where $\mu_{P},\mu_{Q}\in \mathcal{H}_{K}$ are the kernel mean embeddings of the distributions $P$ and $Q$, \textit{i.e.},
\begin{subequations}
\begin{align}
\nonumber
\mu_{P}\df \int_{\mathcal{X}}K(\cdot,\bm{x})\mathrm{d}P(\bm{x}),\quad
\mu_{Q}\df \int_{\mathcal{X}}K(\cdot,\bm{x}) \mathrm{d}Q(\bm{x}).
\end{align}
\end{subequations}
Hence, $\expect_{P}[f]=\langle f,\mu_{P}\rangle_{\mathcal{H}_{K}}$, and $\expect_{Q}[f]=\langle f,\mu_{Q}\rangle_{\mathcal{H}_{K}}$, for all  functions $f:\mathcal{X}\rightarrow \real$. Let $\mathcal{Z}=\mathcal{X}\times \mathcal{Y}$ and $P\in \mathcal{M}(\mathcal{Z})$. Furthermore, consider the conditional distributions $P_{+}\df P_{\bm{x}|y=+1}$ and $P_{-}\df P_{\bm{x}|y=1}$. Then, using the Rahimi and Recht random feature model \cite{rahimi2008random,rahimi2009weighted}, $K_{\mu_{\bm{\omega}}}(\bm{x},\tilde{\bm{x}})=\int_{\Xi}\varphi(\bm{x};\bm{\xi})\varphi(\tilde{\bm{x}};\bm{\xi})\mu_{\bm{\omega}}(\mathrm{d}\bm{\xi})$, the following equality holds
\begin{align}
\mathrm{MMD}^{2}_{K_{\bm{\omega}}}[P_{+}, P_{-}]=\expect_{P^{\otimes 2}}\left[\int_{\Xi}y\tilde{y}\varphi(\bm{x};\bm{\xi})\varphi(\tilde{\bm{x}};\bm{\xi})\mu_{\bm{\omega}}(\mathrm{d}\bm{\xi})\right].
\end{align}
Furthermore, for the joint distributions $P,Q\in \mathcal{M}(\mathcal{Z})$ with the marginals $P(y)$ and $Q(y)$, respectively, we obtain that
\begin{align}
D_{\mathrm{KL}}(P(\bm{x},y)|| Q(\bm{x},y))=D_{\mathrm{KL}}(P(y)||Q(y))+\expect_{P(y)}\big[D_{\mathrm{KL}}(P(\bm{x}|y)||Q(\bm{x}|y))\big]. 
\end{align}
Suppose the restriction $P(y)=Q(y)$ applies to the marginals so that  $D_{\mathrm{KL}}(P(y)||Q(y))=0$. Under this restriction, the class labels in the training data-set are the ground truths and only the input features are adverserially perturbed. Furthermore, consider a balanced training data-set $Q(y=+1)=Q(y=-1)={1\over 2}$. Then, the KL divergence admits the following decomposition
\begin{align}
\nonumber
D_{\mathrm{KL}}(P(\bm{x},y)|| Q(\bm{x},y))&=\dfrac{1}{2}D_{\mathrm{KL}}(P(\bm{x}|y=+1)||Q(\bm{x}|y=+1))\\ \nonumber
&\hspace{4mm}+\dfrac{1}{2}D_{\mathrm{KL}}(P(\bm{x}|y=-1)||Q(\bm{x}|y=-1))\\  \label{Eq:Identity}
&=\dfrac{1}{2}D_{\mathrm{KL}}(P_{+}\otimes P_{-}||Q_{+}\otimes Q_{-}).
\end{align}
where $Q_{+}\df Q_{\bm{x}|y=+1}$, $Q_{-}\df Q_{\bm{x}|y=-1}$, and $P_{+}\otimes P_{-}$ and $Q_{+}\otimes Q_{-}$ denotes the tensor product of the distributions. Now, define the following distribution ball on the product space $\mathcal{M}(\mathcal{X})\times \mathcal{M}(\mathcal{X})$ of the marginal distributions:
\begin{align}
\label{Eq:distribution_ball_of_marginals}
\mathcal{P}_{\pm}\df \{(P_{+},P_{-})\in \mathcal{M}(\mathcal{X})\times \mathcal{M}(\mathcal{X}): D_{\mathrm{KL}}(P_{+}\otimes P_{-}||Q_{+}\otimes Q_{-})\leq 2r\}.
\end{align}
Due to  the identity \eqref{Eq:Identity}, $P\in \mathcal{P}$ implies $(P_{+},P_{-})\in \mathcal{P}_{\pm}$ and vice versa. Consequently, in the case of the balanced data-set and under the restriction that the class labels are not adverserially chosen $P(y)=Q(y)$, the kernel-target alignment in Eq. \eqref{Eq:Jeff} can be recast as the following MMD optimization problem
\begin{align}
\label{Eq:Plainly_explained}
\max_{\bm{\omega}\in \mathrm{S}_{+}^{m}}\min_{(P_{+},P_{-})\in \mathcal{P}_{\pm}} \mathrm{MMD}^{2}_{K_{\bm{\omega}}}[P_{+},P_{-}].
\end{align}
In words, Equation \eqref{Eq:Plainly_explained} says that the best mixture of base kernels is the one that increases the distance between the conditional distributions of features of each class label, as measured by the kernel MMD, when such conditional distributions are chosen adverserially to minimize the discrepancy.

\section{Biased and Unbiased Stochastic Primal-Dual Method to Solve the Distributionally Robust Optimization Problem}
\label{Section:Primal-Dual Method}

The distributional optimization problem in Eq. \eqref{Eq:Jeff} is intractable. To derive a solvabale optimization problem, we consider three main approximation steps in the sequel: (\textit{i}) a change of measure techniques due to \cite{hu2013kullback}, (\textit{ii}) a Monte-Carlo sample average approximation with respect to the random feature samples and data-samples, and (\textit{iii})  a Gumbel perturbation technique for the log-sum approximation. Each step is described in the sequel.

\subsection{Change of measure}

Define the Radon-Nikodym derivative $L=\mathrm{d}^{2}P^{\otimes 2}/\mathrm{d}^{2}Q^{\otimes 2}$. Then, we observe that $L(\bm{x},y)\geq 0, \forall \bm{x},y\in \mathcal{X}\times \mathcal{Y}$, and $\expect_{Q^{\otimes 2}}[L]=1$. Moreover, the KL divergence can be rewritten as follows
\begin{align}
\nonumber
D_{\mathrm{KL}}(P^{\otimes 2}||Q^{\otimes 2})&=\int_{\mathcal{X}\times \mathcal{Y}} \log\left(\dfrac{\mathrm{d}P^{\otimes 2}}{\mathrm{d}Q^{\otimes 2}}\right)\dfrac{\mathrm{d}P^{\otimes 2}}{\mathrm{d}Q^{\otimes 2}}\mathrm{d}Q^{\otimes 2}\\ \label{Eq:Change_of_measure_1}
&=\expect_{Q^{\otimes 2}}[L\log(L)].
\end{align}
The objective function in Eq. \eqref{Eq:Jeff} can also be rewritten in terms of $L$, as follows
\begin{align}
\nonumber
\expect_{P^{\otimes 2}}\left[{y\tilde{y}}\int_{\Xi} \varphi(\bm{x};\bm{\xi})\varphi(\tilde{\bm{x}};\bm{\xi})\mu_{\bm{\omega}}(\mathrm{d}\bm{\xi})\right]&=\int_{\mathcal{X}\times \mathcal{Y}}y\tilde{y}\int_{\Xi}\varphi(\bm{x};\bm{\xi})\varphi(\tilde{\bm{x}};\bm{\xi})\mu_{\bm{\omega}}(\mathrm{d}\bm{\xi})\mathrm{d}P^{\otimes 2}\\ \nonumber
&=\int_{\mathcal{X}\times \mathcal{Y}}y\tilde{y}\int_{\Xi}\varphi(\bm{x}_{0};\bm{\xi})\varphi(\bm{x}_{1};\bm{\xi})\mu_{\bm{\omega}}(\mathrm{d}\bm{\xi})\dfrac{\mathrm{d}P^{\otimes 2}}{\mathrm{d}Q^{\otimes 2}}\mathrm{d}Q^{\otimes 2}\\ \label{Eq:Change_of_measure_2}
&=\expect_{Q^{\otimes 2}}\left[L{y\tilde{y}}\int_{\Xi} \varphi(\bm{x};\bm{\xi})\varphi(\tilde{\bm{x}};\bm{\xi})\mu_{\bm{\omega}}(\mathrm{d}\bm{\xi})\right].
\end{align}
Using Eqs. \eqref{Eq:Change_of_measure_1}-\eqref{Eq:Change_of_measure_2}, the distributional  optimization in Eq. \eqref{Eq:Jeff} can be cast as a convex \textit{functional} optimization
\begin{subequations}
	\label{Eq:equivalent_optimization}
	\begin{align}
	\nonumber
	&\max_{\bm{\omega}\in \mathrm{S}^{+}_{m}}\min_{L\in \mathbb{L}(Q)}\expect_{Q^{\otimes 2}}\left[L{y\tilde{y}}\int_{\Xi} \varphi(\bm{x};\bm{\xi})\varphi(\tilde{\bm{x}};\bm{\xi})\mu_{\bm{\omega}}(\mathrm{d}\bm{\xi})\right]\\ \label{Eq:Constraint_1}
	&\hspace{16mm}\text{s.t.}:\expect_{Q^{\otimes 2}}[L\log L]\leq r/2,\\
	&\hspace{23mm} \expect_{Q^{\otimes 2}}[L]=1, L\geq 0,
	\end{align} 
\end{subequations}
where $\mathbb{L}(Q)$ is the set of such functionals, and in writing the constraint in Eq. \eqref{Eq:Constraint_1}, we used the \textit{tensorization} property of the KL divergence 
\begin{align}
D_{\mathrm{KL}}(P^{\otimes 2}||Q^{\otimes 2})=2D_{\mathrm{KL}}(P||Q).
\end{align}
In the sequel, let $\widehat{\mathbb{L}}(Q)=\{L\in \mathbb{L}(Q):\expect_{Q^{\otimes 2}}[L\log L]\leq r/2, \expect_{Q^{\otimes 2}}[L]=1,L\geq 0 \}$. The following lemma provides a method to solve the functional optimization problem in Eq. \eqref{Eq:Constraint_1} via the standard primal-dual optimization methods:
\begin{lemma}\textsc{(Saddle Point Characterization of Functional Optimization)}
	\label{Lemma:Saddle Point Characterization of Functional Optimization}
Consider the function $H(\lambda;\bm{\omega})$ defined below
\begin{align}
\label{Eq:Lambda}
&H(\lambda;\bm{\omega})\df \lambda \log \expect_{Q^{\otimes 2}}\left[\exp{\left(-{{y\tilde{y}\over \lambda}\expect_{\mu_{\bm{\omega}}}\left[\varphi(\bm{x};\bm{\xi})\varphi(\tilde{\bm{x}};\bm{\xi})\right]}\right)} \right]+{\lambda r\over 2}.
\end{align}		
Then, the inner functional optimization problem in Eq. \eqref{Eq:equivalent_optimization} can be recast as a minimization of the function $H(\lambda,\bm{\omega})$, namely
\begin{align}
		\label{Eq:Elon_Musk}
\min_{L\in \mathbb{L}(Q)}\expect_{Q^{\otimes 2}}\left[L{y\tilde{y}}\int_{\Xi} \varphi(\bm{x};\bm{\xi})\varphi(\tilde{\bm{x}};\bm{\xi})\mu_{\bm{\omega}}(\mathrm{d}\bm{\xi})\right]=-\min_{\lambda\in \Lambda}H(\lambda;\bm{\omega}), \quad \forall \bm{\omega}\in \mathrm{S}_{m}^{+}.
		\end{align}
Consequently, the inner distributional optimization in Eq. \eqref{Eq:Jeff} can be recast as a minimization of the function $H(\lambda,\bm{\omega})$, namely
\begin{align}
\label{Eq:Elon_Musk_1}
\inf_{P\in \mathcal{P}} \expect_{P^{\otimes 2}}\left[y\tilde{y}\int_{\Xi}\varphi(\bm{x};\bm{\xi})\varphi(\tilde{\bm{x}};\bm{\xi})\mu_{\bm{\omega}}(\mathrm{d}\bm{\xi}) \right]=-\min_{\lambda\in \Lambda}H(\lambda;\bm{\omega}),\quad \forall \bm{\omega}\in \mathrm{S}_{m}^{+}.
\end{align}		
\end{lemma}
The proof is provided in Section \ref{Appendix:Proof_of_Lemma_Saddle_Point_Characterization} of the Supplementary.

\subsection{Biased stochastic primal-dual (SPD) method}

To solve the optimization problem in Eq. \eqref{Eq:Elon_Musk}, we consider Monte-Carlo  sample average approximation (SAA) of the expectations with respect to the data distribution $Q$, and the random features distribution $\mu$. 

\begin{itemize}[leftmargin=*]
\item \textit{Monte-Carlo SAA of random features:} Consider the samples $\bm{\xi}_{i}^{1},\cdots,\bm{\xi}_{i}^{N}\sim_{\text{i.i.d.}} \mu_{i}$ for each distribution associated with the base kernel $K_{i}$, $i=1,2,\cdots,m$. Then, we define the empirical measure 
\begin{align}
\widehat{\mu}_{\bm{\omega}}^{N}(\bm{\xi})=\dfrac{1}{N}\sum_{k=1}^{N}\sum_{i=1}^{m}\omega_{i}\delta_{\bm{\xi}_{i}^{k}}(\bm{\xi}),
\end{align}
and consider the following SAA
\begin{align}
	\label{Eq:SAA}
	\int_{\Xi} \varphi(\bm{x}_{i};\bm{\xi})\varphi(\bm{x}_{j};\bm{\xi})\widehat{\mu}^{N}_{\bm{\omega}}(\mathrm{d}\bm{\xi})=\dfrac{1}{N}\big\langle \bm{\varphi}_{\bm{\omega}}(\bm{x}_{i}),\bm{\varphi}_{\bm{\omega}}(\bm{x}_{j})  \big\rangle,
	\end{align}
where $\bm{\varphi}_{\bm{\omega}}(\bm{x})\df \mathrm{vec}\big(\mathrm{diag}^{1\over 2}(\bm{\omega})\bm{\Phi}(\bm{x})\big)\in \real^{mN}$ is defined via the concatenation of random features associated with each base kernel. Here, $\bm{\omega}\df (\omega_{1},\cdots,\omega_{m})$, $\bm{\Phi}(\bm{x})=[\varphi(\bm{x};\bm{\xi}_{i}^{k})]_{i\in [m],k\in [N]}\in \real^{m\times N}$, and $\mathrm{vec}(\cdot)$ is the vectorization of the matrix by stacking its rows.

\begin{algorithm}[t!]\scriptsize{
		\caption{\small{``Biased" SPD for the Distributionally Robust Multiple Kernel Learning}}
		\label{Algorithm:1}
		\begin{algorithmic}
			\State {\bfseries Inputs:} {The learning rates $\beta,\eta\in \real_{+}$ , the small truncation $\epsilon>0$, the threshold $\tau\geq 0$, the number of random features $N\in \integer$, and the size of the mini-batch $B\in \integer$, with $B\geq 2$.}
			\State {\bfseries Initialize:} {The primal $\bm{\omega}_{0}\in \mathrm{S}_{+}^{m}$, and dual $\lambda\in \real_{+}$.}
			\While{$\max\{\|\mathsf{G}_{\bm{\omega}}^{\pi}(\lambda;\bm{\omega})\|_{2}$,$|\mathsf{G}_{\bm{\omega}}^{\pi}(\lambda;\bm{\omega})|\}\geq \tau$}
			\State{Sample the mini-batch $\{(\bm{x}_{\pi(i)},y_{\pi(i)})\}_{i=1}^{B}$ uniformly and independently from the training set.}
			\State{Compute the biased estimators of the gradients $\mathsf{G}^{\pi}_{\bm{\omega}}(\lambda;\bm{\omega})\df  \nabla_{\bm{\omega}}\widehat{H}^{\pi}_{B,N}(\lambda;\bm{\omega})$ and $\mathsf{G}_{\lambda}^{\pi}(\lambda;\bm{\omega})\df \nabla_{\lambda}\widehat{H}^{\pi}_{B,N}(\lambda;\bm{\omega})$.}
			\State{Update the primal variables via the exponentiation}
			\begin{subequations}
				\label{Eq:Primal_variables}
				\begin{align}
				Z_{\beta}&\leftarrow \sum_{i=1}^{m}\omega_{i}\exp\Big(\beta\big[\mathsf{G}_{\bm{\omega}}^{\pi}(\lambda;\bm{\omega})\big]_{i}\Big)\\
				\omega_{i}&\leftarrow\dfrac{\omega_{i}\exp\Big(\beta\big[\mathsf{G}_{\bm{\omega}}^{\pi}(\lambda;\bm{\omega})\big]_{i}\Big) }{Z_{\beta}}, \quad \forall i\in [n].
				\end{align}
				\State{Update the truncated dual variable}	
				\begin{equation}
				\hspace{-12mm}	\lambda \leftarrow \max\Big\{\lambda-\eta \mathsf{G}_{\lambda}^{\pi}(\lambda;\bm{\omega}),\epsilon\Big\}.
				\end{equation}
			\end{subequations}
			\EndWhile
		\end{algorithmic}
	}
\end{algorithm}\normalsize

\item \textit{Monte-Carlo SAA of data samples:}
The second sample average approximation is due to sampling the training data $(\bm{x}_{i},y_{i})_{1\leq i\leq n}\sim_{\text{i.i.d.}}Q=P_{\bm{X},Y}$. Then, we define the empirical measure associated with the observed training samples as follows
\begin{align}
{\widehat{Q}}^{n}(\bm{x},y)=\dfrac{1}{n}\sum_{i=1}^{n}\delta_{(\bm{x}_{i},y_{i})}(\bm{x},y).
\end{align}
Using the empirical measure defined above, the SAA with respect to the training data is given by 
\begin{align}
\nonumber
&\expect_{{\widehat{Q}}^{n,\otimes 2}}\left[\exp\left(-\dfrac{y\tilde{y}}{\lambda}	\expect_{\widehat{\mu}^{N}_{\bm{\omega}}}[\varphi(\bm{x};\bm{\xi})\varphi(\tilde{\bm{x}};\bm{\xi})]\right)\right]\\ \label{Eq:SAA_1}
&\hspace{10mm}=\dfrac{2}{n(n-1)}\sum_{1\leq i<j\leq n}\exp\left(-\dfrac{1}{N\lambda}\langle y_{i}\bm{\varphi}_{\bm{\omega}}(\bm{x}_{i}),y_{j}\bm{\varphi}_{\bm{\omega}}(\bm{x}_{j})  \rangle  \right),
\end{align}
where in writing the product measure ${\widehat{Q}}^{n,\otimes 2}$, we ignored the diagonal elements, and as a result Equation \eqref{Eq:SAA_1} is an unbiased estimator of the population value.
\end{itemize}

Now, combining Eqs. \eqref{Eq:SAA}-\eqref{Eq:SAA_1} and  the sample average approximations from  the following unbiased estimator of the population optimization problem 
\begin{align}
\label{Eq:log_sum_exp}
\widehat{H}_{n,N}(\lambda;\bm{\omega})={1\over 2}\lambda r+\lambda\log\dfrac{2}{n(n-1)}\sum_{1\leq i<j\leq n}\exp{\left(-{1\over N\lambda}\langle y_{i}\bm{\varphi}_{\bm{\omega}}(\bm{x}_{i}),y_{j}\bm{\varphi}_{\bm{\omega}}(\bm{x}_{j})  \rangle\right)}.
\end{align}
Let us make a few remarks regarding the form of the objective of Eq. \eqref{Eq:Elon_Musk_1} in Lemma \ref{Lemma:Saddle Point Characterization of Functional Optimization}. To solve the min-max optimization, we consider a mini-batch stochastic primal-dual method (SPD). In particular, consider a batch size of $B\in \integer$ such that $B\geq 2$. Let $\pi:\{1,2,\cdots,B\}\rightarrow \{1,2,\cdots,n\}$ denotes the stochastic sampling function. Define the mini-batch objective function
\begin{align}
\widehat{H}^{\pi}_{B,N}(\lambda;\bm{\omega})\df \dfrac{1}{2}\lambda r+\lambda \log\dfrac{2}{B(B-1)} \sum_{1\leq i<j\leq B}\exp{\left(-{1\over N\lambda}\Big\langle y_{\pi(i)}\bm{\varphi}_{\bm{\omega}}(\bm{x}_{\pi(i)}),y_{\pi(j)}\bm{\varphi}_{\bm{\omega}}(\bm{x}_{\pi(j)}) \Big\rangle\right)}.
\end{align}
Then, we define the stochastic gradients as follows
for all $\ell=1,2,\cdots,m$. In Algorithm \ref{Algorithm:1}, we describe a stochastic primal-dual method to solve the minimax optimization of the objective function in Eq. \eqref{Eq:log_sum_exp}. The proposed stochastic primal-dual algorithm is \textit{biased}. In particular, the expected value of the gradient estimators $\expect_{\pi}\big[\mathsf{G}_{\bm{\omega}}^{\pi}(\lambda;\bm{\omega})\big]$ and $\expect_{\pi}\big[\mathsf{G}_{\lambda}^{\pi}(\lambda;\bm{\omega})\big]$ deviate from their deterministic counterparts $\nabla_{\bm{\omega}}\widehat{H}_{n,N}(\lambda,\bm{\omega})$ and $\nabla_{\lambda}\widehat{H}_{n,N}(\lambda,\bm{\omega})$, respectively. Nevertheless, for a large batch size $B$, the biased estimators are good approximations of the deterministic gradients. The exponentiation step in Equation \eqref{Eq:Primal_variables} of Algorithm \ref{Algorithm:1} is equivalent to the following proximal optimization step
\begin{align}
\bm{\omega}\leftarrow \mathrm{Prox}^{\beta}_{\mathsf{G}^{\pi}_{\bm{\omega}}(\lambda;\cdot)}(\bm{\omega}),
\end{align}
where the proximal operator $\mathrm{Prox}_{f}:\real^{d}\rightarrow \mathcal{X}$ for a lower semi-continuous function $f:\mathcal{X}\rightarrow \real$ with the gradient estimator $\expect[\mathsf{G}_{f}(\bm{x})]=\nabla f(\bm{x})$ is defined as follows
\begin{align}
\mathrm{Prox}^{\beta}_{\mathsf{G}_{f}}(\bm{x})\df \arg\min_{\bm{y}\in \mathcal{X}}\left\{ \langle \mathsf{G}_{f}(\bm{x}),\bm{y}-\bm{x} \rangle +{1\over \beta}{D_{\phi}(\bm{y}||\bm{x})}\right\},
\end{align}
 for all $\bm{x}\in \mathcal{X}$. In the previous display, $D_{\phi}(\bm{y}||\bm{x})\df \phi(\bm{y})-\phi(\bm{x})-\langle \bm{y}-\bm{x},\nabla \phi(\bm{x}) \rangle$ is the Bregman divergence corresponding to the Legendre function $\phi$. The proximal optimization step in Equations \eqref{Eq:Primal_variables} is adapted to the geometry of the simplex $\mathrm{S}_{m}^{+}$, using the Bregman divergence corresponding to the negative entropy function $\phi(\bm{x})=\sum_{i=1}^{d}x_{i}\log(x_{i})$. In particular, it is a projection-free step for the convex optimization over the simplex. See the excellent monograph of Parikh and Boyd \cite{parikh2014proximal} for a treatise on the proximal algorithms.

\subsection{Debiasing via the Gumbel max perturbation scheme}

Due to the logarithmic term,  constructing unbiased estimators with
proper variance for the deterministic gradients $\nabla_{\bm{\omega}} H(\bm{\omega},\lambda)$ and $\nabla_{\lambda} H(\bm{\omega},\lambda)$ based on the uniform sampling of the training data is challenging. To debiase the stochastic optimization algorithm, in the sequel we leverage the Gumble max perturbation technique to estimate the log-partition function. Specifically, let $\{\zeta(z)\}_{z\in \mathcal{Z}}$ denotes a collection of the independent random variables $\zeta(z)$ indexed by $z\in \mathcal{Z}$, each
following the Gumbel distribution whose cumulative
distribution function  is $F(\zeta) = \exp(-\exp(-(\zeta + \gamma)))$,
where $\gamma\approx 0.577$ is the Euler-Mascheroni constant. Then the random variable $\max_{z\in \mathcal{Z}}\{\phi(z)+\zeta(z)\}$ is distributed according to
the Gumbel distribution and its expected value is the
logarithm of the partition function (see \cite{hazan2012partition})
\begin{align}
\label{Eq:Gumbel}
\log \sum_{z\in \mathcal{Z}}\exp(\phi(z))=\expect_{\nu}\Big[\max_{z\in \mathcal{Z}}\{\phi(z)+\zeta(z)\}\Big],	
\end{align}	
where $\nu=\mathrm{d}F/\mathrm{d}\zeta$ is the Lebesgue density.
The Gumbel distribution can be sampled efficiently using the inverse transform sampling technique. Specifically, let $U\sim \textsc{Uniform}[0,1]$, and $\zeta=-\gamma-\ln(-\ln(U))$. Then, $\zeta\sim \textsc{Gumbel}(-\gamma,1)$ with the desired cumulative
distribution function $F(\zeta)$.  We employ the identity in Eq. \eqref{Eq:Gumbel} to replace the log-sum term in Eq. \eqref{Eq:log_sum_exp} with the Gumbel max formulation tain
\begin{align}
	\widehat{H}_{n,N}(\lambda;\bm{\omega})=&{1\over 2}\lambda r+\lambda\log\left({2\over n(n-1)}\right)\\ \nonumber
	&+\lambda\expect_{\nu}\left[ \max_{1\leq i<j\leq n}\left\{-{1\over N\lambda}\langle y_{i}\bm{\varphi}_{\bm{\omega}}(\bm{x}_{i}),y_{j}\bm{\varphi}_{\bm{\omega}}(\bm{x}_{j})  \rangle+\zeta_{ij}\right\}\right],
	\end{align}
where $\zeta_{ij}\sim \nu$ are i.i.d. for all $1\leq i<j\leq n$. Using SAA for the expectation with respect to the Gumbel distribution yields
\begin{align}
\nonumber
	\widehat{H}^{D}_{n,N}(\lambda;\bm{\omega})=&{1\over 2}\lambda r+\lambda\log\left({2\over n(n-1)}\right)\\ \label{Eq:Evaluate}
	&+\dfrac{1}{D}\sum_{s=1}^{D} \max_{1\leq i<j\leq n}\left\{-\dfrac{1}{N}\langle y_{i}\bm{\varphi}_{\bm{\omega}}(\bm{x}_{i}),y_{j}\bm{\varphi}_{\bm{\omega}}(\bm{x}_{j}) \rangle+\lambda\zeta^{s}_{ij}\right\}.
	\end{align}

We now apply a penalty function to evaluate the maximum in Eq. \eqref{Eq:Evaluate}. In particular, the maximum of the real-valued functions $(g_{i}(x))_{1\leq i\leq n},g_{i}:\real\rightarrow \real$ can be computed as $\max_{1\leq i\leq n}g_{i}(x)=\min_{\alpha\in \real} \alpha+\rho \sum_{i=1}^{n}[g_{i}(x)-\alpha]_{+}^{2},$
provided that the penalty factor $\rho$ is sufficiently large.\footnote{More precisely, suppose $\rho^{\ast}_{1},\cdots,\rho^{\ast}_{n}$ are the optimal Lagrange multipliers of the constrained optimization problem $\min_{\alpha\in \real} \alpha$ subject to $g_{i}(x)\leq \alpha, i=1,2,\cdots,n$. Then, letting $\rho=\max_{1\leq i\leq n}\rho^{\ast}_{i}$ guarantees the equality.
} Therefore,

\begin{align}
&\widehat{H}^{D}_{n,N}(\lambda;\bm{\omega})=\min_{\bm{\alpha}\in \real^{D}}\widehat{G}_{n,N}^{D}(\lambda,\bm{\alpha};\bm{\omega}),
\end{align}
where $\widehat{G}_{n,N}^{D}(\lambda,\bm{\alpha};\bm{\omega})$ is defined below
\begin{align}
	\nonumber
	\widehat{G}_{n,N}^{D}(\lambda,\bm{\alpha};\bm{\omega})&\df {1\over 2}\lambda r+\lambda\log\left({2\over n(n-1)}\right)+\dfrac{1}{D}\sum_{s=1}^{D}\alpha_{s}\\
	&\hspace{4mm}+\dfrac{\rho}{D}\sum_{s=1}^{D}\sum_{1\leq i<j\leq n}\left[\lambda\zeta_{ij}^{s}-\dfrac{y_{i}y_{j}}{N}\langle \bm{\varphi}_{\bm{\omega}}(\bm{x}_{i}),\bm{\varphi}_{\bm{\omega}}(\bm{x}_{j}) \rangle -\alpha_{s}\right]_{+}^{2}.
	\end{align}
We now consider the following saddle point optimization problem
\begin{align}
\label{Eq:Swap}
 \min_{\bm{\omega}\in \mathrm{S}^{+}_{m}}\min _{(\lambda,\bm{\alpha})\in \real_{+}\times  \real^{D}}\widehat{G}_{n,N}^{D}(\lambda,\bm{\alpha};\bm{\omega}).
\end{align}
We devise a stochastic primal-dual proximal method to solve the min-max optimization in Eq. \eqref{Eq:Swap}. In particular, we consider the following \textit{unbiased} estimators of the deterministic gradients with the batch size $B\geq 2$:
\small{\begin{subequations} 
	\label{Eq:Gradient_estimators}		
\begin{align}
\Big[\mathsf{Q}^{\pi}_{\bm{\omega}}(\lambda,\bm{\alpha};\bm{\omega})\Big]_{\ell}&=\sum_{s=1}^{D}\sum_{1\leq i<j\leq B}\widehat{\rho}\delta_{\ell}(\pi(i),\pi(j))\left[\lambda\zeta_{\pi(i)\pi(j)}^{s}+\langle \bm{\omega},\bm{\delta}(\pi(i),\pi(j)) \rangle -\alpha_{s}\right]_{+}, \\ 
\Big[\mathsf{Q}^{\pi}_{\bm{\alpha}}(\lambda,\bm{\alpha};\bm{\omega})\Big]_{s}&=\dfrac{1}{D}-\sum_{1\leq i<j\leq B}\widehat{\rho}\left[\lambda\zeta_{\pi(i)\pi(j)} ^{s}+\langle \bm{\omega},\bm{\delta}(\pi(i),\pi(j)) \rangle-\alpha_{s} \right]_{+},\\ \nonumber
\mathsf{Q}^{\pi}_{\lambda}(\lambda,\bm{\alpha};\bm{\omega})&=\dfrac{1}{2}r+\log\left(\dfrac{2}{n(n-1)}\right)\\
&\hspace{4mm}+\sum_{s=1}^{D}\sum_{1\leq i<j\leq B}\widehat{\rho}\Big[\lambda \zeta^{s}_{\pi(i)\pi(j)}+\langle \bm{\omega},\bm{\delta}(\pi(i),\pi(j)) \rangle -\alpha_{s}\Big]_{+},
\end{align}
\end{subequations}}\normalsize
for all  $\ell=1,2,\cdots,m$ and $s=1,2,\cdots,D$. In Eqs. \eqref{Eq:Gradient_estimators}, $\widehat{\rho}\df {2n(n-1)\rho\over DB(B-1)}$ is the normalized penalty factor, and the vector $\bm{\delta}(\pi(i),\pi(j))\df (\delta_{\ell}(\pi(i),\pi(j)))_{1\leq \ell\leq  m}$ has the following elements
\begin{align}
\delta_{\ell}(\pi(i),\pi(j))\df \dfrac{1}{N}\sum_{k=1}^{N}y_{\pi(i)}y_{\pi(j)}\varphi(\bm{x}_{\pi(i)};\bm{\xi}^{k}_{\ell})\varphi(\bm{x}_{\pi(j)};\bm{\xi}^{k}_{\ell}),
\end{align}
and the features $(y,\bm{x})$ and $(\tilde{y},\tilde{\bm{x}})$ are sampled uniformly from the training data. Using the unbiased estimators of the sub-gradients, we provide a stochastic primal-dual (SPD) optimization procedure in Algorithm \ref{Algorithm:2}.

\begin{algorithm}[t!]\scriptsize{
		\caption{\small{``Unbiased" SPD for the Distributionally Robust Multiple Kernel Learning}}
		\label{Algorithm:2}
		\begin{algorithmic}
			\State {\bfseries Inputs:} {The learning rates $\beta,\eta\in \real_{+}$ , the small number $\epsilon>0$, the threshold $\tau\geq 0$, the number of random features $N\in \integer$, and the size of the mini-batch $B\in \integer$, with $B\geq 2$.}
			\State {\bfseries Initialize:} {The primal $\bm{\omega}_{0}\in \mathrm{S}_{+}^{m}$, and dual $\lambda\in \real_{+}$.}
			\While{$\max\{\|\mathsf{Q}_{\bm{\omega}}^{\pi}(\lambda,\bm{\alpha};\bm{\omega})\|_{2}$,$\|\mathsf{Q}_{\bm{\alpha}}^{\pi}(\lambda,\bm{\alpha};\bm{\omega})\|_{2}$,$|\mathsf{Q}_{\lambda}^{\pi}(\lambda,\bm{\alpha};\bm{\omega})|\}\geq \tau$}
			\State{Sample the mini-batch $\{(\bm{x}_{\pi(i)},y_{\pi(i)})\}_{i=1}^{B}$ uniformly and independently from the training set.}
			\State{Compute the unbiased estimators of the gradients $\mathsf{Q}_{\bm{\omega}}^{\pi}(\lambda,\bm{\alpha};\bm{\omega})$, $\mathsf{Q}_{\bm{\alpha}}^{\pi}(\lambda,\bm{\alpha};\bm{\omega})$ and $\mathsf{Q}_{\lambda}^{\pi}(\lambda,\bm{\alpha};\bm{\omega})$ from Eqs. \eqref{Eq:Gradient_estimators}.}
			\State{Update the primal variables via the exponentiation}
			\begin{subequations}
				\begin{align}
				Z_{\beta}&\leftarrow \sum_{i=1}^{m}\omega_{i}\exp\Big(\beta\big[\mathsf{Q}_{\bm{\omega}}^{\pi}(\lambda,\bm{\alpha};\bm{\omega})\big]_{i}\Big),\\
				\omega_{i}&\leftarrow\dfrac{\omega_{i}\exp\Big(\beta\big[\mathsf{Q}_{\bm{\omega}}^{\pi}(\lambda,\bm{\alpha};\bm{\omega})\big]_{i}\Big) }{Z_{\beta}}, \quad \forall i\in [n].
                \end{align}
            \State{Update the dual variables}	
            \begin{equation}
  \hspace{-12mm}          \bm{\alpha}\leftarrow \bm{\alpha}-\eta  \mathsf{Q}_{\bm{\alpha}}^{\pi}(\lambda,\bm{\alpha};\bm{\omega}),
            \end{equation}
            \begin{equation}
            \lambda \leftarrow \max\Big\{\lambda-\eta \mathsf{Q}_{\lambda}^{\pi}(\lambda,\bm{\alpha};\bm{\omega}),\epsilon\Big\}.
            \end{equation}
             \end{subequations}
			\EndWhile
		\end{algorithmic}
	}
\end{algorithm}\normalsize

\section{Theoretical Results}
\label{Section:Theoretical_Results}

In this section, we state our main theoretical results regarding the performance guarantees of the proposed kernel learning procedure. The proofs of theoretical results are presented in Appendix. Before we delve into the theoretical results, we state the main assumptions underlying our results. 
\begin{assumption}
	\label{Assumption:1}
	The random feature maps are bounded, \textit{i.e.}, $	\sup_{\bm{x}\in \mathcal{X}}|\varphi_{i}(\bm{x},\bm{\xi})|\leq L$, for all $i=1,2,\cdots,m$.
\end{assumption}

\begin{assumption}
	\label{Assumption:2}
	The base kernels are shift invariant, \textit{i.e.}, $	K_{i}(\bm{x},\bm{y})=K_{i}(\bm{x}-\bm{y})$ for all $i=1,2,\cdots,m$.
\end{assumption}

\begin{assumption}
	\label{Assumption:3}
	The feature space $\mathcal{X}\subset \real^{d}$ has a finite diameter, \textit{i.e.}, $\mathrm{diam}(\mathcal{X})\df\sup_{\bm{x},\bm{y}\in \mathcal{X}} \|\bm{x}-\bm{y}\|_{2}<\infty.$
\end{assumption}

%
%
%
%
%
%

\subsection{Asymptotic normality and stochastic equicontinuity}

In this part, we establish a few properties of the underlying $U$-statistics in this paper:

\begin{lemma}\textsc{(Asymptotic Normality)} 
	\label{Lemma:Asymptotic_Normality}
Consider the i.i.d. random vectors $\bm{z}_{i}\df (y_{i},\bm{x}_{i})\in \mathcal{Z}=\mathcal{X}\times \mathcal{Y}$, where $\bm{z}_{1:n}\df (\bm{z}_{1},\cdots,\bm{z}_{n}), (\bm{z}_{i})_{1\leq i\leq n}\sim_{\text{i.i.d.}} P\in \mathcal{M}(\mathcal{Z})$, and define the following $U$-statistics
\begin{align}
\label{Eq:Regarding}
U_{\lambda}(\psi_{N},\bm{z}_{1:n})\df \dfrac{2}{n(n-1)}\sum_{1\leq i<j\leq n}\exp\left(\dfrac{1}{\lambda}\psi_{N}(\bm{z}_{i},\bm{z}_{j})\right),
\end{align}
where $\psi_{N}(\bm{z}_{i},\bm{z}_{j})\df {1\over N}\langle y_{i}\bm{\varphi}_{\bm{\omega}}(\bm{x}_{i}), y_{j}\bm{\varphi}_{\bm{\omega}}(\bm{x}_{j}) \rangle$. Let $\Psi_{N}(\bm{z})\df \expect_{\tilde{\bm{z}}\sim P}\left[\exp\left(\dfrac{1}{\lambda}\psi_{N}(\bm{z},\tilde{\bm{z}})\right)\right]$. Then,
\begin{align}
\label{Eq:Asymptotic_normlaity}
\dfrac{\sqrt{n}\left(\lambda \log  U_{\lambda}(\psi_{N},\bm{z}_{1:n})-\lambda\log\expect_{P^{\otimes 2}}[U_{\lambda}]\right)}{2\mathrm{Var}^{1\over 2}[\Psi_{N}]}\stackrel{d}{\leadsto}\mathsf{N}\left(0,\dfrac{\lambda^{2}}{\expect_{P^{\otimes 2}}[U_{\lambda}]}\right),
\end{align}
where $\stackrel{d}{\leadsto}$ denotes the convergence in the distribution.
\end{lemma}
\begin{proof}
	The proof follows from \cite[Theorem 2]{fan2018dnn} and subsequently applying the standard $\delta$-method to $\lambda\log U_{\lambda}$. For completeness, the proof is provided in Appendix \ref{Appendix:Asymptotic_Normality}. 
\end{proof}

The next result is concerned with the stochastic equicontinuity of the $U$-statistics in Eq. \eqref{Eq:Regarding} which is formally defined below:

\begin{definition}(\textsc{Stochastic Equicontinuity})  	
Let $\{Q_{n}(\theta):\theta\in \Theta,n\in \integer\}$ denotes a family of real-valued functions indexed by $n$, where $(\Theta,d)$ is any normed metric space. Then $\{Q_{n}\}_{n\in \integer}$ is stochastically equicontinuous if for every $\epsilon>0$, and $\eta>0$, there is a $\delta>0$ such that
\begin{align}
\lim\sup_{n\rightarrow\infty} \prob\left(\sup_{\theta\in \Theta}\sup_{\widehat{\theta}\in \ball_{\delta}(\theta)}\left|Q_{n}(\widehat{\theta})-Q_{n}(\theta)\right|>\epsilon \right)<\eta,
\end{align}
where $\ball_{\delta}(\theta)\df \{\widehat{\theta}\in \Theta: d(\widehat{\theta},\theta)\leq \delta \}$ is the ball of the radius $\delta$.
\end{definition}
The stochastic equicontinuity is a useful property as it can yield weak convergence. Moreover, the stochastic equicontinuity together with the pointwise convergence implies uniform convergence in probability to equicontinuous functions on a compact set. The following result establishes the stochastic equicontinuity of the $U$-statistic  in Eq. \eqref{Eq:Regarding} of Theorem \ref{Lemma:Asymptotic_Normality}:

\begin{theorem}\textsc{(Stochastic Equicontinuity of the Kernel $U$-statistic)} 
\label{Theorem:Stochastic_Equicontinuity}	
Suppose Assumptions \ref{Assumption:1}-\ref{Assumption:3} are satisfied. Consider the $U$-statistic $U_{\lambda}(\psi_{N},\bm{z}_{1:n})$ defined in Eq. \eqref{Eq:Regarding} of Theorem \ref{Lemma:Asymptotic_Normality}. Furthermore, consider the composite function class
\begin{align}
\label{Eq:GmN}
\mathcal{G}_{m,N}\df \left\{\psi_{N}:\mathcal{Z}\times \mathcal{Z}\rightarrow \real:(\bm{z},\tilde{\bm{z}})\mapsto \psi_{N}(\bm{z},\tilde{\bm{z}})=\exp\Big({1\over N}\langle \bm{\varphi}_{\bm{\omega}}(\bm{z}), \bm{\varphi}_{\bm{\omega}}(\tilde{\bm{z}}) \rangle\Big), \bm{\omega}\in \mathrm{S}_{+}^{m},\varphi\in \mathcal{F} \right\},
\end{align}	
where $\bm{z}=(y,\bm{x})$, $\tilde{\bm{z}}=(\tilde{y},\tilde{\bm{x}})$, and $\bm{\varphi}_{\bm{\omega}}(\bm{z})\df (\sqrt{\omega_{i}}y\varphi(\bm{x};\bm{\xi}_{i}^{k}))_{i=1,2,\cdots,m}^{k=1,2,\cdots,N}$. Moreover, $\mathcal{F}$ is the following function class
\begin{align}
\label{Eq:Fucktion}
\mathcal{F}=\{\varphi:\mathcal{X}\rightarrow \real: \bm{x}\mapsto \varphi_{0}(\langle \bm{x},\bm{\xi}\rangle+b ), \bm{\xi}\in \Xi, b\in \real, \varphi_{0}\in \mathrm{Lip}(1)\},
\end{align}
where $\mathrm{Lip}(1)$ is the class of $1$-Lipschitz functions. Then, with the probability of $1-\delta$, we have
\begin{align}
\nonumber
\sup_{g\in \mathcal{G}_{m,N}}\Bigg|U_{\lambda}(g,\bm{Z}_{1:n})-\expect_{P}\Bigg[\exp\Bigg({1\over \lambda}g(\bm{Z}_{1:n})\Bigg)\Bigg]\Bigg|=\dfrac{4}{\sqrt{\lceil n/2 \rceil }}\dfrac{1}{\delta} \int_{0}^{\eta}\sqrt{2N(d+2)\log(S(\epsilon,\lambda))}\mathrm{d}\epsilon,
\end{align}
where $\eta\df \exp(L^{2}/\lambda)$, $S(\epsilon,\lambda)\df {K\log (K/(\lambda \gamma(\epsilon) \exp(-L^{2}/\lambda))^{2})\over (\lambda\gamma(\epsilon) \exp(-L^{2}/\lambda))^{2}}$, $\gamma(\epsilon)\df {\lambda (\epsilon)^{1\over r}\over 2m\exp(L^{2}/\lambda)L}$, and $K=3e^{2}/(e-1)\approx 12.9008$.
\end{theorem}
\begin{proof}
The proof is presented in Appendix \ref{Appendix:Proof_of_Stochastic_Equicontinuity}.
\end{proof}

We remark that the function class in Eq. \eqref{Eq:Fucktion} of Theorem \ref{Theorem:Stochastic_Equicontinuity} includes the random Fourier feature model of Rahimi and Recht \cite{rahimi2008random,rahimi2009weighted} as a special case when $\varphi_{0}(x)=\cos(x)$. The stochastic equicontinutiy of Theorem \ref{Theorem:Stochastic_Equicontinuity} makes no assumptions on the underlying distributions of variables $\bm{\xi}$ and $b$ in the random feature model of Eq. \eqref{Eq:Fucktion}. Thus, it can be applied to base kernels whose distributions have with finite or infinite supports. In particular, the proof techniques of Theorem \ref{Theorem:Stochastic_Equicontinuity} is based on empirical process theory and the Vapnik–Chervonenkis theory of function classes.

\subsection{Non-asymptotic consistency and the minimax rate}

Recall the definition of the finite sample estimator $\widehat{H}_{n,N}(\lambda;\bm{\omega})$ from Equation \eqref{Eq:log_sum_exp}. Then, we observe that $\widehat{H}_{n,N}(\lambda;\bm{\omega})=\lambda\log U_{\lambda}+{\lambda r\over 2}$. Therefore, the asymptotic normality in Eq. \eqref{Eq:Asymptotic_normlaity} of Lemma \ref{Lemma:Asymptotic_Normality} holds for $\widehat{H}_{n,N}(\lambda;\bm{\omega})$. In the next lemma, we provide the non-asymptotic consistency of the sample average approximations (SAAs). To describe the consistency of SAAs, we define
\begin{subequations}
\begin{align}
T_{\bm{\omega}}(P)&\df \expect_{P^{\otimes 2}}\expect_{\mu_{\bm{\omega}}}\Big[y
\tilde{y}\varphi(\bm{x};\bm{\xi})\varphi(\tilde{\bm{x}};\bm{\xi})\Big]\\
\widehat{T}_{\bm{\omega}}(P)&\df \dfrac{1}{N}\expect_{P^{\otimes 2}}\Big[y\tilde{y}\langle \bm{\varphi}_{\bm{\omega}}(\bm{x}), \bm{\varphi}_{\bm{\omega}}(\tilde{\bm{x}}) \rangle \Big].
\end{align}
\end{subequations}
From Lemma \ref{Lemma:Saddle Point Characterization of Functional Optimization}, we recall the following identities
\begin{subequations}
\begin{align}
\label{Eq:State_11}
\sup_{\bm{\omega}\in \mathrm{S}_{+}^{m}}\min_{\lambda\in \real_{+}}H(\lambda;\bm{\omega})&= \sup_{\bm{\omega}\in \mathrm{S}_{+}^{m}}\inf_{P\in \mathcal{P}}T_{\bm{\omega}}(P)\\
\label{Eq:State_21}
\sup_{\bm{\omega}\in \mathrm{S}_{+}^{m}}\min_{\lambda\in \real_{+}}\widehat{H}_{n,N}(\lambda;\bm{\omega})&=\sup_{\bm{\omega}\in \mathrm{S}_{+}^{m}}\inf_{\widehat{P}^{n}\in \widehat{\mathcal{P}}^{n}}\widehat{T}_{\bm{\omega}}(\widehat{P}^{n}),
\end{align}
\end{subequations}
Now, the following consistency result can be established:
\begin{lemma}\textsc{{(Non-Asymptotic Consistency of the Monte Carlo Samplings)}}\normalsize
	\label{Thm:Consistency with respect to the Sampling Data}	
	Suppose Assumptions \ref{Assumption:1}-\ref{Assumption:3} hold. Let 
	\begin{align}
	\label{Eq:Min-Max_Opt}
\left(\bm{\omega}_{\ast},\widehat{P}_{\ast}^{n}\right)\df \sup_{\bm{\omega}\in \mathrm{S}_{+}^{m}}\inf_{\widehat{P}^{n}\in \widehat{\mathcal{P}}^{n}}\widehat{T}_{\bm{\omega}}(\widehat{P}^{n}),
	\end{align}
 denotes the saddle point of the finite sample max-min optimization problem. Then, with the probability of (at least) $1-3\rho$ we have
 \begin{align}
\nonumber
\left|\sup_{\bm{\omega}\in \mathrm{S}_{+}^{m}}\inf_{P\in \mathcal{P}}T_{\bm{\omega}}(P)-T_{\bm{\omega}_{\ast}}(\widehat{P}_{\ast}^{n})\right|&\leq   \dfrac{2KL}{n}\sqrt{ \log\left({2\times (16Ln)^{m}\over \rho\times K^{m} }\right)}+\dfrac{C^{3}K^{2}}{n^{2}}\\  \label{Eq:Consistency_Upper_Bound}
&\hspace{4mm}+4\sqrt{{2N(r+2\ln(2))\over L^{4}}\log\left({2\times (16L^{4})^{m}\over \rho (2N(r+2\ln(2))^{m\over 2}) }\right)}.
 \end{align}     
\end{lemma}
\begin{proof}
	The proof is presented in Section \ref{Appendix:Proof_of_Lemma_Consistency} of the supplementary materials.
\end{proof}

In the next theorem, we establish the minimax estimation rate of the weights of the mixture model in Eq. \eqref{Eq:Jeff}. Formally, let $K_{1}^{m}=(K_{1},\cdots,K_{m})$ denotes the set of base kernels. Given a distribution from the distributional ball $P\in \mathcal{P}$, we  consider estimating the optimal weights $\bm{\omega}$ given the i.i.d. training samples $(y_{i},\bm{x}_{i})_{1\leq i\leq n}\sim_{\text{i.i.d.}} P$. That is, we wish to estimate the map $\bm{\omega}:\mathcal{P}\rightarrow \mathrm{S}_{+}^{m}$ defined by
\begin{subequations}
	\label{Eq:min+max}
\begin{align}
P&\mapsto \bm{\omega}(P)=\max_{\bm{\omega}\in \mathrm{S}_{m}^{+}}\sum_{i=1}^{m}\omega_{i} \expect_{P^{\otimes 2}}\Big[y\tilde{y}K_{i}(\bm{x},\tilde{\bm{x}})\Big].
\end{align} 
\end{subequations}
Given the training samples $(y_{i},\bm{x}_{i})_{1\leq i\leq n}$, we denote an empirical estimator of the coefficients of the mixture model by $\widehat{\bm{\omega}}_{n}\df \widehat{\bm{\omega}}((y_{i},\bm{x}_{i})_{1\leq i\leq n})$. Recall the definition of the total variation (TV) metric $d_{\mathrm{TV}}:\mathrm{S}_{+}^{m}\times \mathrm{S}_{+}^{m}\rightarrow \real_{+}, (\bm{\omega},\widehat{\bm{\omega}}_{n})\mapsto d_{\mathrm{TV}}(\bm{\omega},\widehat{\bm{\omega}}_{n})\df {1\over 2}\|\bm{\omega}-\widehat{\bm{\omega}}_{n}\|_{1}$. The \textit{minmax rate} or \textit{minimax rate} of the empirical estimators is defined as the minimum TV distance between the best empirical estimator and the population estimator of Eq. \eqref{Eq:min+max} over the worst case data distribution that is drawn from the distribution ball $\mathcal{P}$, \textit{i.e.},
\begin{align}
R_{n}(\mathcal{P})\df \inf_{\widehat{\bm{\omega}}\in \mathrm{S}_{m}^{+}}\sup_{P\in \mathcal{P}}\expect_{P}\Big[d_{\mathrm{TV}}\big(\widehat{\bm{\omega}}_{n},\bm{\omega}\big)\Big],
\end{align}
where the infimum is taken with respect to all admissible estimators $\widehat{\bm{\omega}}_{n}$.  In the following theorem, we characterize the minmax estimation rate of $\bm{\omega}(P)$ in Eq. \eqref{Eq:min+max} for the special case of the RBF kernels:
\begin{theorem}\textsc{(The Minimax Estimation Rate)} 
\label{Thm:Min-Max_Rate}	
Suppose the base kernels are RBF, \textit{i.e.}, $K_{i}=\psi_{i}(\|\bm{x}-\bm{y}\|_{2})$ with the corresponding Sch\"{o}nberg measure $\nu_{i}$ for all $i\in [M]\df \{1,2,\cdots,M\}$, (cf. Definition \ref{Definition:Radial_Kernel}). Furthermore, consider the restriction $P(y)=Q(y)$ for the marginals, and assume that the distribution ball of the marginals $\mathcal{P}_{\pm}$ in Eq. \eqref{Eq:distribution_ball_of_marginals} contains the following normal distributions
\begin{subequations}
	\begin{align}
		P_{+}&=\mathsf{N}(\bm{\mu}_{+}^{P},\sigma^{2}\bm{I}_{d\times d}), \quad   P_{-}=\mathsf{N}(\bm{\mu}_{-}^{P},\sigma^{2}\bm{I}_{d\times d}),\\
		Q_{+}&=\mathsf{N}(\bm{\mu}_{+}^{Q},\sigma^{2}\bm{I}_{d\times d}), \quad Q_{-}=\mathsf{N}(\bm{\mu}_{-}^{Q},\sigma^{2}\bm{I}_{d\times d}),
	\end{align}
\end{subequations}
where $Q_{+}\otimes Q_{-}$ is the center of the distribution ball $\mathcal{P}_{\pm}$ with the radius $2r$. Furthermore, for some $i,j\in [M]$, suppose the inequalities $J_{k,d}(\infty)\leq {1\over 2}J_{i,d}(\tau_{P})$ and $J_{\ell,d}(\infty)\leq {1\over 2}J_{j,d}(\tau_{Q})$ hold for all $k\in [M]\backslash \{i\}$ and $\ell\in [M]\backslash \{j\}$, respectively, where
\small{\begin{align}
\label{Eq:Conditions_for_MMD}
J_{k,d}(\tau)&\df  \int_{0}^{\tau}\dfrac{t}{(1+4t\sigma^{2})^{(d+2)/2}}\nu_{k}(\mathrm{d}t),
\end{align}}\normalsize
and $\tau_{Q}\df {1\over \|\bm{\mu}_{+}^{Q}-\bm{\mu}_{-}^{Q}\|_{2}^{2}-4\sigma^{2}}$ and $\tau_{P}\df {1\over \|\bm{\mu}_{+}^{P}-\bm{\mu}_{-}^{P}\|_{2}^{2}-4\sigma^{2}}$. Then, the minimax rate of the empirical estimatiors is given by
\begin{align}
\label{Eq:minimax}
R_{n}(\mathcal{P})\geq \dfrac{1}{8}e^{-nD_{\mathrm{KL}}(P_{+}\otimes P_{-}||Q_{+}\otimes Q_{-})}\geq \dfrac{1}{8}e^{-nr}.
\end{align}
\end{theorem}
\begin{proof}
	The proof is presented in Appendix \ref{Appendix:The_Min_Max_Rate}.
\end{proof}

Let us make a remark about the restrictions $J_{k,d}(\tau_{P})\leq {1\over 2}J_{i,d}(\infty)$ and $J_{\ell,d}(\tau_{Q})\leq {1\over 2}J_{j,d}(\infty)$ in Theorem \ref{Thm:Min-Max_Rate} are sufficient conditions for the minimax inequality in Eq. \eqref{Eq:minimax} of Theorem \ref{Thm:Min-Max_Rate}. Alternatively, we can obtain less obsecure conditions by obtaining a lower bound and an upper bound on the integral formula of Eq. \eqref{Eq:Conditions_for_MMD}. To attain this objective, notice that the function $t\mapsto {t\over (1+4t\sigma^{2})^{(d+2)/2}}$ monotonically increases on the interval $[0,{1\over 2d\sigma^{2}}]$ and then decreases on $({1\over 2d\sigma^{2}}, \infty)$, thus attaining its maximum at $t={1\over 2d\sigma^{2}}$. Hence,
\begin{align}
J_{k,d}(\infty)&\leq \dfrac{1}{2d\sigma^{2}(1+(2/d))^{(d+2)/2}},  \quad \forall k\in \{1,2,\cdots,d\}. \label{Eq:Plug33}
\end{align}
Similarly, a lower bound can be obtained as follows
\begin{subequations}
\begin{align}
J_{k,d}(\tau_{P})&\geq \sup_{\tau \in (0,\tau_{P}]} \nu_{k}([\tau,\tau_{P}]) \dfrac{\tau}{e}\left(1-{2\over 2+d}\right)\\
&\geq \nu_{k}([\tau_{P}/2,\tau_{P}]) \dfrac{\tau_{P}}{2e}\left(1-{2\over 2+d}\right),  \quad \forall k\in \{1,2,\cdots,d\}.
\end{align}
\end{subequations}
where  $\nu_{i}([\tau_{0},\tau_{1}])\df \int_{\tau_{0}}^{\tau_{1}}\nu_{i}(\mathrm{d}s)$. The sufficient condition $J_{i,d}(\infty)\leq {1\over 2} J_{i,d}(\tau_{P})$ for the minimax lower bound of Eq. \eqref{Eq:minimax} now can be restated as follows
\small{
\begin{align*}
\nu_{i}([\tau_{P}/2,\tau_{P}])&\geq\dfrac{e}{\tau_{P}} \left(1-{2\over 2+d}\right)^{-1} \cdot\dfrac{1}{d\sigma^{2}(1+(2/d))^{(d+2)/2}}.
\end{align*}}\normalsize
A similar inequality can be stated for $J_{j,d}(\infty)\leq {1\over 2} J_{j,d}(\tau_{Q})$.

\subsection{Distributionally robust generalization bounds}

To obtain an out-of-sample generalization bound, we require the following notions of complexities of a function class:

\begin{definition}\textsc{(Rademacher and Gaussian Complexities)}
	\label{Def:Rademacher and Gaussian Complexities}
	Given a finite-sample set $S=(\bm{x}_{1},\cdots,\bm{x}_{n})$, and function class $\mathcal{F}$ consider the following random variable
	\begin{align}
	\label{Eq:Rademacher}
	\hspace{-3mm}	\widehat{\mathfrak{R}}^{n}_{S}(\mathcal{F})\df \expect_{P_{\bm{\varepsilon}}}\left[\sup_{f\in \mathcal{F}} \dfrac{1}{n}\sum_{i=1}^{n}\varepsilon_{i} f(\bm{x}_{i})\Bigg \vert \bm{x}_{1},\cdots,\bm{x}_{n}\right].
	\end{align}
	where $\bm{\varepsilon}\df (\varepsilon_{1},\cdots,\varepsilon_{n})$ are \textit{i.i.d.} Rademacher random variables, \textit{i.e.}, $(\varepsilon_{1},\cdots,\varepsilon_{n})_ {\sim_{\text{i.i.d.}}}P_{\varepsilon}=\mathrm{Uniform}\{-1, 1\}$, and $P_{\bm{\varepsilon}}=P_{\varepsilon}^{\otimes n}$. The \textit{Rademacher complexity} is then defined as the expected value of $\widehat{\mathfrak{R}}^{n}_{S}(\mathcal{F})$ with respect to the observed samples, \textit{i.e.}, 
	\begin{align}
	\label{Eq:Rademacher_tired}
	\mathfrak{R}^{n}(\mathcal{F})\df \expect_{P_{\bm{X}}}[\widehat{\mathfrak{R}}_{S}^{n}(\mathcal{F})]. 
	\end{align}
	Similarly, the \textit{Gaussian complexities} $\mathfrak{G}_{S}^{n}(\mathcal{F})$ and $\mathfrak{G}^{n}(\mathcal{F})$ are defined by letting $P_{\varepsilon}=\mathsf{N}(0,1)$ in Eqs. \eqref{Eq:Rademacher}-\eqref{Eq:Rademacher_tired}, respectively.
\end{definition}

In the following proposition, we establish novel  upper bounds on the Rademacher and Gaussian complexities of the class of functions based on the mixture of random features:

\begin{proposition}{\textsc{(Upper Bounds on the Function Class Complexities)}}
	\label{Proposition:1}
	Consider the class of functions $\mathcal{F}_{m}(R)$ defined as follows
	\begin{align}
	\label{Eq:Function_Class}
	\mathcal{F}_{m}(R)\df \left\{f:\real^{d}\rightarrow \real:\bm{x}\mapsto f(\bm{x})={1\over \sqrt{N}}\bm{\beta}^{T}\bm{\varphi}_{\bm{\omega}}(\bm{x}):\bm{\beta}\in \ball^{2}_{r}(\bm{0}),\bm{\omega}\in \mathrm{S}_{m}^{+}\right\},
	\end{align}
	where $\ball^{2}_{r}(\bm{0})\subset \real^{mN}$ is the Euclidean $\ell_{2}$-norm ball of radius $r={R\over \sqrt{mN}}$ centered at the origin, and $\bm{\varphi}_{\bm{\omega}}\in \real^{mN}$ is the concatenated random feature vector. Moreover, define the row matrix of the random feature maps as $\bm{\Phi}(\bm{\omega})\df (\bm{\varphi}_{\bm{\omega}}(\bm{x}_{i}))_{1\leq i\leq n} \in \real^{n\times mN}$. Then, the \textit{Rademacher complexity} of the class $\mathcal{F}_{m}(R)$ is 
	\begin{align}
		\label{Eq:Sharper_1}
		\mathfrak{R}^{n}(\mathcal{F}_{m}(R))\leq \dfrac{R}{nN} \sqrt{192\pi\over m}{\|\bm{\Phi}(\bm{\omega})\|_{2}}\mathcal{Q}\left(-\sqrt{192 \min\{mN,n\}}  \right),
	\end{align}
	and the \textit{Gaussian complexity} of the class $\mathcal{F}_{m}(R)$ is 
    \begin{align}
		\label{Eq:Sharper_2}
		&\mathfrak{G}^{n}(\mathcal{F}_{m}(R))\leq \dfrac{R}{nN}{\sqrt{\pi\over 2m}}\|\bm{\Phi}(\bm{\omega})\|_{F}\mathcal{Q}\left(-\sqrt{{n\over 4}} \right),
		\end{align}
	where $\mathcal{Q}(x)\df {1\over \sqrt{2\pi}}\int_{x}^{\infty}e^{-{x^{2}\over 2}}\mathrm{d}x$ is the complementary error function.
\end{proposition}
\begin{proof}
	The proof is presented in Section \ref{Appendix:Proof_of_RGComplexities} of the supplementary material. 
\end{proof}
Let us make some remarks about the implications of Proposition \ref{Proposition:1}.

The proof techniques we used to establish the upper bound on the Rademacher complexity are novel and based on concentration inequalities for the quadratic forms. In particular, the proof does \textit{not} rely on the Khintchin-Kahane type inequality which is a typical tool in the computation of the upper bounds on the Rademacher complexity; see, \textit{e.g.},  \cite{cortes2009new},\cite{shahrampourlearning}. As shown in Appendix \ref{Appendix:Proof_of_RGComplexities}, the Khintchin-Kahane type inequality yields the following upper bound on the Rademacher complexity:
\begin{align}
\label{Eq:Sharper_3}
\mathfrak{R}^{n}(\mathcal{F}_{m}(R))\leq \dfrac{R}{nN}\sqrt{\dfrac{23}{44m} } \|\bm{\Phi}(\bm{\omega})\|_{F}.
\end{align}
Since $0<\mathrm{erfc}(x)\leq 2$ for all $x\in \real$, and $\|\bm{\Phi}(\bm{\omega})\|_{2}\leq \|\bm{\Phi}(\bm{\omega})\|_{F}$ for any feature matrix $\bm{\Phi}(\bm{\omega})\in \real^{n\times mN}$, the upper bound we established in Eq. \eqref{Eq:Sharper_1} is sharper than that of Eq. \eqref{Eq:Sharper_3}. Similarly, the proof of the upper bound on the Gaussian complexity is based on the concentration of measure for $\chi^{2}$ random variables.

To compute explicit bounds from the Rademacher and Gaussian complexities, we require upper and lower bounds on the matrix norms of the feature matrix $\bm{\Phi}(\bm{\omega})$. The following lemma provides a first step to characterize such a result:
\begin{lemma}\textsc{(Uniform Concentration of the Spectra of Feature Matrices)}
	\label{Lemma:Phi_Norm}
	Suppose Assumptions \ref{Assumption:1}-\ref{Assumption:3} holds. Consider the feature matrix $\bm{\Phi}(\bm{\omega})=(	\bm{\varphi}_{\bm{\omega}}(\bm{x}_{i})))_{1\leq i\leq n}$ and the associated kernel matrix $\bm{K}({\bm{\omega}})\df (K_{\bm{\omega}}(\bm{x}_{i},\bm{x}_{j}))_{1\leq i,j\leq n}$. Then, the following concentration inequality holds
	\begin{align}
		\label{Eq:CONC2}
	\nonumber
	&\prob\left(\sup_{\bm{\omega}\in \mathrm{S}_{m}^{+}} \left| \|\bm{K}(\bm{\omega}) \|_{2}-\dfrac{1}{N}\|\bm{\Phi}\bm{\Phi}^{T}(\bm{\omega})\|_{2}  \right|\geq \delta \right)\\
	&\hspace{20mm}\leq  2^{14}\left(\dfrac{m^{2}n^{2}L^{2}}{\delta^{2}}\right)\exp\left({6n\log 3\over m+2}\right)\cdot\exp\left(-{cN\delta^{2}\over 2n^{2}m(m+2)L^{2}} \right),
		\end{align}
where $c>0$ is a universal constant. In particular, 
\begin{align}
\nonumber
&\lim\sup_{N\rightarrow \infty}\prob\left(\sup_{\bm{\omega}\in \mathrm{S}_{m}^{+}} \left| \|\bm{K}(\bm{\omega}) \|_{2}-\dfrac{1}{N}\|\bm{\Phi}\bm{\Phi}^{T}(\bm{\omega})\|_{2}  \right|\geq \delta \right)=0.
\end{align}
\end{lemma}
\begin{proof}
	The proof is presented in Section \ref{App:Proof_of_Lemma_Phi_Norm} of the supplementary materials.
\end{proof}

Lemma \ref{Lemma:Phi_Norm} suggests that the spectra of the feature matrix ${1\over N}\bm{\Phi}\bm{\Phi}^{T}(\bm{\omega})$ uniformly concentrates around that of  the kernel matrix $\bm{K}({\bm{\omega}})$ at the rate of $\mathcal{O}(N^{-{1\over 2}}\log(N))$ as the number of random feature samples $N$ tends to infinity. Consequently, to establish the spectral bound $\|\bm{\Phi}(\bm{\omega})\|_{2}$, it suffices to establish the spectral bound for the kernel matrix. The asymptotic analysis of the  spectrum of the (radial) kernel matrices has been studied extensively in the random matrix literature; see, \textit{e.g.}, \cite{el2010spectrum} for locally smooth kernel functions, and \cite{fan2015spectral} for the non-smooth kernel functions. Those asymptotic results are universal and consequently do not impose any conditions on the distribution of feature vectors. In the sequel, we are concerned with the \textit{non-asymptotic} results for the spectrum of the Gram matrices of the mixture kernels:

\begin{lemma}\textsc{(Spectral Norm of the Translation Invariant Kernels)}
	\label{Lemma:Kn_Norm}
	Suppose Assumptions \ref{Assumption:1}-\ref{Assumption:3} holds. Suppose the base kernels are centered and translation invariant in the sense that  $K_{i}(\bm{x},\widehat{\bm{x}})=\psi_{i}(\bm{x}-\widehat{\bm{x}})-\psi_{i}(\bm{0})\bm{1}_{\{\bm{x}=\widehat{\bm{x}}\}}$ for all $i=1,2,\cdots,m$. Then, with the probability of at least $1-\rho$, we have that
\begin{align}
\|\bm{K}(\bm{\omega})\|_{2}\leq 2n\sqrt{1-{1\over n}}+4\sqrt{\ln \dfrac{4\times 3^{2n}}{\rho}},
\end{align}
for all $\bm{\omega}\in \mathrm{S}_{+}^{m}$.
\end{lemma}

\begin{proof}
	The proof is deferred to Section \ref{Lemma:Kn_Norm} of the supplementary materials.
\end{proof}
We remark that our non-asymptotic analysis of the spectrum of kernel matrices in Lemma \ref{Lemma:Kn_Norm}  generalizes those of \cite{kasiviswanathan2015spectral} that are established for the special cases of the Gaussian and polynomial kernels. Moreover, the spectral bounds in \cite{kasiviswanathan2015spectral} are stated under the assumption that the features $(\bm{x}_{i})_{1\leq i\leq n}$ are i.i.d. sub-Gaussian random variables. More specifically, the proof we present employs the \textit{decoupling} technique from the random matrix theory \cite{vershynin2010introduction} as well as a concentration of measure for Lipschitz functions of the Gaussian random variables.

We now are in position to state the main result of this section which is a generalization bound for the risk function based on the Rademacher complexity of the class of function $\mathcal{F}_{m}(R)$ defined by Eq. \eqref{Eq:Function_Class}:
\begin{theorem}\textsc{(Distributionally Robust Generalization Bound)}
	\label{Thm:Bartlett}
	 Consider the function class $\mathcal{F}_{m}(R)$ defined in Eq. \eqref{Eq:Function_Class} mapping from $\mathcal{X}$ to $\mathcal{Z}$. Let $(\bm{x}_{i},y_{i})_{1\leq i\leq n}\sim_{\text{i.i.d.}} Q$  denote the training data sampled from the joint probability distribution $Q\in \mathcal{M}(\mathcal{Z})$. Suppose there exists a dominating  loss function in the sense that
	 \begin{align}
	 \ell(y,f(\bm{x}))\leq \tilde{\ell}(y,f(\bm{x})), 
	 \end{align}
	for all $z=(y,\bm{x})\in  \mathcal{Z}=\mathcal{Y}\times \mathcal{X}$, and all $f\in \mathcal{F}_{m}(R)$ such that  $\tilde{\ell}(y,\cdot):\mathcal{X}\rightarrow \mathcal{Z}$ is $L_{\psi}$-Lipschitz with respect to the Euclidean norm defined on $\mathcal{Z}$. Then, for any integer $n$ and any $0<\delta<1$, with probability at least $1-\delta$ over samples of length $n$, every $f$ in $\mathcal{F}_{R}$ satisfies
	\begin{align}
	\nonumber
	\label{Eq:Distributionally_Robust}
	\sup_{P\in \mathcal{P}}\expect_{P}[\ell(Y,f(\bm{X}))]&\leq 2
	\expect_{\widehat{Q}^{n}}\left[{\tilde{\ell}(y,f(\bm{x}))}\right]\\
	&\hspace{4mm}+{2L_{\psi}\over B}\mathfrak{R}^{n}(\mathcal{F})+B \sqrt{{8\ln(2/\delta)\over n}}+\dfrac{Br}{2},
		\end{align}
where $B\df \sup_{(y,z)\in \mathcal{Y}\times \mathcal{Z}} \tilde{\ell}(y,z)$, and $\mathcal{P}=\{P\in \mathcal{M}(\mathcal{X}\times \mathcal{Y}):D_{\mathrm{KL}}(P||Q)\leq r, P\ll Q \}$.
	\hfill $\square$
\end{theorem}

The proof is presented in Appendix \ref{App:Proof_of_Theorem_Bartlett}.

Let us make two remarks about the distributionally robust generalization bound in Eq. \eqref{Eq:Distributionally_Robust}.  First, the result of Bartlett and Mendelson \cite[Theorem 8]{bartlett2002rademacher} regarding the standard generalization bound is as follows
\begin{align}
\sup_{P\in \mathcal{P}}\expect_{P}[\ell(Y,f(\bm{X}))]&\leq 
\expect_{\widehat{Q}^{n}}\left[{\tilde{\ell}(y,f(\bm{x}))}\right]+{2L_{\psi}}\mathfrak{R}^{n}(\mathcal{F})+\sqrt{{8\ln(2/\delta)\over n}}.
\end{align}
Therefore, for $r=0$, the bound in Eq. \eqref{Eq:Distributionally_Robust} is not expected to be tight. Second, due to Theorem \ref{Lemma:Phi_Norm}, the Rademacher complexity of the function class $\mathcal{F}_{m}(\mathcal{R})$ in the generalization bound of Eq. \eqref{Eq:Distributionally_Robust} can be related to the spectral norm of the kernel matrix via the following inequality
\begin{align}
\nonumber
\mathfrak{R}_{n}(\mathcal{F}_{m}(R))\leq \dfrac{2R}{nD}\sqrt{192\pi N\over m} \left(\|\bm{K}(\bm{\omega})\|_{2}+\mathcal{O}\left(\dfrac{m}{\sqrt{N}} \right) \right)^{1\over 2},
\end{align}
where the upper bounds on the spectral norm of the mixture kernel matrix $\|\bm{K}(\bm{\omega})\|_{2}$ are established in Lemmas \ref{Lemma:Kn_Norm}.

\section{Empirical Evaluation: Perturbation of Input Features}
\label{Section:Empirical Evaluation for Classification of Real-World Datasets}

\begin{figure}[t!]
	\begin{center}
		\hspace*{-5mm}		\subfigure{
			\includegraphics[trim={.2cm .2cm .2cm  .2cm},width=.2\linewidth]{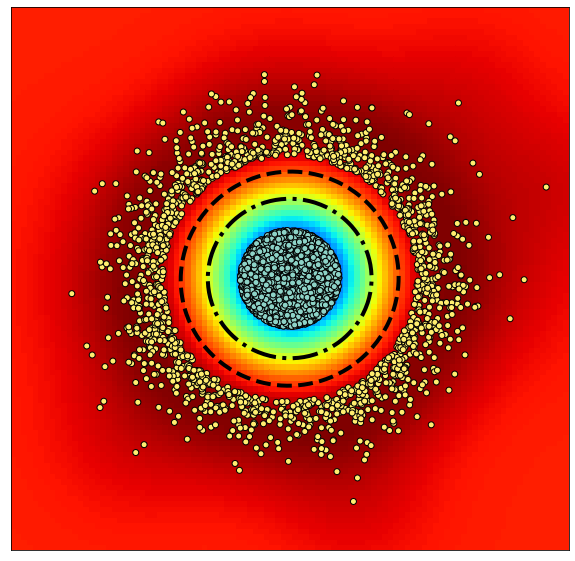} 
			\includegraphics[trim={.2cm .2cm .2cm .2cm},width=.2\linewidth]{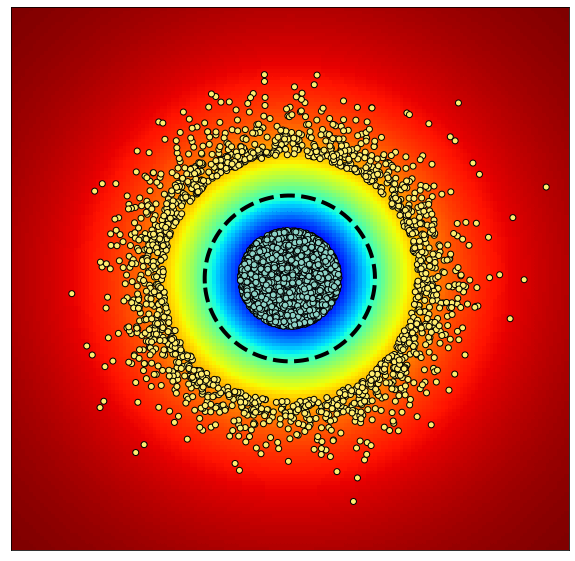}}  \\
		\subfigure{\footnotesize{\hspace{-2mm} (a) \hspace{24mm} (b)}}	\\ 
		\caption{\footnotesize{The decision boundaries of the kernel SVM. Panel (a): trained kernel mixture model using Algorithm \ref{Algorithm:1} with the distributional ball radius $r=0.01$ (dash-dot line) and $r=100$ (dash line). Panel (b): untrained kernel mixture model with the uniform weights $\bm{w}=\bm{1}/m$.}}
		\subfigure{
			\includegraphics[trim={.2cm .2cm .2cm  .2cm},width=.2\linewidth]{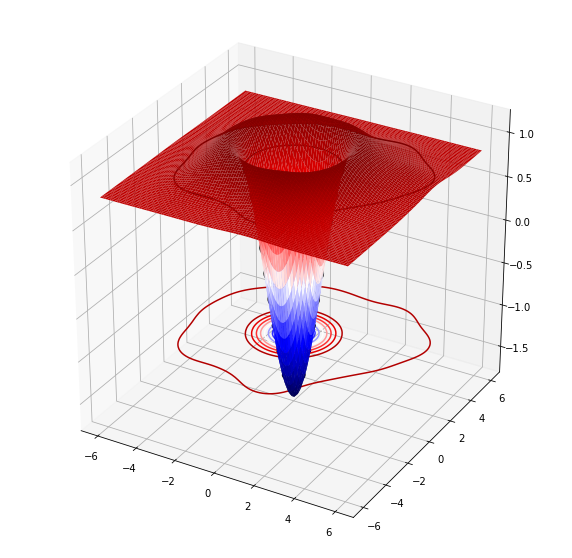} 
			\includegraphics[trim={.2cm .2cm .2cm  .2cm},width=.2\linewidth]{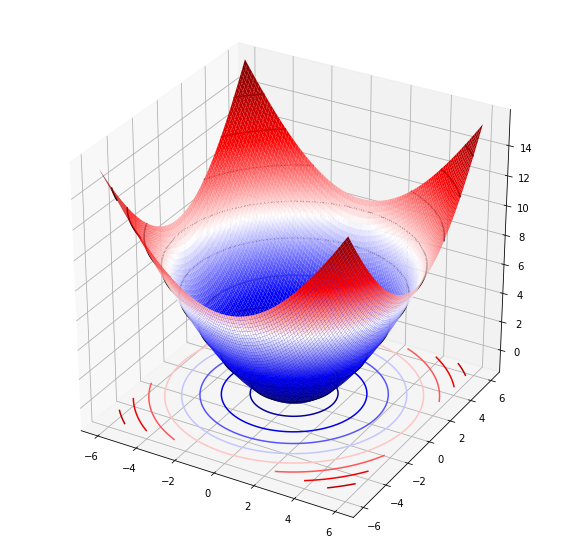} 
			\includegraphics[trim={.1cm .1cm .1cm .1cm},width=.2\linewidth]{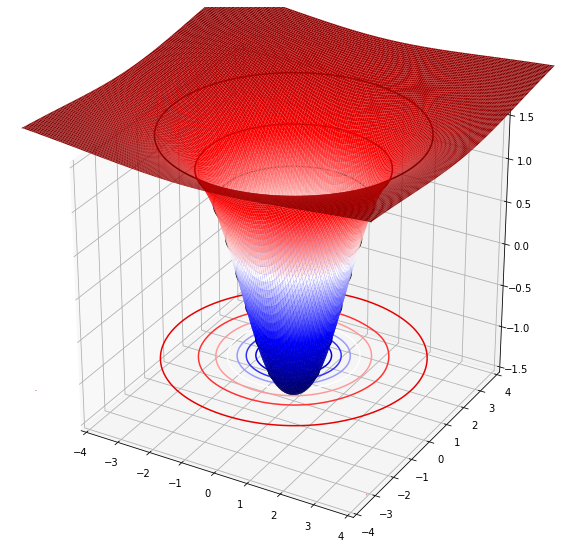}}
		\\ 		\vspace{-3mm}	\subfigure{\footnotesize{\hspace{3mm} (a) \hspace{25mm} (b) \hspace{25mm} (c) }}
		
		\caption{\footnotesize{The level sets of the decision function of the kernel SVMs with multiple kernels trained via the robust optimization method of Algorithm \ref{Algorithm:1}. Panel (a): $r=0.01$, Panel (b): $r=100$,  Panel (c): mixture kernel with the uniform weights  $\bm{w}=\bm{1}/m$. }}
		\label{Fig:Comparison_CV} 
	\end{center}
\end{figure}

\subsection{Synthetic data-set}
For the numerical experiments with the synthetic data, we adapt the model of \cite{sinha2017certifying}. In particular, we generate synthetic i.i.d. data $(\bm{x}_{i},y_{i})_{1\leq i\leq n}\sim_{\text{i.i.d.}} Q$ whereby $(\bm{x}_{1},\cdots,\bm{x}_{n})\sim_{\text{i.i.d.}} \mathsf{N}(\bm{0},\bm{I}_{2\times 2})$ with the class labels $y_{i}=\mathrm{sgn}(\|\bm{x}_{i}\|_{2}-\sqrt{2}),i=1,2,\cdots,n$. To
widen the separating margin of the two classes, we remove data points whose norms falls inside the interval $\|\bm{x}_{i}\|_{2}\in (\sqrt{2}/1.3,1.3\sqrt{2})$. To train the mixture of the base kernels, we consider a set of radial Gaussian kernels $K_{i}(\bm{x},\widehat{\bm{x}})=\exp\left(-\gamma_{i}\|\bm{x}-\widehat{\bm{x}}\|_{2}^{2} \right)$, with the bandwidth parameters $\gamma_{i}\in \{0.5,1,1.5,\cdots,9.5,10\}$.

 In Figure \ref{Fig:Comparison_CV}(a), we illustrate the decision boundaries of the kernel SVM with the radius $r=0.01$ (dash-dot line) and $r=100$ (dash line). Clearly, for a larger radius, the decision boundary is pushed outward. This observation is in line with the one that is made in \cite{sinha2017certifying}. Namely, since $70\%$ of data is accumulated in the inner region $\|\bm{x}_{i}\|_{2}\leq \sqrt{2}/1.3$, to incur the maximum classification error, an adversery pushes the features in the region $\|\bm{x}_{i}\|_{2}\leq \sqrt{2}/1.3$ outwards close to the boundaries of $\|\bm{x}\|_{2}=1.3/\sqrt{2}$. For a reference, in Figure \ref{Fig:Comparison_CV}(b), we also show the decision boundary of the kernel SVM, when the kernels are mixed uniformly, \textit{i.e.}, $\bm{\omega}=\bm{1}/m$.  In Figures \ref{Fig:Comparison_CV}(a) and \ref{Fig:Comparison_CV}(b), the level sets of the deicision function
 \begin{align}
 \label{Eq:decision}
 \bm{x}\mapsto \Psi(\bm{x})\df \mathrm{sgn}\left(\omega_{0}+\sum_{i=1}^{n-1}\omega_{i}y_{i}K(\bm{x},\bm{x}_{i})\right),
 \end{align}
 for $r=0.01$ and $r=100$ are shown, respectively, where $\mathrm{sgn}(\cdot)$ is the sign function. Clearly, increasing the radius of the distribution ball $\mathcal{P}$ changes the decision boundaries of the resulting predictor.

 \subsection{MNIST data-set} We apply our robust optimization method to the adversarial perturbations of the out of sample (test) data of a model trained on the MNIST data-set.\footnote{The Jupyter notebook of our Python 3 codes for this experiment is available in the Github repository: \url{https://github.com/mbadieik/Adversarial-MKL}} All computations of this part were performed on a DGX Station from NVIDIA running Linux operating system with an Intel Xeon E5-2698 v4 2.2 GHz (20-Core) CPU and two of four total Tesla V100 GPUs (32 GB memory for each GPU). We present our results for the robust classification of images from MNIST databases \cite{lecun1998gradient}. We use \textsc{Pytorch} in Python 3.5. We first train a convolutional neural network (CNN) to extract features of input images on unmodified training data-set, and use the extracted features for classification with kernel SVM. The trained CNN has two convolutional layers followed by the max pooling layer. The archtiecture of our CNN is as follows:
\begin{itemize}[leftmargin=*]
	\item  Convolutional: Output Channel: 32, Kernel size: $3\times 3$,  Activation: ReLU,  Max Pooling: $2\times 2$,
	
	\item Convolutional: Output Channel 16, Kernel size: $3\times 3$,  Activation: ReLU,  Max Pooling: $2\times 2$,

    \item Fully connected: Input: $5\times 5\times 32$,  Output:100, Activation: ReLU,
    
    \item Kaiming Initialization \cite{he2015delving},
    
    \item  Fully connected: Input: $100$,  Output:10, Activation: ReLU.
\end{itemize}
 To train the CNN, we split the data-set into $60,000$ training data and $10,000$ test data points. We use a binary cross entropy loss function in conjunction with the SGD with the momentum $0.9$ and the learning rate $0.01$ for the optimization of the loss.

 \subsection{Adversarial models} In the sequel, we briefly review the adversarial models we consider in this paper. The adversarial models in the sequel are primarily devised to test the robustness of the end-to-end deep neural networks. Nevertheless, since the features for the downstream kernel SVM are extracted from the penultimate layer of a CNN--- before the soft-max output layer--- these adversarial models are also useful tools to systematically perturb the extracted features in order to examine the robustness of the learned kernel mixture model using Algorithms \ref{Algorithm:1} and \ref{Algorithm:2}.

\begin{figure}[t!]
	\begin{center}
    	\subfigure{
			\includegraphics[trim={.2cm .2cm .2cm  .2cm},width=.23\linewidth]{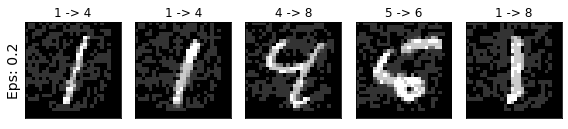} \hspace{3mm}
			\includegraphics[trim={.2cm .2cm .2cm  .2cm},width=.23\linewidth]{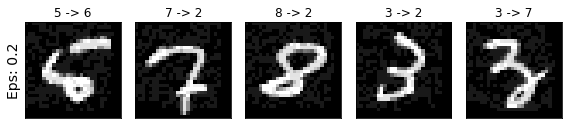}\hspace{3mm}
			\includegraphics[trim={.2cm .2cm .2cm  .2cm},width=.23\linewidth]{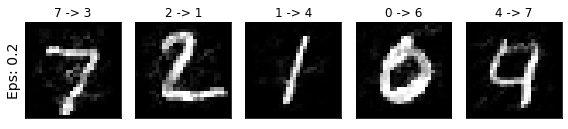}\hspace{3mm}
			\includegraphics[trim={.2cm .2cm .2cm  .2cm},width=.23\linewidth]{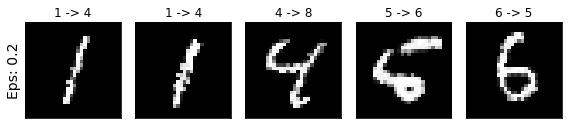}}\\	
		\subfigure{
			\includegraphics[trim={.2cm .2cm .2cm  .2cm},width=.23\linewidth]{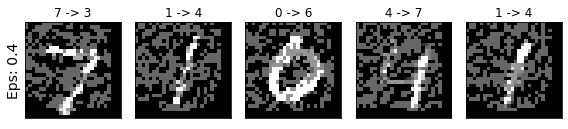} \hspace{3mm}
			\includegraphics[trim={.2cm .2cm .2cm  .2cm},width=.23\linewidth]{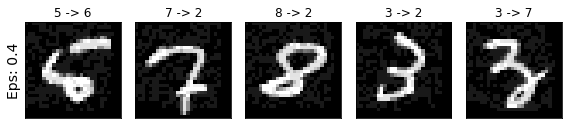}\hspace{3mm}
			\includegraphics[trim={.2cm .2cm .2cm  .2cm},width=.23\linewidth]{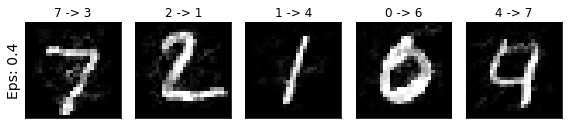}\hspace{3mm}
			\includegraphics[trim={.2cm .2cm .2cm  .2cm},width=.23\linewidth]{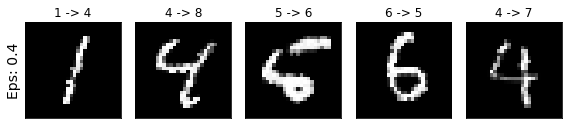}}\\
		\subfigure{
			\includegraphics[trim={.2cm .2cm .2cm  .2cm},width=.23\linewidth]{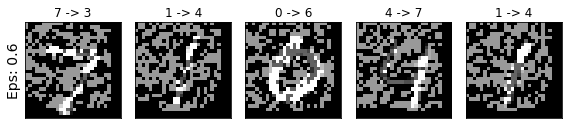} \hspace{3mm}
			\includegraphics[trim={.2cm .2cm .2cm  .2cm},width=.23\linewidth]{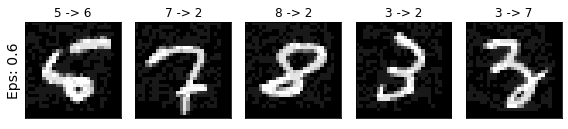}\hspace{3mm}
			\includegraphics[trim={.2cm .2cm .2cm  .2cm},width=.23\linewidth]{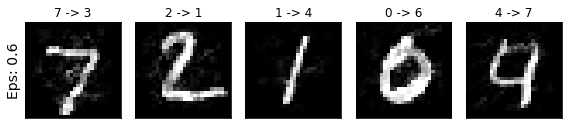}\hspace{3mm}
			\includegraphics[trim={.2cm .2cm .2cm  .2cm},width=.23\linewidth]{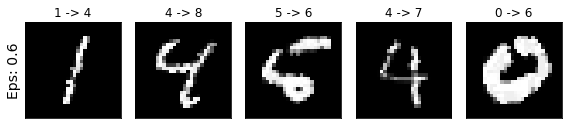}}\\
 		\subfigure{
			\includegraphics[trim={.2cm .2cm .2cm  .2cm},width=.23\linewidth]{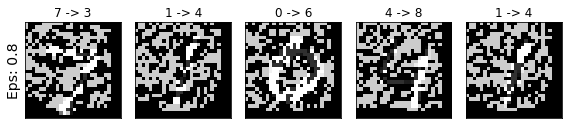} \hspace{3mm}
			\includegraphics[trim={.2cm .2cm .2cm  .2cm},width=.23\linewidth]{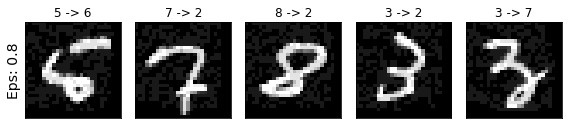}\hspace{3mm}
			\includegraphics[trim={.2cm .2cm .2cm  .2cm},width=.23\linewidth]{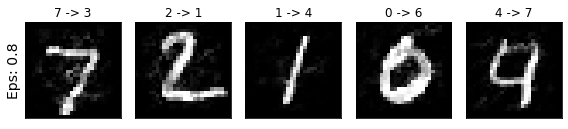}\hspace{3mm}
			\includegraphics[trim={.2cm .2cm .2cm  .2cm},width=.23\linewidth]{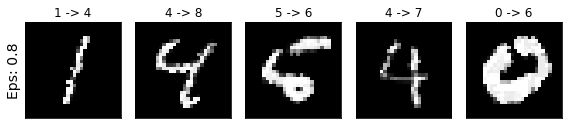}}
		\caption{\footnotesize{Example of adversarial inputs resulting in misclassification on MNIST dataset for different perturbation levels $\epsilon \in \{0.2,0.4,0.6,0.8\}$. Panel (a): FGSM  \cite{goodfellow2014explaining}, Panel (b): PGM  \cite{madry2017towards}, Panel (c): DDN  \cite{rony2019decoupling}, Panel (d): C\&W \cite{carlini2017towards}.}}
		\label{Fig:Comparison_CV} 
	\end{center}
\end{figure}

\subsubsection{Fast Gradient Signed Method \text{(FGSM)}  \cite{goodfellow2014explaining}}  This method was proposed to improve the robustness of a neural network model against input purturbations. Specifically, given a loss function $\ell((\bm{x},y);\bm{\theta})$ for deep neural networks with the weights $\bm{\theta}$, the $i$-th out-of-sample feature vector is perturbed by the additive noise $\tilde{\bm{x}}_{i}=\bm{x}_{i}+\Delta\bm{x}_{i}(\bm{\theta})$, where $\Delta\bm{x}_{i}(\bm{\theta})$ is the maximal perturbation computed via the following optimization problem
\begin{subequations}
 \begin{align}
 \Delta\bm{x}_{i}(\bm{\theta})&= \arg \min_{\bm{n}:\|\bm{n} \|_{\infty}\leq \epsilon}\ell(\bm{x}_{i}+\bm{n},y_{i};\bm{\theta})\\
 &=-\epsilon\cdot \mathrm{sgn}\Big(\nabla_{\bm{x}} \ell(\bm{x}_{i},y_{i};\bm{\theta})^{T}\cdot \bm{n}\Big). 
 \end{align}
\end{subequations}
where $\mathrm{sgn}(\cdot)$ is the sign function, and the parameter $\epsilon>0$ determines the amount of the adverserial perturbations. 
  
 \subsubsection{Projected Gradient Method (PGM) \cite{madry2017towards}} This method is an iterative approach to compute the adversarial peturbations. The PGM augments the stochastic gradient steps for the parameter $\bm{\theta}$ with the projected gradient ascent over $\bm{x}\mapsto \ell(\bm{x},y;\bm{\theta})$, where
 \begin{subequations}
 \begin{align}
 	\Delta\bm{x}^{t+1}_{i}(\bm{\theta})&=\arg\max_{\bm{n}:\|\bm{n}\|_{p}\leq \epsilon}\nabla_{\bm{x}} \ell(\bm{x}_{i},y_{i};\bm{\theta})^{T}\cdot \bm{n}\\
 	\bm{x}_{i}^{t+1}&=\mathscr{P}_{\ball_{\epsilon}^{p}(\bm{x}^{t}_{i})}\left(\bm{x}_{i}^{t}+\alpha_{t} 	\Delta\bm{x}^{t+1}_{i}(\bm{\theta}) \right), \quad t=1,2,\cdots,T_{\text{adv}},
 \end{align}
\end{subequations}
where $\eta_{t}$ is the step-size (learning rate), $\mathscr{P}(\cdot)$ is the Euclidean projection, and $\ball_{\epsilon}^{p}(\bm{x}^{t}_{i})\df \{\bm{x}\in \real^{n}: \|\bm{x}-\bm{x}^{t}_{i}\|_{p}\leq \epsilon \}$ is the $p$-norm Euclidean ball of the radius $\epsilon$ centered at $\bm{x}_{i}^{t}$. 

\subsubsection{Decoupled Direction and Norm Perturbation (DDN) \cite{rony2019decoupling}} This method induces misclassification with low $L_{2}$ norm. This advarserial model optimizes the cross entropy loss, and instead of penalizing the norm in each iteration, projects the perturbation onto a $L_{2}$-sphere centered at the original image. In particular, the following iterative method is used
\begin{subequations}
\begin{align}
\tilde{\bm{x}}^{t}&=\bm{x}+\eta^{t}\dfrac{\bm{\delta}^{t}}{\|\bm{\delta}^{t}\|_{2}},\\
\bm{\delta}^{t}&=\bm{\delta}^{t-1}+\alpha m \dfrac{\nabla_{\bm{x}}\ell( \bm{x}^{t-1},y;\bm{\theta}) }{\|\nabla_{\bm{x}}\ell( \bm{x}^{t-1},y;\bm{\theta})\|_{2}},
\end{align}
\end{subequations}
where $\alpha>0$ and $\eta^{t}>0$ are fixed and time-varying step-sizes, and $m$ is a parameter taking the value of $m=-1$ for a targeted attack and $m=1$ otherwise. 

\subsubsection{Carlini and Wagner (C\&W) Method \cite{carlini2017towards}} The C\&W $L_{2}$ model minimizes two criteria at the same time: the perturbation that makes the sample adversarial, and the $L_{2}$ norm of the perturbation. Instead of using a box-constrained optimization method, they propose changing variables using the $\tanh$ function, and instead of optimizing the cross-entropy of the adversarial example, they use a difference between logits. For a targeted attack aiming to obtain class $j$, with $Z(\cdot)$ denoting the model output before the softmax activation (logits), it optimizes:
\begin{align}
\min_{\delta\in \real} \|\bm{x}-\tilde{\bm{x}}(\delta)\|_{2}^{2}+Cf(\tilde{\bm{x}}(\delta)),
\end{align}
where
\begin{subequations}
\begin{align}
	\label{Eq:F1}
f(\tilde{\bm{x}}(\delta))&\df \max\Big\{ \max_{i\not= j}\{Z(\tilde{\bm{x}}(\delta))_{i}\}-Z(\tilde{\bm{x}}(\delta))_{j},-\epsilon\Big\},\\
\label{Eq:F2}
\tilde{\bm{x}}(\delta)&\df \dfrac{1}{2}\left(\tanh(\arctan(\bm{x})+\delta)+1\right).
\end{align}
\end{subequations} 
In Equation \eqref{Eq:F1}, $\epsilon$ is the confidence parameter. The larger $\epsilon$, the adversarial sample will be misclassified with higher confidence. Moreover, $Z(\tilde{\bm{x}})_{i}$ denotes the logit corresponding to the $i$th class label. To use this model in the untargeted settings, the definition of $f$ is modified to 
\begin{align}
	f(\tilde{\bm{x}}(\delta))=\max\Big(Z(\tilde{\bm{x}}(\delta))_{y}-\max_{i\not=y}\{Z(\tilde{\bm{x}}(\delta))_{i}  \},-\epsilon \Big).
\end{align}

\subsection{Alternative MKL methodologies}

We compare our method with the following traditional MKL learning paradigms for classfication of the standard test datasets:

\begin{itemize}[leftmargin=*]
\item \textsc{AverageMKL}: simple average of base kernels  \cite{lauriola2020mklpy}, 

\item \textsc{EasyMKL}: fast and memory efficient margin-based combination \cite{aiolli2015easymkl}, 

\item \textsc{GRAM}: radius/margin ratio optimization  \cite{lauriola2017radius}, 

\item \textsc{MEMO}: margin maximization and complexity minimization \cite{lauriola2018minimum}, 

\item  \textsc{RMKL}: margin and radius based multiple kernel learning \cite{do2009margin}, 

\item \textsc{FHeuristic}: heuristic based on kernels alignment  \cite{qiu2008framework}, 

\item \textsc{CKA}: centered kernel alignment \cite{cortes2010two},

\item \textsc{PWMK}: heuristic based on individual kernels performance \cite{tanabe2008simple}.
\end{itemize}

We implement these methods by leveraging the \textsc{MKLpy} library \cite{lauriola2020mklpy}.\footnote{\url{https://pypi.org/project/MKLpy/}} For \textsc{GRAM}, \textsc{MEMO}, and \textsc{RMKL}, we select the step-size (learning rate) of $\eta=0.1$, and the number of iterations $T=500$. For \textsc{EasyMKL},  the regularization parameter of $\lambda=0.1$ is adapted in \textsc{MKLpy} library which we also use in our experiments.

\subsection{Numerical results}  In Table \ref{Table:long_table}, we tabulate the test error of different MKL paradigms for different parameteres $\epsilon \in \{0,0.1,\cdots,0.9\}$ of the adverserial models. For PGM model, we use the step-size $\alpha_{t}=2/255$ and $T_{\text{adv}}=40$ iterations. For Algorithm \ref{Algorithm:1}, we use the step-sizes of $\beta=0.7$ for the primal and $\eta=0.3$ for the dual vectors. We also use $T=200$ iterations without applying any stopping criterion. In addition, our simulations are reported for the batch size of $B=200$. To train the kernel in kernel SVMs, we only use a fraction of training data-set ($1000$ training samples out of $60,000$), while the test is performed on the entire test data-set of $10000$ samples. The reduction of the size of training data-set is due to the fact that certain MKL methods, such as CKA or GRAM, scale poorly to large training data-sets, and to include these methods in our comparision, we had to use a smaller number of training data samples. In our experiments with Algorithm \ref{Algorithm:1}, we leverage a distributional ball with the radius $R=0.1$ which has shown a good performance on the MNIST data-set.

From Table \ref{Table:long_table}, we osberve that increasing $\epsilon$ degrades the accuracy of CNN significantly on FGSM, and PGM model, while it increases the accuracy of C\&W model. The accuracy of DDN is independent of the parameter $\epsilon$, and remiains the same throughout the experiments. For $\epsilon=0$, FGSM and PGM return the unperturbed test data, and clearly the accuracy of CNN is higher than that of kernel methods (\%88 versus \%82). The fact that a deep neural network outperforms the kernel SVMs is natural since the size of the training data for training deep neural network is large $(60,000)$, and end-to-end classification methods outperform alternative machine learning paradigms. Although for $\epsilon>0$, the extracted features feeded to the downstream classifier is similarly perturbed in CNN and kernel SVMs, there is a significant performace degradation of the CNN model under PGM and FGSM attack models compared to that of the kernel SVM. This is due to the fact that these adversarial models are devised to particularly attack end-to-end DNNs and as such they are less effective on kernel SVMs.  From Table \ref{Table:long_table}, we also observe that our MKL training achieves a better accuracy on the perturbed test data-set for all range of $\epsilon\in \{0.1,\cdots,0.9\}$, thus validating our MKL approach.

\begin{center}
	\footnotesize{\centering
\hspace{-20mm}		\begin{table}[!t]
			\scriptsize{ \begin{tabular}{||c|| c| c| c|c|c|c|}
					\rowcolor{Gray}
					\hline
						
				\rowcolor{Gray}
				
				\hline
				CNN & $\epsilon$=0 & $\epsilon$=0.2 & $\epsilon$=0.4 &$\epsilon$=0.6 &$\epsilon$=0.8 &$\epsilon$=1  \\ 
				\hline\hline
				FGSM   					  &0.8875  		      &0.3872         &0.0789      &0.0434   			&0.0460                 		  &0.0575   \\ 
				\hline
				PGM-$\ell_{\infty}$  &0.8875			 &0.7579         &0.7579       &0.7579  		    &0.7579                 		   &0.7579     \\
				\hline
				DDN  					   &0.0344  		   &0.0344         &0.0344      &0.0344    			 &0.0344                 		   &0.0344      \\
				\hline
				C\&W 					  &0.6109			  &0.6204          &0.6297      &0.6395		         &0.6483                 			&0.8875    \\
				\hline
			    
				\hline
				\rowcolor{Gray}

				\textsc{AverageMKL} \cite{lauriola2020mklpy} & $\epsilon$=0 & $\epsilon$=0.2 & $\epsilon$=0.4 &$\epsilon$=0.6 &$\epsilon$=0.8 &$\epsilon$=1  \\
				\hline\hline
				FGSM   					  &0.8204  		  &0.8213         &0.8734         &0.8935	       &0.8969              &0.8996   \\ 
				\hline
				PGM-$\ell_{\infty}$  &0.8204		 &0.8163         &0.8163         &0.8163		   &0.8163               &0.8163    \\
				\hline
				DDN  					   &0.8096  	   &0.8096         &0.8096        &0.8096          &0.8096               &0.8096      \\
				\hline
				C\&W 					  &0.8171		   &0.8172          &0.8173         &0.8174           &0.8188               &0.8204    \\
				\hline
				
				\rowcolor{Gray}
				\textsc{EasyMKL} \cite{aiolli2015easymkl} & $\epsilon$=0 & $\epsilon$=0.2 & $\epsilon$=0.4 &$\epsilon$=0.6 &$\epsilon$=0.8 &$\epsilon$=1 \\
				\hline\hline
				FGSM   					  &0.8209  		  &0.8201         &0.8746         &0.8924         &0.8967                &0.8994   \\ 
				\hline
				PGM-$\ell_{\infty}$  &0.8209		 &0.8168         &0.8168         &0.8168          &0.8168                &0.8168     \\
				\hline
				DDN  					   &0.8106 		 	&0.8106         &0.8106         &0.8168          &0.8106                &0.8106      \\
				\hline
				C\&W 					  &0.8181			&0.8179          &0.8179         &0.8181           &0.8194                &0.8209    \\
				\hline
				
				\hline
				\rowcolor{Gray}
				
				\hline
				\textsc{GRAM} \cite{lauriola2017radius} & $\epsilon$=0 & $\epsilon$=0.2 & $\epsilon$=0.4 &$\epsilon$=0.6 &$\epsilon$=0.8 &$\epsilon$=1  \\
				\hline\hline
				FGSM   					  &0.8211   &0.8198         &0.8752       &0.8922          &0.8965                   &0.8994  \\ 
				\hline
				PGM-$\ell_{\infty}$  &0.8211   &0.8169         &0.8169       &0.8169            &0.8169                  &0.8169     \\
				\hline
				DDN  					   &0.8107   &0.8107         &0.8107       &0.8107             &0.8107                  &0.8107      \\
				\hline
				C\&W 					  &0.8182   &0.8181         &0.8180       &0.8181              &0.8196                  &0.8211    \\
				\hline
				
						\rowcolor{Gray}
				\hline
				\textsc{MEMO} \cite{lauriola2018minimum} & $\epsilon$=0 & $\epsilon$=0.2 & $\epsilon$=0.4 &$\epsilon$=0.6 &$\epsilon$=0.8 &$\epsilon$=1 \\
				\hline\hline
				FGSM   					  &0.8204  		  &0.8190         &0.8730      &0.8922       &0.8964                  &0.8993   \\ 
				\hline
				PGM-$\ell_{\infty}$  &0.8204		 &0.8165         &0.8165      &0.8165       &0.8165                   &0.8165     \\
				\hline
				DDN  					   &0.8101    		&0.8101         &0.8101       &0.8101        &0.8101                   &0.8101      \\
				\hline
				C\&W 					  &0.8175	    	&0.8174          &0.8177      &0.8178        &0.8188                   &0.8204    \\
				\hline

				\rowcolor{Gray}
				\hline
				\textsc{RMKL} \cite{do2009margin} & $\epsilon$=0 & $\epsilon$=0.2 & $\epsilon$=0.4 &$\epsilon$=0.6 &$\epsilon$=0.8 &$\epsilon$=1\\ 
				\hline\hline
				FGSM   					  &0.8198   &0.8180         &0.8722         &0.8920           &0.8961                  &0.8990   \\ 
				\hline
				PGM-$\ell_{\infty}$  &0.8198   &0.8160         &0.8160         &0.8160            &0.8160                  &0.8160    \\
				\hline
				DDN  					   &0.8095  &0.8095         &0.8095        &0.8095            &0.8095                 &0.8095     \\
				\hline
				C\&W 					  &0.8168   &0.8170          &0.8172         &0.8172             &0.8184                  &0.8198    \\
				\hline

				\rowcolor{Gray}
				\hline
				\textsc{FHeuristic} \cite{qiu2008framework} & $\epsilon$=0 & $\epsilon$=0.2 & $\epsilon$=0.4 &$\epsilon$=0.6 &$\epsilon$=0.8 &$\epsilon$=1 \\ [0.5ex] 
				\hline\hline
				FGSM   					  &0.8200  		   &0.8182         &0.8721         &0.8920          &0.8962               &0.8990   \\ 
				\hline
				PGM-$\ell_{\infty}$  &0.8200	      &0.8163         &0.8163         &0.8163           &0.8163                &0.8163     \\
				\hline
				DDN  					   &0.8097 			&0.8097         &0.8097        &0.8097           &0.8097               &0.8097      \\
				\hline
				C\&W 					  &0.8170			&0.8170          &0.8172         &0.8173           &0.8186               &0.8200    \\
				\hline
				
			    \rowcolor{Gray}
				\hline
				CKA \cite{cortes2010two}  & $\epsilon$=0 & $\epsilon$=0.2 & $\epsilon$=0.4 &$\epsilon$=0.6 &$\epsilon$=0.8 &$\epsilon$=1 \\ [0.5ex] 
				\hline\hline
				FGSM   					  &0.1743  		&0.1359         &0.1097        &0.1013                  &0.0988                &0.0987   \\ 
				\hline
				PGM-$\ell_{\infty}$  &0.1743	   &0.1626         &0.1626        &0.1626                 &0.1626                 &0.1626     \\
				\hline
				DDN  					   &0.1417      &0.1417         &0.1417         &0.1417                  &0.1417                  &0.1417      \\
				\hline
				C\&W 					  &0.1647		&0.1643         &0.1640        &0.1648                  &0.1653                 &0.1743    \\
				\hline
				
							    \rowcolor{Gray}
				\hline
				PWMK \cite{cortes2010two}  & $\epsilon$=0 & $\epsilon$=0.2 & $\epsilon$=0.4 &$\epsilon$=0.6 &$\epsilon$=0.8 &$\epsilon$=1 \\ [0.5ex] 
				\hline\hline
				FGSM   					  &0.8204 	 &0.8213         &0.8735         &0.8935            &0.8969                 &0.8996   \\ 
				\hline
				PGM-$\ell_{\infty}$  &0.8204	 &0.8163         &0.8163         &0.8163             &0.8163                 &0.8163     \\
				\hline
				DDN  					   &0.8096 	   &0.8096         &0.8096        &0.8096	         &0.8096                 &0.8096      \\
				\hline
				C\&W 					  &0.8171	  &0.8172          &0.8173         &0.8174	           &0.8188                  &0.8204    \\
				\hline

				\rowcolor{Gray}
				\hline
				Algorithm \ref{Algorithm:1} ($R$=0.1)  & $\epsilon$=0 & $\epsilon$=0.2 & $\epsilon$=0.4 &$\epsilon$=0.6 &$\epsilon$=0.8 &$\epsilon$=1 \\ [0.5ex] 
				\hline\hline
				FGSM   					  &0.8218$\pm 0.002$  		  &0.8267$\pm 0.005$         &0.8810$\pm 0.004$         &0.8947$\pm 0.001$	       &0.8976$\pm 0.004$                  &0.9020$\pm 0.008$   \\ 
				\hline
				PGM-$\ell_{\infty}$  &0.8218$\pm 0.003$			 &0.8179$\pm 0.004$         &0.8180$\pm 0.005$         &0.8180$\pm 0.002$	       &0.8180$\pm 0.003$                  &0.9020$\pm 0.005$     \\
				\hline
				DDN  					   &0.8125$\pm 0.010$  		    &0.8117$\pm 0.011$         &0.8124$\pm 0.015$         &0.8126$\pm 0.009$		     &0.8126$\pm 0.010$                   &0.9020$\pm 0.004$      \\
				\hline
				C\&W 					  &0.8191$\pm 0.006$		   &0.8184$\pm 0.010$         &0.8198$\pm 0.005$         &0.8201$\pm 0.007$			&0.8224$\pm 0.009$                   &0.9020$\pm 0.001$    \\
				\hline				
					
			\end{tabular}}\normalsize
			\caption{\footnotesize{Out-of-sample (test) accuracy  under FGSM  \cite{goodfellow2014explaining}, PGM  \cite{madry2017towards}, DDN  \cite{rony2019decoupling}, and C\&W \cite{carlini2017towards} attacks with different levels of adversarial robustness $\epsilon$. The test accuracy of Algorithm \ref{Algorithm:1} is averaged across 5 trials of the algorithm. For PGM-$\ell_{\infty}$, DDN, and C\&W we choose $T_{\mathrm{adv}}=30$. The test accuracies are computed using the one-versus-all rule (digit 0, versus all).}}
			\label{Table:long_table}
		\end{table}
	}
\end{center}

\section{Discussion and Concluding Remarks}
\label{Section:Conclusion}

In this paper, we have proposed a novel multiple kernel learning approach that is certifiably robust against adverserial examples. In particular, we proposed a distributionally robust optimization with respect to KL divergence. We characterized the distributionally robust optimization problem as a minimax vector optimization. We then proposed a biased stochastic primal-dual (SPD) algorithm. To debias the SPD, we used a Gumbel max perturbation technique to estimate the log-sum expression in the objective function. We provided theoretical performance guarantees for the proposed SPD algorithms. In particular, we established non-asymptotic consistency and asymptotic normality of the Monte Carlo sample average approximation associated with the empirical loss function. We also proved a min-max lower bound for the estimation problem associated with the weights of the mixture kernel model. Moreover, we proved a novel distributionally robust generalization bound based on the notion of the Rademacher and Gaussian complexities of a function class. We also derived upper bounds on the Rademacher and Gaussian complexity of function classes that are expressed in terms of random features which are tighter than previous known bounds in the MKL literature. To validate our proposed multiple kernel learning technique, we applied our method to synthetic and MNIST benchmark data-sets. Our numerical results shows that the learned kernels using our method is indeed robust to $\ell_{2}$ PGM perturbations compared to the standard multiple kernel learning approaches in the litearture. 

\appendix

\section{}

\textbf{Notation and definitions}. We denote the vectors by bold small letters, \textit{e.g.} $\bm{x}=(x_{1},\cdots,x_{n})\in \real^{n}$, and matrices by the bold capital letters, \textit{e.g.}, $\bm{M}=[M_{ij}]\in \real^{n\times m}$. The unit sphere in $n$-dimensions centered at the origin is denoted by $\mathrm{S}^{n-1}= \{\bm{x}\in \real^{n}:\sum_{i=1}^{n}x_{i}^{2}=1 \}$. We denote the $n$-by-$n$ identity matrix with $\bm{I}_{n}$, and the vector of all ones with $\bm{1}_{n}=(1,1,\cdots,1)\in \real^{n}$. For a symmetric matrix $\bm{M}=[M_{ij}]\in \real^{n\times n}$, let $\|\bm{M}\|_{2}=\sup_{\bm{x}\in \mathrm{S}^{n-1}}\langle \bm{x},\bm{M}\bm{x} \rangle$ and $\|\bm{M}\|_{F}=\sqrt{\text{Tr}(\bm{M}\bm{M}^{T})}=\sqrt{\sum_{i,j=1}^{n}|M_{ij}|^{2}}$ denotes the spectral and Frobenius norms, respectively. The eigenvalues of the matrix $\bm{M}$ are ordered and denoted by $\lambda_{1}(\bm{M})\geq \cdots \geq \lambda_{n}(\bm{M})$. We alternatively write $\lambda_{\text{min}}(\bm{M})=\lambda_{n}(\bm{M})$ and $\lambda_{\text{max}}(\bm{M})=\lambda_{1}(\bm{M})$ for the minimum and maximum eigenvalues of the matrix $\bm{M}$, respectively.

%

\begin{definition}\textsc{(Orlicz Norm)}
	\label{Def:Orlicz}
	The Young-Orlicz modulus is a convex non-decreasing function $\psi:\real_{+}\rightarrow \real_{+}$ such that $\psi(0)=0$ and $\psi(x)\rightarrow \infty$ when $x\rightarrow \infty$. Accordingly, the Orlicz norm of an integrable random variable $X$ with respect to the modulus $\psi$ is defined as
	\begin{align}
	\|X\|_{\psi}\df \inf \left\{\beta>0:\expect\left[\psi\left({|X|\over \beta}\right)\right]\leq 1\right\}.
	\end{align}
\end{definition}

In the sequel, we consider the Orlicz modulus $\psi_{\nu}(x)\df \exp(x^{\nu})-1$ . Accordingly, the cases of $\|\cdot\|_{\psi_{2}}$ and $\|\cdot\|_{\psi_{1}}$ norms are called the sub-Gaussian and the sub-exponential norms and have the following alternative definitions:

\begin{definition}\textsc{(Sub-Gaussian Norm)}
	The sub-Gaussian norm of a random variable $Z$, denoted by $\|Z\|_{\psi_{2}}$, is defined as
	\begin{align}
	\|Z\|_{\psi_{2}}= \sup_{q\geq 1} q^{-1/2}(\expect|Z|^{q})^{1/q}.
	\end{align}
	For a random vector $\bm{Z}\in \real^{n}$, its sub-Gaussian norm is defined as 
	\begin{align}
	\|\bm{Z}\|_{\psi_{2}}=\sup_{\bm{x}\in \mathrm{S}^{n-1}}\|\langle  \bm{x},\bm{Z}\rangle \|_{\psi_{2}}.
	\end{align}
\end{definition}

\begin{definition}\textsc{(Sub-exponential Norm)}
	\label{Definition:Alternative}
	The sub-exponential norm of a random variable $Z$, denoted by $\|Z\|_{\psi_{1}}$, is defined as follows
	\begin{align}
	\|Z\|_{\psi_{1}}=\sup_{q\geq 1} q^{-1}(\expect[|Z|^{q}])^{1/q}.
	\end{align}
	For a random vector $\bm{Z}\in \real^{n}$, its sub-exponential norm is defined as 
	\begin{align}
	\|\bm{Z}\|_{\psi_{1}}= \sup_{\bm{x}\in \mathrm{S}^{n-1}} \|\langle \bm{Z},\bm{x}\rangle\|_{\psi_{1}}.
	\end{align}
\end{definition}

We use asymptotic notations throughout the paper. We use the standard asymptotic notation for sequences. If $a_{n}$ and $b_{n}$ are positive
sequences, then $a_{n}=\mathcal{O}(b_{n})$ means that $\lim \sup_{n\rightarrow \infty} a_{n}/b_{n}< \infty$, whereas  $a_{n} = \Omega(b_{n})$ means that
$\lim \inf_{n\rightarrow \infty} a_{n}/b_{n} > 0$. Furthermore, $a_{n}=\widetilde{\mathcal{O}}(b_{n})$ implies $a_{n}=\mathcal{O}(b_{n}\text{poly}\log(b_{n}))$. Moreover $a_{n}=o(b_{n})$ means that $\lim_{n\rightarrow \infty}a_{n}/b_{n}=0$ and $a_{n}=\omega(b_{n})$ means that $\lim_{n\rightarrow \infty} a_{n}/b_{n}=\infty$. Lastly, we have $a_{n}=\Theta(b_{n})$ if $a_{n}=\mathcal{O}(b_{n})$ and $a_{n}=\Omega(b_{n})$. 

\subsection{Proof of Lemma \ref{Lemma:Saddle Point Characterization of Functional Optimization}}
\label{Appendix:Proof_of_Lemma_Saddle_Point_Characterization}

The proof is a small adaptation of the robust optimization methods of \cite{hu2013kullback} for the functional optimization problem in Eq. \eqref{Eq:equivalent_optimization}. We present the proof for completeness. First, consider the Lagrangian form of the functional optimization problem in Eq. \eqref{Eq:equivalent_optimization},
\begin{subequations}
	\label{Eq:Lagrangian_functional}
\begin{align}
&\max_{L\in \widehat{\mathbb{L}}(Q)} \expect_{Q^{\otimes 2}}\left[ Ly\tilde{y}\int_{\Xi}\varphi(\bm{x};\bm{\xi})\varphi(\tilde{\bm{x}};\bm{\xi})\mu_{\bm{\omega}}(\mathrm{d}\bm{\xi})-\lambda\left( L\log L- r/2 \right) \right]\\
&\text{s.t.}: \expect_{Q^{\otimes 2}}[L]=1,
\end{align}
\end{subequations}
where $\widehat{\mathbb{L}}(Q)\df \{L\in \mathbb{L}(Q):L\geq 0,\ \text{a.s.} \}$. Now, consider the following functionals 
\begin{subequations}
\begin{align}
F[L(\bm{x},y)]&\df \expect_{Q^{\otimes 2}}\left[Ly\tilde{y}\int_{\Xi}\varphi(\bm{x};\bm{\xi})\varphi(\tilde{\bm{x}};\bm{\xi})\mu_{\bm{\omega}}(\mathrm{d}\bm{\xi})-\lambda(L\log L-(r/2))\right],\\
G[L(\bm{x},y)]&\df \expect_{Q^{\otimes 2}}[L]-1.
\end{align}
\end{subequations}
The functionals $F(L)$ and $G(L)$ are convex and linear in $L$, respectively. Therefore, we can calculate the directional derivative of $F[L(\bm{x},\bm{y})]$ in the direction $U(\bm{x},y)$ as below
\begin{align}
\nonumber
D_{U}(F)[L]&\df \lim_{t\rightarrow 0} \dfrac{F[L(\bm{x},y)+tU(\bm{x},y)]-F[L(\bm{x},y)]}{t}\\ \nonumber
&=\lim_{t\rightarrow 0}\Bigg(\dfrac{\expect_{Q^{\otimes 2}}[(L+tU)y\tilde{y}\int_{\Xi}\varphi(\bm{x};\bm{\xi})\varphi(\tilde{\bm{x}};\bm{\xi})\mu_{\bm{\omega}}(\mathrm{d}\bm{\xi})-\alpha (L+tU)\log(L)]}{t}\\ \nonumber
&\hspace{4mm}-\dfrac{\expect_{Q^{\otimes 2}}[Ly\tilde{y}\int_{\Xi}\varphi(\bm{x};\bm{\xi})\varphi(\tilde{\bm{x}};\bm{\xi})\mu_{\bm{\omega}}(\mathrm{d}\bm{\xi})-\alpha L \log (L)]}{t} \Bigg)\\ \nonumber
&=\expect_{Q^{\otimes 2}}\left[Uy\tilde{y}\int_{\Xi}\varphi(\bm{x};\bm{\xi})\varphi(\tilde{\bm{x}};\bm{\xi})\mu_{\bm{\omega}}(\mathrm{d}\bm{\xi})\right]\\
&\hspace{4mm}-\lambda \lim_{t\rightarrow 0}\expect_{Q^{\otimes 2}}\left[\dfrac{(L+tU)\log(L+tU)-L\log L}{t} \right].
\end{align}
Note that the function $y\log y$ is convex in $y$ in $\real_{+}$. Therefore, for any $y$ and direction $u$, the function $[((y+tu)\log(y+tu)-y\log y)/t]$ is monotone in $t$. Therefore, by the monotone convergence theorem \cite{endou2008lebesgue}, we can interchange the order of the expectation and the limit. We then obtain
\begin{align}
\nonumber
D_{U}(F)[L]&=\expect_{Q^{\otimes 2}}\left[Uy\tilde{y}\int_{\Xi}\varphi(\bm{x};\bm{\xi})\varphi(\tilde{\bm{x}};\bm{\xi})\mu_{\bm{\omega}}(\mathrm{d}\bm{\xi})\right]\\ \nonumber
&\hspace{4mm}-\lambda \expect_{Q^{\otimes 2}}\left[\lim_{t\rightarrow 0} \dfrac{(L+tU)\log(L+tU)-L\log L }{t} \right]\\
&=\expect_{Q^{\otimes 2}}\left[Uy\tilde{y}\int_{\Xi}\varphi(\bm{x};\bm{\xi})\varphi(\tilde{\bm{x}};\bm{\xi})\mu_{\bm{\omega}}(\mathrm{d}\bm{\xi})\right]-\lambda \expect_{Q^{\otimes 2}}[(\log(L)+1)U].
\end{align}
Similarly, the directional derivative of $G[L]$ is given by
\begin{align}
D_{U}(G)[L]=\expect_{Q^{\otimes 2}}[U].
\end{align}
We now consider the Lagrangian form of the functional optimization problem in Eq. \eqref{Eq:Lagrangian_functional} if  
\begin{align}
\nonumber
\mathcal{L}[L](\lambda)=&\expect_{Q^{\otimes 2}}\left[Ly\tilde{y}\int_{\Xi}\varphi(\bm{x};\bm{\xi})\varphi(\tilde{\bm{x}};\bm{\xi})\mu_{\bm{\omega}}(\mathrm{d}\bm{\xi})-\lambda(L\log(L)-(r/2))\right]\\
&+\alpha (\expect_{Q^{\otimes 2}}[L]-1).
\end{align} 
Due to  the result of Bonnas and Shapiro \cite[Proposition 3.3.]{bonnans2013perturbation}, $L^{\ast}$ is an optimal solution of Problem \eqref{Eq:Lagrangian_functional} if $L^{\ast}\in \widehat{\mathbb{L}}(Q)$, $\expect_{Q^{\otimes 2}}[L^{\ast}]=1$, and 
\begin{align}
L^{\ast}\in \mathrm{arg}\max_{L\in \widehat{\mathbb{L}}(Q)} \ell(L,\lambda^{\ast}).
\end{align}
This is an unconstrained optimization problem whose directional derivative is given by
\begin{align}
D_{U}(\mathcal{L})[L](\lambda)=\expect_{Q^{\otimes 2}}\left[Uy\tilde{y}\int_{\Xi}\varphi(\bm{x};\bm{\xi})\varphi(\tilde{\bm{x}};\bm{\xi})\mu_{\bm{\omega}}(\mathrm{d}\bm{\xi})-\lambda(\log(L)+1)U+\alpha U\right].
\end{align}
Therefore, the optimal solution has the following form
\begin{align}
\label{Eq:Plugging_into}
L^{\ast}(\alpha)=\exp\left({{\alpha-\lambda\over \lambda}}\right)\cdot \exp\left({{1\over \lambda} y\tilde{y}\int_{\Xi}\varphi(\bm{x};\bm{\xi})\varphi(\tilde{\bm{x}};\bm{\xi})\mu_{\bm{\omega}}(\mathrm{d}\bm{\xi}) }\right).
\end{align}
Now, suppose that the Lagrange multiplier belongs to the following set 
\begin{align}
\lambda \in \Lambda\df \left\{s\in \real: s>0,\expect_{Q^{\otimes 2}}\Bigg[\exp\Bigg(\dfrac{y\tilde{y}}{{s}}\int_{\Xi}\varphi(\bm{x};\bm{\xi})\varphi(\tilde{\bm{x}};\bm{\xi})\mu_{\bm{\omega}}(\mathrm{d}\bm{\xi})\Bigg)\Bigg]<+\infty\right\}.
\end{align}
Then, by choosing the Lagrange multiplier 
\begin{align}
\alpha^{\ast}=-\lambda \log \expect_{Q^{\otimes 2}}\Bigg[\exp\Bigg(\dfrac{y\tilde{y}}{{\lambda}}\int_{\Xi}\varphi(\bm{x};\bm{\xi})\varphi(\tilde{\bm{x}};\bm{\xi})\mu_{\bm{\omega}}(\mathrm{d}\bm{\xi})\Bigg)\Bigg]+\lambda,
\end{align} 
the feasibility constraint $\expect_{Q^{\otimes 2}}[L^{\ast}]=1$ is satisfied. Plugging $\alpha^{\ast}$ into Eq. \eqref{Eq:Plugging_into} yields
\begin{align}
L^{\ast}=L^{\ast}(\lambda^{\ast})=\dfrac{\exp\left({1\over \lambda}y\tilde{y}\int_{\Xi}\varphi(\bm{x};\bm{\xi})\varphi(\tilde{\bm{x}};\bm{\xi})\mu_{\bm{\omega}}(\mathrm{d}\bm{\xi}) \right)}{\expect_{Q^{\otimes 2}}[\exp\left({1\over \lambda}y\tilde{y}\int_{\Xi}\varphi(\bm{x};\bm{\xi})\varphi(\tilde{\bm{x}};\bm{\xi})\mu_{\bm{\omega}}(\mathrm{d}\bm{\xi})\right)]}.
\end{align}
The pair $(\lambda^{\ast},L^{\ast})$ satisfies Bonnas and Shapiro optimality criterion. Now, plugging $L^{\ast}$ into yields the following value function
\begin{align}
\arg \max_{L\in \widehat{\mathbb{L}}^{Q}: \expect_{Q^{\otimes 2}}[L]=1} F[L(\bm{x},y)]=\lambda \log \expect_{Q^{\otimes 2}}\left[\exp\left(y\tilde{y}\int_{\Xi}\varphi(\bm{x};\bm{\xi})\varphi(\tilde{\bm{x}};\bm{\xi})\mu_{\bm{\omega}}(\mathrm{d}\bm{\xi})\right)\right]+{\lambda r\over 2}. 
\end{align}
To complete the proof, we note that due Assumption \ref{Assumption:1}, we have that for any $s>0$,
\begin{align}
\nonumber
\expect_{Q^{\otimes 2}}\left[\exp\Bigg(\dfrac{y\tilde{y}}{{s}}\int_{\Xi}\varphi(\bm{x};\bm{\xi})\varphi(\tilde{\bm{x}};\bm{\xi})\mu_{\bm{\omega}}(\mathrm{d}\bm{\xi})\Bigg)\right]&=\expect_{Q^{\otimes 2}}\left[\exp\Bigg(\dfrac{y\tilde{y}}{{ms}}\sum_{i=1}^{m}\omega_{i}K_{i}(\bm{x};\tilde{\bm{x}})\Bigg)\right]\\ \nonumber
&\leq \expect_{Q^{\otimes 2}}\left[\exp\Bigg(\Bigg|\dfrac{y\tilde{y}}{{ms}}\sum_{i=1}^{m}\omega_{i}K_{i}(\bm{x};\tilde{\bm{x}})\Bigg|\Bigg)\right]\\
&\leq \exp(B/s).
\end{align} 
Thus, as long as $s>0$, we have that  
\begin{align}
\expect_{Q^{\otimes 2}}\left[\exp\Bigg(\dfrac{y\tilde{y}}{{s}}\int_{\Xi}\varphi(\bm{x};\bm{\xi})\varphi(\tilde{\bm{x}};\bm{\xi})\mu_{\bm{\omega}}(\mathrm{d}\bm{\xi})\Bigg)\right]<+\infty.
\end{align}
Hence, $\Lambda=\real_{+}\backslash\{0\}$.

\subsection{Proof of Lemma \ref{Lemma:Asymptotic_Normality}}
\label{Appendix:Asymptotic_Normality}

The proof is a minor adaptation of \cite[Thm. 2]{fan2018dnn}. We present the proof for completeness.  We first establish the asymptotic normality of the following $U$-statistic
\begin{align}
\label{Eq:}
U_{\lambda}(\psi_{N},\bm{z}_{1:n})\df \dfrac{2}{n(n-1)}\sum_{1\leq i<j\leq n}\exp\left(\dfrac{1}{\lambda}\psi_{N}(\bm{z}_{i},\bm{z}_{j})\right).
\end{align}
Let $\Psi_{N}(\bm{z}_{i},\bm{z}_{j})\df \exp\left(\dfrac{1}{\lambda}\psi_{N}(\bm{z}_{i},\bm{z}_{j})\right)$. Consider the following projection functions:
\begin{align}
\Psi_{N}^{1}(\bm{z}_{1})&=\expect\left[\Psi_{N}(\bm{z}_{1},\bm{z}_{2})|\bm{z}_{1}\right],\quad \widetilde{\Psi}_{N}^{1}({\bm{z}}_{1})=\Psi_{N}^{1}(\bm{z}_{1})-\expect[\Psi_{1}^{N}]\\
\Psi_{N}^{2}(\bm{z}_{1},\bm{z}_{2})&=\expect\left[\Psi_{N}(\bm{z}_{1},\bm{z}_{2})|\bm{z}_{1},\bm{z}_{2}\right], \quad \widetilde{\Psi}^{2}_{N}(\bm{z}_{1},\bm{z}_{2})=\Psi^{2}_{N}(\bm{z}_{1},\bm{z}_{2})-\expect[\Psi_{2}^{N}].
\end{align}
The Hoeffding's canonical terms are defined as follows
\begin{align}
g_{N}^{1}(\bm{z}_{1})&=\widetilde{\Psi}_{N}^{1}({\bm{z}}_{1})\\
g_{N}^{2}(\bm{z}_{1},\bm{z}_{2})&=\widetilde{\Psi}_{N}^{2}({\bm{z}}_{1},\bm{z}_{2})-g_{N}^{1}(\bm{z}_{1})-g_{N}^{2}(\bm{z}_{2}).
\end{align}
The kernel $\Psi$ can be written as the sum of canonical terms
\begin{align}
\label{Eq:Canonical}
\Psi_{N}(\bm{z}_{1},\bm{z}_{2})-\expect [\Psi_{N}(\bm{z}_{1},\bm{z}_{2})]=g_{N}^{1}(\bm{z}_{1})+g_{N}^{1}(\bm{z}_{2})+g_{N}^{2}(\bm{z}_{1},\bm{z}_{2}).
\end{align}
All the canonical terms in Equation \eqref{Eq:Canonical} are un-correlated. Hence, we
have:
\begin{align}
\mathrm{Var}[\Psi_{N}(\bm{z}_{1},\bm{z}_{2})]=2\expect[g_{N}^{1}(\bm{z}_{1})]+\expect[g_{N}^{2}(\bm{z}_{1},\bm{z}_{2})].
\end{align}
The $U$-statistics can also be written as follows
\begin{align}
\nonumber
U_{\lambda}(\psi_{N},\bm{z}_{1:n})-\expect[U_{\lambda}]&=\dfrac{2}{n(n-2)}\sum_{1\leq i<j\leq n}(\widetilde{\Psi}_{N}(\bm{z}_{1},\bm{z}_{2})-\expect\Psi_{N})\\ \label{Eq:leading_term}
&=\dfrac{2}{n(n-1)}\left((n-1)\sum_{i=1}^{n}g_{N}^{1}(\bm{z}_{i})+\sum_{1\leq i<j\leq n}g_{N}^{2}(\bm{z}_{i},\bm{z}_{j})\right).
\end{align}
We define the leading term of Eq. \eqref{Eq:leading_term} as the H\'{a}jek projection
\begin{align}
\widehat{U}_{\lambda}(\psi_{N},\bm{z}_{1:n})\df \dfrac{2}{n}\sum_{i=1}^{n}g_{N}^{1}(\bm{z}_{i}).
\end{align}
Note that $\mathrm{Var}(\widehat{U}_{\lambda}(\psi_{N},\bm{z}_{1:n}))={4\over n}\mathrm{Var}(\Psi_{N}(\bm{z}_{1}))$. Since the H\'{a}jek projection is the average of the independent and identically distributed terms, by the Lindeberg-L\`{e}vy Central Limit Theorem (CLT), we obtain
\begin{align}
\dfrac{\sqrt{n}\widehat{U}_{\lambda}(\psi_{N},\bm{z}_{1:n})}{2\mathrm{Var}^{1\over 2}(\Psi_{N}(\bm{z}_{1}))}\stackrel{d}{\leadsto}\mathsf{N}(0,1).
\end{align}
In the sequel, we prove that the reminder term converges to zero in probability, \textit{i.e.},
\begin{align}
\lim\sup_{n\rightarrow \infty}\prob\left(\left|\dfrac{U_{\lambda}-\expect[U_{\lambda}] -\widehat{U}_{\lambda}}{4\mathrm{Var}(\Psi_{N}(\bm{z}_{1}))}\right|\geq \delta\right)=0,\quad \forall\delta >0.
\end{align}
Notice that
\begin{align*}
\expect\left[(U_{\lambda}-\expect[U_{\lambda}]-\widehat{U}_{\lambda})^{2}\right]&=\dfrac{2}{n(n-1)}\expect[(g^{2}_{N}(\bm{z}_{1},\bm{z}_{2}))^{2}]\\
&\leq \dfrac{4}{n^{2}} \expect[(g^{2}_{N}(\bm{z}_{1},\bm{z}_{2}))^{2}]\\
&\leq \dfrac{4}{n^{2}}\mathrm{Var}[\Psi_{N}].
\end{align*}
Now, $\mathrm{Var}[\Psi_{N}]$ is bounded from above for $\lambda>0$, since $\psi_{N}(\bm{z}_{i},\bm{z}_{j})={1\over N}\langle y_{i}\varphi(\bm{x}_{i};\bm{\xi}) ,y_{j}\varphi(\bm{x}_{j};\bm{\xi}) \rangle$ is bounded. Therefore,
\begin{align}
\dfrac{\expect\left[(U_{\lambda}-\expect[U_{\lambda}]-\widehat{U}_{\lambda})^{2}\right]}{\mathrm{Var}[\Psi_{N}]}\leq \dfrac{{4\over n^{2}\mathrm{Var}[\Psi_{N}]}}{{4\over n}\mathrm{Var}[\Psi_{N}]}=\dfrac{1}{n}\rightarrow 0,\quad \mathrm{as}\ n\rightarrow \infty.
\end{align}
Applying the standard $\delta$-method to $\lambda\log U_{\lambda}$ completes the proof.

\subsection{Proof of Theorem \ref{Theorem:Stochastic_Equicontinuity}}
\label{Appendix:Proof_of_Stochastic_Equicontinuity}

We leverage the methods of the empircal process theory \cite{pollard1990empirical} to establish the stochastic equicontinuity of the underlying $U$-statistics. Let $(\mathcal{F},\|\cdot\|)$ be a subset of a normed space of real functions $f:\mathcal{X}\rightarrow \real$ on some
set $\mathcal{X}$. Typically, the underlying normed space is the $L_{r}(P)$-space associated with the probability measures $P\in \mathcal{M}(\mathcal{X})$, equipped with the following norms:
\begin{subequations}
\begin{align}
	\|f\|_{L_{r}(P)}&\df \left(\int_{\mathcal{X}}f^{r}(x)P(\mathrm{d}x)\right)^{1\over r},\\
  \|f\|_{L_{\infty}(P)}&\df \sup_{x\in \mathrm{supp}(P)}|f(x)|.
\end{align}
\end{subequations}
For a function $f\in \mathcal{F}$ from the function class $\mathcal{F}$, we define 
\begin{subequations}
\begin{align}
\mathbb{P}f &\df \expect_{P}[f(X)], \\
\mathbb{P}_{n}f &\df \dfrac{1}{n}\sum_{i=1}^{n}f(X_{i}), \quad X_{1},\cdots,X_{n}\sim_{\text{i.i.d.}} P.
\end{align}
\end{subequations}
Furthermore, $\mathbb{G}_{n}f \df \sqrt{n}(\mathbb{P}_{n}-\mathbb{P})f$. Then $(\mathbb{G}_{n}f)_{f\in \mathcal{F}}$ is an empirical process indexed by $\mathcal{F}$ endowed with the norm $\|\mathbb{G}_{n}f \|_{\mathcal{F}}=\sup_{f\in \mathcal{F}}|\mathbb{G}_{n}f|$.

\begin{definition}\textsc{($\varepsilon$-Bracket)} Given two functions $l(\cdot)$ and $u(\cdot)$, the bracket $[l,u]$ is the set of functions $f\in \mathcal{F}$ with $l(x)\leq f(x)\leq u(x)$ for all $x\in \mathcal{X}$. An $\epsilon$-bracket is a bracket with $\|l-u\|_{L_{r}(P)}\leq \epsilon$.
\end{definition}

\begin{definition}\textsc{(Bracketing, Covering, and Packing Numbers)}  The \textit{bracketing} number, denoted by $\mathcal{N}_{[]}(\varepsilon, \mathcal{F},L_{r}(P))$, is the minimum number of $\varepsilon$-brackets required to cover $\mathcal{F}$. The \textit{covering} number $\mathcal{N}(\varepsilon,\mathcal{F},L_{r}(P))$ is the minimum number of balls $\ball^r_{\varepsilon}(f_{i})\df \{f\in \mathcal{F}:\|f-f_{i}\|_{L_{r}(P)}\leq \varepsilon\}$ of radius $\varepsilon$, $f_{i}\in \mathcal{F}$ needed to cover the set $\mathcal{F}$. The \textit{packing} number $\mathcal{M}(\varepsilon,\mathcal{F},L_{r}(P))$ is the largest number such that there exist functions $f_{1},\cdots,f_{\mathcal{M}}\in \mathcal{F}$ satisfying $\|f_{i}-f_{j}\|_{L_{r}(P)}\geq \varepsilon, i\not=j$.
\end{definition}

\begin{remark}
It is well-known that for an arbitrary function class $\mathcal{F}$, the $\epsilon$-bracketing number is an upper bound for the $\epsilon$-covering number, \textit{i.e.},
\begin{align}
\label{Eq:Inequality_between_bracketting}
\mathcal{N}(\varepsilon,\mathcal{F},L_{r}(P))\leq \mathcal{N}_{[]}(2\varepsilon,\mathcal{F},L_{r}(P)).
\end{align} 
Indeed, if $f\in \mathcal{F}$ is the $2\epsilon$-bracket $[l,u]$, then it is also in the metric ball $\ball_{r}^{\epsilon}((l+u)/2)\df \{f\in \mathcal{F}:\|f-(l+u)/2\|_{L_{r}(P)}\leq \epsilon\}$. In addition, the following inequalities hold between the covering and packing numbers
\begin{align}
\mathcal{M}(2\varepsilon,\mathcal{F},L_{r}(P)) \leq \mathcal{N}(\varepsilon,\mathcal{F},L_{r}(P))\leq 	\mathcal{M}(\varepsilon,\mathcal{F},L_{r}(P)).
\end{align}
\end{remark}

The following result, due to Khosravi, \textit{et al.} \cite{{khosravi2019non}} characterizes a concentration result for the $U$-statistics using the empirical process theory:

\begin{lemma}\textsc{(Stocahstic Equicontinuity with the Bracketting Integral, \cite[Lemma 15]{khosravi2019non})}
\label{Lemma:Function_Class}
	Consider a function space $\mathcal{F}$ of symmetric functions from some data space $\mathcal{Z}^{s}$ to $\real$, and consider a $U$-statistc of the order $s$, with the kernel $f$ over $n$ samples:
	 \begin{align}
	 	\label{Eq:U_stat_order_s}
	 	U_{s}(f,\bm{z}_{1:n})=\dfrac{1}{{n \choose s}}\sum_{1\leq i_{1}\leq i_{2}\leq\cdots\leq i_{s}\leq n}f(\bm{z}_{i_{1}},\cdots,\bm{z}_{i_{s}}).
	 \end{align}
Suppose $\sup_{f\in \mathcal{F}}\|f\|_{L_{2}(P)}\leq \eta$, $\sup_{f\in \mathcal{F}}\|f\|_{L_{\infty}(P)}\leq G$, and let $\kappa_{0}=n/s$. Then for $\kappa_{0}\geq  {G^{2}\over \log N_{[]}(1/2,\mathcal{F},\|\cdot\|_{P,2})}$ with the probability of at least $1-\delta$ we  have
\begin{align}
\label{Eq:Khosravi}
&\sup_{f\in \mathcal{F}}\big|U_{s}(f,\bm{Z}_{1:n})-\expect[f(\bm{Z}_{1:n})]\big|\\ \nonumber
&=\mathcal{O}\left(\inf_{\rho>0}\dfrac{1}{\sqrt{\kappa_{0}}}\int_{\rho}^{2\eta}\sqrt{\log(\mathcal{N}_{[]}(\epsilon,\mathcal{F},L_{2}(P)))}\mathrm{d}\epsilon+\eta \sqrt{\dfrac{\log(1/\delta)+\log\log(\eta/\rho)}{\kappa}}+\rho \right).
\end{align}
\hfill $\square$
\end{lemma}
To obtain a bound based on Lemma \ref{Lemma:Function_Class}, the bracketing number $\mathcal{N}_{[]}(\epsilon,\mathcal{G}_{mN},L_{2}(P))$ as well as the constants $\eta$ and $G$ must be calculated, where $\mathcal{G}_{mN}$ is the function class defined in Eq. \eqref{Eq:GmN}. Nevertheless, the computation of the $\epsilon$-bracketing number $\mathcal{N}_{[]}(\epsilon,\mathcal{F},L_{2}(P))$ is not readily amenable to the underlying composite class $\mathcal{G}_{mN}$ in Theorem \ref{Theorem:Stochastic_Equicontinuity}. To alleviate this issue, we establish the following alternative bound based on the Dudley's metric entropy integral. The proof is  postponed to Appendix \ref{Appendix:Metric_Entropy_Integral}:
\begin{lemma}\textsc{(Stocahstic Equicontinuity with the Metric Entropy Integral)}
	\label{Lemma:Function_Class_1}
	Consider a function space $\mathcal{F}$ of symmetric functions from some data space $\mathcal{Z}^{s}$ to $\real$, and consider a $U$-statistc of the order $s$, with the kernel $f$ over $n$ samples defined in Equation \ref{Eq:U_stat_order_s} of Lemma \ref{Lemma:Function_Class}.  Suppose  and let $\kappa=\lceil n/s\rceil=\lceil \kappa_{0} \rceil$. Furthermore, suppose
	\begin{align}
		\delta\df \sup_{f,g\in \mathcal{F}}\|f-g\|_{L_{2}(P)}<+\infty.
	\end{align}	
	Then, with the probability of at least $1-\varrho$ we have
	\begin{align}	
		\label{Eq:Masoud_1}
\sup_{f\in \mathcal{F}}\big|U_{s}(f,\bm{Z}_{1:n})-\expect[f(\bm{Z}_{1:n})]\big|\leq	\dfrac{1}{\varrho} \dfrac{4}{\sqrt{\kappa}}\int_{0}^{\delta/2}\sqrt{\log \mathcal{N}(\epsilon,\mathcal{F},L_{2}(P))}\mathrm{d}\epsilon.
	\end{align}
\end{lemma}

The proof of Lemma \ref{Lemma:Function_Class_1} relies on the standard chaining argument similar to \cite[Lemma 15]{khosravi2019non}. However, the proof is different from \cite[Lemma 15]{khosravi2019non} in that we leverage a symmetrization approach to attain the metric entropy integral term. In addition,  in light of Inequality \eqref{Eq:Inequality_between_bracketting} in Remark \ref{Remark:2}, the bound in Eq. \eqref{Eq:Masoud_1} of Lemma \ref{Lemma:Function_Class_1} can be \textit{potentially} tighter than that of Eq. \eqref{Eq:Khosravi} in Lemma \ref{Lemma:Function_Class}.

\begin{definition}\textsc{(Growth function, VC dimension, Shattering,\cite{vapnik2015uniform})} Let $\mathcal{F}$ denote a class of functions from $\mathcal{X}$ to $\{0, 1\}$ (the hypotheses, or the classification rules). For any non-negative integer $m$, we define the growth function of $\mathcal{H}$ as follows
	\begin{align}
		\Pi_{\mathcal{F}}(n) \df \max_{x_{1},\cdots,x_{n}\in \mathcal{X}} |{(f(x_{1}),\cdots,f(x_{m})): f \in \mathcal{F}}|.
	\end{align}
	If $|{(f(x_{1}),\cdots,f(x_{n})): f \in \mathcal{F}}|= 2^n$, we say $\mathcal{F}$ shatters the set $\{x_{1},\cdots,x_{n}\}$. The Vapnik-Chervonenkis dimension of $\mathcal{F}$, denoted by $\mathrm{dim}_{\mathrm{VC}}(\mathcal{F})$, is the size of the largest shattered set, \textit{i.e.}, the largest $n$ such that $\Pi_{\mathcal{F}}(n)=2^{n}$. If there is no largest $n$, we define $\mathrm{dim}_{\mathrm{VC}}(\mathcal{F}) = \infty$.
\end{definition}

We also define the following generalization of VC dimension: 
\begin{definition}\textsc{(Pollard's Pseudo-dimension, \cite{pollard1990empirical})} 
	Let $\mathcal{F}$ denote a class of real-valued functions mapping $\mathcal{X}$ to $\real_{+}$. Consider the function class $\mathcal{G}$ defined as follows
	\begin{align}
		\mathcal{G}\df \{g:\mathcal{X}\times \real\rightarrow \{0,1\}:x\mapsto \mathbb{I}_{\{f(x)\geq c\}}, f\in \mathcal{F},c\in \real\}.
	\end{align}
	Then, the Pollard's pseudo-dimension is defined as follows
	\begin{align}
		\mathrm{dim}_{P}(\mathcal{F})\df \mathrm{dim}_{\mathrm{VC}}(\mathcal{G}).
	\end{align}
\end{definition}

\begin{remark}
\label{Remark:1}
For any function class $\mathcal{F}$, let $F(\bm{x})\df \sup_{f\in \mathcal{F}}|f(\bm{x})|$ denotes the envelop function. Suppose $\|F\|_{L_{r}(P)}\leq \infty$. Then, there is a universal constant $K$ such that for any $\epsilon>0$ and $r\geq 1$, the following inequality holds
\begin{align}
\label{Eq:Dudley}
\sup_{P\in \mathcal{M}(\mathcal{X})}\mathcal{N}(\epsilon\|F\|_{L_{r}(P)},\mathcal{F},L_{r}(P))\leq \left(\dfrac{K\log (K/\epsilon^{r})}{\epsilon^{r}} \right)^{\mathrm{dim}_{\mathrm{VC}}(\mathcal{F})}\leq \left({K'\over \epsilon}\right)^{r(\mathrm{dim}_{\mathrm{VC}}(\mathcal{F}) -1)+\delta},
\end{align}
for all  $\delta>0$, where $K=3e^{2}/(e-1)\approx 12.9008$.  Alternatively,
\begin{align}
\label{Eq:Hussler}
\sup_{P\in \mathcal{M}(\mathcal{X})}\mathcal{N}(\epsilon\|F\|_{L_{r}(P)},\mathcal{F},L_{r}(P))\leq \tilde{K}\mathrm{dim}_{\mathrm{VC}}(\mathcal{F}) (4e)^{\mathrm{dim}_{\mathrm{VC}}(\mathcal{F})}\left({1\over \epsilon} \right)^{r (\mathrm{dim}_{\mathrm{VC}}(\mathcal{F}) -1)},
\end{align}
for some universal constant $\tilde{K}$. Inequality \ref{Eq:Dudley} is due to Dudley \cite{dudley1978central}, and Inequality \ref{Eq:Hussler} is due to Haussler \cite{haussler1995sphere}.
\end{remark}

\begin{remark}
\label{Remark:2}
For any function class $\mathcal{F}$, clearly we have $\mathrm{dim}_{\mathrm{VC}}(\mathcal{F}) \leq \mathrm{dim}_{P}(\mathcal{F})$. 
\end{remark}

The following lemma is proved in Appendix \ref{Appendix:Proof_of_Lemma_Bartlett}:
\begin{lemma}\textsc{($\epsilon$-Covering Number of the Composite Class)} 
\label{Lemma:Bartlett}	
Consider the function class $\mathcal{F}$ of real valued functions on $\mathcal{X}$ , and let $\psi:\real\rightarrow \real_{+}$ denotes a $L_{\psi}$-Lipschitz function satisfying $|\psi(s)-\psi(\tilde{s})|\leq L_{\psi}|s-\tilde{s}|, \quad \forall s\in \mathcal{S}$ where $\mathcal{S}\subset \real$ is a compact domain. Let 
\begin{align}
\nonumber
\mathcal{F}'\df \Bigg\{g:(\mathcal{X}\times \mathcal{Y})^{2}\rightarrow \real&:(\bm{x},y),(\tilde{\bm{x}},\tilde{y})\mapsto \psi\left(y\tilde{y}{1\over N}\sum_{k=1}^{N}\sum_{i=1}^{m}\omega_{i} f_{i}^{k}(\bm{x})f^{k}_{i}(\tilde{\bm{x}})\right)
\\ & : f_{i}^{k}\in \mathcal{F}, \forall i\in [m],\forall k\in [N],\forall\bm{\omega}\in \mathrm{S}^{+}_{m}\Bigg\}, 
\end{align}
denotes the composite class on $\mathcal{X}\times \mathcal{Y}$ equipped with the measure $P\in \mathcal{M}(\mathcal{X}\times \mathcal{Y})$ with the marginal $Q\in \mathcal{M}(\mathcal{X})$. Then, the $\epsilon$-covering entropy of the composite class $\mathcal{F}'$ is bounded from above as follows
\begin{align}
\label{Lemma:Inequality}
\log \mathcal{N}(\epsilon,\mathcal{F}',L_{r}(P))\leq Nm\log \mathcal{N}\left(\dfrac{(\epsilon)^{1\over r}}{2mL_{\psi}\|F\|_{L_{2r}(Q)}},\mathcal{F},L_{2r}(Q)\right),
\end{align}
where $F$ is the envelop function associated with the function class $\mathcal{F}$. \hfill $\square$
\end{lemma}

Now, let $\psi(\xi)=\exp({1\over \lambda}\xi)$ for $\lambda>0$. Due to Assumption \ref{Assumption:1}, we have that 
\begin{align*}
\dfrac{1}{N}\left|\langle y\bm{\varphi}_{\bm{\omega}}(\bm{x}), \tilde{y}\bm{\varphi}_{\bm{\omega}}(\tilde{\bm{x}}) \rangle\right|&=\dfrac{1}{N}\left|\sum_{i=1}^{m}\sum_{k=1}^{N}\omega_{i}\varphi(\bm{x};\bm{\xi}_{i}^{k})\varphi(\tilde{\bm{x}};\bm{\xi}_{i}^{k})  \right| \\
&\leq L^{2}, \quad \forall (\bm{x},y),(\tilde{\bm{x}},\tilde{y})\in \mathcal{X}\times \mathcal{Y}.
\end{align*}
Thus, the function $\psi(\xi)$ with $\xi={1\over N}\langle y\bm{\varphi}_{\bm{\omega}}(\bm{x}), \tilde{y}\bm{\varphi}_{\bm{\omega}}(\tilde{\bm{x}}) \rangle$ is Lipschitz with the constant $L_{\psi}=\exp(L^{2}/\lambda)/\lambda$.
Therefore, using Inequality \ref{Lemma:Inequality} of Lemma \ref{Lemma:Bartlett}, we obtain
\begin{align}
\label{Eq:Simple_Bound}
\mathcal{N}(\epsilon,\mathcal{G}_{N,m},L_{r}(P))\leq \left(\mathcal{N}\left({\lambda (\epsilon)^{1\over r}\over 2m\exp(L^{2}/\lambda) \|F\|_{L_{2r}(Q)}},\mathcal{F},L_{2r}(Q)\right)\right)^{Nm},
\end{align}
where $\mathcal{F}$ is the function class defined in Theorem \ref{Theorem:Stochastic_Equicontinuity}. To compute a bound on the covering number $\mathcal{N}\left(\epsilon',\mathcal{F},L_{2r}(Q)\right)$, we leverage the fact that $\varphi(\bm{x};\bm{\xi}_{i}^{k})=\varphi_{0}(\langle\bm{x},\bm{\xi}_{i}^{k}\rangle+b_{i}^{k})$, where $b_{i}^{k}\in  \real$ and $\bm{\xi}_{i}^{k}\in \real^{d}$ is bounded from above by $L$ (cf. Assumption \ref{Assumption:1}). Therefore, the constant function $F(\bm{x})=L$ for all $\bm{x}\in \mathcal{X}$ is an envelop for the function class $\mathcal{F}$ with $\|F\|_{L_{2r}(Q)}=L$. We can then simplify the upper bound in Eq. \eqref{Eq:Simple_Bound} as follows
\begin{align}
\mathcal{N}(\epsilon,\mathcal{G}_{N,m},L_{r}(P))\leq \left(\mathcal{N}\left(\epsilon',\mathcal{F},L_{2r}(Q)\right)\right)^{Nm},
\end{align} 
where $\epsilon'\df {\lambda (\epsilon)^{1\over r}\over 2m\exp(L^{2}/\lambda)L}$. Moreover, since $x\mapsto\varphi_{0}(x)$ is 1-Lipschitz, we have the following inequality
\begin{align}
\mathcal{N}\left(\epsilon',\mathcal{F},L_{2r}(Q)\right)\leq \mathcal{N}\left(\epsilon',\mathcal{\mathcal{L}},L_{2r}(Q)\right).
\end{align}
where $\mathcal{L}$ is the class of linear functions
\begin{align}
\mathcal{L}\df \{ f:\real^{d}\rightarrow \real:\bm{x}\mapsto\langle\bm{x},\bm{\xi}\rangle+b: \bm{\xi}\in \real^{d},b\in \real \}.
\end{align}
To compute a bound on $\mathcal{N}\left(\epsilon',\mathcal{L}, L_{2}(P)\right)$, we leverage Remarks \ref{Remark:1} to obtain
\begin{align}
\mathcal{N}(\epsilon',\mathcal{L},L_{2}(P))\leq \left(\dfrac{K\log (K/\epsilon'^{2})}{\epsilon'^{2}} \right)^{\mathrm{dim}_{\mathrm{VC}}(\mathcal{F}')}.
\end{align}
Moreover, $\mathrm{dim}_{\mathrm{VC}}(\mathcal{L})=\mathrm{dim}_{\mathrm{VC}}(\mathrm{sgn}(\mathcal{L}))$, where $\mathrm{sgn}(\mathcal{L})=\{ \mathrm{sgn}(f):f\in \mathcal{L}\}$ and $\mathrm{sgn}(\cdot)$ is the sign function. Therefore, we can alternatively focus on the following function class
\begin{align}
\mathrm{sgn}(\mathcal{L})= \{ f:\real^{d}\rightarrow \real:\bm{x}\mapsto[\langle\bm{x},\bm{\xi}\rangle+b]_{+}: \bm{\xi}\in \real^{d},b\in \real \},
\end{align}
where $[u]_{+}\df \max (u, 0)$. 
We now invoke the following result from the convex geometry:
\begin{lemma}\textsc{(Radon's Theorem, \cite{radon1921mengen})} 
	\label{Lemma:Radons}	
	A set $\mathcal{A}\subset \real^{m}$ of $m+2$ points can be partitioned into two disjoint sets $\mathcal{A}_{1}$ and $\mathcal{A}_{2}$, such that
	\begin{align}
		\mathrm{Convex}(\mathcal{A}_{1})\cap \mathrm{Convex}(\mathcal{A}_{2})\not=\emptyset,
	\end{align}
	where $\mathrm{Convex}(\mathcal{A}_{i})$ denotes the convex hull of the set $\mathcal{A}$ defined as follows
	\begin{align}
		\mathrm{Convex}(\mathcal{A}_{i})\df \left\{\sum_{k=1}^{|\mathcal{A}_{i}|}\lambda_{k}x_{k}: \sum_{k=1}^{|\mathcal{A}_{i}|}\lambda_{k}=1, \lambda_{k}\geq 0, k=1,2,\cdots,|\mathcal{A}_{i}| \right\}, \quad i=1,2.
	\end{align}
	\hfill $\square$
\end{lemma}
We will show that $\mathrm{dim}_{P}(\mathrm{sgn}(\mathcal{L}))\leq d+ 2$. The argument is by contradiction. Suppose $\mathrm{dim}_{P}(\mathrm{sgn}(\mathcal{F}))>d+ 2$. It must be that there exists
a shattered set
\begin{align}
	\{(\bm{x}_{1},b_{1},c_{1}),(\bm{x}_{2},b_{2},c_{2}),\cdots,(\bm{x}_{(d+3)}, b_{(d+3)},c_{(d+3)})\}\subset \real^{d}\times \real\times \real.
\end{align}
such that, for all $\bm{e}\in \{0,1\}^{d+3}$, there exists a vector $\bm{\xi}_{e}\in \real^{d}$ satisfying
\begin{align}
	[\langle\bm{\xi}_{e},\bm{x}_{i}\rangle+b_{i}]_{+}\geq c_{i}, \text{iff}\ e_{i}=1, \quad \forall 1\leq i\leq d+3.
\end{align}

Observe that we must have $c_{i}\in \real_{+}$ for all $i=1,2,\cdots,d$, since if $c_{i}\leq 0$, then no such shattered set can be demonstrated. But if $c_{i}\in \real_{+}$, for all $\bm{\xi}_{e}\in \real^{d}$, then
\begin{subequations}
	\begin{align}
		&[\langle\bm{\xi}_{e},\bm{x}_{i}\rangle+b_{i}]_{+}\geq c_{i}    \Rightarrow \langle\bm{\xi}_{e},\bm{x}_{i}\rangle \geq c_{i}-b_{i}\\
		&[\langle\bm{\xi}_{e},\bm{x}_{i}\rangle+b_{i}]_{+}<c_{i}    \Rightarrow \langle\bm{\xi}_{e},\bm{x}_{i}\rangle\leq c_{i}-b_{i}.
	\end{align}
\end{subequations}
For each $1\leq i\leq d+ 3$, define the vector $\bm{s}_{i}=(s_{i}^{1},\cdots,s_{i}^{d+1})\in \real^{d+1}$ component-wise according to
\begin{align}
	s_{i}^{j}\df \left\{
	\begin{array}{ll}
		x_{i}^{j} & \mbox{if}\ j<d+1 \\
		c_{i}-b_{i} & \mbox{if}\ j=d+1
	\end{array}\right.
\end{align}

Let $\mathcal{A}=\{z_{1}, z_{2},\cdots, z_{(d+3)}\}\subset \real^{d+1}$, and let $\mathcal{A}_{1}$ and $\mathcal{A}_{2}$ be subsets of $\mathcal{A}$ satisfying the conditions of Radon's lemma (cf. Lemma \ref{Lemma:Radons}). Define a vector $\bm{e} \in \{0,1\}^{d+3}$ component-wise according to
\begin{align}
	e_{i}=\mathbb{I}_{\{z_{i}\in \mathcal{A}_{1}\}}.
\end{align}
Then, for the vector $\bm{\xi}_{e}$, we have,
\begin{subequations}
	\label{Eq:Must_Satisfy}
	\begin{align}
		\label{Eq:Must_Satisfy1}
		\sum_{i=1}^{d}\xi_{e,j}z_{j}&\geq  z_{d+1},\quad \forall x\in \mathcal{A}_{1} \\
		\label{Eq:Must_Satisfy2}
		\sum_{i=1}^{d}\xi_{e,j}z_{j}&< z_{d+1},\quad \forall x\in \mathcal{A}_{2}.
	\end{align}
\end{subequations}
Now, let $\bm{x}_{0}\in \real^{d+1}$ be a point contained in both the convex hull of $\mathrm{Conv}(\mathcal{A}_{1})$ and the convex hull of $\mathrm{Conv}(\mathcal{A}_{2})$. Such a point must exist by Radon’s lemma (cf. Lemma \ref{Lemma:Radons}). Since $\bm{x}_{0}$ satisfies both inequalities in Eqs. \eqref{Eq:Must_Satisfy1}-\eqref{Eq:Must_Satisfy2} simultaneously, this yields a contradiction. Therefore, $\mathrm{dim}_{P}(\mathrm{sgn}(\mathcal{L}))< d+2$, and due to Remark \ref{Remark:2}, we obtain $\mathrm{dim}_{\mathrm{VC}}(\mathrm{sgn}(\mathcal{L}))< d+2$.  This bound is essentially tight in the sense that $\mathrm{dim}_{\mathrm{VC}}(\mathrm{sng}(\mathcal{L}))\geq d+1$. To see this, for any given the labels $y_{1},\cdots,y_{d+1}\in \{\pm\}^{d+1}$, consider the function $f\in \mathcal{F}''$ with $b=y_{d+1}$ and $\bm{\xi}=(\xi_{1},\cdots,\xi_{d})$ with $\xi_{i}=y_{i}-y_{d+1}$. Then, $f(\bm{0})=[\langle\bm{\xi},\bm{0}\rangle+y_{d+1} ]_{+}=y_{d+1}$, and $f(\bm{e}_{i})=[\langle \bm{\xi},\bm{e}_{i} \rangle+y_{d+1}]_{+}=[\xi_{i}+y_{d+1}]_{+}=[y_{i}]_{+}=y_{i}$, where $\bm{e}_{i}$ is the $i$-th basis vector. Therefore, we shatter the points $\bm{e}_{1},\cdots,\bm{e}_{d},\bm{0}$.

Combining all the previous steps yields
\begin{align}
\nonumber
\mathcal{N}(\epsilon,\mathcal{G}_{n,N},L_{2}(P))&\leq \left( \mathcal{N}\left(\epsilon',\mathcal{F}, L_{2}(P)\right)\right)^{Nm}\\ \nonumber
\nonumber
&=\left(\mathcal{N}(\epsilon',\mathcal{L},L_{2}(P))\right)^{Nm} \\ \nonumber 
&\leq \left(\dfrac{K\log (K/\epsilon'^{2})}{\epsilon'^{2}} \right)^{Nm\mathrm{dim}_{\mathrm{VC}}(\mathcal{L})}\\ \nonumber
&\leq \left(\dfrac{K\log (K/\epsilon'^{2})}{\epsilon'^{2}} \right)^{Nm\mathrm{dim}_{\mathrm{VC}}(\mathrm{sgn}(\mathcal{L}))}\\ \label{Eq:VC_Bound}
&=\left(\dfrac{K\log (K/\epsilon'^{2})}{\epsilon'^{2}} \right)^{Nm(d+2)},
\end{align}
where $\epsilon'\df {\lambda (\epsilon)^{1\over r}\over 2m\exp(L^{2}/\lambda)L}$. Plugging the upper bound in Eq. \eqref{Eq:VC_Bound} into Inequality \eqref{Eq:Masoud_1} of Lemma \ref{Lemma:Function_Class_1} completes the proof. $\hfill$ $\blacksquare$

\subsection{Proof of Lemma \ref{Thm:Consistency with respect to the Sampling Data}}
\label{Appendix:Proof_of_Lemma_Consistency}

In this section, we establish the proofs of the upper bound in Theorem \ref{Thm:Consistency with respect to the Sampling Data} regarding the consistency of the finite sample approximations in Section \ref{Section:Primal-Dual Method}. The proof is the adaptation of the one given in \cite{sinha2016learning}. However, our proof crucially uses the Gumbel max perturbation technique.  We first recall the following definition of the population and finite sample estimates
\begin{subequations}
\begin{align}
H(\lambda;\bm{\omega})&=\lambda \log \expect_{Q^{\otimes 2}}\left[\exp\left(-\dfrac{y\tilde{y}}{\lambda}\expect_{\mu_{\bm{\omega}}}[\varphi(\bm{x}_{i};\bm{\xi})\varphi(\bm{x}_{j};\bm{\xi}) ]\right)\right]+\dfrac{\lambda r}{2},\\
H_{n,N}(\lambda;\bm{\omega})&=\lambda \log \dfrac{2}{n(n-1)}\sum_{1\leq i<j\leq n}  \exp\left(-\dfrac{y_{i}y_{j}}{\lambda}\expect_{\widehat{\mu}^{N}_{\bm{\omega}}}[\varphi(\bm{x}_{i};\bm{\xi})\varphi(\bm{x}_{j};\bm{\xi})] \right)  +\dfrac{\lambda r}{2}.
\end{align}
\end{subequations}
Now, define 
\begin{subequations}
\begin{align}
T_{\bm{\omega}}(P)&\df \expect_{P^{\otimes 2}}\Big[y
\tilde{y}\expect_{\mu_{\bm{\omega}}}[\varphi(\bm{x};\bm{\xi})\varphi(\tilde{\bm{x}};\bm{\xi})]\Big]\\
\widehat{T}_{\bm{\omega}}(P)&\df \dfrac{1}{N}\expect_{P^{\otimes 2}}\Big[y\tilde{y}\langle \bm{\varphi}_{\bm{\omega}}(\bm{x}), \bm{\varphi}_{\bm{\omega}}(\tilde{\bm{x}}) \rangle \Big].
\end{align}
\end{subequations}
From Lemma \ref{Lemma:Saddle Point Characterization of Functional Optimization}, we recall that
\begin{subequations}
\begin{align}
\label{Eq:State_1}
\sup_{\bm{\omega}\in \mathrm{S}_{+}^{m}}\min_{\lambda>0}H(\lambda;\bm{\omega})&= \sup_{\bm{\omega}\in \mathrm{S}_{+}^{m}}\inf_{P\in \mathcal{P}}T_{\bm{\omega}}(P)\\
\label{Eq:State_2}
\sup_{\bm{\omega}\in \mathrm{S}_{+}^{m}}\min_{\lambda>0}H_{n,N}(\lambda;\bm{\omega})&=\sup_{\bm{\omega}\in \mathrm{S}_{+}^{m}}\inf_{\widehat{P}^{n}\in \widehat{\mathcal{P}}^{n}}\widehat{T}_{\bm{\omega}}(\widehat{P}^{n}),
\end{align}
\end{subequations}
where $\mathcal{P}$ and $\widehat{\mathcal{P}}^{n}$ are defined as follows
\begin{subequations}
\begin{align}
\mathcal{P}&\df \{ P\in \mathcal{M}(\mathcal{X}\times \mathcal{Y}): D_{\mathrm{KL}}(P||Q)\leq r, P\ll Q \},\\
\widehat{\mathcal{P}}^{n}&\df \{ P\in \mathcal{M}(\mathcal{X}\times \mathcal{Y}): D_{\mathrm{KL}}(P||\widehat{Q}^{n})\leq r, P\ll \widehat{Q}^{n} \}.
\end{align}
\end{subequations}
Furthermore, lets define
\begin{align}
\left(\bm{\omega}_{\ast},\widehat{P}_{\ast}^{n}\right)=\arg\sup_{\bm{\omega}\in \mathrm{S}_{+}^{m}}\inf_{\widehat{P}^{n}\in \widehat{\mathcal{P}}^{n}}{1\over N}\expect_{\widehat{P}^{n,\otimes 2}}\Big[y\tilde{y}\langle \bm{\varphi}_{\bm{\omega}}(\bm{x}), \bm{\varphi}_{\bm{\omega}}(\tilde{\bm{x}}) \rangle \Big].
\end{align} 
Fix $\bm{\omega}\in \mathrm{S}_{+}^{m}$. We wish to upper bound the following difference term
\begin{align}
\left|\sup_{\bm{\omega}\in \mathrm{S}_{+}^{m}}\inf_{P\in \mathcal{P}}T_{\bm{\omega}}(P)-T_{\bm{\omega}_{\ast}}(\widehat{P}_{\ast}^{n})\right|\leq \mathsf{S}_{1}+\mathsf{S}_{2}+\mathsf{S}_{3},
\end{align}
where $\mathsf{S}_{1},\mathsf{S}_{2}$, and $\mathsf{S}_{3}$ are defined as below
\begin{align*}
\mathsf{S}_{1}&\df \left|\sup_{\bm{\omega}\in \mathrm{S}_{+}^{m}}\inf_{P\in \mathcal{P}}T_{\bm{\omega}}(P)-\sup_{\bm{\omega}\in \mathrm{S}_{+}^{m}}\inf_{P\in \mathcal{P}}\widehat{T}_{\bm{\omega}}(P)\right|,\\
\mathsf{S}_{2}&\df \left|\sup_{\bm{\omega}\in \mathrm{S}_{+}^{m}} \inf_{P\in \mathcal{P}}\widehat{T}_{\bm{\omega}}(P)- \widehat{T}_{\bm{\omega}_{\ast}}(\widehat{P}_{\ast}^{n})\right|,\\
\mathsf{S}_{3}&\df \left| \widehat{T}_{\bm{\omega}_{\ast}}(\widehat{P}_{\ast}^{n})-T_{\bm{\omega}_{\ast}}(\widehat{P}_{\ast}^{n})\right|.
\end{align*}
We now consider the following upper bound on each term:

\textbf{Upper Bound on $\mathsf{S}_{1}$ and $\mathsf{S}_{3}$:}

To compute upper bounds on $\mathsf{S}_{1}$ and $\mathsf{S}_{3}$, we leverage the following inequalities
\begin{subequations}
\begin{align}
\label{Eq:S1}
\mathsf{S}_{1}&\df \left|\sup_{\bm{\omega}\in \mathrm{S}_{+}^{m}}\inf_{P\in \mathcal{P}}T_{\bm{\omega}}(P)-\sup_{\bm{\omega}\in \mathrm{S}_{+}^{m}}\inf_{P\in \mathcal{P}}\widehat{T}_{\bm{\omega}}(P)\right|\leq \sup_{\bm{\omega}\in \mathrm{S}_{+}^{m}} \sup_{P\in \mathcal{P}}\left|T_{\bm{\omega}}(P)-\widehat{T}_{\bm{\omega}}(P)\right|,\\
\label{Eq:S3}
\mathsf{S}_{3}&\df \left| \widehat{T}_{\bm{\omega}}(\widehat{P}_{\ast}^{n})-T_{\bm{\omega}}(\widehat{P}_{\ast}^{n})\right|\leq \sup_{\bm{\omega}\in \mathrm{S}_{+}^{m}} \sup_{P\in \widehat{\mathcal{P}}^{n}}\left|T_{\bm{\omega}}(P)-\widehat{T}_{\bm{\omega}}(P)\right|.
\end{align}
\end{subequations}
To establish a uniform concentration bound, we use the fact that $\bm{\omega}\in \mathrm{S}_{m}^{+}$ and the simplex $\mathrm{S}_{m}^{+}$ is compact with the diameter $\mathrm{diam}(\mathrm{S}_{m}^{+})=\max_{\bm{\omega}_{0},\bm{\omega}_{1}\in \mathrm{S}_{m}^{+}}\|\bm{\omega}_{0}-\bm{\omega}_{1}\|_{2}\leq 2$. Therefore, we can find an $\varepsilon$-net that covers $\mathrm{S}_{m}^{+}$, using at most $M=(4\mathrm{diam}(\mathrm{S}_{+}^{m})/\varepsilon)^{m}=(8/\varepsilon)^{m}$ balls of the radius $\varepsilon>0$. Lets $\{\bm{\omega}_{i}\}_{i=1}^{M}$ denotes the center (anchor) of these balls. Now, define the error term
\begin{align}
\label{Eq:Error_Term}
E_{N}(\bm{z},\tilde{\bm{z}})\df \expect_{P^{\otimes 2}}\Big[{y\tilde{y}}\expect_{\mu_{\bm{\omega}_{i}}}[\varphi(\bm{x};\bm{\xi})\varphi(\tilde{\bm{x}};\bm{\xi})]\Big]-\expect_{P^{\otimes 2}}\Big[{y\tilde{y}\over N}\langle \bm{\varphi}_{\bm{\omega}_{i}}(\bm{x}), \bm{\varphi}_{\bm{\omega_{i}}}(\tilde{\bm{x}}) \rangle \Big],
\end{align}
where $\bm{z}\df(y,\bm{x}),\tilde{\bm{z}}\df (\tilde{y},\tilde{\bm{x}})$, and $\bm{z},\tilde{\bm{z}}\in \mathcal{Z}=\mathcal{X}\times \mathcal{Y}$. Invoking the H\"{o}lder inequality $|\langle f,g\rangle|\leq \|f\|_{p}\|g\|_{q},p^{-1}+q^{-1}=1$ for the measure space $L^{p}(\mathcal{Z}^{2},P^{\otimes 2})$ with $f=E_{N}(\bm{z}_{0},\bm{z}_{1})$, $g\equiv 1$ and $p=q=2$, it follows that 
\begin{align}
\left|\int_{\mathcal{Z}\times \mathcal{Z}}E_{N}(\bm{z},\tilde{\bm{z}})\mathrm{d}P^{\otimes 2}(\bm{z},\tilde{\bm{z}})\right|\leq  \left(\int_{\mathcal{Z}\times \mathcal{Z}}\left|E_{N}(\bm{z},\tilde{\bm{z}})\right|^{2}\mathrm{d}P^{\otimes 2}(\bm{z},\tilde{\bm{z}})\right)^{1\over 2}.
\end{align}
Due to the non-negativity of the integrand on the right hand side of the preceding inequality, the supremum can be moved inside the square root, \textit{i.e.},
\begin{align}
\nonumber
\sup_{P\in \mathcal{P}}\left|\int_{\mathcal{Z}\times \mathcal{Z}}E_{N}(\bm{z},\tilde{\bm{z}})\mathrm{d}P^{\otimes 2}(\bm{z},\tilde{\bm{z}})\right|&\leq \sup_{P\in \mathcal{P}} \left(\int_{\mathcal{Z}\times \mathcal{Z}}\left|E_{N}(\bm{z},\tilde{\bm{z}})\right|^{2}\mathrm{d}P^{\otimes 2}(\bm{z},\tilde{\bm{z}})\right)^{1\over 2}\\
&\leq \left(\sup_{P\in \mathcal{P}} \int_{\mathcal{Z}\times \mathcal{Z}}\left|E_{N}(\bm{z},\tilde{\bm{z}})\right|^{2}\mathrm{d}P^{\otimes 2}(\bm{z},\tilde{\bm{z}})\right)^{1\over 2}.
\end{align}
We leverage Lemma \ref{Lemma:Saddle Point Characterization of Functional Optimization} to obtain
\begin{align}
\nonumber
\left(\sup_{P\in \mathcal{P}}\int_{\mathcal{Z}\times \mathcal{Z}}\left|E_{N}(\bm{z},\tilde{\bm{z}})\right|^{2}\mathrm{d}P^{\otimes 2}(\bm{z}_{0},\bm{z}_{1})\right)^{1\over 2}&=\left(\inf_{\lambda>0}\lambda \log \expect_{Q^{\otimes 2}}\left[\exp\left({1\over \lambda}|E_{N}(\bm{z},\tilde{\bm{z}})|^{2}\right)\right]+\dfrac{\lambda r}{2}\right)^{1\over 2}\\
&\leq \left(\lambda_{0} \log \expect_{Q^{\otimes 2}}\left[\exp\left({1\over \lambda_{0}}|E_{N}(\bm{z},\tilde{\bm{z}})|^{2}\right)\right]+\dfrac{\lambda_{0} r}{2}\right)^{1\over 2},
\end{align}
where $\lambda_{0}\in \real_{+}\backslash\{0\}$ is a free parameter of the upper bound which we will specify in the sequel. Using the Chernoff bound, we obtain for $\beta\geq 0$ that
\begin{align}
\nonumber
&\prob\left(\sup_{P\in \mathcal{P}}\left| \int_{\mathcal{Z}\times \mathcal{Z}}E_{N}(\bm{z}_{0},\bm{z}_{1})\mathrm{d}P^{\otimes 2}(\bm{z}_{0},\bm{z}_{1})\right|\geq \delta\right)\\ \nonumber
&\leq \prob\left(\lambda_{0} \log \expect_{Q^{\otimes 2}}\left[\exp\left({1\over \lambda_{0}}|E_{N}(\bm{z}_{0},\bm{z}_{1})|^{2}\right)\right]+\dfrac{\lambda_{0} r}{2}\geq \delta^{2} \right)\\ \label{Eq:Going_Back}
&\leq \exp(-\beta \delta^{2})\cdot \exp\left(\dfrac{\beta \lambda_{0}r}{2}\right)\left( \expect_{Q^{\otimes 2}}\left[\exp\left({1\over \lambda_{0}}|E_{N}(\bm{z}_{0},\bm{z}_{1})|^{2}\right)\right]\right)^{\lambda_{0}\beta}.
\end{align}

To compute the expectation, we now establish the following auxiliary result:
\begin{lemma}
\label{Lemma:Sub-Gaussian_Error}
Consider the error term defined in Eq. \eqref{Eq:Error_Term}. Then, $E_{N}(\bm{z}_{0},\bm{z}_{1})$ is zero-mean sub-Gaussian random variable with the Orlicz norm of $\|E_{N}(\bm{z}_{0},\bm{z}_{1})\|_{\psi_{2}}\leq  {L^{2}\over 2\sqrt{N}}$.
\end{lemma}
\begin{proof}
	The proof is presented in Appendix \ref{App:Sub-Gaussian_Error}.
\end{proof}

We leverage the result of Lemma \ref{Lemma:Sub-Gaussian_Error} to compute an upper bound on the expectation on the right hand side of \eqref{Eq:Going_Back}. In particular, we let $\lambda_{0}=1/\|E_{N}\|^{2}_{\psi_{2}}$, and notice that $\expect\big[\exp\big(|E_{N}|^{2}/\|E_{N}\|^{2}_{\psi_{2}}\big)\big]\leq 2$. Then,
\begin{align}
\nonumber
&\prob\left(\sup_{P\in \mathcal{P}}\left| \int_{\mathcal{Z}\times \mathcal{Z}}E_{N}(\bm{z}_{0},\bm{z}_{1})\mathrm{d}P^{\otimes 2}(\bm{z}_{0},\bm{z}_{1})\right|\geq \delta\right)\\ \nonumber
&\hspace{8mm}\leq \exp(-\beta\delta^{2})\exp\left( \dfrac{\beta r}{2\|E_{N}\|^{2}_{\psi^{2}}}\right)\exp\left({\beta\ln(2)\over \|E_{N}\|^{2}_{\psi_{2}}}\right), 
\end{align}
for all $\beta \in \real_{+}$. Let $\beta={\|E_{N}\|_{\psi_{2}}^{2}\over (r+2\ln(2))}$ with $\|E_{N}\|_{\psi_{2}}={L^{2}\over 2\sqrt{N}}$. Then, we obtain that 
\begin{align}
\prob\left(\sup_{P\in \mathcal{P}}\left| \int_{\mathcal{Z}\times \mathcal{Z}}E_{N}(\bm{z}_{0},\bm{z}_{1})\mathrm{d}P^{\otimes 2}(\bm{z}_{0},\bm{z}_{1})\right|\geq \delta\right)\leq 2\exp\left(- {L^{4}\delta^{2}\over 2N(r+2\ln(2))} \right).
\end{align}
We thus conclude that
\begin{align}
\prob\left(\sup_{P\in \mathcal{P}}\left|T_{\bm{\omega}_{i}}(P)-\widehat{T}_{\bm{\omega}_{i}}(P)\right|\geq \delta \right)\leq 2\exp\left(- {L^{4}\delta^{2}\over 2N(r+2\ln(2))} \right).
\end{align}
Applying the union bound then yields
\begin{align}
\prob\left(\max_{1\leq i\leq M}\sup_{P\in \mathcal{P}}\left|T_{\bm{\omega}_{i}}(P)-\widehat{T}_{\bm{\omega}_{i}}(P)\right|\geq \delta \right)\leq 2\left(8\over \varepsilon \right)^{m}\exp\left(- {L^{4}\delta^{2}\over 2N(r+2\ln(2))} \right).
\end{align}
Therefore, with the probability of at least $1-\rho$, we obtain that
\begin{align}
\max_{1\leq i\leq M}\sup_{P\in \mathcal{P}}\left|T_{\bm{\omega}_{i}}(P)-\widehat{T}_{\bm{\omega}_{i}}(P)\right|\leq \sqrt{{2N(r+2\ln(2))\over L^{4}}\log\left({2\times 8^{m}\over \rho\varepsilon^{m}}\right)}.
\end{align}
It is easy to see that the mapping $\bm{\omega}\mapsto T_{\bm{\omega}}(P)-\widehat{T}_{\bm{\omega}}(P)$ is Lipschitz with the constant $2L^{2}$. Therefore, with the probability of at least $1-\rho$, we have
\begin{align}
\nonumber
\sup_{\bm{\omega}\in \mathrm{S}_{m}^{+}}\sup_{P\in \mathcal{P}}\left|T_{\bm{\omega}}(P)-\widehat{T}_{\bm{\omega}}(P)\right|&\leq \max_{1\leq i\leq M}\sup_{P\in \mathcal{P}}\left|T_{\bm{\omega}_{i}}(P)-\widehat{T}_{\bm{\omega}_{i}}(P)\right|+2L^{2}\varepsilon\\
&\leq \sqrt{{2N(r+2\ln(2))\over L^{4}}\log\left({2\times 8^{m}\over \rho\varepsilon^{m}}\right)}+2L^{2}\varepsilon.
\end{align}
Due to Inequality \eqref{Eq:S1}, we obtain
\begin{align}
\left|\sup_{\bm{\omega}\in \mathrm{S}_{+}^{m}}\inf_{P\in \mathcal{P}}T_{\bm{\omega}}(P)-\sup_{\bm{\omega}\in \mathrm{S}_{+}^{m}}\inf_{P\in \mathcal{P}}\widehat{T}_{\bm{\omega}}(P)\right|\leq \sqrt{{2N(r+2\ln(2))\over L^{4}}\log\left({2\times 8^{m}\over \rho\varepsilon^{m}}\right)}+2L^{2}\varepsilon.
\end{align}
Letting $\varepsilon=\dfrac{1}{2L^{4}}\sqrt{2N(r+2\ln(2))}$ yields
\begin{align}
\left|\sup_{\bm{\omega}\in \mathrm{S}_{+}^{m}}\inf_{P\in \mathcal{P}}T_{\bm{\omega}}(P)-\sup_{\bm{\omega}\in \mathrm{S}_{+}^{m}}\inf_{P\in \mathcal{P}}\widehat{T}_{\bm{\omega}}(P)\right|\leq 2\sqrt{{2N(r+2\ln(2))\over L^{4}}\log\left({2\times (16L^{4})^{m}\over \rho (2N(r+2\ln(2))^{m\over 2}) }\right)}.
\end{align}
Similarly, it can be shown that with the probability of at least $1-\rho$,
\begin{align}
 \left| \widehat{T}_{\bm{\omega}}(\widehat{P}_{\ast}^{n})-T_{\bm{\omega}}(\widehat{P}_{\ast}^{n})\right|\leq 2\sqrt{{2N(r+2\ln(2))\over L^{4}}\log\left({2\times (16L^{4})^{m}\over \rho (2N(r+2\ln(2))^{m\over 2}) }\right)}.
\end{align}

\textbf{Upper Bound on $\mathsf{S}_{2}$:} 
In this case, we employ Talagrand's concentration of measure for Lipschitz functions:
\begin{theorem}\textsc{(Talagrand's Concentration of Measure \cite{talagrand1995concentration})}
	Let $\bm{X}=(X_{1},\cdots, X_{n})$ be a random variable with independent components taking values $|X_{i}|\leq K, i=1,2,\cdots,n$, for $K>0$. Let $f:\real^{n}\rightarrow \real$ be a convex $L$-Lipschitz function. Then, for all $t>0$
	\begin{align}
	\label{Eq:Establish}
	\prob\Big(|f(\bm{X})-\median[f(\bm{X})]|\geq K\delta \Big)\leq 4e^{-\delta^{2}/4L^{2}},
	\end{align}
	where $\median[f(\bm{X})]$ is the median of the random variable $f(\bm{X})$ defined as follows
	\begin{align}
	\median[f(\bm{X})]\df \sup\Big\{\delta\in \real:\prob(f(\bm{X})\leq \delta)\leq {1\over 2} \Big\}. 
	\end{align}
	\hfill $\square$
\end{theorem}

Let us mention that the concentration inequality in Eq. \eqref{Eq:Establish} provides the following bound on the distance between the median and the expectation
\begin{align}
\nonumber
|\expect[f(\bm{X})]-\median[f(\bm{X})]|&\leq \expect[|f(\bm{X})-\median[f(\bm{X})]|\\ \nonumber
&= \int_{0}^{\infty}\prob\left(|f(\bm{X})-\median[f(\bm{X})]|\geq \delta \right)\mathrm{d}\delta\\
&\leq \int_{0}^{\infty} 4e^{-\delta^{2}/4L^{2}K^{2}}=8KL\sqrt{4\pi}.
\end{align}
Hence, we have 
\begin{align}
\median[f(\bm{X})]-8KL\sqrt{4\pi}\leq \expect[f(\bm{X})]\leq \median[f(\bm{X})]+8KL\sqrt{4\pi}.
\end{align}
Therefore, due to parts (iii) and (iv) of \cite[Proposition 2]{naor2008concentration}, concentration around the median implies the concentration around the mean with different constants, \textit{i.e.},
\begin{align}
\prob\Big(|f(\bm{X})-\expect[f(\bm{X})]|\geq K\delta \Big)\leq 2\exp\left(\dfrac{\delta^{2}}{K^{2}L^{2}} \right).
\end{align}
We recall the definition
\begin{align}
\nonumber
\widehat{T}_{\bm{\omega}}(\widehat{P}_{\ast}^{n})&=\sup_{\widehat{P}^{n}\in \mathcal{P}^{n}} \widehat{T}_{\bm{\omega}}(\widehat{P}^{n})\\
&=\min_{\lambda>0}\lambda \log \expect_{\widehat{Q}^{n,\otimes 2}}\left[ \exp\left(-\dfrac{y\tilde{y}}{\lambda}\expect_{\widehat{\mu}^{N}_{\bm{\omega}}}[\varphi(\bm{x};\bm{\xi})\varphi(\tilde{\bm{x}};\bm{\xi})] \right)\right]  +\dfrac{\lambda r}{2}.
\end{align}
Furthermore, let $\bm{z}_{i}=(y_{i},\bm{x}_{i}),i=1,2,\cdots,n$, and
\begin{align}
\label{Eq:Recall_Xij}
X_{ij}\df X(\bm{z}_{i},\bm{z}_{j})\df  \expect_{\widehat{\mu}^{N}_{\bm{\omega}}}\Big[y_{i}y_{j}\varphi(\bm{x}_{i};\bm{\xi})\varphi(\bm{x}_{j};\bm{\xi})\Big].
\end{align}
Consider the mapping
\begin{align}
\label{Eq:Minimizer}
\bm{X}=(X_{ij})_{1\leq i<j\leq n}\mapsto f(\bm{X})\df \min_{\lambda>0}F(\bm{X},\lambda)\df \lambda \log \expect_{\widehat{Q}^{n,\otimes 2}}\left[\exp\left(-{1\over \lambda} X(\bm{z},\tilde{\bm{z}})\right)\right]+\dfrac{\lambda r}{2}.
\end{align}
To show that this mapping is Lipschitz, we first show that the (unique) minimizer of the optimization problem in Eq. \eqref{Eq:Minimizer} is bounded, \textit{i.e.}, there exists $c\df c(n,r)>0$ and $0<C\df C(n,r)<+\infty$ such that $c\leq \lambda_{\ast}\leq C$, where
\begin{align}
\lambda_{\ast}\df \arg\min_{\lambda>0}F(\bm{X},\lambda).
\end{align}
To this end, for a fixed $\bm{X}\in \mathcal{Z}$, consider the partial derivative
\begin{align}
\nonumber
\dfrac{\partial F(\bm{X},\lambda)}{\partial\lambda}&=\log \expect_{\widehat{Q}^{n,\otimes 2}}\left[\exp\left(-{1\over \lambda} X(\bm{z},\tilde{\bm{z}})\right)\right]+\dfrac{r}{2}\\ \label{Eq:optimal_solution}
&\hspace{4mm}+\dfrac{\sum_{1\leq i<j\leq n} \dfrac{1}{\lambda}X(\bm{z}_{i},\bm{z}_{j})\exp\left(-\dfrac{1}{\lambda}X(\bm{z}_{i},\bm{z}_{j})\right)  }{\sum_{1\leq i<j\leq n} \exp\left( -\dfrac{1}{\lambda} X(\bm{z}_{i},\bm{z}_{j})\right)}.
\end{align}
In the limit of $\lambda\downarrow 0$,  the soft-max terms turns into the maximization over the indices $i,j\in \{1,2,\cdots,n\}$, \textit{i.e.},
\begin{align}
\lim_{\lambda\downarrow 0}\dfrac{\partial F(\bm{X},\lambda)}{\partial\lambda}=\lim_{\lambda\downarrow 0} \max_{1\leq i<j\leq n}-{2\over \lambda}{X_{ij}}+\dfrac{r}{2}\rightarrow +\infty.
\end{align}
Moreover, since $|X_{ij}|<L^{2}$ due to Assumption \ref{Assumption:1}, we have
\begin{align}
\lim_{\lambda\rightarrow +\infty}\dfrac{\partial}{\partial\lambda}F(\bm{X},\lambda)=\dfrac{r}{2}>0.
\end{align}
The partial derivative $\partial F(\bm{X},\lambda)/\partial\lambda$ is a continuous function of $\lambda$.  It can thus only vanish in the interior of the non-negative orthant $\lambda_{\ast}\in \real_{+}\backslash \{0,\infty\}$. Therefore, the mapping $f(\bm{X})=F(\bm{X},\lambda_{\ast})$ is Lipschitz with the constant  $L= {C\over n^{2}} \exp\left(\dfrac{2L^{2}}{c}\right)$. Based on Talagrand's concentration inequality, for the anchor nodes $\{\bm{\omega}_{i} \}_{i=1}^{M}$ we have that
\begin{align}
\nonumber
\prob\left(\left|\sup_{\widehat{P}^{n}\in \mathcal{P}^{n}} \widehat{T}_{\bm{\omega}_{i}}(\widehat{P}^{n})-\expect\left[\sup_{\widehat{P}^{n}\in \mathcal{P}^{n}} \widehat{T}_{\bm{\omega}_{i}}(\widehat{P}^{n})\right] \right|\geq \delta \right)\leq 2\exp\left(-\dfrac{\delta^{2}n^{2}}{K^{2}L^{2}} \right),
\end{align}
where $K=C\exp\left({2L^{2}\over c}\right)$. From the union bound, we obtain
\begin{align}
\nonumber
\prob\left(\max_{1\leq i\leq M}\left|\sup_{\widehat{P}^{n}\in \mathcal{P}^{n}} \widehat{T}_{\bm{\omega}_{i}}(\widehat{P}^{n})-\expect\left[\sup_{\widehat{P}^{n}\in \mathcal{P}^{n}} \widehat{T}_{\bm{\omega}_{i}}(\widehat{P}^{n})\right] \right|\geq \delta \right)\leq 2\left({8\over \varepsilon} \right)^{m}\exp\left(-\dfrac{\delta^{2}n^{2}}{K^{2}L^{2}} \right).
\end{align}
Therefore, with the probability of at least $1-\rho$, the following inequality holds
\begin{align}
\max_{1\leq i\leq M}\left|\sup_{\widehat{P}^{n}\in \mathcal{P}^{n}} \widehat{T}_{\bm{\omega}_{i}}(\widehat{P}^{n})-\expect\left[\sup_{\widehat{P}^{n}\in \mathcal{P}^{n}} \widehat{T}_{\bm{\omega}_{i}}(\widehat{P}^{n})\right] \right|\leq  \dfrac{KL}{n}\sqrt{ \log\left({2\times 8^{m}\over \rho\times \varepsilon^{m} }\right)}.
\end{align}
In the following lemma, we characterize an upper bound and a lower bound on the expectation on the left hand side of the previous display:
\begin{lemma}
	\label{lemma:stupid}
For all the realizations of the data samples $(y_{i},\bm{x}_{i})_{1\leq i\leq N}\sim_{\text{i.i.d.}} Q$ as well as random feature samples $(\bm{\xi}_{i}^{k})_{1\leq k\leq N}\sim_{\text{i.i.d.}} \mu_{i}$, the following upper bound holds 
\begin{align}
\label{Bound:1}
\expect_{Q^{\otimes 2}}\left[\sup_{\widehat{P}^{n}\in \mathcal{P}^{n}} \widehat{T}_{\bm{\omega}}(\widehat{P}^{n})\right]\leq \sup_{P\in \mathcal{P}} \widehat{T}_{\bm{\omega}}(P).
\end{align}
Furthermore, the following lower bound holds
\begin{align}
\label{Bound:2}
\expect_{Q^{\otimes 2}}\left[\sup_{\widehat{P}^{n}\in \mathcal{P}^{n}}\widehat{T}_{\bm{\omega}}(\widehat{P}^{n})\right]\geq \sup_{P\in \mathcal{P}} \widehat{T}_{\bm{\omega}}(P)-{C^{3}K^{2}\over n^{2}},
\end{align}
where $K=C\exp({2L^{2}\over c})$.
\end{lemma}
\begin{proof}
	The proof is presented in Appendix \ref{Appendix:Aux_2}. 
\end{proof}

Using the bounds in Eqs. \eqref{Bound:1} and \eqref{Bound:2} of Lemma \ref{lemma:stupid} yields
\begin{align}
\label{Eq:program}
\max_{1\leq i\leq M}\left|\sup_{\widehat{P}^{n}\in \mathcal{P}^{n}} \widehat{T}_{\bm{\omega}_{i}}(\widehat{P}^{n})-\sup_{P\in \mathcal{P}} \widehat{T}_{\bm{\omega}}(P) \right|\leq  \dfrac{KL}{n}\sqrt{ \log\left({2\times 8^{m}\over \rho\times \varepsilon^{m} }\right)}+\dfrac{C^{3}K^{2}}{n^{2}}.
\end{align}
We then obtain
\begin{align}
\nonumber
\sup_{\bm{\omega}\in \mathrm{S}_{m}^{+}}\left|\sup_{\widehat{P}^{n}\in \mathcal{P}^{n}} \widehat{T}_{\bm{\omega}}(\widehat{P}^{n})-\sup_{P\in \mathcal{P}} \widehat{T}_{\bm{\omega}}(P) \right|&\leq \max_{1\leq i\leq M}\left|\sup_{\widehat{P}^{n}\in \mathcal{P}^{n}} \widehat{T}_{\bm{\omega}_{i}}(\widehat{P}^{n})-\sup_{P\in \mathcal{P}} \widehat{T}_{\bm{\omega}}(P) \right|+2L^{2}\varepsilon.
\end{align}
Combining the preceding inequality with Eq. \eqref{Eq:program} yields
\begin{align}
\sup_{\bm{\omega}\in \mathrm{S}_{m}^{+}}\left|\sup_{\widehat{P}^{n}\in \mathcal{P}^{n}} \widehat{T}_{\bm{\omega}}(\widehat{P}^{n})-\sup_{P\in \mathcal{P}} \widehat{T}_{\bm{\omega}}(P) \right|\leq \dfrac{KL}{n}\sqrt{ \log\left({2\times 8^{m}\over \rho\times \varepsilon^{m} }\right)}+\dfrac{C^{3}K^{2}}{n^{2}}+2L^{2}\varepsilon.
\end{align}
Letting $\varepsilon={K\over 2Ln}$ yields
\begin{align}
\nonumber
\mathsf{S}_{2}=\left|\sup_{\bm{\omega}\in \mathrm{S}_{+}^{m}} \inf_{P\in \mathcal{P}}\widehat{T}_{\bm{\omega}}(P)- \widehat{T}_{\bm{\omega}_{\ast}}(\widehat{P}_{\ast}^{n})\right|&\leq \sup_{\bm{\omega}\in \mathrm{S}_{m}^{+}}\left|\sup_{\widehat{P}^{n}\in \mathcal{P}^{n}} \widehat{T}_{\bm{\omega}}(\widehat{P}^{n})-\sup_{P\in \mathcal{P}} \widehat{T}_{\bm{\omega}}(P) \right|\\
&\leq \dfrac{2KL}{n}\sqrt{ \log\left({2\times (16Ln)^{m}\over \rho\times K^{m} }\right)}+\dfrac{C^{3}K^{2}}{n^{2}}.
\end{align}

\subsection{Proof of Theorem \ref{Thm:Min-Max_Rate}}
\label{Appendix:The_Min_Max_Rate}

Let $\Theta$ denotes a set of parameters containing the element $\theta$ which we wish to estimate. Assume there is a class $\mathcal{P} = \{P_{\theta}:\theta\in \Theta\}$ of probability measures on $\mathcal{X}$ indexed by $\Theta$. Suppose $d:\Theta\times \Theta\rightarrow \real_{+}$ is a metric on $\Theta$. LeCam's method now can be encapsulated in the following theorem:
\begin{theorem}\textsc{(LeCam's Lower Bound, see, e.g., \cite{yu1997assouad})}
\label{Thm:LeCam}
Let $P^{0},P^{1}\in \mathcal{P}$. Let $\theta(P)$ denotes a parameter taking values in the metric space $(\Theta,d)$. Then,
\begin{align}
	\label{Eq:LeCam}
\inf_{\widehat{\theta}\in \Theta} \sup_{P\in \mathcal{P}} \expect_{P}\Big[d\Big(\widehat{\theta},\theta(P)\Big)\Big]\geq \dfrac{\Delta}{4}\int_{\mathcal{X}} \min\{p^{0}_{n}(\bm{x}),p^{1}_{n}(\bm{x})\}\mathrm{d}\bm{x}\geq \dfrac{\Delta}{8}e^{-n D_{\mathrm{KL}}(P^{0}||P^{1})},
\end{align}
where $\Delta\df d(\theta(P^{0}),\theta(P^{1}))$. Moreover, $p^{0}_{n}(\bm{x})\df \prod_{i=1}^{n}p^{0}(x_{i})$ and $p^{1}_{n}(\bm{x})\df \prod_{i=1}^{n}p^{1}(x_{i})$ where $p^{0}(x)\df \mathrm{d}P^{0}/\mathrm{d}x$ and $p^{1}(x)\df \mathrm{d}P^{1}/\mathrm{d}x$ are the Lebesgue densities.
\end{theorem}

Now, suppose $P(y)=Q(y)$, and consider the case of a balanced data-set. Then, following Eq. \eqref{Eq:Plainly_explained} and the ensuing discussion in Section \ref{Eq:Connections to the maximum mean discrepancy (MMD)}, we equivalently can consider the minimax estimation rate of the following optimization problem
\begin{align}
	\label{Eq:damn}
(P_{+},P_{-})\mapsto \bm{\omega}(P_{+},P_{-})=\max_{\bm{\omega}\in \mathrm{S}_{m}^{+}} \sum_{i=1}^{m}\omega_{i}\mathrm{MMD}^{2}_{K_{i}}[P_{+},P_{-}],
\end{align}
where $(P_{+},P_{-})\in \mathcal{P}_{\pm}$. Consider the following normal distributions for the marginals:
\begin{subequations}
\begin{align}
P_{+}&=\mathsf{N}(\bm{\mu}_{+}^{P},\sigma^{2}\bm{I}_{d\times d}), \quad  P_{-}=\mathsf{N}(\bm{\mu}_{-}^{P},\sigma^{2}\bm{I}_{d\times d}),\\
Q_{+}&=\mathsf{N}(\bm{\mu}_{+}^{Q},\sigma^{2}\bm{I}_{d\times d}), \quad Q_{-}=\mathsf{N}(\bm{\mu}_{-}^{Q},\sigma^{2}\bm{I}_{d\times d}),
\end{align}
\end{subequations}
where $Q_{+}\otimes Q_{-}$ is the center of the distribution ball $\mathcal{P}_{\pm}$. We impose the following conditions on the means and variances of the normal distributions:
\begin{itemize}
	\item[(C.1)] $\|\bm{\mu}_{-}^{P}-\bm{\mu}_{-}^{Q}\|_{2}^{2}+\|\bm{\mu}_{+}^{P}-\bm{\mu}_{+}^{Q}\|_{2}^{2}\leq 4r\sigma^{2}$,
	\item[(C.2)] $\|\bm{\mu}_{-}^{P}-\bm{\mu}_{+}^{P}\|_{2}^{2}\geq 4\sigma^{2}$ and $\|\bm{\mu}_{-}^{Q}-\bm{\mu}_{+}^{1}\|_{2}^{Q}\geq 4\sigma^{2}$.
\end{itemize}
The KL divergence between $P_{+}\otimes P_{-}$ and $Q_{+}\otimes Q_{-}$ is 
\begin{align}
	\label{Eq:MMD_integral}
D_{\mathrm{KL}}(P_{+}\otimes P_{-}||Q_{+}\otimes Q_{-})=\dfrac{1}{2\sigma^{2}}\|\bm{\mu}_{-}^{P}-\bm{\mu}_{-}^{Q}\|_{2}^{2}+\dfrac{1}{2\sigma^{2}}\|\bm{\mu}_{+}^{P}-\bm{\mu}_{+}^{Q}\|_{2}^{2}.
\end{align}
Therefore, $P_{+}\otimes P_{-}\in \mathcal{P}_{\pm}$ iff (C.1) is satisfied. In addition, from \cite[Eqs. (25)]{tolstikhin2017minimax}, we obtain
\begin{align}
\label{Eq:A_lower_bound}
\mathrm{MMD}_{K_{i}}^{2}[P_{+},P_{-}]&=\int_{0}^{\infty}2\left(\dfrac{1}{1+4t\sigma^{2}}\right)^{d/2}\left(1-\exp\left(-\dfrac{t\|\bm{\mu}_{+}^{P}-\bm{\mu}_{-}^{P}\|_{2}^{2}}{1+4t\sigma^{2}} \right) \right)\mathrm{d}\nu_{i}(t).
\end{align}
To compute an upper bound, we employ the elementary inequality $1-\exp(-x)\leq x, \forall x\geq 0$. We have
\begin{align}
\label{Eq:An_upper_bound}
\mathrm{MMD}_{K_{i}}^{2}[P_{+},P_{-}]\leq 2\|\bm{\mu}_{+}^{P}-\bm{\mu}_{-}^{P}\|_{2}^{2}\int_{0}^{\infty}\dfrac{t}{(1+4t\sigma^{2})^{(d+2)/2}}\nu_{i}(\mathrm{d}t).
 \end{align}
A lower bound is also established in \cite[Eqs. (27)]{tolstikhin2017minimax} for the right hand side of Eq. \eqref{Eq:A_lower_bound},
\begin{align}
	\nonumber
\mathrm{MMD}_{K_{i}}^{2}[P_{+},P_{-}]&\geq 	\|\bm{\mu}_{+}^{P}-\bm{\mu}_{-}^{P}\|_{2}^{2}\int_{\tau_{0}}^{\tau_{1}}\dfrac{t}{(1+4t\sigma^{2})^{(d+2)/2}}\nu_{i}(\mathrm{d}t)\\
	&\geq 	\|\bm{\mu}_{+}^{P}-\bm{\mu}_{-}^{P}\|_{2}^{2}\nu_{i}([\tau_{0},\tau_{1}]) \min\left\{\dfrac{\tau_{0}}{(1+4\tau_{0}\sigma^{2})^{(d+2)/2}},\dfrac{\tau_{1}}{(1+4\tau_{1}\sigma^{2})^{(d+2)/2}}\right\},
\end{align}
where the lower bound holds for any $0<\tau_{0}\leq \tau_{1} \leq {1\over \|\bm{\mu}_{+}^{P}-\bm{\mu}_{-}^{P}\|_{2}^{2}-4\sigma^{2}}=\tau_{P}$. Notice that $\tau_{P}>0$ iff (C.2) is satisfied.  Therefore, from Eqs. \eqref{Eq:A_lower_bound} and \eqref{Eq:An_upper_bound}, we arrive at the following asymptotic result
\begin{align}
\label{Eq:Expected}
\mathrm{MMD}_{K_{i}}^{2}[P_{+},P_{-}]\asymp \|\bm{\mu}_{+}^{P}-\bm{\mu}_{-}^{P}\|_{2}^{2}=2\sigma^{2}D_{\mathrm{KL}}(P_{+}||P_{-}).
\end{align}
Not surprisingly, the result of Eq. \eqref{Eq:Expected} highlights the fact that MMD, as a measure of the distribution distance with respect to a kernel function, is proportional to $f$-divergences. Since the population MMD is non-negative and the optimization of the weights $\bm{\omega}$ is with respect to the probability simplex $\mathrm{S}_{m}^{+}$, the inequalities $J_{k,d}(\tau_{P})\leq {1\over 2}J_{i,d}(\infty)$ and $J_{\ell,d}(\tau_{Q})\leq {1\over 2}J_{j,d}(\infty)$ for all $k\in [M]\backslash \{i\}$ and $\ell\in [M]\backslash \{j\}$ in Theorem \ref{Thm:Min-Max_Rate} are sufficient conditions to ensure that the optimal solution $\bm{\omega}$ of the population MMD optimization in Eq. \eqref{Eq:damn} is a basis vector whose only non-zero coordinate correspond to the basis kernel that yields the largest MMD value. Mathematically,
\begin{subequations}
\begin{align}
\bm{\omega}&=\bm{e}_{i},\quad  \text{for}\ (P_{+},P_{-})\in \mathcal{P}_{\pm} \\
\bm{\omega}&=\bm{e}_{j},\quad  \text{for}\ (Q_{+},Q_{-})\in \mathcal{P}_{\pm},
\end{align}
\end{subequations}
where $\bm{e}_{1},\cdots,\bm{e}_{m}\in \real^{m}$ are the standard basis vectors. Notice that $(\Theta,d)$ here is the simplex indowed with the total variation distance $(\mathrm{S}_{m}^{+},d_{\mathrm{TV}})$. Thus,
\begin{align}
d_{\mathrm{TV}}(\bm{e}_{i},\bm{e}_{j})={1\over 2}\|\bm{e}_{i}-\bm{e}_{j}\|_{1}=1.
\end{align}
From LeCam's minmax rate in Eq. \ref{Eq:LeCam} of Thm. \ref{Thm:LeCam}, we obtain
\begin{align}
\inf_{\widehat{\bm{\omega}}\in \mathrm{S}_{m}^{+}} \sup_{P\in \mathcal{P}} \expect_{P}\Big[d_{\mathrm{TV}}\Big(\widehat{\bm{\omega}}_{n},\bm{\omega}(P)\Big)\Big]\geq \dfrac{1}{8}e^{-nD_{\mathrm{KL}}(P_{+}\otimes P_{-}||Q_{+}\otimes Q_{-})}\geq \dfrac{1}{8}e^{-nr}.
\end{align}

\subsection{Proof of Proposition \ref{Proposition:1}}
\label{Appendix:Proof_of_RGComplexities}

Recall the definition of the function class $\mathcal{F}_{m}(R)$ from Eq. \eqref{Eq:Function_Class}.  We prove Part ($i$) by computing an upper bound on the empirical Rademacher complexity for the class of functions in $\mathcal{F}_{m}(R)$ as follows

\begin{align}
	\nonumber
	\widehat{\mathfrak{R}}_{S}^{n}(\mathcal{F}_{m}(R))&=\dfrac{1}{n}\expect_{P_{\bm{\varepsilon}}}\left[\sup_{f\in \mathcal{F}_{m}(R)}\sum_{i=1}^{n}\varepsilon_{i}f(\bm{x}_{i})\Bigg \vert \bm{x}_{1},\cdots,\bm{x}_{n} \right]\\  \nonumber
	&=\dfrac{1}{n\sqrt{N}}\expect_{P_{\bm{\varepsilon}}}\left[\sup_{\bm{\beta}\in \ball^{2}_{r}(\bm{0}) }\sum_{i=1}^{n}\sum_{k=1}^{N}\sum_{m=1}^{M}\varepsilon_{i}\beta_{m}^{k}\sqrt{\omega_{m}}\phi_{m}(\bm{x}_{i};\bm{\xi}_{m}^{k}) \Bigg \vert \bm{x}_{1},\cdots,\bm{x}_{n} \right]\\  \label{Eq:Supremum}
	&=\dfrac{1}{n\sqrt{N}}\expect_{P_{\bm{\varepsilon}}}\left[\sup_{\bm{\beta}\in \ball^{2}_{r}(\bm{0})}\bm{\varepsilon}^{T}\bm{\Phi}(\bm{\omega})\bm{\beta} \Bigg \vert \bm{x}_{1},\cdots,\bm{x}_{n} \right],
	\end{align}
where in the last equality, $\bm{\Phi}(\bm{\omega})\df (\bm{\varphi}_{\bm{\omega}}(\bm{x}_{i}))_{1\leq i\leq n}\in \real^{n\times mN}$ is the feature matrix with and $r={R\over \sqrt{mN}}$.
From the Cauchy-Schwarz inequality, it is easy to see that the supremum in Eq. \eqref{Eq:Supremum} is attained when $\bm{\beta}$ and $\bm{\varepsilon}^{T}\bm{\Phi}$ are co-linear, \textit{i.e.}, 
\begin{align}
\bm{\beta}= \dfrac{R}{\sqrt{mN}}\cdot {\bm{\varepsilon}^{T}\bm{\Phi}(\bm{\omega})\over \|\bm{\varepsilon}^{T}\bm{\Phi}(\bm{\omega})\|_{2}}. 
\end{align}
Therefore, from Eq. \eqref{Eq:Supremum} we obtain that
\begin{align}
\label{Eq:Expectation_In}
\widehat{\mathfrak{R}}_{S}^{n}(\mathcal{F}_{m}(R))&=\dfrac{R}{nN\sqrt{m}}\expect_{P_{\bm{\varepsilon}}}\left[\|\bm{\varepsilon}^{T}\bm{\Phi}(\bm{\omega})\|_{2} \Bigg \vert \bm{x}_{1},\cdots,\bm{x}_{n} \right].
\end{align}

%
%

We now compute the expectation \eqref{Eq:SUBSUBSUB_1} via the following concentration inequality for the quadratic forms of independent random variables with bounded moments:

\begin{theorem}\textsc{(Concentration Inequality for Quadratic Forms,  \cite[Thm. 3]{bellec2014concentration})}
	\label{Thm:Hanson_Wright_Inequality}	
	Let $\bm{z}=(z_{1},\cdots, z_{n})\in \real^{n}$ be a random vector satisfying the following  Bernstein moment conditions for some $K > 0$:
	\begin{align}
	\label{Eq:Moment_Condition}
	\expect|z_{i}|^{2p}\leq {1\over 2}p!b_{i}^{2}K^{2(p-1)}, \quad p\geq 1.
	\end{align}
	Let $\bm{A}\in \real^{n\times n}$ be a real matrix. Then, for every $t\geq 0$,
	\begin{align}
	\label{Eq:Hanson_Wright_Inequality}
	\prob\Big\{|\bm{z}^{T}\bm{A}\bm{z}-\expect[\bm{z}^{T}\bm{A}\bm{z}]|>t\Big\} 
	\leq \exp\left(-\min\left( \dfrac{t^{2}}{192K^{2}\|\bm{A}\bm{D}_{\bm{b}}\|_{F}^{2}},\dfrac{t}{256K^{2}\|\bm{A}\|_{2}}\right) \right),
	\end{align}	
	where  $\bm{D}_{\bm{b}}=\mathrm{diag}(b_{1},\cdots,b_{n})$, and $\|\bm{A}\|_{F}\df \sum_{i,j=1}^{n}|A_{ij}|^{2}$. \hfill $\square$
\end{theorem}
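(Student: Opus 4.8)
The plan is to prove Theorem~\ref{Thm:Hanson_Wright_Inequality} along the classical Hanson--Wright route: decompose $\bm{Z}^{T}\bm{A}\bm{Z}$ into its diagonal and off-diagonal contributions, bound the moment generating function (MGF) of each under the Bernstein hypothesis \eqref{Eq:Moment_Condition}, and assemble the two tail estimates by a union bound with the level $t$ split so as to produce the stated $\min(\,\cdot\,,\,\cdot\,)$ form. After subtracting the mean I would write
\[
\bm{Z}^{T}\bm{A}\bm{Z}-\expect[\bm{Z}^{T}\bm{A}\bm{Z}]
=\underbrace{\sum_{i=1}^{n}A_{ii}\bigl(Z_{i}^{2}-\expect Z_{i}^{2}\bigr)}_{S_{\mathrm{diag}}}
+\underbrace{\sum_{i\ne j}A_{ij}Z_{i}Z_{j}}_{S_{\mathrm{off}}},
\]
observing first that \eqref{Eq:Moment_Condition} forces each $Z_{i}$ to be sub-Gaussian with parameter of order $K$, so that $Z_{i}^{2}-\expect Z_{i}^{2}$ is sub-exponential with scale of order $K^{2}$ and variance proxy of order $b_{i}^{2}K^{2}$.

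\textbf{Diagonal part.} The summands $A_{ii}(Z_{i}^{2}-\expect Z_{i}^{2})$ are independent centered sub-exponential variables, so the classical Bernstein inequality for such sums yields
\[
\prob\{|S_{\mathrm{diag}}|>t\}\le 2\exp\!\Bigl(-c\min\bigl(t^{2}/(K^{4}{\textstyle\sum_{i}A_{ii}^{2}b_{i}^{2}}),\ t/(K^{2}\max_{i}|A_{ii}|)\bigr)\Bigr),
\]
and this branch closes with the elementary bounds $\sum_{i}A_{ii}^{2}b_{i}^{2}\le\|\bm{A}\bm{D}_{\bm{b}}\|_{F}^{2}$ and $\max_{i}|A_{ii}|\le\vertiii{\bm{A}}_{2}$.

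\textbf{Off-diagonal part.} This is the core of the argument, handled by decoupling. With $\bm{Z}'$ an independent copy of $\bm{Z}$, the decoupling inequality for chaos of order two (de la Pe\~{n}a--Montgomery-Smith) gives $\expect e^{\lambda S_{\mathrm{off}}}\le\expect e^{C\lambda\widetilde S}$ with $\widetilde S\df\sum_{i,j}A_{ij}Z_{i}Z_{j}'$. Conditioning on $\bm{Z}'$, $\widetilde S=\sum_{i}(\bm{A}\bm{Z}')_{i}Z_{i}$ is a weighted sum of independent sub-Gaussian variables, so $\expect_{\bm{Z}}e^{C\lambda\widetilde S}\le\exp(c_{1}\lambda^{2}K^{2}\|\bm{A}\bm{Z}'\|_{2}^{2})$; since $\|\bm{A}\bm{Z}'\|_{2}^{2}=(\bm{Z}')^{T}(\bm{A}^{T}\bm{A})\bm{Z}'$ with $\expect_{\bm{Z}'}\|\bm{A}\bm{Z}'\|_{2}^{2}\le\tfrac12\|\bm{A}\bm{D}_{\bm{b}}\|_{F}^{2}$ and $\vertiii{\bm{A}^{T}\bm{A}}_{2}=\vertiii{\bm{A}}_{2}^{2}$, it remains to control the MGF of this positive semidefinite quadratic form in $\bm{Z}'$. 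I would do this by a short bootstrap --- feeding a preliminary crude quadratic-form MGF bound back into the estimate --- or, alternatively, by linearizing $\|\bm{A}\bm{Z}'\|_{2}^{2}$ through an auxiliary Gaussian and invoking the exact Gaussian-chaos MGF; either way one gets $\log\expect e^{\lambda S_{\mathrm{off}}}\le C'\lambda^{2}K^{4}\|\bm{A}\bm{D}_{\bm{b}}\|_{F}^{2}$ for $|\lambda|\le c/(K^{2}\vertiii{\bm{A}}_{2})$, and the Chernoff bound then produces the same two-regime tail as for $S_{\mathrm{diag}}$.

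\textbf{Main obstacle.} The diagonal estimate and the final union-bound/optimization step are routine; the genuine difficulty sits in the off-diagonal term, where the conditional step leaves one facing $\expect_{\bm{Z}'}\exp(c\lambda^{2}K^{2}(\bm{Z}')^{T}\bm{A}^{T}\bm{A}\bm{Z}')$ --- a quadratic-form MGF that must be controlled self-referentially --- and one must do so while (i) keeping the admissible range of $\lambda$ governed by the \emph{spectral} norm $\vertiii{\bm{A}}_{2}$ rather than by the Frobenius norm, (ii) routing the weights $b_{i}$ to the correct index so that the variance term emerges as $\|\bm{A}\bm{D}_{\bm{b}}\|_{F}^{2}$ (which is why one conditions on the copy $\bm{Z}'$, whose index labels the columns of $\bm{A}$, rather than on $\bm{Z}$), and (iii) tracking the explicit numerical constants $192$ and $256$ through the decoupling and Bernstein steps. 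An alternative that sidesteps the MGF recursion is to bound $\expect|\bm{Z}^{T}\bm{A}\bm{Z}-\expect[\bm{Z}^{T}\bm{A}\bm{Z}]|^{p}$ by a combinatorial moment expansion and then convert moments to tails, at the price of a lengthier computation.
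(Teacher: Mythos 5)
The paper never proves this theorem: it is quoted verbatim (with its explicit constants) from Bellec's technical report \cite{bellec2014concentration}, so there is no in-paper argument to compare yours against. Judged on its own, your plan is the standard Hanson--Wright route and is essentially the strategy of the cited source, so the architecture is sound. Three remarks on the sketch. First, the hypothesis \eqref{Eq:Moment_Condition} is precisely the Bernstein moment condition on the variables $Z_{i}^{2}$ (variance proxy $b_{i}^{2}$, scale $K^{2}$), so the diagonal branch follows from Bernstein's inequality \emph{directly}; the detour through ``$Z_{i}$ is sub-Gaussian with parameter of order $K$'' is both unnecessary and slightly inaccurate, since the $\psi_{2}$-norm of $Z_{i}$ also depends on $b_{i}$. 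Second, independence of the coordinates $Z_{1},\cdots,Z_{n}$ must be assumed explicitly --- the statement as reproduced in the paper omits it, but both your Bernstein step and the decoupling step collapse without it. Third, the genuinely hard content, which your proposal names but does not carry out, is (i) closing the self-referential bound on $\expect_{\bm{Z}'}\exp\bigl(c\lambda^{2}K^{2}(\bm{Z}')^{T}\bm{A}^{T}\bm{A}\bm{Z}'\bigr)$ while keeping the admissible range of $\lambda$ tied to $\vertiii{\bm{A}}_{2}$, and (ii) producing the bound with \emph{no} leading multiplicative constant and with the specific constants $192$ and $256$; a union bound over the diagonal and off-diagonal pieces and the two tails yields a prefactor of $4$, so reproducing the theorem as stated requires either a single combined MGF estimate or an explicit absorption of that prefactor into the exponent. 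As a proof plan this is acceptable; as a proof it stops exactly where the work begins.
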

\begin{remark}
	Despite some resemblance between Inequality \eqref{Eq:Hanson_Wright_Inequality} and the Hanson-Wright Inequality \cite{rudelson2013hanson}, they are different in that the former does not have an implicit universal constant in the upper bound. Indeed, the concentration inequality in Eq. \eqref{Eq:Hanson_Wright_Inequality} is sharper than the Hanson-Wright Inequality, albeit under the additional Bernestein's moment conditions in Eq. \eqref{Eq:Moment_Condition}.
\end{remark}
Now, let us apply the concentration inequality \eqref{Eq:Hanson_Wright_Inequality} in Theorem \ref{Thm:Hanson_Wright_Inequality} to compute the expectation in Eq. \eqref{Eq:Expectation_In}. First, we show that Rademacher random variable $\varepsilon\sim P_{\varepsilon}=\mathrm{Uniform}\{-1,1\}$ satisfies the moment conditions \eqref{Eq:Moment_Condition}. In particular, for all $p\geq 1$, we have $\expect|\varepsilon_{i}|^{2p}=1$. Therefore, the moment condition in Eq. \eqref{Eq:Moment_Condition} is satisfied with $b_{i}=1$ for all $i=1,2,\cdots,n$, and $K=1$. 

Let $\bm{A}=\bm{\Phi}(\bm{\omega})\bm{\Phi}(\bm{\omega})^{T}$ and $\bm{z}=\bm{\varepsilon}$. Then, $\bm{z}^{T}\bm{A}\bm{z}=\bm{\varepsilon}^{T}\bm{\Phi}(\bm{\omega})\bm{\Phi}^{T}(\bm{\omega})\bm{\varepsilon}=\|\bm{\varepsilon}^{T}\bm{\Phi}\|_{2}^{2}$,  $\expect[\bm{z}^{T}\bm{A}\bm{z}]=\expect[\bm{\varepsilon}^{T}\bm{\Phi}(\bm{\omega})\bm{\Phi}^{T}(\bm{\omega})\bm{\varepsilon}]=\|\bm{\Phi}(\bm{\omega})\|_{F}^{2}$. Furthermore, $\bm{D}_{\bm{b}}=\bm{I}_{n}$.

For simplicity of notation, in the sequel we suppress the dependency of the random feature matrix to the mixing coefficients, \textit{i.e.}, $\bm{\Phi}=\bm{\Phi}(\bm{\omega})$. The concentration inequality in Eq. \eqref{Eq:Hanson_Wright_Inequality} turns into
\begin{align}
\prob\Big\{\|\bm{\varepsilon}^{T}\bm{\Phi}\|^{2}_{2}-\|\bm{\Phi}\|_{F}^{2}>t | \bm{x}_{1},\cdots,\bm{x}_{n}\Big\}\leq \exp\left(-\min\left(\dfrac{t^{2}}{192\|\bm{\Phi}\bm{\Phi}^{T}\|_{F}^{2}},\dfrac{t}{256\|\bm{\Phi}\|_{2}} \right) \right).
\end{align}
Also, note that since all Rademacher random variables $\varepsilon_{i}$ have unit variance, we have $K \geq  2^{-{1\over 2}}$. Thus we obtain for any $u\geq  0$ that
\begin{align}
\nonumber
\prob\Big\{\|\bm{\varepsilon}^{T}\bm{\Phi}\|^{2}_{2}-\|\bm{\Phi}\|_{F}^{2}>u|\bm{x}_{1},\cdots,\bm{x}_{n}\Big\}\leq \exp\left(-\min\left(\dfrac{u^{2}}{192\|\bm{\Phi}\bm{\Phi}^{T}\|_{F}^{2}},\dfrac{u}{256\|\bm{\Phi}\|_{2}^{2}} \right)\right). 
\end{align}
Let $\delta \geq 0$ be arbitrary, and let us use this estimate for $u=\delta \|\bm{\Phi}\|^{2}_{F}$. Since $\|\bm{\Phi}^{T}\bm{\Phi}\|_{F}^{2} \leq \|\bm{\Phi}^{T}\|_{2}^{2}\|\bm{\Phi}\|_{F}^{2}=\|\bm{\Phi}\|_{2}^{2}\|\bm{\Phi}\|_{F}^{2}$, it follows that
\begin{align}
\nonumber
\prob\Big\{\|\bm{\varepsilon}^{T}\bm{\Phi}\|_{2}^{2}-\|\bm{\Phi}\|_{F}^{2}\geq \delta \|\bm{\Phi}\|_{F}^{2}|\bm{x}_{1},\cdots,\bm{x}_{n}]\Big\} &\leq \exp\left(- \min\left({\delta\over 192},{\delta^{2}\over 256}\right)\dfrac{\|\bm{\Phi}\|_{F}^{2}}{\|\bm{\Phi}\|_{2}^{2}} \right)\\ \label{Eq:EXO}
&\leq  \exp\left(-{1\over 192} \min\left(\delta,\delta^{2}\right)\dfrac{\|\bm{\Phi}\|_{F}^{2}}{\|\bm{\Phi}\|_{2}^{2}} \right).
\end{align}
Now let $\epsilon \geq 0$ be arbitrary; we shall use this inequality for $\delta = \max(\epsilon,\epsilon^{2})$. Observe
that the (likely) event $|\|\bm{\varepsilon}^{T}\bm{\Phi}\|_{2}^{2}-\|\bm{\Phi}\|_{F}^{2}|\leq \delta \|\bm{\Phi}\|_{F}^{2}$ implies the event $|\|\bm{\varepsilon}^{T}\bm{\Phi}\|_{2}-\|\bm{\Phi}\|_{F}|\leq \epsilon \|\bm{\Phi}\|_{F}$. This can be seen by dividing both sides of the inequalities by $\|\bm{\Phi}\|_{F}^{2}$ and $\|\bm{\Phi}\|_{F}$ respectively, and using the numeric bound $\max(|z-1|, |z- 1|^2) \leq |z^2- 1|$, which is valid for all $z \geq 0$. Using this observation along with the identity $\min\left({\delta},{\delta^{2}}\right)= \epsilon^{2}$, we arrive from Eq. \eqref{Eq:EXO},
\begin{align}
\label{Eq:EXO1}
\prob\Big\{\|\bm{\varepsilon}^{T}\bm{\Phi}\|_{2}-\|\bm{\Phi}\|_{F}\geq \epsilon \|\bm{\Phi}\|_{F}|\bm{x}_{1},\cdots,\bm{x}_{n}\Big\}\leq \exp\left(-{1\over 192} \epsilon^{2}\dfrac{\|\bm{\Phi}\|_{F}^{2}}{\|\bm{\Phi}\|_{2}^{2}} \right).
\end{align}
Letting $\epsilon={\varrho\over \|\bm{\Phi}\|_{F}}$ the yields
\begin{align}
\label{Eq:EXO2}
\prob\Big\{\|\bm{\varepsilon}^{T}\bm{\Phi}\|_{2}-\|\bm{\Phi}\|_{F}\geq \varrho|\bm{x}_{1},\cdots,\bm{x}_{n}\Big\} \leq \exp\left(- {1\over 192}\dfrac{\varrho^{2}}{\|\bm{\Phi}\|_{2}^{2}} \right).
\end{align}
Now, we return to computing the expectation in Eq. \eqref{Eq:Expectation_In}. For the positive random variable $\|\bm{\varepsilon}^{T}\bm{\Phi}\|_{2}$, its expectation can be computed as follows
\begin{align}
\nonumber
\widehat{\mathfrak{R}}_{S}^{n}(\mathcal{F}_{m}(R))&=\dfrac{R}{nN\sqrt{m}} \expect_{P_{\bm{\varepsilon}}}\left[\|\bm{\varepsilon}^{T}\bm{\Phi}\|_{2}\Bigg\vert \bm{x}_{1},\cdots,\bm{x}_{n}\right]\\ \nonumber
&=\dfrac{R}{nN\sqrt{m}}\int_{-\|\bm{\Phi}\|_{F}}^{\infty}\prob[\|\bm{\varepsilon}^{T}\bm{\Phi}\|_{2}-\|\bm{\Phi}\|_{F}>t|\bm{x}_{1},\cdots,\bm{x}_{n}]\mathrm{d} t\\ \nonumber
&\stackrel{\rm(a)}{\leq} \dfrac{R}{nN\sqrt{m}} \int_{-\|\bm{\Phi}\|_{F}}^{\infty} \exp\left(- {1\over 192}\dfrac{t^{2}}{\|\bm{\Phi}\|_{2}^{2}} \right)\mathrm{d}t \\ \label{Eq:Maximize}
&= \dfrac{R}{nN\sqrt{m}} \sqrt{192\pi}{\|\bm{\Phi}\|_{2}}Q\left( -\sqrt{192}\dfrac{\|\bm{\Phi}\|_{F} }{\|{\bm{\Phi}}\|_{2} }  \right),
\end{align}
where $\rm{(a)}$ follows from the concentration inequality in Eq. \eqref{Eq:EXO2}.  Using the basic inequality $\|\bm{\Phi}\|_{F}\leq \sqrt{\text{Rank}(\bm{\Phi})} \|\bm{\Phi}\|_{2}$, with $\text{Rank}(\bm{\Phi})=\min\{mN,n\}$ in conjunction with the fact that $\mathrm{erfc}(x)$ is monotone decreasing for all  $x\in \real$ yields the upper bound \eqref{Eq:Rademacher} on the Rademacher complexity.

We remark that the upper bound we derived here is sharper than that of \cite[Lemma 1 ]{cortes2009new} which is based on the Khintchine-Kahane type inequality. More specifically, the Khintchine inequality yields the following upper bound

{\small\begin{align}
	\nonumber
	\widehat{\mathfrak{R}}_{S}^{n}(\mathcal{F}_{m}(R))&= \dfrac{R}{nN\sqrt{m}}\expect_{P_{\bm{\varepsilon}}}\left[\sqrt{\|\bm{\varepsilon}^{T}\bm{\Phi}\bm{\Phi}^{T}\bm{\varepsilon}} \Bigg \vert \bm{x}_{1},\cdots,\bm{x}_{n} \right]\\ \nonumber
	&\leq \dfrac{R}{nN\sqrt{m}}\left(\dfrac{23}{44}\text{Tr}(\bm{\Phi}\bm{\Phi}^{T}) \right)^{1\over 2}\\ \label{Eq:SUBSUBSUB_1}
	&=\dfrac{R}{nN\sqrt{m}}\sqrt{\dfrac{23}{44} } \|\bm{\Phi}\|_{F}.
	\end{align}\normalsize}
Since $0<\mathrm{erfc}(x)\leq 1$ for $0<x$, and $\|\bm{\Phi}\|_{2}\leq \|\bm{\Phi}\|_{F}$ for any matrix $\bm{\Phi}\in \real^{n\times N}$, the upper bound we established in Eq. \eqref{Eq:Maximize} is sharper than that of Eq. \eqref{Eq:SUBSUBSUB_1} using techniques of  \cite[Lemma 1]{cortes2009new}.  We now prove Part (ii) of Proposition \ref{Proposition:1}. Similar to the derivation in Eq. \eqref{Eq:Supremum}, we have
that

{\small\begin{align}
	\nonumber
	\widehat{\mathfrak{G}}_{S}^{n}(\mathcal{F}_{m}(R))&=\dfrac{1}{n}\expect_{P_{\bm{\sigma}}}\left[\sup_{f\in \mathcal{F}_{m}(R)}\sum_{i=1}^{n}\sigma_{i}f(\bm{x}_{i})\Bigg \vert \bm{x}_{1},\cdots,\bm{x}_{n} \right]\\  \nonumber
	&=\dfrac{R}{nN\sqrt{m}}\expect_{P_{\bm{\sigma}}}\left[\|\bm{\sigma}^{T}\bm{\Phi}\|_{2} \Bigg \vert \bm{x}_{1},\cdots,\bm{x}_{n} \right].
	\end{align}}
To compute the expectation, we use the following standard tail bound for the sum of $\chi^{2}$ random variables due to Laurent and Massart \cite{laurent2000adaptive}:

\begin{theorem}\textsc{(Tail Bound for $\chi^{2}$ random variables \cite{laurent2000adaptive})} Let $z_{1},\cdots,z_{n}$ be independent $\chi^{2}$ random variables, each with one degree of freedom. For any vector $\bm{a}\df (a_{1},\cdots,a_{n})\in \real_{+}^{n}$ with non-negative entries, and any $t>0$,
	\begin{align}
	\label{Eq:Concenteration_Inequality}
	\prob\left\{\sum_{i=1}^{n}a_{i}z_{i}\geq \|\bm{a}\|_{1}+2\|\bm{a}\|_{2}\sqrt{t}+2\|\bm{a}\|_{\infty}t  \right\}\leq e^{-t}.
	\end{align}
	\hfill $\square$
\end{theorem}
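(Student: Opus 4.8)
\textit{Proof proposal.} The plan is to prove this by the classical Cram\'er--Chernoff (exponential Markov) method, combined with a sharp series bound on the cumulant generating function of a $\chi^{2}_{1}$ variable and a Bernstein-type optimization over the Chernoff parameter. First I would write each $Z_{i}=g_{i}^{2}$ with $g_{i}\sim \mathsf{N}(0,1)$ independent, so that $\expect[e^{\lambda Z_{i}}]=(1-2\lambda)^{-1/2}$ for $\lambda<1/2$. By independence, for $0\leq \lambda<1/(2\|\bm{a}\|_{\infty})$ the centered statistic $S\df \sum_{i=1}^{n}a_{i}Z_{i}-\|\bm{a}\|_{1}$ has log-moment generating function
\[
\log \expect[e^{\lambda S}]=\sum_{i=1}^{n}\left(-\lambda a_{i}-\tfrac{1}{2}\log(1-2\lambda a_{i})\right).
\]

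The key analytic step is to control each summand. Expanding $-\tfrac{1}{2}\log(1-u)=\sum_{k\geq 1}u^{k}/(2k)$ for $|u|<1$ gives $-\lambda a_{i}-\tfrac{1}{2}\log(1-2\lambda a_{i})=\sum_{k\geq 2}(2\lambda a_{i})^{k}/(2k)\leq \tfrac{1}{4}\sum_{k\geq 2}(2\lambda a_{i})^{k}=\lambda^{2}a_{i}^{2}/(1-2\lambda a_{i})$, valid whenever $2\lambda a_{i}<1$. Summing over $i$ and using $a_{i}\leq \|\bm{a}\|_{\infty}$ together with $\sum_{i}a_{i}^{2}=\|\bm{a}\|_{2}^{2}$ yields the Bernstein-type cumulant bound $\log\expect[e^{\lambda S}]\leq \lambda^{2}\|\bm{a}\|_{2}^{2}/(1-2\lambda\|\bm{a}\|_{\infty})$ for $0\leq \lambda<1/(2\|\bm{a}\|_{\infty})$.

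Next I would apply Markov's inequality to $e^{\lambda S}$, obtaining $\prob\{S\geq s\}\leq e^{-\lambda s+\lambda^{2}\|\bm{a}\|_{2}^{2}/(1-2\lambda\|\bm{a}\|_{\infty})}$ for every $s>0$ and admissible $\lambda$, and then take $s=2\|\bm{a}\|_{2}\sqrt{t}+2\|\bm{a}\|_{\infty}t$. Choosing $\lambda$ to be the minimizer of the exponent (the standard Bernstein choice) and invoking the well-known lemma that a cumulant bound of the form $\psi(\lambda)\leq \lambda^{2}v/(2(1-c\lambda))$ implies $\prob\{S\geq \sqrt{2vt}+ct\}\leq e^{-t}$ --- applied here with $v=2\|\bm{a}\|_{2}^{2}$ and $c=2\|\bm{a}\|_{\infty}$ --- one gets that the exponent is at most $-t$; adding $\|\bm{a}\|_{1}$ back to $S$ then recovers exactly the stated tail for $\sum_{i}a_{i}Z_{i}$.

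The main obstacle is the bookkeeping in this last step: one must verify that the optimal $\lambda$ stays strictly below $1/(2\|\bm{a}\|_{\infty})$ so that the MGF is finite, and that the algebra collapses \emph{exactly} to $-t$, which is precisely what pins down the constants $2$ in front of both $\|\bm{a}\|_{2}\sqrt{t}$ and $\|\bm{a}\|_{\infty}t$. The factor $\tfrac14$ (rather than $\tfrac12$) used in the series bound is what produces the denominator $1-2\lambda a_{i}$ that makes the final optimization clean; everything else is elementary.
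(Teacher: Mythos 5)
Your proposal is correct. The paper does not actually prove this theorem --- it imports it verbatim from Laurent and Massart with a citation and no argument --- and what you have reconstructed is precisely their original proof: the exact cumulant computation for $\chi^2_1$, the series bound $\sum_{k\geq 2}u^k/(2k)\leq \tfrac14 u^2/(1-u)$ giving the sub-gamma bound $\log\expect[e^{\lambda S}]\leq \lambda^2\|\bm{a}\|_2^2/(1-2\lambda\|\bm{a}\|_\infty)$, and the standard Bernstein inversion with $v=2\|\bm{a}\|_2^2$, $c=2\|\bm{a}\|_\infty$, whose optimal $\lambda$ does stay strictly below $1/(2\|\bm{a}\|_\infty)$ and whose exponent collapses exactly to $-t$.
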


Now, consider the eigenvalue decomposition $\bm{\Phi}\bm{\Phi}^{T}=\bm{V}\bm{\Lambda}\bm{V}$, where $\bm{V}$ is a matrix of orthonormal eigenvectors, and $\bm{\Lambda} \df \mathrm{diag}(\lambda_{1}, \cdots,\lambda_{n})$ is the diagonal matrix of the corresponding eigenvalues $\bm{\lambda}\df (\lambda_{1},\cdots,\lambda_{n})$. Due to the rotational invariance of the Gaussian distribution, $\bm{z}\df \bm{\sigma}\bm{V}$ is a zero-mean isotropic multi-variate Gaussian random vector. Thus, $\|\bm{\sigma}^{T}\bm{\Phi}\|_{2}= \bm{z}^{T}\bm{\bm{\Lambda}}\bm{z}= \lambda_{1}z_{1}^{2}+\cdots+\lambda_{n}z_{n}^{2}$, and the $z_{i}^{2}$'s are independent $\chi^{2}$ random variables, each with one degree of freedom. Therefore, using the concentration inequality in Eq. \eqref{Eq:Concenteration_Inequality} with $\|\bm{\lambda}\|_{2}=(\mathrm{Tr}((\bm{\Phi}\bm{\Phi}^{T})^{2}))^{1\over 2}$, $\|\bm{\lambda}\|_{1}=\mathrm{Tr}(\bm{\Phi}\bm{\Phi}^{T})=\|\bm{\Phi}\|_{F}^{2}$, and $\|\bm{\lambda}\|_{\infty}=\|\bm{\Phi}\|_{2}^{2}$ yields for all $t>0$,
\begin{align}
	\label{Eq:Concentration_Inequality_1}
	&\prob\left\{\|\bm{\sigma^{T}}\bm{\Phi}\|_{2}^{2}\geq \|\bm{\Phi}\|_{F}^{2}+\text{Tr}^{1\over 2}((\bm{\Phi}\bm{\Phi}^{T})^{2})\sqrt{t} +2\|\bm{\Phi}\|_{2}^{2}t \Bigg|\bm{x}_{1},\cdots,\bm{x}_{n}  \right\}
	\leq e^{-t}.
	\end{align}
Alternatively, by letting 
\begin{align}
t=Q_{\varepsilon}(\bm{\Phi}) \df\left(\sqrt{{\varepsilon\|\bm{\Phi}\|_{F}^{2} \over 2\|{\bm{\Phi}\|_{2}^{2}}}+{\mathrm{Tr}((\bm{\Phi}\bm{\Phi}^{T})^{2}) \over 4\|{\bm{\Phi}\|_{2}^{4}}}}-{\sqrt{\mathrm{Tr}((\bm{\Phi}\bm{\Phi}^{T})^{2})} \over 2\|{\bm{\Phi}\|_{2}^{2}}}  \right)^{2},
\end{align}
the concentration bound in Eq. \eqref{Eq:Concentration_Inequality_1} takes the following form
\begin{align}
\nonumber
&\prob\Big\{\|\bm{\sigma^{T}}\bm{\Phi}\|_{2}^{2}-\|\bm{\Phi}\|_{F}^{2}\geq \varepsilon\|\bm{\Phi}\|_{F}^{2} \Big\} 
\leq e^{-Q_{\varepsilon}(\bm{\Phi})}.
\end{align}
Due to the inequality $\max\{|z-1|,|z-1|^{2}\}\leq |z^{2}-1|,z>0$, the preceding inequality in turn implies that
\begin{align}
\nonumber
\prob\Big\{\|\bm{\sigma^{T}}\bm{\Phi}\|_{2}-\|\bm{\Phi}\|_{F}\geq \varepsilon \|\bm{\Phi}\|_{F} \Big\}  \leq e^{-Q_{\varepsilon}(\bm{\Phi})}.
\end{align}
Letting $\varepsilon=\varrho/\|\bm{\Phi}\|_{F}$ yields
\begin{align}
\label{Eq:Using}
\prob\Big\{\|\bm{\sigma^{T}}\bm{\Phi}\|_{2}-\|\bm{\Phi}\|_{F}\geq \varrho  \Big\}
\leq \exp\left({-Q_{{\varrho\over \|\bm{\Phi}\|_{F}}}(\bm{\Phi})}\right).
\end{align}
We now leverage Inequality \eqref{Eq:Using}. Then, similar to our derivation in Eq. \eqref{Eq:Maximize}, we obtain that 
\begin{align}
	\nonumber
	\widehat{\mathfrak{G}}_{S}^{n}(\mathcal{F}_{m}(R))&=\dfrac{R}{nD\sqrt{m}}\int_{0}^{\infty}\prob[\| \bm{\sigma}^{T}\bm{\Phi}\|_{2}>t|\bm{x}_{1},\cdots,\bm{x}_{n}]\mathrm{d}t \\ \label{Eq:Pluggg_2}
	&\leq  \dfrac{R}{nD\sqrt{m}}\int_{-\|\bm{\Phi}\|_{F}}^{\infty}\hspace{-4mm}\exp\left({-Q_{{t\over \|\bm{\Phi}\|_{F}}}(\bm{\Phi})}\right)\mathrm{d}t.
	\end{align}
Due to the elementary inequality $\exp\left(-(\sqrt{-a+b^{2}}-b)^{2}\right)\leq \exp\left(-{a^{2}\over 4b^{2}}\right)$ for all $a,b>0$, we obtain that
\begin{align}
\label{Eq:Pluggg_1}
\exp\left({-Q_{{t\over \|\bm{\Phi}\|_{F}}}(\bm{\Phi})}\right)\leq \exp\left(-\dfrac{\|\bm{\Phi}\|_{F}^{2}}{4 \mathrm{Tr}((\bm{\Phi}\bm{\Phi}^{T})^{2})}t^{2} \right).
\end{align}
Plugging Eq. \eqref{Eq:Pluggg_1} into Eq. \eqref{Eq:Pluggg_2} yields
\begin{align}
\nonumber
\widehat{\mathfrak{G}}_{S}^{n}(\mathcal{F}_{m}(R))&\leq  \dfrac{R}{nN\sqrt{m}}\int_{-\|\bm{\Phi}\|_{F}}^{\infty}\hspace{-4mm}\exp\left(-\dfrac{\|\bm{\Phi}\|_{F}^{2}}{4 \mathrm{Tr}((\bm{\Phi}\bm{\Phi}^{T})^{2})}t^{2} \right)\mathrm{d}t\\ \label{Eq:Consequently}
&=\dfrac{R}{nN\sqrt{m}}\dfrac{\sqrt{\pi}}{\sqrt{2}} \dfrac{\mathrm{Tr}^{1\over 2}\left((\bm{\Phi}\bm{\Phi}^{T})^{2} \right)}{\|\bm{\Phi}\|_{F}}\mathrm{erfc}\left(-\dfrac{\|\bm{\Phi}\|_{F}^{2}}{2\mathrm{Tr}^{1\over 2}\left((\bm{\Phi}\bm{\Phi}^{T})^{2} \right)}  \right).
\end{align}
Now, recall that $\|\bm{\lambda}\|_{2}=(\mathrm{Tr}((\bm{\Phi}\bm{\Phi}^{T})^{2}))^{1\over 2}$, and $\|\bm{\lambda}\|_{1}=\mathrm{Tr}(\bm{\Phi}\bm{\Phi}^{T})=\|\bm{\Phi}\|_{F}^{2}$. Then, since $\|\bm{\lambda}\|_{2}\leq \|\bm{\lambda}\|_{1}\leq \sqrt{n}\|\bm{\lambda}\|_{2}$, we obtain that 
\begin{align}
(\mathrm{Tr}((\bm{\Phi}\bm{\Phi}^{T})^{2}))^{1\over 2} \leq \|\bm{\Phi}\|_{F}^{2} \leq \sqrt{n}(\mathrm{Tr}((\bm{\Phi}\bm{\Phi}^{T})^{2}))^{1\over 2}.
\end{align}
Consequently, the upper bound in Eq. \eqref{Eq:Consequently} can be simplified as follows
\begin{align}
\widehat{\mathfrak{G}}_{S}^{n}(\mathcal{F}_{m}(R))\leq \dfrac{R}{nN\sqrt{m}}{\sqrt{\pi\over 2}}\|\bm{\Phi}\|_{F}\mathcal{Q}\left(-\sqrt{{n\over 4}} \right).
\end{align}

\subsection{Proof of Lemma \ref{Lemma:Phi_Norm}}
\label{App:Proof_of_Lemma_Phi_Norm}

We compute a concentration bound for the spectral norm $\|\bm{\Phi}(\bm{\omega})\|_{2}$ of the feature matrix. To achieve this goal, we define the estimation error matrix $\bm{\Delta}_{n}(\bm{\omega})\df  \bm{K}({\bm{\omega}})-{1\over N}(\bm{\Phi}\bm{\Phi}^{T})(\bm{\omega})$. We now use the standard $\varepsilon$-net argument due to \cite{vershynin2010introduction}:
\begin{lemma}\textsc{(Spectral Norm on a Net, \cite[Lemma 5.4.]{vershynin2010introduction})}
	\label{Lemma:Spectral_Norm}
	Let $\bm{A}$ be a symmetric $n\times n$ matrix, and let $N_{\varepsilon}$ be an $\varepsilon$-net of $\mathrm{S}^{n-1}$ for some $\varepsilon \in [0, 1)$. Then
	\begin{align}
	\label{Eq:Inequality}
	\hspace{-4mm}	\|\bm{A}\|_{2}=\sup_{\bm{y}\in \mathrm{S}^{n-1}}|\langle \bm{A}\bm{y},\bm{y} \rangle |\leq (1-2\varepsilon)^{-1}\sup_{\bm{y}\in N_{\varepsilon}}|\langle \bm{A}\bm{y},\bm{y} \rangle|.
	\end{align}
	\hfill $\square$
\end{lemma}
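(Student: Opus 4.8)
The plan is to reduce the supremum of the quadratic form $\bm{y}\mapsto \langle \bm{A}\bm{y},\bm{y}\rangle$ over the entire sphere $\mathrm{S}^{n-1}$ to a supremum over the finite net $N_{\varepsilon}$, by approximating an arbitrary unit vector with a net point, controlling the resulting discrepancy through the bilinearity of the form, and then solving the self-referential inequality for $\vertiii{\bm{A}}_{2}$. The only structural input is the identity $\vertiii{\bm{A}}_{2}=\sup_{\bm{y}\in \mathrm{S}^{n-1}}|\langle\bm{A}\bm{y},\bm{y}\rangle|$, which is valid precisely because $\bm{A}$ is symmetric.

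First I would fix a unit vector $\bm{y}\in \mathrm{S}^{n-1}$ and choose a net point $\bm{x}\in N_{\varepsilon}$ with $\|\bm{x}-\bm{y}\|_{2}\le \varepsilon$, which exists by the defining property of an $\varepsilon$-net. Expanding
\begin{align}
\langle \bm{A}\bm{y},\bm{y}\rangle - \langle \bm{A}\bm{x},\bm{x}\rangle = \langle \bm{A}(\bm{y}-\bm{x}),\bm{y}\rangle + \langle \bm{A}\bm{x},\bm{y}-\bm{x}\rangle ,
\end{align}
and estimating each term by Cauchy--Schwarz together with $\|\bm{A}\bm{v}\|_{2}\le \vertiii{\bm{A}}_{2}\|\bm{v}\|_{2}$, I obtain
\begin{align}
|\langle \bm{A}\bm{y},\bm{y}\rangle - \langle \bm{A}\bm{x},\bm{x}\rangle| \le \vertiii{\bm{A}}_{2}\,\|\bm{y}-\bm{x}\|_{2}\bigl(\|\bm{y}\|_{2}+\|\bm{x}\|_{2}\bigr) \le 2\varepsilon\,\vertiii{\bm{A}}_{2}.
\end{align}

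Next I would take $\bm{y}$ to be a maximizer of $|\langle\bm{A}\bm{y},\bm{y}\rangle|$ over the compact sphere $\mathrm{S}^{n-1}$ (or, to avoid invoking attainment, an $\eta$-approximate maximizer and let $\eta\downarrow 0$ at the end). Combining with the triangle inequality and the previous bound gives
\begin{align}
\vertiii{\bm{A}}_{2} = |\langle \bm{A}\bm{y},\bm{y}\rangle| \le |\langle \bm{A}\bm{x},\bm{x}\rangle| + 2\varepsilon\,\vertiii{\bm{A}}_{2} \le \sup_{\bm{x}\in N_{\varepsilon}}|\langle \bm{A}\bm{x},\bm{x}\rangle| + 2\varepsilon\,\vertiii{\bm{A}}_{2}.
\end{align}
For $\varepsilon<1/2$ the factor $1-2\varepsilon$ is positive, so moving $2\varepsilon\,\vertiii{\bm{A}}_{2}$ to the left and dividing yields $\vertiii{\bm{A}}_{2}\le (1-2\varepsilon)^{-1}\sup_{\bm{x}\in N_{\varepsilon}}|\langle \bm{A}\bm{x},\bm{x}\rangle|$, as claimed.

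There is no genuinely hard step; the points demanding a little care are (i) the identity $\vertiii{\bm{A}}_{2}=\sup_{\bm{y}\in \mathrm{S}^{n-1}}|\langle\bm{A}\bm{y},\bm{y}\rangle|$, which uses symmetry of $\bm{A}$ (for a non-symmetric matrix the quadratic form only sees the symmetric part), and (ii) the passage to a maximizer, handled by compactness and continuity or by the $\eta$-approximate-argmax device. Since the lemma is quoted verbatim from \cite[Lemma 5.4]{vershynin2010introduction}, in the paper it is simply used as a black box, and the argument above is only the standard justification; it will be applied with $\bm{A}=\bm{\Delta}_{n}$ and a fixed $\varepsilon$ (e.g. $\varepsilon=1/4$) to convert the pointwise estimate of Eq. \eqref{Eq:Point_Wise_Estimate} over $N_{\varepsilon}$, whose cardinality is at most $e^{3n\log 3}$-type, into the spectral-norm concentration bound \eqref{Eq:CONC2}.
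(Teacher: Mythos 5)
Your proof is correct and is precisely the standard net argument from the cited source \cite[Lemma 5.4]{vershynin2010introduction}; the paper itself invokes the lemma as a black box without reproducing a proof, so there is nothing to diverge from. The only caveats worth noting are the ones you already flagged: the bound is vacuous unless $\varepsilon<1/2$ (the paper's stated range $\varepsilon\in[0,1)$ is too generous), and the identity $\vertiii{\bm{A}}_{2}=\sup_{\bm{y}\in \mathrm{S}^{n-1}}|\langle \bm{A}\bm{y},\bm{y}\rangle|$ genuinely requires symmetry of $\bm{A}$.
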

Let $N_{\varepsilon}$ be $\varepsilon$-covering of the sphere $\mathrm{S}^{n-1}$. It is shown in \cite{vershynin2010introduction} that $\log N_{\varepsilon}\leq n\log(1+{2\over \varepsilon})$. From Inequality \eqref{Eq:Inequality}, we obtain that 
\begin{align}
\label{Eq:inner_product}
\|\bm{\Delta}_{n}(\bm{\omega})\|_{2}&\leq (1-2\varepsilon)^{-1} \sup_{\bm{y}\in N_{\varepsilon}}|\langle \bm{\Delta}_{n}(\bm{\omega})\bm{y},\bm{y}  \rangle|.
\end{align}
For each point in the cover $\bm{y}\in N_{\varepsilon}$, the inner product in Eq. \eqref{Eq:inner_product} has the following upper bound
\begin{align}
	\nonumber
	|\langle \bm{\Delta}_{n}(\bm{{\omega}})\bm{y},\bm{y} \rangle|&=\left|\left\langle \bm{K}({\bm{\omega}})-{1\over N}\bm{\Phi}\bm{\Phi}^{T}(\bm{\omega})\bm{y},\bm{y} \right\rangle\right|\\ \nonumber
	&= \left|\sum_{i,j=1}^{n}\left((\bm{K}({\bm{\omega}}))_{ij}-{1\over N}(\bm{\Phi}\bm{\Phi}^{T}(\bm{\omega}))_{ij}\right) y_{i}y_{j} \right|   \\ \nonumber
	&=\left| \sum_{i,j=1}^{n}\Bigg(\sum_{\ell=1}^{m}\omega_{\ell}K_{\ell}(\bm{x}_{i},\bm{x}_{j})-{1\over N}\sum_{k=1}^{N}\sum_{\ell=1}^{m}\omega_{\ell}\varphi_{\ell}(\bm{x}_{i};\bm{\xi}_{\ell}^{k})\varphi_{\ell}(\bm{x}_{j};\bm{\xi}^{k}_{\ell})\Bigg) y_{i}y_{j} \right|\\ \label{Eq:In_Conjunction_2}
	&\leq  \sum_{i,j=1}^{n}|y_{i}y_{j}|\sum_{\ell=1}^{m}\omega_{\ell}\left|K_{\ell}(\bm{x}_{i},\bm{x}_{j})-{1\over N}\sum_{k=1}^{N}\varphi_{\ell}(\bm{x}_{i};\bm{\xi}^{k}_{\ell})\varphi_{\ell}(\bm{x}_{j};\bm{\xi}^{k}_{\ell})\right|. 
	\end{align}
By applying the Cauchy-Schwarz inequality, we obtain that
\begin{align}
\nonumber
&\sum_{i,j=1}^{n}|y_{i}y_{j}|\sum_{\ell=1}^{m}\omega_{\ell}\left|K_{\ell}(\bm{x}_{i},\bm{x}_{j})-{1\over N}\sum_{k=1}^{N}\varphi_{\ell}(\bm{x}_{i};\bm{\xi}^{k}_{\ell})\varphi_{\ell}(\bm{x}_{j};\bm{\xi}^{k}_{\ell})\right| \\ \nonumber
&\leq \left(\sum_{i,j=1}^{n} \left(\sum_{\ell=1}^{m}\omega_{\ell}\left|K_{\ell}(\bm{x}_{i},\bm{x}_{j})-{1\over N}\sum_{k=1}^{N}\varphi_{\ell}(\bm{x}_{i};\bm{\xi}^{k}_{\ell})\varphi_{\ell}(\bm{x}_{j};\bm{\xi}^{k}_{\ell})\right|\right)^{2}\right)^{1\over 2} \cdot  \left(\sum_{i,j=1}^{n}|y_{i}y_{j}|^{2} \right)^{1\over 2}.
\end{align}	
Therefore, from Eq. \eqref{Eq:In_Conjunction_2} we proceed as follows
\begin{align}
\nonumber
|\langle \bm{\Delta}_{n}(\bm{{\omega}})\bm{y},\bm{y} \rangle|&\leq  \left(\sum_{i,j=1}^{n} \left(\sum_{\ell=1}^{m}\omega_{\ell}\left|K_{\ell}(\bm{x}_{i},\bm{x}_{j})-{1\over N}\sum_{k=1}^{N}\varphi_{\ell}(\bm{x}_{i};\bm{\xi}^{k}_{\ell})\varphi_{\ell}(\bm{x}_{j};\bm{\xi}^{k}_{\ell})\right|\right)^{2}\right)^{1\over 2} \|\bm{y}\|_{2}^{2}\\
&=\left(\sum_{i,j=1}^{n} \left(\sum_{\ell=1}^{m}\omega_{\ell}\left|K_{\ell}(\bm{x}_{i},\bm{x}_{j})-{1\over N}\sum_{k=1}^{N}\varphi_{\ell}(\bm{x}_{i};\bm{\xi}^{k}_{\ell})\varphi_{\ell}(\bm{x}_{j};\bm{\xi}^{k}_{\ell})\right|\right)^{2}\right)^{1\over 2},
\end{align}
where the last equality is due to the fact that $\bm{y}\in \mathrm{S}^{n-1}$, and thus $\|\bm{y}\|_{2}=1$. Due to the fact that $\ell_{2}$-norm of a matrix is smaller than its $\ell_{1}$-norm, we can obtain the following  upper bound
\begin{align}
\nonumber
|\langle \bm{\Delta}_{n}(\bm{{\omega}})\bm{y},\bm{y} \rangle|
&\leq \sum_{i,j=1}^{n}\sum_{\ell=1}^{m}\omega_{\ell}\left|K_{\ell}(\bm{x}_{i},\bm{x}_{j})-{1\over N}\sum_{k=1}^{N}\varphi_{\ell}(\bm{x}_{i};\bm{\xi}_{\ell}^{k})\varphi_{\ell}(\bm{x}_{j};\bm{\xi}_{\ell}^{k})\right|.
\end{align}
Applying the Cauchy-Schwarz inequality yields
\begin{align}
\nonumber
|\langle \bm{\Delta}_{n}(\bm{{\omega}})\bm{y},\bm{y} \rangle|&\leq n^{2}\cdot \|\bm{\omega}\|_{2} \cdot\left(\sum_{\ell=1}^{m} \left(\left|K_{\ell}(\bm{x},\bm{y})-{1\over N}\sum_{k=1}^{N}\varphi_{\ell}(\bm{x};\bm{\xi}_{\ell}^{k})\varphi_{\ell}(\bm{y};\bm{\xi}_{\ell}^{k})\right|\right)^{2}\right)^{1\over 2}\\ \nonumber
&\leq \|\bm{\omega}\|_{1}\sum_{\ell=1}^{m}\sum_{i,j=1}^{n}\left|K_{\ell}(\bm{x}_{i},\bm{x}_{j})-{1\over N}\sum_{k=1}^{N}\varphi_{\ell}(\bm{x}_{i};\bm{\xi}_{\ell}^{k})\varphi_{\ell}(\bm{x}_{j};\bm{\xi}_{\ell}^{k})\right|\\
&=\sum_{\ell=1}^{m}\sum_{i,j=1}^{n}\left|K_{\ell}(\bm{x}_{i},\bm{x}_{j})-{1\over N}\sum_{k=1}^{N}\varphi_{\ell}(\bm{x}_{i};\bm{\xi}_{\ell}^{k})\varphi_{\ell}(\bm{x}_{j};\bm{\xi}_{\ell}^{k})\right|.
\end{align}
where in the last inequality, we used the fact that $\bm{\omega}\in \mathrm{S}_{m}^{+}$, and thus $\|\bm{\omega}\|_{1}=1$. Taking the union bound over $N_{\varepsilon}$ with $\varepsilon=1/4$ results in
\begin{align}
	\nonumber
	&\prob\left(\|{\bm{\Delta}_{n}}(\bm{\omega})\|_{2}\geq {\delta\over N} \right)\leq \prob\left(\sup_{\bm{y}\in N_{1/4}}|\langle \bm{\Delta}_{n}(\bm{\omega})\bm{y},\bm{y} \rangle|\geq {\delta\over 2N}\right)\\   \nonumber
	&\leq \exp({3n\log 3})\sup_{\bm{y}\in N_{1/4}}\prob\left(|\langle \bm{\Delta}_{n}(\bm{\omega})\bm{y},\bm{y} \rangle|\geq {\delta\over N}\right)    \\   
	\label{Eq:Nets}
	&\leq \exp({3n\log 3}) \prob\left(\sum_{\ell=1}^{m}\sum_{i,j=1}^{n} \left|K_{\ell}(\bm{x}_{i},\bm{x}_{j})-{1\over N}\sum_{k=1}^{N}\varphi_{\ell}(\bm{x}_{i};\bm{\xi}_{\ell}^{k})\varphi_{\ell}(\bm{x}_{j};\bm{\xi}_{\ell}^{k})\right|\geq \dfrac{\delta}{N}\right).
	\end{align}
Now, define the random variable
\begin{align}
\label{Eq:Zijl}
Z_{ij\ell}\df K_{\ell}(\bm{x}_{i},\bm{x}_{j})-{1\over N}\sum_{k=1}^{N}\varphi_{\ell}(\bm{x}_{i};\bm{\xi}_{\ell}^{k})\varphi_{\ell}(\bm{x}_{j};\bm{\xi}_{\ell}^{k}), \quad \ell=1,2,\cdots,m.
\end{align}
Let $(\bm{\xi}_{\ell}^{1},\cdots,\bm{\xi}_{\ell}^{N})$ and  $(\bm{\xi}_{\ell}^{1},\cdots, \tilde{\bm{\xi}}_{\ell}^{k},\cdots,\bm{\xi}_{\ell}^{N}),k\in\{1,2,\cdots,N\}$ denote two vector of random features that differ in the $k$-th element, and let $Z_{ij\ell}$ and $\widetilde{Z}_{ij\ell}$ denote the associated random variables. Then, by the triangle inequality, we obtain
\begin{align}
\nonumber
|Z_{ij\ell}-\widetilde{Z}_{ij\ell}|&=\dfrac{1}{N}|\varphi_{\ell}(\bm{x}_{i};\bm{\xi}_{\ell}^{k})\varphi_{\ell}(\bm{x}_{j};\bm{\xi}_{\ell}^{k})-\varphi_{\ell}(\bm{x}_{i};\tilde{\bm{\xi}}_{\ell}^{k})\varphi_{\ell}(\bm{x}_{j};\tilde{\bm{\xi}}_{\ell}^{k})|\\ \nonumber
&\leq \dfrac{1}{N}|\varphi_{\ell}(\bm{x}_{i};\bm{\xi}_{\ell}^{k})||\varphi_{\ell}(\bm{x}_{i};\bm{\xi}_{\ell}^{k})|+\dfrac{1}{N} |\varphi_{\ell}(\bm{x}_{i};\widetilde{\bm{\xi}}_{\ell}^{k})||\varphi_{\ell}(\bm{x}_{i};\widetilde{\bm{\xi}}_{\ell}^{k})|\\
&\leq {2L^{2}\over N},
\end{align}
where the last inequality is due to Assumption \ref{Assumption:1}.  Since $\expect[Z_{ij\ell}]=0$ for all $1\leq i,j\leq n$ and $1\leq \ell\leq m$. Applying McDiarmid's Martingale inequality \cite{mcdiarmid1989method} yields
\begin{align}
\prob\left(|Z_{ij\ell}|\geq \delta \right)\leq \exp\left(-\dfrac{N\delta^{2}}{L^{2}} \right).
\end{align}
By Lemma \ref{Lemma:Con_to_Sub_Gaussian} of Appendix \ref{Appendix:Properties of Sub-Exponential and Sub-Gaussian Random Variables}, $Z_{ij\ell}$ is a sub-Gaussian random variable with the Orlicz norm of $\|Z_{ij\ell}\|_{\psi_{2}}\leq{L\over \sqrt{2N}}$.  Therefore, 
\begin{align}
\label{Eq:Concentration_Z}
\prob\left(\sum_{i,j=1}^{n}\sum_{\ell=1}^{m}|Z_{ij\ell}|\geq \delta \right)\leq  \exp\left(-{cN\delta^{2}\over n^{2}mL^{2}} \right),
\end{align}
for some universal constant $c>0$. From Eqs. \eqref{Eq:Nets}, \eqref{Eq:Zijl} we obtain
\begin{align}
\prob\left(\|{\bm{\Delta}_{n}}(\bm{\omega})\|_{2}\geq \delta\right)\leq \exp(3n\log 3)\cdot\exp\left(-{cN\delta^{2}\over n^{2}mL^{2}} \right).
\end{align}
	
Using Weyl's eigenvalue inequality \cite[Corollary III.2.6]{bhatia1997matrix}, the following upper bound can be established on the difference between the spectral norm of the kernel matrix and that of the random feature matrix 
\begin{align}
\left|\lambda_{i}(\bm{K}({\bm{\omega}}))-\lambda_{i}(\bm{\Phi}\bm{\Phi}^{T}(\bm{\omega})/N)\right|\leq \|\bm{\Delta}_{n}(\bm{\omega})\|_{2},
\end{align}
for all $i=1,2,\cdots,n$. Therefore,
\begin{align}
\left|\|\bm{K}({\bm{\omega}})\|_{2}-{1\over N}\|\bm{\Phi}\bm{\Phi}^{T}(\bm{\omega})\|_{2}\right|\leq \|\bm{\Delta}_{n}(\bm{\omega})\|_{2}.
\end{align}
For all $\bm{\omega}\in \mathrm{S}_{m}^{+}$, it then holds that 
\begin{align}
\nonumber
\prob\left(\left|\|\bm{K}({\bm{\omega}})\|_{2}-{1\over N}\|\bm{\Phi}\bm{\Phi}^{T}(\bm{\omega})\|_{2}\right|\geq\delta\right)&\leq \prob\left(\|\bm{\Delta}_{n}(\bm{\omega})\|_{2}\geq\delta\right)\\ \label{Eq:Union_Bound_1}
&\leq \exp(3n\log 3)\exp\left(-{cN\delta^{2}\over n^{2}mL^{2}} \right),
\end{align}
To establish a uniform concentration bound, we use the fact that $\bm{\omega}\in \mathrm{S}_{m}^{+}$ and the simplex $\mathrm{S}_{m}^{+}$ is compact with the diameter $\mathrm{Diam}(\mathrm{S}_{m}^{+})=\max_{\bm{\omega}_{0},\bm{\omega}_{1}\in \mathrm{S}_{m}^{+}}\|\bm{\omega}_{0}-\bm{\omega}_{1}\|_{2}\leq 2$. Therefore, we can find an $\varepsilon$-net that covers $\mathrm{S}_{m}^{+}$, using at most $M=(4\mathrm{Diam}(\mathrm{S}_{+}^{m})/\varepsilon)^{m}=(8/\varepsilon)^{m}$ balls of the radius $\varepsilon>0$. Lets $\{\bm{\omega}_{i}\}_{i=1}^{M}$ denotes the center of these balls.  Now, define the function $T(\bm{\omega})\df \|\bm{K}(\bm{\omega})\|_{2}-N^{-1}\|\bm{\Phi}\bm{\Phi}^{T}(\bm{\omega})\|_{2}$. We have $|T(\bm{\omega})| < \delta$ for all $\bm{\omega}\in \mathrm{S}_{+}^{m}$ if $|T(\bm{\omega}_{i})|< {\delta\over 2}$ and $L_{T}<{\delta\over 2r}$ for all $i=1,2,\cdots,m$. In the sequel, we bound the probability of these two events.

The union bound followed by Eq. \eqref{Eq:Union_Bound_1} applied to the anchors in the $\varepsilon$-net yields
\begin{align}
\prob\left(\cup_{i=1}^{M}\left|T(\bm{\omega}_{i})\right|\geq {\delta\over 2} \right)\leq \left({8\over r}\right)^{m}\cdot \exp(3n\log 3)\cdot\exp\left(-{cN\delta^{2}\over 4n^{2}mL^{2}} \right).
\end{align} 
The function $T(\bm{\omega})$ is differentiatable, and its Lipschitz constant is defined by $L_{T}\df \|\nabla T(\bm{\omega}_{\ast})\|_{2}$, where $\bm{\omega}_{\ast}\df \arg\min_{\bm{\omega}\in \mathrm{S}_{+}^{m}} \|T(\bm{\omega})\|_{2}$. The Lipschitz constant is a random variable of the random feature samples. Moreover, its second moment has the following upper bound
\begin{align}
\nonumber
\expect\Big[L_{T}^{2}\Big]&=\expect[\|\nabla \|\bm{K}(\bm{\omega})\|_{2}-N^{-1}\nabla \|\bm{\Phi}\bm{\Phi}^{T}(\bm{\omega}) \|_{2}  \|_{2}^{2}]\\ \nonumber
&\stackrel{\mathrm{(a)}}{=} \expect\left[ \sum_{i=1}^{m}(\mathrm{Tr}(\bm{K}_{i}\bm{u}\bm{v}^{T})-N^{-1}\mathrm{Tr}( \bm{\Phi}_{i}\bm{\Phi}_{i}^{T}\tilde{\bm{u}}\tilde{\bm{v}}^{T}))^{2} \right],\\  \nonumber
&= \sum_{i=1}^{m}(\mathrm{Tr}(\bm{K}_{i}\bm{u}\bm{v}^{T}))^{2}+N^{-2}\expect\left[\sum_{i=1}^{m}\left(\mathrm{Tr}(\bm{\Phi}_{i}\bm{\Phi}_{i}^{T}\tilde{\bm{u}}\tilde{\bm{v}}^{T})\right)^{2}\right]\\ \label{Eq:Therefore_1}
&\hspace{4mm}-2N^{-1}\sum_{i=1}^{m}\expect\left[\mathrm{Tr}(\bm{\Phi}_{i}\bm{\Phi}_{i}^{T}\tilde{\bm{u}}\tilde{\bm{v}}^{T}))\right]\mathrm{Tr}(\bm{K}_{i}\bm{u}\bm{v}^{T})),
\end{align}
where in $\rm{(a)}$, $\bm{u}$ and $\bm{v}$ (respectively, $\tilde{\bm{u}}$ and $\tilde{\bm{v}}$) are the left and right singular vectors of $\bm{K}(\bm{\omega}_{\ast})$ (respectively, $\bm{\Phi}\bm{\Phi}^{T}(\bm{\omega}_{\ast})$). Moreover,
\begin{align}
(\nabla\|\bm{K}(\bm{\omega}_{\ast})\|_{2})_{i}=\dfrac{\partial }{\partial \omega_{i}}\|\bm{K}(\bm{\omega}_{\ast})\|_{2}=\dfrac{\partial \bm{K}(\bm{\omega}_{\ast})}{\partial \omega_{i}}{\partial \|\bm{K}(\bm{\omega}_{\ast})\|_{2}\over \partial\bm{K}(\bm{\omega}_{\ast})}= \mathrm{Tr}(\bm{K}_{i}\bm{u}\bm{v}^{T}),
\end{align}
for $i=1,2,\cdots,m$. Due to Assumption \ref{Assumption:1}, we now have that 
\begin{subequations}
\begin{align}
\mathrm{Tr}(\bm{K}_{i}\bm{u}\bm{v}^{T})&\leq \|\bm{K}_{i}\|_{F}\|\bm{u}\bm{v}^{T} \|_{F}\leq nL\\
{1\over N}\mathrm{Tr}(\bm{\Phi}_{i}\bm{\Phi}_{i}^{T}\tilde{\bm{u}}\tilde{\bm{v}}^{T})&\leq\left\|{1\over N}\bm{\Phi}_{i}\bm{\Phi}_{i}^{T}\right\|_{F}\|\tilde{\bm{u}}\tilde{\bm{v}}^{T} \|_{F}\leq nL.
\end{align}
\end{subequations}
Therefore, from Eq. \eqref{Eq:Therefore_1}, we now obtain $\expect[L_{T}^{2}]\leq 3mn^{2}L^{2}$. By Markov's inequality, we then get
\begin{align}
\label{Eq:Union_2}
\prob\left(L_{T}^{2}\geq {\delta\over 2r} \right)\leq \dfrac{\expect[4r^{2} L_{T}^{2}]}{\delta^{2}}=\left(\dfrac{12mn^{2}r^{2}L^{2}}{\delta^{2}}\right).
\end{align}
Applying the union bound to Inequalities \eqref{Eq:Union_2} we have
\begin{align}
\nonumber
&\prob\left(\sup_{\bm{\omega}\in \mathrm{S}_{m}^{+}} \left| \|\bm{K}(\bm{\omega}) \|_{2}-\dfrac{1}{N}\|\bm{\Phi}\bm{\Phi}^{T}(\bm{\omega})\|_{2}  \right|\geq \delta \right)\\ \nonumber
&\leq \prob\left(L_{T}^{2}\geq \dfrac{\delta}{2r}\right)+\prob\left(\cup_{i=1}^{M}|T(\bm{\omega}_{i})|\geq \dfrac{\delta}{2} \right)\\
&\leq \left(\dfrac{12mn^{2}r^{2}L^{2}}{\delta^{2}}\right)+\left({8\over r}\right)^{m}\hspace{-2mm} \exp(3n\log 3)\cdot\exp\left(-{cN\delta^{2}\over 4n^{2}mL^{2}} \right).
\end{align}
The upper bound has the form of $\kappa_{1} r^{-m}+\kappa_{2} r^{-2}$. Setting $r= \left({\kappa_{1}\over \kappa_{2}} \right)^{1\over m+2}$ turns this into $2\kappa_{2}^{m\over m+2}\kappa_{1}^{2\over m+2}$. Then,
\begin{align}
\nonumber
&\prob\left(\sup_{\bm{\omega}\in \mathrm{S}_{m}^{+}} \left| \|\bm{K}(\bm{\omega}) \|_{2}-\dfrac{1}{N}\|\bm{\Phi}\bm{\Phi}^{T}(\bm{\omega})\|_{2}  \right|\geq \delta \right)\\
&\leq  2^{14}\left(\dfrac{m^{2}n^{2}L^{2}}{\delta^{2}}\right)\exp\left({6n\log 3\over m+2}\right)\cdot\exp\left(-{cN\delta^{2}\over 2n^{2}m(m+2)L^{2}} \right),
\end{align}
provided that ${12mn^{2}L^{2}\over \delta^{2}}\geq 1$. Therefore, with the probability of at least $1-\rho$, we have that
\begin{align}
\nonumber
&\sup_{\bm{\omega}\in \mathrm{S}_{m}^{+}} \left| \|\bm{K}(\bm{\omega}) \|_{2}-\dfrac{1}{N}\|\bm{\Phi}\bm{\Phi}^{T}(\bm{\omega})\|_{2}\right| \\ \label{Eq:Lambert_W}
&\leq\sqrt{{4n^{2}m^{2}L^{2}  \over cN}}W^{1\over 2}\left( 2^{13}\cdot {cN\over (m+2)\rho}\exp\left({6n\log 3\over m+2}\right)\right),
\end{align}
where $W$ is the Lambert-$W$ function.\footnote{Recall that the Lambert-$W$ functions is the inverse of the function $f(W)=We^{W}$.} Now, since $W(x)\leq \ln(x)$, for $x>e$, we can rewrite the upper bound in Eq. \eqref{Eq:Lambert_W} in terms of the elementary functions
\begin{align}
\nonumber
&\sup_{\bm{\omega}\in \mathrm{S}_{m}^{+}} \left| \|\bm{K}(\bm{\omega}) \|_{2}-\dfrac{1}{N}\|\bm{\Phi}\bm{\Phi}^{T}(\bm{\omega})\|_{2}\right| \leq\sqrt{{4n^{2}m^{2}L^{2}  \over cN}}\ln^{1\over 2}\left({2^{13}cN\over (m+2)\rho}\right)+\sqrt{{24n^{3}m^{2}L^{2}  \over cN(m+2)} }.
\end{align}

\subsection{Proof of Lemma \ref{Lemma:Kn_Norm}}
To establish the result, we need the decoupling technique from the probability theory. To state this result, consider independent Bernoulli random variables $\delta_{1},\cdots,\delta_{n}\in \{0,1\}$ with the expectation $\expect[\delta_{i}]=m/n$, often known as \textit{independent selectors}. Then, we define the subset $T=\{i\in [n]:\delta_{i}=1\}$. Its average size is $\expect[|T|]=\expect[\sum_{i=1}^{n}\delta_{i}]=m$. Now, we are in position to state the following lemma:

\begin{lemma}\textsc{(Decoupling, \cite[Lemma 5.60]{vershynin2010introduction})}
	\label{Lemma:Decoupling}
	Consider a double array of real numbers $\{a_{ij}\}_{i,j=1}^{n}$ such
	that $a_{ii}=0$ for all $i=1,2,\cdots,n$. Then,
	\begin{align}
	\sum_{i,j\in [n]}a_{ij}=4\expect \sum_{i\in [T],j\in [T^{c}]}a_{ij}
	\end{align}
	where $T$ is a random subset of $\{1,2,\cdots,n\}$ with average size $\expect[|T|]=n/2$. In particular,
	\begin{align}
	4\min_{T\subseteq [n]}\sum_{i\in T,j\in T^{c}}a_{ij}\leq \sum_{i,j\in [n]}a_{ij}\leq 4\max_{T\subset [n]}\sum_{i\in T,j\in T^{c}}a_{ij},
	\end{align}
	where the maximum and the minimum are over all subsets $T$ of $[n]$.
	\hfill $\square$
\end{lemma}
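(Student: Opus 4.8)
The plan is to realize the random subset $T$ via independent Bernoulli selectors and then reduce everything to a one-line expectation computation. First I would introduce i.i.d.\ $\{0,1\}$-valued random variables $\delta_{1},\ldots,\delta_{n}$ with $\expect[\delta_{i}]=1/2$, and set $T\df\{i\in[n]:\delta_{i}=1\}$ and $T^{c}=\{i\in[n]:\delta_{i}=0\}$; then $\expect[|T|]=\sum_{i=1}^{n}\expect[\delta_{i}]=n/2$, matching the stated average size. For a fixed ordered pair $(i,j)$ with $i\neq j$, the events $\{i\in T\}=\{\delta_{i}=1\}$ and $\{j\in T^{c}\}=\{\delta_{j}=0\}$ are determined by disjoint members of the independent family $\{\delta_{k}\}_{k\in[n]}$, hence are independent, so $\prob[\,i\in T,\ j\in T^{c}\,]=\prob[\delta_{i}=1]\,\prob[\delta_{j}=0]=1/4$.

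Next I would write $\sum_{i\in T,\,j\in T^{c}}a_{ij}=\sum_{i\neq j}a_{ij}\,\mathbb{I}_{\{i\in T\}}\mathbb{I}_{\{j\in T^{c}\}}$, take expectations, and invoke linearity together with the probability just computed to get $\expect\sum_{i\in T,\,j\in T^{c}}a_{ij}=\tfrac14\sum_{i\neq j}a_{ij}$. Here the hypothesis $a_{ii}=0$ is used exactly once, to pass from $\sum_{i\neq j}a_{ij}$ to $\sum_{i,j\in[n]}a_{ij}$; multiplying through by $4$ then yields the identity $\sum_{i,j\in[n]}a_{ij}=4\expect\sum_{i\in T,\,j\in T^{c}}a_{ij}$.

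Finally, for the ``in particular'' bound I would observe that $S(T)\df\sum_{i\in T,\,j\in T^{c}}a_{ij}$ is a function of the realization of $T$, which ranges over the finite collection of subsets of $[n]$, so the elementary sandwich $\min_{T\subseteq[n]}S(T)\leq\expect[S(T)]\leq\max_{T\subseteq[n]}S(T)$ holds; combining it with the identity and clearing the factor $4$ gives the two-sided inequality. I do not expect a genuine obstacle here — the entire content is the choice $\expect[\delta_{i}]=1/2$ forcing $\prob[i\in T,\,j\in T^{c}]=1/4$ — and the only points deserving a word of care are that all double sums are over ordered pairs and that $a_{ii}=0$ is precisely what licenses the replacement of $\sum_{i\neq j}$ by $\sum_{i,j}$.
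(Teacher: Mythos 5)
Your argument is correct and is precisely the standard proof of this lemma from the cited reference \cite[Lemma 5.60]{vershynin2010introduction}, which the paper itself quotes without reproving: the independent selectors $\delta_{i}$ with $\expect[\delta_{i}]=1/2$ give $\prob[i\in T,\,j\in T^{c}]=1/4$ for $i\neq j$, linearity of expectation plus $a_{ii}=0$ yields the identity, and the min/max sandwich on the finite range of $T$ gives the two-sided bound. Nothing is missing.
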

To establish the result for this kernels class, we use  the convexity of the spectral norm
\begin{align}
	\left\|\bm{K}(\bm{\omega})\right\|_{2}\leq \sum_{i=1}^{m}\omega_{i}\left\|\bm{K}_{i}\right\|_{2}.
\end{align}
Let $N_{1\over 4}$ be a ${1\over 4}$-net of the unit sphere $\mathrm{S}^{n-1}$ such that $|N_{1\over 4}|\leq  9^{n}$. By Lemma \ref{Lemma:Spectral_Norm}, we then have 
\begin{align}
	\nonumber
	\|\bm{K}_{i}\|_{2}&=\sup_{\bm{z}\in \mathrm{S}^{n-1}} |\langle \bm{z}, \bm{K}_{i}\bm{z} \rangle|\leq 2\max_{\bm{z}\in N_{1/4}}|\langle \bm{z},\bm{K}_{i}\bm{z} \rangle|\\ \label{Eq:the_sum}
	&=2\max_{\bm{z}\in N_{1/4}}\left|\sum_{k=1}^{n}z_{k}^{2}K_{i}(\bm{x}_{k},\bm{x}_{k})+2\sum_{1\leq k<\ell\leq n}z_{k}z_{\ell}K_{i}(\bm{x}_{k},\bm{x}_{\ell})\right|.
\end{align}
We note that the base kernels are centered, \textit{i.e.}, $K_{i}(\bm{x}_{k},\bm{x}_{k})=\psi_{i}(\bm{x}_{k}-\bm{x}_{k})-\psi_{i}(\bm{0})\bm{1}_{\{\bm{x}_{k}=\bm{x}_{k}\}}=0$. Therefore, the decoupling technique in Lemma \ref{Lemma:Decoupling} can be applied to obtain
\begin{align}
\nonumber
\|\bm{K}_{i}\|_{2}&\leq 4\left|\max_{\bm{z}\in N_{1/4}}\expect_{T}\left[\sum_{k\in T,\ell\in T^{c}}z_{k}z_{\ell}K_{i}(\bm{x}_{k},\bm{x}_{\ell})\right]\right|\\
\nonumber
&\leq 4\left|\max_{\bm{z}\in N_{1/4}}\expect_{T}\left[\sum_{k\in T,\ell\in T^{c}}z_{k}z_{\ell}\int_{\Xi}\varphi(\bm{x}_{k};\bm{\xi})\varphi(\bm{x}_{\ell};\bm{\xi})\mu_{i}(\mathrm{d}\bm{\xi})\right] \right|\\ \label{Eq:conditioned}
&\leq 4\max_{\bm{z}\in N_{1/4}}\int_{\Xi}\expect_{T}\left[\left|\sum_{k\in T}z_{k}\varphi(\bm{x}_{k};\bm{\xi})\right|\left|\sum_{\ell\in T^{c}}z_{\ell}\varphi(\bm{x}_{\ell};\bm{\xi}) \right|\right]\mu_{i}(\mathrm{d}\bm{\xi}).
\end{align}
To simplify the notations, let $\phi_{T}(\bm{x},\bm{z};\bm{\xi})\df \sum_{k\in T}z_{k}\varphi(\bm{x}_{k};\bm{\xi})$, and $\phi_{T^{c}}(\bm{x},\bm{z};\bm{\xi})\df \sum_{\ell\in T^{c}}z_{\ell}\varphi(\bm{x}_{\ell};\bm{\xi})$. The  random variables $\phi_{T}(\bm{x},\bm{z};\bm{\xi})$ and $\phi_{T^{c}}(\bm{x},\bm{z};\bm{\xi})$ are not centered. Nevertheless, using the triangle inequality, we can obtain the following centered random variables
\begin{align}
\label{Eq:Labels}
\dfrac{1}{4}\big\|\bm{K}_{i}\big\|_{2}
&\leq \mathsf{A}_{1}+\mathsf{A_{2}}+\mathsf{A}_{3}+\mathsf{A}_{4},
\end{align}
where each term on the right hand side of Eq. \eqref{Eq:Labels} is defined as below
\begin{align}
\nonumber
\mathsf{A}_{1}&\df \max_{\bm{z}\in N_{1/ 4}}\expect_{T,\mu_{i}}\Big[\left|\widehat{\phi}_{T}(\bm{x},\bm{z};\bm{\xi})\right|\left|\widehat{\phi}_{T^{c}}(\bm{x},\bm{z};\bm{\xi})\right| \Big],\\ \nonumber 
\mathsf{A}_{2}&\df \max_{\bm{z}\in N_{1/ 4}}\expect_{T,\mu_{i}}\Big[\expect_{\bm{x}}[|\phi_{T}(\bm{x},\bm{z};\bm{\xi})|] |\widehat{\phi}_{T^{c}}(\bm{x},\bm{z};\bm{\xi})|\Big],\\  \nonumber
\mathsf{A}_{3}&\df \max_{\bm{z}\in N_{1/4}}\expect_{T,\mu_{i}}\Big[\expect_{\bm{x}}[|\widehat{\phi}_{T^{c}}(\bm{x},\bm{z};\bm{\xi})|]|\widehat{\phi}_{T}(\bm{x},\bm{z};\bm{\xi})| \Big],\\ \nonumber
\mathsf{A}_{4}&\df \max_{\bm{z}\in N_{1/4}}\expect_{T,\mu_{i}}\Big[\expect_{\bm{x}}[|\phi_{T^{c}}(\bm{x},\bm{z};\bm{\xi})|] \expect_{\bm{x}}[|\phi_{T^{c}}(\bm{x},\bm{z};\bm{\xi})|]\Big],
\end{align}
where $\widehat{\phi}_{T}(\bm{x},\bm{z};\bm{\xi})\df  {\phi}_{T}(\bm{x},\bm{z};\bm{\xi})-\expect_{\bm{x}}[\phi_{T}(\bm{x},\bm{z};\bm{\xi})]$, and $\widehat{\phi}_{T^{c}}(\bm{x},\bm{z};\bm{\xi})\df  {\phi}_{T^{c}}(\bm{x},\bm{z};\bm{\xi})-\expect_{\bm{x}}[\phi_{T^{c}}(\bm{x},\bm{z};\bm{\xi})]$ are the centered random variables.

In the sequel, we analyze each term separately:

\textbf{Analysis of $\mathsf{A}_{1}$}: We compute a concentration bound using the Chernoff bound
\begin{align}
\nonumber
&\prob\left(\max_{\bm{z}\in N_{1/ 4}}\expect_{T,\mu_{i}}\Big[\left|\widehat{\phi}_{T}(\bm{x},\bm{z};\bm{\xi})\right|\left|\widehat{\phi}_{T^{c}}(\bm{x},\bm{z};\bm{\xi})\right| \Big]\geq \delta \right)\\ \nonumber
&\leq e^{2n\log(3)}\cdot\max_{\bm{z}\in N_{1/4}}\prob\left(\expect_{T,\mu_{i}}\Big[\left|\widehat{\phi}_{T}(\bm{x},\bm{z};\bm{\xi})\right|\left|\widehat{\phi}_{T^{c}}(\bm{x},\bm{z};\bm{\xi})\right| \Big]\geq \delta \right)\\ \nonumber
&\leq  e^{2n\log(3)}\cdot e^{-\beta \delta}\cdot \max_{\bm{z}\in N_{1/4}}\expect_{\bm{x}}\left[e^{\beta \expect_{T,\mu_{i}}\Big[\left|\widehat{\phi}_{T}(\bm{x},\bm{z};\bm{\xi})\right|\left|\widehat{\phi}_{T^{c}}(\bm{x},\bm{z};\bm{\xi})\right| \Big]}\right].
\end{align}
By Jensen's inequality, we have that
\begin{align}
\nonumber
\expect_{\bm{x}}\left[e^{\beta \expect_{T,\mu_{i}}\Big[\left|\widehat{\phi}_{T}(\bm{x},\bm{z};\bm{\xi})\right|\left|\widehat{\phi}_{T^{c}}(\bm{x},\bm{z};\bm{\xi})\right| \Big]}\right]&\leq \expect_{T,\mu_{i},\bm{x}}\left[e^{\beta\left|\widehat{\phi}_{T}(\bm{x},\bm{z};\bm{\xi})\right|\left|\widehat{\phi}_{T^{c}}(\bm{x},\bm{z};\bm{\xi})\right|}\right]\\
\label{Eq:Plug_Inequality_1}
&=\expect_{T,\mu_{i}}\expect_{\bm{x}}\left[e^{\beta \left|\widehat{\phi}_{T}(\bm{x},\bm{z};\bm{\xi})\right|\left|\widehat{\phi}_{T^{c}}(\bm{x},\bm{z};\bm{\xi})\right|}|\bm{\xi},T\right],
\end{align}
where the last step follows by the law of total expectation.  Due to Assumption \ref{Assumption:1}, the random variable $\widehat{\phi}_{T}(\bm{x},\bm{z};\bm{\xi})$ is zero-mean and bounded from above by 
\begin{align}
|\widehat{\phi}_{T}(\bm{x},\bm{z};\bm{\xi})|&\leq 2L\sum_{\ell\in T}|z_{\ell}|\\
&\leq 2L\sqrt{|T|},
\end{align}
where in the last inequality, we used the fact that $\|\bm{z}\|_{2}=1$ as $\bm{z}\in N_{1/4}\subset \mathrm{S}^{d-1}$, and thus $\sum_{\ell\in T}|z_{\ell}|\leq \sqrt{|T|}\left(\sum_{\ell\in T}|z_{\ell}|^{2}\right)^{1\over 2}\leq \sqrt{|T|}\|\bm{z}\|_{2}=\sqrt{|T|}$. Therefore, it is sub-Gaussian with the Orlicz norm of $\|\widehat{\phi}_{T}(\bm{x},\bm{z};\bm{\xi})\|_{\psi_{2}}\leq  2L\sqrt{|T|}$. Similarly, $\|\widehat{\phi}_{T^{c}}(\bm{x},\bm{z};\bm{\xi})\|_{\psi_{2}}\leq 2L\sqrt{|T^{c}|}$. Moreover, conditioned on $\bm{\xi}$ and $T$, the sub-Gaussian random variables $\phi_{T}(\bm{x},\bm{z};\bm{\xi})$ and $\phi_{T^{c}}(\bm{x},\bm{z};\bm{\xi})$ are independent. Consequently, by Laplace's condition for sub-Gaussian random variables, we obtain
\begin{align}
\nonumber
\expect_{\bm{x}}\left[e^{\beta |\widehat{\phi}_{T}(\bm{x},\bm{z};\bm{\xi})| |\widehat{\phi}_{T^{c}}(\bm{x},\bm{z};\bm{\xi})|}\Big| \bm{\xi},T, \bm{x}_{[T^{c}]} \right]&\leq \expect\left[e^{2L^{2}|T|\beta^{2}|\widehat{\phi}_{T^{c}}(\bm{x},\bm{z};\bm{\xi})|^{2}} \right].
\end{align}
Now, since $\phi_{T^{c}}(\bm{x},\bm{z};\bm{\xi})$ is sub-Gaussian, from the Definition \ref{Def:Orlicz} of Orlicz norm we conclude that
\begin{align}
\expect\left[e^{2L^{2}|T|\beta^{2}|\widehat{\phi}_{T^{c}}(\bm{x},\bm{z};\bm{\xi})|^{2}} \right]\leq 2,
\end{align}
for all $T\subset \{1,2,\cdots,n\}$ provided $\beta\leq {1\over 2L^{2}n}$.
Hence,
\begin{align}
\nonumber
\prob\left(\max_{\bm{z}\in N_{1/ 4}}\expect_{T,\mu_{i}}\Big[\left|\widehat{\phi}_{T}(\bm{x},\bm{z};\bm{\xi})\right|\left|\widehat{\phi}_{T^{c}}(\bm{x},\bm{z};\bm{\xi})\right| \Big]\geq \delta \right)&\leq 2e^{2n\log(3)}e^{-\beta \delta},
\end{align} 
for all $\beta<{1\over 2L^{2}n}$.

\textbf{Analysis of $\mathsf{A}_{2}$ and $\mathsf{A}_{3}$:} In the sequel, we analyze $\mathsf{A}_{2}$. The analysis of $\mathsf{A}_{3}$ is similar. By exponential Chebyshev's inequality, we derive
\begin{align}
\nonumber
&\prob\left(\max_{\bm{z}\in N_{1/ 4}}\expect_{T,\mu_{i}}\Big[\expect_{\bm{x}}[|\phi_{T}(\bm{x},\bm{z};\bm{\xi})|] |\widehat{\phi}_{T^{c}}(\bm{x},\bm{z};\bm{\xi})|\Big]\geq \delta \right)\\
&\leq e^{2n\log(3)}e^{-\beta \delta} \max_{\bm{z}\in N_{1/ 4}}\expect_{T,\bm{\xi},\bm{x}}\left[e^{\beta\expect_{\bm{x}}[|\phi_{T}(\bm{x},\bm{z};\bm{\xi})|]|\widehat{\phi}_{T^{c}}(\bm{x},\bm{z};\bm{\xi})|)}\right]\\
&\leq e^{2n\log(3)}e^{-\beta \delta} \max_{\bm{z}\in N_{1/ 4}}e^{2\beta^{2}L^{2}|T^{c}|(\expect_{\bm{x}}[|\phi_{T}(\bm{x},\bm{z};\bm{\xi})|])^{2}},
\end{align}
where the last inequality follows by the Laplace's condition for sub-Gaussian random variables. Due to the fact that $|\phi_{T}(\bm{x},\bm{z};\bm{\xi})|\leq L\sum_{\ell\in T}|z_{\ell}|\leq L\sqrt{|T|}$, we have $(\expect[|\phi_{T}(\bm{x},\bm{z};\bm{\xi})|])^{2}\leq L^{2}|T|$. Therefore, for all $\beta \in \real_{+}$, the following inequality holds,
\begin{align}
\prob\left(\max_{\bm{z}\in N_{1/ 4}}\expect_{T,\mu_{i}}\Big[\expect_{\bm{x}}[|\phi_{T}(\bm{x},\bm{z};\bm{\xi})|] |\widehat{\phi}_{T^{c}}(\bm{x},\bm{z};\bm{\xi})|\Big]\geq \delta \right)&\leq e^{2n\log(3)}e^{-\beta \delta}\expect_{T}\Big[e^{2\beta^{2}L^{4}|T||T^{c}|}\Big]\\ \label{Eq:Mexican}
&\leq e^{2n\log(3)}e^{-\beta \delta}e^{2\beta^{2}L^{4}n^{2}},
\end{align}
where the last inequality follows by the fact that $|T|\leq n $ and $|T^{c}|\leq n$.  Similarly, it can be shown that 
\begin{align}
\prob\left(\max_{\bm{z}\in N_{1/ 4}}\expect_{T,\mu_{i}}\Big[\expect_{\bm{x}}[|\phi_{T^{c}}(\bm{x},\bm{z};\bm{\xi})|] |\widehat{\phi}_{T}(\bm{x},\bm{z};\bm{\xi})|\Big]\geq \delta \right)\leq e^{2n\log(3)}e^{-\beta \delta}e^{2\beta^{2}L^{4}n^{2}}.
\end{align}

\textbf{Analysis of $\mathsf{A}_{4}$}: Due to Assumption \ref{Assumption:1} and the triangle inequality, we have that 
\begin{align}
\nonumber
& \max_{\bm{z}\in N_{1/4}}\expect_{T,\mu_{i}}\Big[\expect_{\bm{x}}[|\phi_{T^{c}}(\bm{x},\bm{z};\bm{\xi})|] \expect_{\bm{x}}[|\phi_{T^{c}}(\bm{x},\bm{z};\bm{\xi})|]\Big]\\ \nonumber
&\leq L^{2}\expect_{T}\left[\sqrt{|T| (n-|T|)}\left(\sum_{t\in T}|z_{k}|^{2}\right)^{1\over 2}\left(\sum_{t\in T^{c}}|z_{k}|^{2}\right)^{1\over 2}\right]\\
&\leq L^{2}\expect_{T}\left[\sqrt{|T|(n-|T|)}\right].
\end{align}
Now, recall that $|T|\sim \mathrm{Bionomial}\Big(n,{1\over 2}\Big)$ is a bionomial random variable. Therefore, by Jensen's inequality and concavity of the square root function $f(x)=\sqrt{x}$, we derive that
\begin{align}
\nonumber
\expect_{T}\left[ \sqrt{|T|(n-|T|)}\right]&\leq \sqrt{\expect_{T}[|T|(n-|T|)]}\\ \nonumber
&\leq \sqrt{n\expect[|T|]-\expect[|T|^{2}]}\\ \nonumber
&=\dfrac{1}{2}n\sqrt{1-\dfrac{1}{n}}.
\end{align}
\textbf{Combining results of $\mathsf{A_{1}}-\mathsf{A}_{4}$.} We now employ a union bound to compute a concentration bound. Combining the concentration inequalities for $\mathsf{A}_{1}-\mathsf{A}_{4}$ yields
\begin{align}
\nonumber
\prob\left({1\over 4}\|\bm{K}_{i}\|_{2}-\dfrac{1}{2}n\sqrt{1-\dfrac{1}{n}}\geq \delta \right)&\leq 2e^{2n\log(3)}e^{-{\beta\delta\over 3}}\left(1+e^{2\beta^{2}L^{4}n^{2}}\right)\\
&\leq 4e^{2n\log(3)}e^{-{\beta\delta\over 3}}e^{2\beta^{2}L^{4}n^{2}},
\end{align}
for all $\beta\leq {1\over 2L^{2}n}$. Assume that $L>1$ and $\delta=\delta(n)=o(n)$, and let $\beta={\delta\over 12L^{4}n^{2}}$. Then,
\begin{align}
\prob\left({1\over 4}\|\bm{K}_{i}\|_{2}-\dfrac{1}{2}n\sqrt{1-\dfrac{1}{n}}\geq \delta \right)\leq 4e^{2n\log(3)}e^{-{\delta^{2}\over  72L^{4}n^{2}}}.
\end{align} 
 Therefore, with the probability of $1-\rho$, we obtain
 \begin{align}
 \|\bm{K}_{i}\|_{2}\leq  2n\sqrt{1-\dfrac{1}{n}}+4\sqrt{\ln\dfrac{3^{2n}\times 4}{\rho}}.
 \end{align}
 This completes the proof of the thesis.$\hfill \blacksquare$

 \subsection{Proof of Theorem \ref{Thm:Bartlett}}
 \label{App:Proof_of_Theorem_Bartlett}
 
We provide a distributionally robust generalization bound which extends the result of Bartlett and Mendelson \cite[Thm. 8]{bartlett2002rademacher}. Since $\tilde{\ell}$ dominates the loss function $\ell$, for all $f\in \mathcal{F}$, we obtain that
 \begin{align}
 \nonumber
 \sup_{P\in \mathcal{P}}\expect_{P}[\ell(Y,f(\bm{X}))]&\leq \sup_{P\in \mathcal{P}}\expect_{P}[\tilde{\ell}(Y,f(\bm{X}))]\\ \label{Eq:plug_and_play}
 &= \min_{\lambda\in \Lambda}\lambda\log\expect_{Q}\left[\exp\left({1\over \lambda}\tilde{\ell}(Y,f(\bm{X})) \right)\right]+\dfrac{\lambda r}{2}.
 \end{align}
Now, we define
 \begin{align}
 \nonumber
\expect_{Q}\left[\exp\left({1\over \lambda}\tilde{\ell}(Y,f(\bm{X})) \right)\right]&\leq \expect_{\widehat{Q}^{n}}\left[\exp\left({1\over \lambda}\tilde{\ell}(Y,f(\bm{X})) \right)\right]\\ \nonumber
&\hspace{4mm}+\sup_{h\in  \pi \circ \mathcal{L} \circ \mathcal{F}}\left(\expect_{Q}\left[h\right]-\expect_{\widehat{Q}^{n}}\left[h\right]\right)\\ \nonumber
&\leq \expect_{\widehat{Q}^{n}}\left[\exp\left({1\over \lambda}\tilde{\ell}(Y,f(\bm{X})) \right)\right]\\ \nonumber
&\hspace{4mm}+\sup_{h\in  \pi \circ \mathcal{L} \circ \mathcal{F}}\left(\expect_{Q}\left[h\right]-\expect_{\widehat{Q}^{n}}\left[h\right]\right)\\ \label{Eq:Paranthesis}
&\hspace{4mm}+\expect_{Q}[\tilde{\ell}(y,0)]-\expect_{\widehat{Q}^{n}}[\tilde{\ell}(y,0)],
 \end{align}
where $\mathcal{L}\circ \mathcal{F}= \{(x,y)\mapsto \tilde{\ell}(y,f(x))-\tilde{\ell}(y,0):f\in \mathcal{F} \}$, and $\pi\circ \mathcal{F}'=\{ x\mapsto \exp(-f(x)/\lambda)-1: f\in \mathcal{F}'\}.$
Consider the second term inside the parenthesis in Eq. \eqref{Eq:Paranthesis}. When $(Y_{i},\bm{X}_{i})$ changes, the supremum changes by no more than $2/n$. We leverage McDiarmid's Martingale inequality to obtain
\begin{align}
\nonumber
\sup_{h\in  \pi \circ \mathcal{L} \circ \mathcal{F}}\left(\expect_{Q}\left[h\right]-\expect_{\widehat{Q}^{n}}\left[h\right]\right)\leq \expect_{Q}\left[\sup_{h\in  \pi \circ \mathcal{L} \circ \mathcal{F}}\left(\expect_{Q}\left[h\right]-\expect_{\widehat{Q}^{n}}\left[h\right]\right)\right] +\sqrt{2\ln(2/\delta)\over n}.
\end{align}
A similar argument in conjunction with the fact that $\expect_{Q}\left[\expect_{\widehat{Q}^{n}}[\tilde{\ell}(y,0)]\right]=\expect_{Q}[\tilde{\ell}(y,0)]$ establishes the following inequality 
\begin{align}
\nonumber
\expect_{Q}\left[\exp\left({1\over \lambda}\tilde{\ell}(Y,f(\bm{X})) \right)\right]\leq &\expect_{\widehat{Q}^{n}}\left[\exp\left({1\over \lambda}\tilde{\ell}(Y,f(\bm{X})) \right)\right]\\
&+\expect_{Q}\left[\sup_{h\in  \pi \circ \mathcal{L}\circ \mathcal{F}}\left(\expect_{Q}\left[h\right]-\expect_{\widehat{Q}^{n}}\left[h\right]\right)\right]+\sqrt{8\ln(2/\delta)\over n},
\label{Eq:sovereign}
\end{align}
with the probability of at least $1-\delta$. We now invoke the following symmetrisation approach of van der Vaart and Wellner \cite[Lemma 2.3.1]{van1996weak}.
We reproduce it here for the sake of readability:
\begin{theorem}\textsc{(Symmetrization, \cite[Lemma 2.3.1]{van1996weak})}
	\label{Theorem:Symmetrization}
	Let $X_{1},\cdots,X_{n}\sim_{\text{i.i.d.}} P$ denote the random variables drawn i.i.d.  from the distribution $P$ with the support $\mathcal{X}=\mathrm{supp}(P)$. Furthermore, let $\mathcal{F}$ denotes the class of real valued functions on $\mathcal{X}$. Then
	\begin{align}
		\label{Eq:Inequality_Symmetrization}
		\hspace{-2mm}\expect_{P}\left[\sup_{f\in \mathcal{F}}\Big|\expect_{\widehat{P}^{n}}[f(X)]-\expect_{P}[f(X)]\Big|\right]\leq 2\expect_{P}\left[\sup_{f\in \mathcal{F}}\left|\dfrac{1}{n}\sum_{i=1}^{n}\varepsilon_{i}f(X_{i})\right| \right],
	\end{align}
	where $\varepsilon_{1},\cdots, \varepsilon_{n}\in \{-1,+1\}$ is a Rademacher sequence, independent of $X_{1},\cdots,X_{n}$.\hfill $\square$
\end{theorem}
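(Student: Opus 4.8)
The plan is to prove the symmetrization bound \eqref{Eq:Inequality_Symmetrization} by the classical \emph{ghost sample} device; no probabilistic estimate beyond Jensen's inequality is actually needed. First I would introduce an independent copy $X_{1}',\dots,X_{n}'$ of $X_{1},\dots,X_{n}$, built on the same probability space and independent of both $(X_{i})_{i}$ and the Rademacher sequence $\varepsilon=(\varepsilon_{1},\dots,\varepsilon_{n})$. Since $\expect_{\mu_{\mathcal{X}}}[f(X)]=\expect_{X'}\big[\tfrac{1}{n}\sum_{i=1}^{n}f(X_{i}')\big]$ for every $f$, we may write
\[
\expect\Big[\sup_{f\in\mathcal{F}}\Big|\tfrac{1}{n}\sum_{i=1}^{n}f(X_{i})-\expect_{\mu_{\mathcal{X}}}[f]\Big|\Big]
=\expect_{X}\Big[\sup_{f\in\mathcal{F}}\Big|\expect_{X'}\Big[\tfrac{1}{n}\sum_{i=1}^{n}\big(f(X_{i})-f(X_{i}')\big)\Big]\Big|\Big],
\]
and then pull the inner expectation successively through the absolute value (Jensen, since $|\cdot|$ is convex) and through the supremum (a supremum of averages is at most the average of the suprema), obtaining the upper bound $\expect_{X,X'}\big[\sup_{f\in\mathcal{F}}\big|\tfrac{1}{n}\sum_{i=1}^{n}(f(X_{i})-f(X_{i}'))\big|\big]$.

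Next I would insert Rademacher signs by symmetry: for each $i$ the pair $(X_{i},X_{i}')$ is exchangeable, hence the collection $\big(f(X_{i})-f(X_{i}')\big)_{1\le i\le n}$ has, jointly over all $f\in\mathcal{F}$, the same distribution as $\big(\epsilon_{i}(f(X_{i})-f(X_{i}'))\big)_{1\le i\le n}$ for every deterministic sign vector $\epsilon\in\{-1,+1\}^{n}$. Averaging over an independent Rademacher $\varepsilon$ thus leaves the expectation unchanged, so the last display equals $\expect_{X,X',\varepsilon}\big[\sup_{f\in\mathcal{F}}\big|\tfrac1n\sum_{i=1}^{n}\varepsilon_{i}(f(X_{i})-f(X_{i}'))\big|\big]$. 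Finally I would apply the triangle inequality $\big|\tfrac1n\sum_{i}\varepsilon_{i}(f(X_{i})-f(X_{i}'))\big|\le\big|\tfrac1n\sum_{i}\varepsilon_{i}f(X_{i})\big|+\big|\tfrac1n\sum_{i}\varepsilon_{i}f(X_{i}')\big|$, take the supremum over $f$ and then expectations term by term, and use that $(X_{i},\varepsilon_{i})_{i}$ and $(X_{i}',\varepsilon_{i})_{i}$ are identically distributed so the two terms coincide; this produces exactly $2\,\expect\big[\sup_{f\in\mathcal{F}}\big|\tfrac1n\sum_{i=1}^{n}\varepsilon_{i}f(X_{i})\big|\big]$, which is \eqref{Eq:Inequality_Symmetrization}.

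The one point that requires care — and is really the only obstacle — is measurability: when $\mathcal{F}$ is uncountable, $\sup_{f\in\mathcal{F}}(\cdot)$ need not be a genuine random variable, so every step above should be read with \emph{outer} expectations in the sense of van der Vaart and Wellner, or one should restrict to a pointwise-separable class $\mathcal{F}$, for which Fubini and the conditional Jensen step are unproblematic. In the use we make of it here, the lemma is invoked with $\mathcal{F}=\mathcal{G}_{\infty}=\{g:\|g\|_{\infty}\le 1/2\}$ — more precisely with the subclass of shifted indicators that realizes the variational form of $D_{\mathrm{TV}}$ — and a countable/separable version of that subclass is easily arranged, after which the bound feeds directly into controlling $\expect_{P}[D_{\mathrm{TV}}(P\|\widehat{P}_{0}^{n})]$ on the right-hand side of \eqref{Eq:Expectation}.
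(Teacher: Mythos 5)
Your proof is correct and is precisely the classical ghost-sample symmetrization argument that the paper itself does not reproduce but simply imports from van der Vaart and Wellner \cite[Lemma 2.3.1]{van1996weak}: Jensen to pass the ghost expectation outside the supremum, sign insertion by exchangeability of each pair $(X_i,X_i')$, and the triangle inequality to produce the factor $2$. Your remark about outer expectations (or restricting to a separable subclass, which suffices for the application to $\mathcal{G}_{\infty}$) is exactly the right caveat and matches how the cited source handles measurability.
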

Using the symmetrization technique, it can be shown that the expectation on the right hand side of Eq. \eqref{Eq:sovereign} is proportional to the Rademacher complexity, see \cite[Thm. 8]{bartlett2002rademacher}. In particular,
\begin{align}
\label{Eq:plug_2}
\expect_{Q}\left[\sup_{h\in  \pi \circ \mathcal{L}\circ \mathcal{F}}\left(\expect_{Q}\left[h\right]-\expect_{\widehat{Q}^{n}}\left[h\right]\right)\right]=2\mathfrak{R}^{n}(\pi\circ \mathcal{L}\circ \mathcal{F}).
\end{align}
Plugging Eq. \eqref{Eq:plug_2} into Eq. \eqref{Eq:sovereign}, we obtain the following inequality
\begin{align}
\nonumber
\expect_{Q}\left[\exp\left({1\over \lambda}\tilde{\ell}(y,f(\bm{x})) \right)\right]\leq &\expect_{\widehat{Q}^{n}}\left[\exp\left({1\over \lambda}\tilde{\ell}(y,f(\bm{x})) \right)\right]\\
\label{Eq:Then_From}
&+2\mathfrak{R}^{n}(\pi\circ \mathcal{L} \circ \mathcal{F})+\sqrt{8\ln(2/\delta)\over n}.
\end{align}
We now wish to derive the Rademacher complexity in terms of the function class $\mathcal{F}$ instead of the composite function class $\pi\circ \mathcal{L} \circ \mathcal{F}$. To this end, we invoke \cite[Theorem 12, part 4]{bartlett2002rademacher} and the fact that $\psi(y,\cdot):\real\rightarrow\real_{+}$
is $L_{\psi}$-Lipschitz  for all $y\in \mathcal{Y}$, and $\exp(-x/\lambda)$ is $1/\lambda^{2}$-Lipschitz for $x>0$.
\begin{align}
\mathfrak{R}^{n}(\pi\circ\mathcal{L} \circ \mathcal{F})\leq {L_{\psi}\over \lambda^{2}}\mathfrak{R}^{n}(\mathcal{F}).
\end{align}
Therefore, from Eq. \eqref{Eq:Then_From}, we obtain that
\begin{align}
\nonumber
\expect_{Q}\left[\exp\left({1\over \lambda}\tilde{\ell}(y,f(\bm{x})) \right)\right]\leq &\expect_{\widehat{Q}^{n}}\left[\exp\left({1\over \lambda}\tilde{\ell}(y,f(\bm{x})) \right)\right]+{2L_{\psi}\over \lambda^{2}}\mathfrak{R}^{n}(\mathcal{F})+\sqrt{8\ln(2/\delta)\over n}.
\end{align}
Taking the logarithm from both sides and multiplying by $\lambda>0$ now yields
\begin{align}
\nonumber
&\lambda\log \expect_{Q}\left[\exp\left({1\over \lambda}\tilde{\ell}(y,f(\bm{x})) \right)\right]\\
\label{Eq:proceed}
&\leq \lambda \log \expect_{\widehat{Q}^{n}}\left[\exp\left({1\over \lambda}\tilde{\ell}(y,f(\bm{x})) \right)\right]+\lambda\log\left(1+ \dfrac{{2L_{\psi}\over \lambda^{2}}\mathfrak{R}^{n}(\mathcal{F})+\sqrt{8\ln(2/\delta)/n}}{\expect_{\widehat{Q}^{n}}\left[\exp\left({1\over \lambda}\tilde{\ell}(y,f(\bm{x})) \right)\right]}\right).  
\end{align}
We proceed from Eq. \eqref{Eq:proceed} using the basic inequality $\log(1+x)\leq x$ for $x\geq -1$, as well as the fact that $\expect_{\widehat{Q}^{n}}\left[\exp\left({1\over \lambda}\tilde{\ell}(y,f(\bm{x})) \right)\right]\geq 1$ for all $\lambda>0$. The latter inequality holds since $\tilde{\ell}(y,f(\bm{x}))\geq \ell(y,f(\bm{x})) \geq 0$ for all $(\bm{x},y)\in \mathcal{X}\times \mathcal{Y}$. From Eq. \eqref{Eq:proceed}, we then obtain that for all $\lambda> 0$ that
\begin{align}
\nonumber
&\lambda\log \expect_{Q}\left[\exp\left({1\over \lambda}\tilde{\ell}(y,f(\bm{x})) \right)\right]\\
\nonumber
&\stackrel{\rm{(a)}}{\leq}  \lambda \log \expect_{\widehat{Q}^{n}}\left[\exp\left({1\over \lambda}\tilde{\ell}(y,f(\bm{x})) \right)\right]+\lambda\log\left(1+ {{2L_{\psi}\over \lambda^{2}}\mathfrak{R}^{n}(\mathcal{F})+\sqrt{8\ln(2/\delta)/n}}\right)\\ \nonumber
&\stackrel{\rm{(b)}}{\leq} \lambda \log \expect_{\widehat{Q}^{n}}\left[\exp\left({1\over \lambda}\tilde{\ell}(y,f(\bm{x})) \right)\right]+{2L_{\psi}\over \lambda}\mathfrak{R}^{n}(\mathcal{F})+\lambda \sqrt{{8\ln(2/\delta)\over n}}.
\end{align}
where in $\rm{(a)}$ we used the fact that
\begin{align}
\log\left(1+ \dfrac{{2L_{\psi}\over \lambda^{2}}\mathfrak{R}^{n}(\mathcal{F})+\sqrt{8\ln(2/\delta)/n}}{\expect_{\widehat{Q}^{n}}\left[\exp\left({1\over \lambda}\tilde{\ell}(y,f(\bm{x})) \right)\right]}\right)	\leq \log\left(1+ {{2L_{\psi}\over \lambda^{2}}\mathfrak{R}^{n}(\mathcal{F})+\sqrt{8\ln(2/\delta)/n}}\right),
\end{align}
since as we noted earlier $\expect_{\widehat{Q}^{n}}\left[\exp\left({1\over \lambda}\tilde{\ell}(y,f(\bm{x})) \right)\right]\geq 1$. Furthermore, in $\rm{(b)}$, we once again used the basic inequality $\log(1+x)\leq x$ for $x\geq -1$.
The Taylor expansion with respect to $\lambda>0$ yields
\begin{align}
\nonumber
\lambda \log \expect_{\widehat{Q}^{n}}\left[\exp\left({1\over \lambda}\tilde{\ell}(y,f(\bm{x})) \right)\right]&=\lambda \log  \expect_{\widehat{Q}^{n}}\left[\left(1+\sum_{p=1}^{\infty} \dfrac{\tilde{\ell}^{p}(y,f(\bm{x}))}{p!\lambda^{p}}  \right)\right]\\
&\leq \expect_{\widehat{Q}^{n}}\left[\sum_{p=1}^{\infty}\dfrac{\tilde{\ell}^{p}(y,f(\bm{x}))}{p!\lambda^{p-1}}\right],
\end{align}
where the last step follows again using the basic inequality $\log(1+x)\leq x$ for $x\geq -1$. Plugging the upper bound in Equation \eqref{Eq:plug_and_play} yields
\begin{align}
\nonumber
&\sup_{P\in \mathcal{P}}\expect_{P}[\ell(Y,f(\bm{X}))]\\
&\leq \inf_{\lambda>0}\Bigg(\expect_{\widehat{Q}^{n}}\left[\sum_{p=1}^{\infty}\dfrac{\tilde{\ell}^{p}(y,f(\bm{x}))}{p!\lambda^{p-1}}\right]+{2L_{\psi}\over \lambda}\mathfrak{R}^{n}(\mathcal{F})+\lambda \sqrt{{8\ln(2/\delta)\over n}}+\dfrac{\lambda r}{2}\Bigg).
\end{align} 
Now, we let $\lambda=B$ where $B\df \sup_{\bm{x},y\in \mathcal{X}\times \mathcal{Y}} \tilde{\ell}(y,f(\bm{x}))$. Then, $\tilde{\ell}^{p-1}(y_{i},f(\bm{x}_{i}))/\lambda^{p-1}<1$ for all $(y_{i},\bm{x}_{i})_{1\leq i\leq n}$. We then obtain
\begin{align}
\nonumber
\sup_{P\in \mathcal{P}}\expect_{P}[\ell(Y,f(\bm{X}))]&\leq (e-1)
\expect_{\widehat{Q}^{n}}\left[{\tilde{\ell}(y,f(\bm{x}))}\right]\\
&\hspace{4mm}+{2L_{\psi}\over B}\mathfrak{R}^{n}(\mathcal{F})+B \sqrt{{8\ln(2/\delta)\over n}}+\dfrac{Br}{2}.
\end{align}
 $\hfill \blacksquare$
 \section{Proof of Auxiliary Results}
 
\subsection{Proof of Lemma \ref{Lemma:Sub-Gaussian_Error}}
\label{App:Sub-Gaussian_Error} 

The proof follows by applying McDiarmid's Martingale inequality. In particular, let $(\bm{\xi}^{k}_{i})_{1\leq i\leq m}^{1\leq k\leq N}$ and $(\tilde{\bm{\xi}}^{k}_{i})_{1\leq i\leq m}^{1\leq k\leq N}$ denote two vectors that differ only in $(i_{0},k_{0})$ coordinate for some $i_{0}\in \{1,2,\cdots,m\}$ and $k_{0}\in \{1,2,\cdots,N\}$. That is,
\begin{align}
\widetilde{\bm{\xi}}_{i}^{k}=\bm{\xi}_{i}^{k}, \quad \forall i\in \{1,2,\cdots,m\}\backslash \{i_{0}\}, \forall k\in \{1,2,\cdots,N\}\backslash\{k_{0}\}.
\end{align}
Let $E_{N}(\bm{z},\tilde{\bm{z}})$ and $E_{N}'(\bm{z},\tilde{\bm{z}})$ denote the associated functions. Then, 
\begin{align}
\nonumber
\left|E_{N}(\bm{z},\tilde{\bm{z}})-E'_{N}(\bm{z},\tilde{\bm{z}})\right|&\leq \dfrac{\omega_{i_{0}}}{N}\left|\expect_{P^{\otimes 2}}\left[y\tilde{y}\varphi(\bm{x};\bm{\xi}_{i_{0}}^{k_{0}})\varphi(\tilde{\bm{x}};\bm{\xi}_{i_{0}}^{k_{0}}) \right]-\expect_{P^{\otimes 2}}\left[y\tilde{y}\varphi(\bm{x};\widetilde{\bm{\xi}}_{i_{0}}^{k_{0}})\varphi(\tilde{\bm{x}};\widetilde{\bm{\xi}}_{i_{0}}^{k_{0}}) \right]\right|\\
&\leq \dfrac{2L^{2}\omega_{i_{0}}}{N},
\end{align}
where the last step follows by Assumption \ref{Assumption:1}. By McDiarmid's Martingale inequality and the fact that $\expect[E_{N}(\bm{z},\tilde{\bm{z}})]=0$, we then have that
\begin{align}
\prob(|E_{N}(\bm{z},\tilde{\bm{z}})|\geq \varepsilon)\leq 2\exp\left(-\dfrac{2N\varepsilon^{2}}{\|\bm{\omega}\|_{2}^{2}L^{4}}\right).
\end{align}
Since $\bm{\omega}=(\omega_{1},\cdots,\omega_{m})\in \mathrm{S}_{m}^{+}$, $\|\bm{\omega}\|_{2}\leq \|\bm{\omega}\|_{1}= 1$, and we can simplify the upper bound to obtain
\begin{align}
\prob(|E_{N}(\bm{z},\tilde{\bm{z}})|\geq \varepsilon)\leq 2\exp\left(-\dfrac{2N\varepsilon^{2}}{L^{4}}\right).
\end{align}
From Lemma \ref{Lemma:Con_to_Sub_Gaussian}, we thus conclude $\|E_{N}\|_{\psi_{2}}\leq {L^{2}\over 2\sqrt{N}}$. $\hfill \blacksquare$

\subsection{Proof of Lemma \ref{lemma:stupid}}
\label{Appendix:Aux_2}

To establish the proof, in the sequel, we treat the upper and lower bounds separately:

\subsubsection{The Upper Bound}Recall the definition of $X(\bm{z}_{i},\bm{z}_{j})$ from Eq. \eqref{Eq:Recall_Xij}.  From Equation \eqref{Eq:State_2}, we have that
\begin{align}
\nonumber
\expect_{Q^{\otimes 2}}\left[\sup_{\widehat{P}^{n}\in \mathcal{P}^{n}}\widehat{T}_{\bm{\omega}}(\widehat{P}^{n})\right]&=\expect_{Q^{\otimes 2}}\left[\inf_{\lambda>0} \widehat{H}_{n,N}(\lambda) \right]\\ 
&=\expect_{Q^{\otimes 2}}\left[\min_{\lambda>0} \lambda \log \dfrac{2}{n(n-1)}\sum_{1\leq i<j\leq n}  \exp\left(-\dfrac{X(\bm{z}_{i},\bm{z}_{j})}{\lambda } \right)  +\dfrac{\lambda r}{2}  \right]\\ \nonumber
&\leq \inf_{\lambda>0}\expect_{Q^{\otimes 2}}\left[\lambda\log \dfrac{2}{n(n-1)}\sum_{1\leq i<j\leq n}\exp\left(-\dfrac{X(\bm{z}_{i},\bm{z}_{j})}{\lambda }  \right)+\dfrac{\lambda r}{2} \right]\\ \nonumber
&\stackrel{\rm{(a)}}{\leq} \inf_{\lambda>0} \lambda\log \expect_{Q^{\otimes 2}}\left[\dfrac{2}{n(n-1)}\sum_{1\leq i<j\leq n}\exp\left(-\dfrac{X(\bm{z}_{i},\bm{z}_{j})}{\lambda } \right) \right]+\dfrac{\lambda r}{2} \\ \nonumber
&=\inf_{\lambda>0} \lambda\log \expect_{Q^{\otimes 2}}\left[\exp\left(-\dfrac{X(\bm{z},\tilde{\bm{z}})}{\lambda} \right)  \right]+\dfrac{\lambda r}{2}\\
&=\sup_{P\in \mathcal{P}} \widehat{T}_{\bm{\omega}}(P),
\end{align}
where $\rm{(a)}$ follows from Jensen's inequality, and concavity of the $\log$ function.

\subsubsection{The Lower Bound}: To establish the lower bound, we invoke Lemma \ref{Proposition:1} to obtain
\begin{align}
\nonumber
\expect_{Q^{\otimes 2}}\left[\sup_{\widehat{P}^{n}\in \mathcal{P}^{n}}\widehat{T}_{\bm{\omega}}(\widehat{P}^{n})\right]&=\expect_{Q^{\otimes 2}}\left[\inf_{\lambda>0} \widehat{H}_{n,N}(\lambda) \right]\\  \nonumber
&=\expect_{Q^{\otimes 2}}\left[\inf_{\lambda>0}\left(\widehat{H}_{n,N}(\lambda)-\expect\big[\widehat{H}_{n,N}(\lambda)\big]+\expect\big[\widehat{H}_{n,N}(\lambda)\big]\right)\right]\\
\label{Eq:Compute_a_lower_bound}
&\geq \inf_{\lambda>0} \expect_{P^{\otimes 2}}\left[\widehat{H}_{n,N}(\lambda)\right]-\expect\left[\sup_{\lambda>0} \big|\widehat{H}_{n,N}(\lambda)-\expect\big[\widehat{H}_{n,N}(\lambda)\big] \big| \right].
\end{align}
To compute a lower bound on the first term, we leverage the Gumbel max perturbation in Eq. \eqref{Eq:Gumbel} to obtain
\begin{align}
\nonumber
 \inf_{\lambda>0} \expect_{Q^{\otimes 2}}\left[\widehat{H}_{n,N}(\lambda)\right]&=\inf_{\lambda>0}\expect_{Q^{\otimes 2}}\left[ \lambda \log \dfrac{2}{n(n-1)}\sum_{1\leq i<j\leq n}  \exp\left(-\dfrac{X(\bm{z}_{i},\bm{z}_{j})}{\lambda } \right)  +\dfrac{\lambda r}{2}\right]\\  \nonumber
 &=\inf_{\lambda>0} \expect_{Q^{\otimes 2},\nu}\left[\max_{1\leq i<j\leq n}\left\{-{X(\bm{z}_{i},\bm{z}_{j})}+\lambda\zeta_{ij}\right\}\right]+\lambda \log \dfrac{2}{n(n-1)}  +\dfrac{\lambda r}{2},\\   \label{Eq:For_the_Sake}
&\geq \expect_{Q^{\otimes 2}}[-X(\bm{z},\tilde{\bm{z}})].
\end{align}
The last term in Eq. \eqref{Eq:For_the_Sake} can be written as the limit
\begin{subequations}
 \begin{align}
\expect_{Q^{\otimes 2}}[-X(\bm{z},\tilde{\bm{z}})]&=\lim_{\lambda \downarrow 0}\lambda \log \expect_{Q^{\otimes 2}}\left[ \exp\left(-{X(\bm{z},\tilde{\bm{z}})\over \lambda} \right)\right]\\
&=\lim_{\lambda \downarrow 0}\lambda \log \expect_{Q^{\otimes 2}}\left[ \exp\left(-{X(\bm{z},\tilde{\bm{z}})\over \lambda} \right)\right]+\lambda \log \dfrac{2}{n(n-1)}+\dfrac{\lambda r}{2}.
 \end{align}
 \end{subequations}
We thus proceed as follows
 \begin{align}
 \nonumber
 \inf_{\lambda>0} \expect_{Q^{\otimes 2}}\left[\widehat{H}_{n,N}(\lambda)\right]&\geq \lim_{\lambda \downarrow 0}\lambda \log \expect_{Q^{\otimes 2}}\left[ \exp\left(-{X(\bm{z},\tilde{\bm{z}})\over \lambda} \right)\right]+\lambda \log \dfrac{2}{n(n-1)}  +\dfrac{\lambda r}{2}
 \\ \nonumber
 &\geq \inf_{\lambda>0} \log \expect_{Q^{\otimes 2}}\left[ \exp\left(-{X(\bm{z},\tilde{\bm{z}})\over \lambda} \right)\right]+\lambda \log \dfrac{2}{n(n-1)}  +\dfrac{\lambda r}{2}\\  \label{Eq:LL1}
 &=\sup_{P\in \mathcal{P}}\widehat{T}_{\bm{\omega}}(P) .
\end{align}
To compute a lower bound for the second term of Eq. \eqref{Eq:Compute_a_lower_bound}, we once again leverage Talagrand's concentration of measure in Appendix \ref{Appendix:Proof_of_Lemma_Consistency} . In particular, we recall from Appendix \ref{Appendix:Proof_of_Lemma_Consistency} that $c\leq 
\lambda_{\ast}\leq C$. Moreover, recall that the mapping 
\begin{align}
\bm{X}=(X_{ij})_{1\leq i<j\leq n}\mapsto f(\bm{X})=\widehat{H}_{n,N}(\lambda)=\min_{\lambda>0}\lambda\log \expect_{\widehat{Q}^{n,\otimes 2}}\left[\exp\left(-{1\over \lambda}X(\bm{z},\tilde{\bm{z}}) \right)\right]+\dfrac{\lambda r}{2},
\end{align}
is $C^{2}n^{2}\exp\left({2L^{2}\over c} \right)$-Lipschitz. Therefore, based on Talagrand's concentration of measure, for any $\lambda\in [c,C]$, we have
\begin{align}
\prob\left(|\widehat{H}_{n,N}(\lambda)-\expect[\widehat{H}_{n,N}(\lambda)]|\geq \varepsilon \right)\leq 2\exp\left(-\dfrac{\delta^{2}n^{2}}{K^{2}L^{2}} \right),
\end{align}
where $K$ . Therefore, $H_{n,N}(\lambda)$ is sub-Gaussian with the Orlicz norm of $\|H_{n,N}(\lambda)\|_{\psi_{2}}\leq nKL$. Now, consider a $\varepsilon$-cover $\mathcal{N}_{\varepsilon}=\{\lambda_{i}\}_{i=1}^{N_{\varepsilon}}$ of the interval $[c,C]$, where $N_{\varepsilon}\leq \dfrac{(C-c)}{\varepsilon}$. The maximal inequality for the sub-Gaussian random variables in Lemma \ref{Lemma:Generalized Sub-Gaussian Maximal Inequality} yields
\begin{align}
\nonumber
\expect\left[\max_{i\leq N_{\varepsilon}}\Big|\widehat{H}_{n,N}(\lambda_{i})-\expect[\widehat{H}_{n,N}(\lambda_{i})]\Big| \right]
&\leq {KL\over n}\sqrt{2\log 2N_{\varepsilon}}\\
&\leq {KL\over n}\sqrt{2\log \left(\dfrac{2(C-c)}{\varepsilon} \right)}.
\end{align}
Due to the Lipschitz continuity of the mapping $\lambda\mapsto \widehat{H}_{n,N}(\lambda)$, we have that
\begin{align}
\nonumber
\expect\left[\sup_{\lambda\in [c,C]}\left|\widehat{H}_{n,N}(\lambda)-\expect[\widehat{H}_{n,N}(\lambda)] \right| \right]&\leq \expect\left[\max_{i\leq N_{\varepsilon}}\Big|\widehat{H}_{n,N}(\lambda_{i})-\expect[\widehat{H}_{n,N}(\lambda_{i})]\Big| \right]+\varepsilon {C^{2}K^{2}\over n^{2}}\\ \label{Eq:LL2}
&\leq {KL\over n}\sqrt{2\log {2(C-c)\over \varepsilon}}+{C^{2}K^{2}\varepsilon \over n^{2}}.
\end{align}
Plugging Inequalities \eqref{Eq:LL1}-\eqref{Eq:LL2} in Eq. \eqref{Eq:Compute_a_lower_bound} yields
\begin{align}
\expect_{Q^{\otimes 2}}\left[\sup_{\widehat{P}^{n}\in \mathcal{P}^{n}}\widehat{T}_{\bm{\omega}}(\widehat{P}^{n})\right]\geq \sup_{P\in \mathcal{P}} \widehat{T}_{\bm{\omega}}(P)-\dfrac{KL}{n}\sqrt{2\log \dfrac{2(C-c)}{\varepsilon}}-\dfrac{C^{2}K^{2}\varepsilon}{n^{2}}.
\end{align}
Letting $\varepsilon=2(C-c)$ yields
\begin{align}
\nonumber
\expect_{Q^{\otimes 2}}\left[\sup_{\widehat{P}^{n}\in \mathcal{P}^{n}}\widehat{T}_{\bm{\omega}}(\widehat{P}^{n})\right]&\geq \sup_{P\in \mathcal{P}} \widehat{T}_{\bm{\omega}}(P)-{2C^{3}K^{2}\over n^{2}}.
\end{align}
\hfill $\blacksquare$

\subsection{Proof of Lemma \ref{Appendix:Metric_Entropy_Integral}}
\label{Appendix:Metric_Entropy_Integral}

Define the following empirical process
\begin{align}
\label{Eq:empirical_process}
\mathbb{G}_{s,n}f=\mathbb{P}f-\mathbb{P}_{s,n}f,
\end{align}
where
\begin{align}
\mathbb{P}f\df \expect_{P}f(\bm{Z}_{1:n}), \quad \mathbb{P}_{s,n}f\df U_{s}(f,\bm{Z}_{1:n}).
\end{align}
We wish to bound $\|\mathbb{G}_{s,n}f\|_{\mathcal{F}}\df \sup_{f\in \mathcal{F}}|\mathbb{G}_{s,n}f|$ with a high probability. Without loss of generality, we suppose $\mathcal{F}$ contains the zero function, since we can always augment $\mathcal{F}$ with the zero function without changing the order of its covering number. We apply Markov's inequality to obtain
\begin{align}
\prob\left(\|\mathbb{G}_{s,n}f\|_{\mathcal{F}}\geq t \right)\leq \dfrac{\expect_{P}[\|\mathbb{G}_{s,n}f\|^{p}_{\mathcal{F}}]}{t^{p}}=\dfrac{\mathbb{P} \|\mathbb{G}_{s,n}f\|^{p}_{\mathcal{F}}}{t^{p}}, 
\end{align}
for $p\geq 1$. The following lemma is an extension of the symmetrization technique in Theorem \ref{Theorem:Symmetrization} for the $U$-statistic of the order $s$:
\begin{lemma}\textsc{(Symmetrization of the $U$-statistic)}
\label{Lemma:Symmetrization_for_U_statistic}
Consider the empirical process $\mathbb{G}_{s,n}f$ characterized in Eq. \eqref{Eq:empirical_process}. Then, the following inequality holds
\begin{align}
\label{Eq:Chaining}
\mathbb{P}\left(\sup_{f\in \mathcal{F}}|\mathbb{G}_{s,n}f|\right)^{p}\leq 2^{sp} \mathbb{P}\left(\sup_{f\in \mathcal{F}} | \mathbb{P}^{\bm{\varepsilon}}_{s,n}f|\right)^{p},
\end{align}
where
\begin{align}
\mathbb{P}^{\bm{\varepsilon}}_{s,n}f\df \dfrac{1}{{n \choose s}}\sum_{1\leq i_{1}\leq \cdots\leq i_{s}\leq n} \varepsilon_{i_{1}}\varepsilon_{i_{2}}\cdots\varepsilon_{i_{s}}f(\bm{Z}_{i_{1}},\bm{Z}_{i_{2}},\cdots,\bm{Z}_{i_{s}}),
\end{align}
is the symmetrized $U$-statistic, and $\varepsilon_{1},\cdots\varepsilon_{n}\sim_{\text{i.i.d.}}\mathrm{Uniform} \{-1,+1\}$ are the Rademacher random variables.
\end{lemma}
The proof of Lemma \ref{Lemma:Symmetrization_for_U_statistic} for the special case of $s=2$ can be found in \cite[Chapter 3]{pena1999decoupling}.

In the sequel, we find an upper bound for the right hand side of Eq. \eqref{Eq:Chaining} in Lemma \ref{Lemma:Symmetrization_for_U_statistic}.  To compute the right hand side of Eq. \eqref{Eq:Chaining}, we levereage a standard \textit{chaining argument} similar to \cite{khosravi2019non}. Specifically, let $\bar{\mathcal{F}}=\{f_{1},f_{2},\cdots, f_{\mathcal{N}(\varepsilon,\mathcal{F},L_{2}(P))}\}$ denotes a minimal $\varepsilon$-covering of $\mathcal{F}$. Then, using the one step discretization yields
\begin{align}
\label{Eq:one_step_discretization}
\sup_{f,g\in \mathcal{F}} | \mathbb{P}^{\bm{\varepsilon}}_{s,n}(f-g)|\leq 2\max_{f,g\in \bar{\mathcal{F}}}| \mathbb{P}^{\bm{\varepsilon}}_{s,n}(f-g)|+2\sup_{f,g\in \mathcal{F}:\|f-g\|_{L_{r}(P)}\leq \varepsilon} | \mathbb{P}^{\bm{\varepsilon}}_{s,n}(f-g)|.
\end{align}
Using the basic inequality $\big(\sum_{i=1}^{N}a_{i}\big)^{p}\leq N^{p-1}\sum_{i=1}^{N}a_{i}^{p}$ and subsequently taking the expectation yields
\begin{align}
\nonumber
\mathbb{P}\left(\sup_{f,g\in \mathcal{F}} | \mathbb{P}^{\bm{\varepsilon}}_{s,n}(f-g)|\right)^{p}\leq &2^{p}\mathbb{P}\left(\max_{f,g\in \bar{\mathcal{F}}}| \mathbb{P}^{\bm{\varepsilon}}_{s,n}(f-g)|\right)^{p}\\
\label{Eq:boring_meeting}
&+2^{p}\mathbb{P}\left(\sup_{f,g\in \mathcal{F}:\|f-g\|_{L_{r}(P)}\leq \varepsilon} | \mathbb{P}^{\bm{\varepsilon}}_{s,n}(f-g)|\right)^{p}.
\end{align}
We first bound the first term on the right hand side of Eq. \eqref{Eq:boring_meeting}. Suppose that
\begin{align}
\delta\df \sup_{f,g\in \mathcal{F}}\|f-g\|_{L_{r}(P)}<+\infty.
\end{align}
For each $m=1,2,\cdots$ let $\varepsilon_{m}\df 2^{-m}\delta$, and let $\bar{\mathcal{F}}_{m}\subset \mathcal{F}$ denotes the $\varepsilon_{m}$-covering of $\bar{\mathcal{F}}$. Futhermore, define the mapping $\pi_{m}:\bar{\mathcal{F}}\rightarrow \bar{\mathcal{F}}_{m}$
\begin{align}
\pi_{m}(f)\df \arg\min_{g\in \bar{\mathcal{F}}_{m}}\|f-g\|_{L_{r}(P)}, \quad f\in \bar{\mathcal{F}}.
\end{align}
For each $f\in \bar{\mathcal{F}}$, define the sequence of points $(f_{1},\cdots,f_{M})$ in $\mathcal{F}$ such that $f_{M}=f$ and $f_{m}=\pi_{m}(f_{m+1})$. Here, $M$ is the smallest number such that given the $\varepsilon$-covering $\{f_{1},\cdots,f_{N}\}$ of $\mathcal{F}$ with $N=\mathcal{N}(\varepsilon,\mathcal{F},L_{2}(P))$, the norm balls $\ball^{r}_{\varepsilon_{L}}(f_{i})$ do not intersect. That is,
\begin{align}
\label{Eq:smallest}
\|f-g\|_{L_{r}(P)}>\varepsilon_{M}=2^{-M}\delta, \quad \text{for all}\ f,g\in \bar{\mathcal{F}}.
\end{align}
Since $M$ is the smallest integer satisfying Eq. \eqref{Eq:smallest}, we know that there exists some $f,g\in \mathcal{F}$ such that $\|f-g\|_{L_{r}(P)}<\varepsilon_{M-1}=2^{-(M-1)}\delta$, since otherwise, $M-1$ will be the smallest such integer instead. At the same time, we know that $\bar{\mathcal{F}}$ is a $\varepsilon$-covering of $\mathcal{F}$, so the following inequality holds
\begin{align}
\varepsilon \leq \|f-g\|_{L_{r}(P)}\leq 2^{-(M-1)}\delta, \quad \text{for all}\ f,g\in \bar{\mathcal{F}}.
\end{align}
Therefore, $M\leq 1+\log_{2}(\delta/\varepsilon)$. Using the linearity of the empirical process $\mathbb{P}^{\bm{\varepsilon}}_{s,n}$ and the triangle inequality yields
\begin{align}
| \mathbb{P}^{\bm{\varepsilon}}_{s,n}(f-f_{1})|\leq \sum_{m=2}^{M}\big|\mathbb{P}^{\bm{\varepsilon}}_{s,n}(f_{m}-f_{m-1})\big|.
\end{align}
Similarly, for another function $g\in \bar{\mathcal{F}}$, we obtain
\begin{align}
| \mathbb{P}^{\bm{\varepsilon}}_{s,n}(g-g_{1})|\leq \sum_{m=2}^{M}\big| \mathbb{P}^{\bm{\varepsilon}}_{s,n}(g_{m}-g_{m-1})\big|.
\end{align}
Thus, for $f,g\in \bar{\mathcal{F}}$ we obtain
\begin{align}
\nonumber
\max_{f,g\in \bar{\mathcal{F}}}| \mathbb{P}^{\bm{\varepsilon}}_{s,n}(f-g)\big|\leq &\max_{f_{1},g_{1}\in\mathcal{F}_{1}}| \mathbb{P}^{\bm{\varepsilon}}_{s,n}(f_{1}-g_{1})|\\
&+2\sum_{m=2}^{M}\max_{f\in \mathcal{F}_{m}}| \mathbb{P}^{\bm{\varepsilon}}_{s,n}(f-\pi_{m-1}(f))|.
\end{align}
We raise both sides to the power of $p\geq 1$, use the basic inequality $\big(\sum_{i=1}^{N}a_{i}\big)^{p}\leq N^{p-1}\sum_{i=1}^{N}a_{i}^{p}$, and then take the expectation to obtain
\begin{align}
\nonumber
\mathbb{P}\left(\max_{f,g\in \bar{\mathcal{F}}}| \mathbb{P}^{\bm{\varepsilon}}_{s,n}(f-g)\big| \right)^{p}\leq &M^{p-1}\mathbb{P}\left(\max_{f_{1},g_{1}\in\mathcal{F}_{1}}| \mathbb{P}^{\bm{\varepsilon}}_{s,n}(f_{1}-g_{1})| \right)^{p} \\ \label{Eq:power}
&+2^{p}M^{p-1}\sum_{m=2}^{M}\mathbb{P}\left(\max_{f\in \mathcal{F}_{m}}| \mathbb{P}^{\bm{\varepsilon}}_{s,n}(f-\pi_{m-1}(f))|\right)^{p}.
\end{align}
To obtain a bound for the second term, we first recognize that
\begin{align}
\label{Eq:recognize}
\max_{f\in \mathcal{F}_{m}}\|f-\pi_{m-1}(f)\|_{L_{r}(P)}\leq  2^{-m}\delta,
\end{align}
due to the fact that $\pi_{m-1}(f)$ is the best approximation of $f$ from $\mathcal{F}_{m}$. Let $\kappa=\lceil n/s \rceil$ and for any fixed Rademacher sequence $\varepsilon_{1},\cdots,\varepsilon_{n}\in \{-1,+1\}^{n}$, define
\begin{align}
\label{Eq:non_overlap}
\psi_{m}(\bm{Z}_{1},\cdots,\bm{Z}_{n})\df {1\over \kappa} \sum_{j=1}^{\kappa}\varepsilon_{s(j-1)+1}\varepsilon_{s(j-1)+2}\cdots\varepsilon_{sj}g_{m}(\bm{Z}_{s(j-1)+1},\cdots,\bm{Z}_{sj}),
\end{align}
where $g_{m}\df f-\pi_{m-1}f$. We now have
\begin{align}
\kappa \sum_{(\sigma_{1},\cdots,\sigma_{n})\in \Pi_{n}} \psi_{m}(\bm{Z}_{\sigma_{1}},\cdots,\bm{Z}_{\sigma_{n}}) = \kappa s!(n-s)!\sum_{i_{1}<i_{2}<\cdots<i_{m}}\varepsilon_{i_{1}}\varepsilon_{i_{2}}\cdots\varepsilon_{i_{s}}g_{m}(\bm{Z}_{i_{1}},\cdots,\bm{Z}_{i_{s}}),
\end{align}
where $\Pi_{n}$ is the set of the permutations of the set $\{1,2,\cdots,n\}$. Therefore,
\begin{align}
\mathbb{P}_{s,n}^{\bm{\varepsilon}}(f-\pi_{m-1}f)=\dfrac{1}{n!} \sum_{(\sigma_{1},\cdots,\sigma_{n})\in \Pi_{n}} \psi_{m}(\bm{Z}_{\sigma_{1}},\cdots,\bm{Z}_{\sigma_{n}}). 
\end{align}
The function $\psi_{m}(\bm{Z}_{\sigma_{1}},\cdots,\bm{Z}_{\sigma_{n}})$ is the average of $\kappa$ i.i.d. random variables since the terms in the sum in Eq. \eqref{Eq:non_overlap} are non-overlapping. Let $r=2$. Due to Eq. \eqref{Eq:recognize}, we have that
\begin{align}
\nonumber
\|\varepsilon_{i}\varepsilon_{i+1}\cdots \varepsilon_{i+s}g_{m}(\bm{Z}_{i},\cdots,\bm{Z}_{i+s})\|_{L_{2}(P\otimes Q)}&=\|g_{m}(\bm{Z}_{i},\cdots,\bm{Z}_{i+s})\|_{L_{2}(P)}\\
&= \|f-\pi_{m}(f)\|_{L_{2}(P)}\leq \delta 2^{-m},
\end{align}
for all $f\in \mathcal{F}_{m}$, where $Q=\mathrm{Uniform}\{-1,+1\}$. Furthermore, $\mathbb{P}(\varepsilon_{i}\varepsilon_{i+1}\cdots \varepsilon_{i+s}g_{m}(\bm{Z}_{i},\cdots,\bm{Z}_{i+s}))=0$. Therefore, each term in the numerator of Eq. \eqref{Eq:non_overlap} is zero mean with the variance bounded by $\sigma^{2}=\delta^{2} 2^{-2m}$. Thus, $\psi_{m}(\bm{Z}_{\sigma_{1}},\cdots,\bm{Z}_{\sigma_{n}})$ is sub-Gaussian, \textit{i.e.}, 
\begin{align}
\label{Eq:Sub_Gaussian}
\expect\left[\exp\Big(s \psi_{m}(\bm{Z}_{\sigma_{1}},\cdots,\bm{Z}_{\sigma_{n}})\Big)\right]\leq \exp\left(\dfrac{\delta^{2} 2^{-2m} s^{2}}{2\kappa} \right), \quad \forall s>0.
\end{align}
The inequality in the previous display also implies
\begin{align}
\nonumber
\expect\big[\exp\left(s \mathbb{P}_{s,n}^{\bm{\varepsilon}}(f-\pi_{m-1}f)\right) \big]&=\expect\left[\exp\left( {s\over n!}\sum_{(\sigma_{1},\cdots,\sigma_{n})\in \Pi_{n}} \psi_{m}(\bm{Z}_{\sigma_{1}},\cdots,\bm{Z}_{\sigma_{n}}) \right)\right]\\ \nonumber
&\stackrel{\rm{(a)}}{\leq} \dfrac{1}{n!}\sum_{(\sigma_{1},\cdots,\sigma_{n})\in \Pi_{n}} \expect\left[\exp\left(s  \psi_{m}(\bm{Z}_{\sigma_{1}},\cdots,\bm{Z}_{\sigma_{n}}) \right)\right]\\
&\stackrel{\rm(b)}{\leq} \exp\left(\dfrac{\delta^{2} 2^{-2m} s^{2}}{2\kappa} \right), \quad \forall s>0,
\end{align}
where $\rm{(a)}$ is due to Jensen's inequality, and $\rm{(b)}$ is due to Eq. \eqref{Eq:Sub_Gaussian}. Therefore, $\mathbb{P}_{s,n}^{\bm{\varepsilon}}(f-\pi_{m-1}f)$ is also sub-Gaussian. We leverage the sub-Gaussian maximal inequality in Lemma \ref{Lemma:Generalized Sub-Gaussian Maximal Inequality} to obtain
\begin{align}
\label{Eq:2}
\mathbb{P}\left(\max_{f\in \mathcal{F}_{m}}| \mathbb{P}^{\bm{\varepsilon}}_{s,n}(f-\pi_{m-1}(f))|\right)^{p}\leq \dfrac{\delta^{p} 2^{-pm}}{\kappa^{p/2}} \left(\log \mathcal{N}\big(\delta 2^{-m}, \mathcal{F},L_{2}(P)\big)\right)^{p/2}.
\end{align}
Similarly, it can be shown that
\begin{align}
\label{Eq:3}
\mathbb{P}\left(\max_{f_{1},g_{1}\in\mathcal{F}_{1}}| \mathbb{P}^{\bm{\varepsilon}}_{s,n}(f_{1}-g_{1})| \right)^{p}\leq \dfrac{(\delta/2)^{p}}{\kappa^{p/2}} \left(\log \mathcal{N}\big(\delta/2, \mathcal{F},L_{2}(P)\big)\right)^{p/2}.
\end{align}
Substituing Eqs. \eqref{Eq:2} and \eqref{Eq:3} into Eq. \eqref{Eq:power} to obtain
\begin{align}
\label{Eq:Approximate_the_sum}
\mathbb{P}\left(\max_{f,g\in \bar{\mathcal{F}}}| \mathbb{P}^{\bm{\varepsilon}}_{s,n}(f-g)\big| \right)^{p}\leq 2^{p+1}M^{p-1}\sum_{m=1}^{M}\dfrac{(\delta 2^{-m})^{p}}{\kappa^{p/2}}\left(\log \mathcal{N}\big(\delta 2^{-m}, \mathcal{F},L_{2}(P)\big)\right)^{p/2}.
\end{align}
The function $s\mapsto (\log N(s,\mathcal{F},L_{2}(P)))^{p/2}$ is monotone increasing on $[0,\delta]$. Therefore, we can upper bound the sum in Eq. \eqref{Eq:Approximate_the_sum} with an integral as follows
\begin{align}
\nonumber
\mathbb{P}\left(\max_{f,g\in \bar{\mathcal{F}}}| \mathbb{P}^{\bm{\varepsilon}}_{s,n}(f-g)\big| \right)^{p}&\leq \dfrac{2^{p+2}}{\kappa^{p/2}}M^{p-1}\sum_{m=1}^{M}\int_{\delta 2^{-(m+1)}}^{\delta 2^{-m}} \left(\log \mathcal{N}(s,\mathcal{F},L_{2}(P)) \right)^{p/2}\mathrm{d}s\\
&\leq \dfrac{2^{p+2}}{\kappa^{p/2}}M^{p-1}\int_{\delta 2^{-(M+1)}}^{\delta/2}\left(\log \mathcal{N}(s,\mathcal{F},L_{2}(P)) \right)^{p/2}\mathrm{d}s.
\end{align}
We further proceed using the inequality $\varepsilon/4\leq \delta/2^{M+1}$ to plug for the value of $M$
\begin{align}
\label{Eq:COMB_00}
\mathbb{P}\left(\max_{f\in \bar{\mathcal{F}}}| \mathbb{P}^{\bm{\varepsilon}}_{s,n}(f-g)\big| \right)^{p}\leq \dfrac{2^{p+2}}{\kappa^{p/2}}\left( \dfrac{2\delta}{\varepsilon \log(2)}\right)^{p-1}\int_{\varepsilon/4}^{\delta/2}\left(\log \mathcal{N}(s,\mathcal{F},L_{2}(P)) \right)^{p/2}\mathrm{d}s.
\end{align}
Plugging Eq. \eqref{Eq:COMB_00} into Eq. \eqref{Eq:one_step_discretization} yields
\begin{align}
\nonumber
\mathbb{P}\left(\sup_{f,g\in \mathcal{F}}| \mathbb{P}^{\bm{\varepsilon}}_{s,n}(f-g)\big| \right)^{p}&\leq 2^{p}\mathbb{P}\left(\sup_{f,g\in \mathcal{F}:\|f-g\|_{L_{r}(P)}\leq \varepsilon} | \mathbb{P}^{\bm{\varepsilon}}_{s,n}(f-g)|\right)^{p}\\ \label{Eq:Upper_Bound_2}
&\hspace{4mm}+\dfrac{2^{p+2}}{\kappa^{p/2}}\left( \dfrac{2\delta}{\varepsilon \log(2)}\right)^{p-1}\int_{\varepsilon/4}^{\delta/2}\left(\log \mathcal{N}(s,\mathcal{F},L_{2}(P)) \right)^{p/2}\mathrm{d}s.
\end{align}
Now consider the following two cases:
\begin{itemize}[leftmargin=*]
\item $p=1$: In this case, the upper bound in Eq. \eqref{Eq:Upper_Bound_2} reduces to
\begin{align}
\nonumber
\mathbb{P}\left(\sup_{f,g\in \mathcal{F}}| \mathbb{P}^{\bm{\varepsilon}}_{s,n}(f-g)\big| \right)&\leq 2\mathbb{P}\left(\sup_{f,g\in \mathcal{F}:\|f-g\|_{L_{r}(P)}\leq \varepsilon} | \mathbb{P}^{\bm{\varepsilon}}_{s,n}(f-g)|\right)\\
&\hspace{4mm}+\dfrac{4}{\sqrt{\kappa}}\int_{\varepsilon/4}^{\delta/2}\sqrt{\log \mathcal{N}(s,\mathcal{F},L_{2}(P))}\mathrm{d}s.
\end{align}
Using the monotone convergance theorem \cite{endou2008lebesgue} yields
\begin{align}
\nonumber
\lim_{\varepsilon\downarrow 0}\mathbb{P}\left(\sup_{f,g\in \mathcal{F}:\|f-g\|_{L_{r}(P)}\leq \varepsilon} | \mathbb{P}^{\bm{\varepsilon}}_{s,n}(f-g)|\right)&=\mathbb{P}\left(\lim_{\varepsilon\downarrow 0} \sup_{f,g\in \mathcal{F}:\|f-g\|_{L_{r}(P)}\leq \varepsilon} | \mathbb{P}^{\bm{\varepsilon}}_{s,n}(f-g)|\right)\\&=0.
\end{align}
Therefore,
\begin{align}
\mathbb{P}\left(\sup_{f,g\in \mathcal{F}}| \mathbb{P}^{\bm{\varepsilon}}_{s,n}(f-g)\big| \right)\leq \dfrac{4}{\sqrt{\kappa}}\int_{0}^{\delta/2}\sqrt{\log \mathcal{N}(s,\mathcal{F},L_{2}(P))}\mathrm{d}s.
\end{align}
\item $p>1$: In this case, letting $\varepsilon\downarrow 0$ blows up the upper bound in Eq. \eqref{Eq:Upper_Bound_2}. However, we note that due to sub-Gaussianity of the empirical process, we have
\begin{align}
\mathbb{P}\left(\sup_{f,g\in \mathcal{F}:\|f-g\|_{L_{r}(P)}\leq \varepsilon} | \mathbb{P}^{\bm{\varepsilon}}_{s,n}(f-g)|\right)^{p}=\Omega(\varepsilon^{p}).
\end{align}
\end{itemize}

Let $g=0$ (zero function) and consider $p=1$ in the sequel.  Using the symmetrization result of Lemma \ref{Lemma:Symmetrization_for_U_statistic} and Markov inequality yields
\begin{align}
\prob\left(\|\mathbb{G}_{s,n}f\|_{\mathcal{F}}\geq t \right)\leq \dfrac{1}{t} \dfrac{4}{\sqrt{\kappa}}\int_{0}^{\delta/2}\sqrt{\log \mathcal{N}(s,\mathcal{F},L_{2}(P))}\mathrm{d}s.
\end{align}
for all $p\geq 1$. Therefore, with the probability of at least $1-\varrho$, we obtain
\begin{align}
\|\mathbb{G}_{s,n}f\|_{\mathcal{F}}\leq  \dfrac{1}{\varrho} \dfrac{4}{\sqrt{\kappa}}\int_{0}^{\delta/2}\sqrt{\log \mathcal{N}(s,\mathcal{F},L_{2}(P))}\mathrm{d}s.
\end{align}

\hfill $\blacksquare$
\subsection{Proof of Lemma \ref{Lemma:Bartlett}}
\label{Appendix:Proof_of_Lemma_Bartlett}

Consider the function class $\mathcal{F}'$ defined in Lemma \ref{Lemma:Bartlett}. Let $\bm{f}_{\bm{\omega}}\df (\sqrt{\omega_{1}}f^{k}_{i})_{i=1,2,\cdots,m}^{k=1,2,\cdots,N}, f^{k}_{i} \in \mathcal{F}$ and consider the $\epsilon$-covering balls 
\begin{align}
\ball_{2r}^{\epsilon}(f^{0}_{\ell})\df \left\{f\in \mathcal{F}:  \|f-f^{0}_{\ell}\|_{L_{2r}(Q)}\leq \epsilon \right\},
\end{align}
for $\ell\in \{1,2,\cdots,\mathcal{N}(\epsilon,\mathcal{F},L_{r}(Q))\}$. Let $\pi:\{1,2,\cdots,N\}\times \{1,2,\cdots,m\}\mapsto \{1,2,\cdots,\mathcal{N}(\epsilon,\mathcal{F},L_{r}(Q))\}$ denotes a mapping that assigns the $(i,k)$ index of $f_{i}^{k}$ to the center of a covering ball, \textit{i.e.}, $f^{i}_{k}\in \ball_{2r}^{\epsilon}(f^{0}_{\pi(i,k)})$. Since $\psi$ is $L_{\psi}$-Lipschitz, we obtain the following inequality
\begin{align}
\nonumber
&\left|\psi\Big({1\over N}y\tilde{y}\langle \bm{f}_{\bm{\omega}}(\bm{x}),\bm{f}_{\bm{\omega}}(\tilde{\bm{x}})\rangle\Big)-\psi\Big({1\over N}y\tilde{y}\langle \bm{f}^{0}_{\pi}(\bm{x}),\bm{f}^{0}_{\pi}(\tilde{\bm{x}})\rangle\Big)\right|\\ \nonumber
&\hspace{40mm} \leq {L_{\psi}\over N} \Big|y\tilde{y}\big\langle \bm{f}(\bm{x}),\bm{f}(\tilde{\bm{x}})\big\rangle-y\tilde{y}\big\langle \bm{f}_{\pi}^{0}(\bm{x}),\bm{f}_{\pi}^{0}(\tilde{\bm{x}})\big\rangle\Big|\\ \label{Eq:proceed}
&\hspace{40mm} ={L_{\psi}\over N}\Big|\big\langle \bm{f}(\bm{x}),\bm{f}(\tilde{\bm{x}})\big\rangle-\big\langle \bm{f}^{0}_{\pi}(\bm{x}),\bm{f}^{0}_{\pi}(\tilde{\bm{x}})\big\rangle\Big|,
\end{align}
where $\bm{f}_{\pi}^{0}\df (f^{0}_{\pi(i,k)})_{i=1,2,\cdots,m}^{k=1,2,\cdots,N}$, and in the last step follows by the fact that $|y\tilde{y}|=1$ due to binary class labels $y,\tilde{y}\in \{-1,+1\}$. We proceed from Eq. \eqref{Eq:proceed} by applying the triangle inequality
\begin{align}
\nonumber
&\Big|\psi\Big({1\over N}y\tilde{y}\langle \bm{f}_{\bm{\omega}}(\bm{x}),\bm{f}_{\bm{\omega}}(\tilde{\bm{x}})\rangle\Big)-\psi\Big({1\over N}y\tilde{y}\langle \bm{f}^{0}_{\pi}(\bm{x}),\bm{f}^{0}_{\pi}(\tilde{\bm{x}})\rangle\Big)\Big|\\ \nonumber
&\leq {L_{\psi}\over N}  \Big| \langle \bm{f}_{\bm{\omega}}(\bm{x}),\bm{f}_{\bm{\omega}}(\tilde{\bm{x}})\big\rangle-\big\langle \bm{f}_{\bm{\omega}}(\bm{x}),\bm{f}^{0}_{\pi}(\tilde{\bm{x}})\big\rangle \Big|+{L_{\psi}\over N} \Big| \big\langle \bm{f}_{\bm{\omega}}(\bm{x}),\bm{f}^{0}_{\pi}(\tilde{\bm{x}})\big\rangle-\big\langle \bm{f}^{0}_{\pi}(\bm{x}),\bm{f}^{0}_{\pi}(\tilde{\bm{x}})\big\rangle  \Big|\\ \nonumber
&\leq {L_{\psi}\over N} \sum_{k=1}^{N}\sum_{i=1}^{m}\omega_{i}|f^{k}_{i}(\bm{x})| \big|f^{k}_{i}(\tilde{\bm{x}})-f^{0}_{\pi(i,k)}(\tilde{\bm{x}})\big|+ {L_{\psi}\over N} \sum_{i=1}^{N}|f^{0}_{\pi(i)}(\tilde{\bm{x}})| \big|f^{k}_{i}(\bm{x})-f^{0}_{\pi(i,k)}(\bm{x})\big|\\ \nonumber
&\leq {L_{\psi}\over N}F(\bm{x})\sum_{k=1}^{N}\sum_{i=1}^{m}\omega_{i}\big|f_{i}^{k}(\tilde{\bm{x}})-f^{0}_{\pi(i,k)}(\tilde{\bm{x}})\big|+{L_{\psi}\over N}F(\tilde{\bm{x}})\sum_{i=1}^{N}\sum_{i=1}^{m}\omega_{i}\big|f_{i}^{k}(\bm{x})-f^{0}_{\pi(i,k)}(\bm{x})\big|,
\end{align}
where $F(\bm{x})\df \sup_{f\in \mathcal{F}}|f(\bm{x})|$ is the envelope function. We raise both sides of the inequality to the power of $r\geq 1$ and use the basic inequality $\left|\sum_{i=1}^{n}x_{i}\right|^{r}\leq n^{r-1}\sum_{i=1}^{n}|x_{i}|^{r}$ to obtain
\begin{align}
\nonumber
&\Big|\phi\Big({1\over N}y\tilde{y}\langle \bm{f}_{\bm{\omega}}(\bm{x}),\bm{f}_{\bm{\omega}}(\tilde{\bm{x}})\rangle\Big)-\phi\Big({1\over N}y\tilde{y}\langle \bm{f}^{0}_{\pi}(\bm{x}),\bm{f}^{0}_{\pi}(\tilde{\bm{x}})\rangle\Big)\Big|^{r}
\\  \nonumber
&\leq {(2Nm)^{r-1}\over N^{r}} L_{\psi}^{r}F^{r}(\bm{x})\sum_{k=1}^{N}\sum_{i=1}^{m}\omega^{r}_{i}\big|f_{i}^{k}(\tilde{\bm{x}})-f^{0}_{\pi(i,k)}(\tilde{\bm{x}})\big|^{r}\\
&\hspace{4mm}+{(2Nm)^{r-1}\over N^{r}} L_{\psi}^{r}F^{r}(\tilde{\bm{x}})\sum_{k=1}^{N}\sum_{i=1}^{m}\omega^{r}_{i} \big|f_{i}^{k}(\bm{x})-f^{0}_{\pi(i,k)}(\bm{x})\big|^{r}.
\end{align}
Taking the expectation with respect to the product of the joint distribution $P^{\otimes 2}, P\in \mathcal{M}(\mathcal{X}\times \mathcal{Y})$ with the marginal $Q\in \mathcal{M}(\mathcal{X})$ now yields
\begin{align}
\nonumber
&\Big\|\phi\Big({1\over N}y\tilde{y}\langle \bm{f}_{\bm{\omega}}(\bm{x}),\bm{f}_{\bm{\omega}}(\tilde{\bm{x}})\rangle\Big)-\phi\Big({1\over N}y\tilde{y}\langle \bm{f}^{0}_{\pi}(\bm{x}),\bm{f}^{0}_{\pi}(\tilde{\bm{x}})\rangle\Big)\Big\|^{r}_{L_{r}(P)}
\\ &\leq {(Nm)^{r-1}\over N^{r}}2^{r}L_{\psi}^{r}  \sum_{k=1}^{N}\sum_{i=1}^{m}\omega^{r}_{i}\left \|F\cdot \Big|f_{i}^{k}-f^{0}_{\pi(i,k)}\Big|\right\|^{r}_{L_{r}(Q)}.
\end{align}
Using the fact that $|F|=F$ yields
\begin{align}
\nonumber
&\Big\|\phi\Big({1\over N}y\tilde{y}\langle \bm{f}_{\bm{\omega}}(\bm{x}),\bm{f}_{\bm{\omega}}(\tilde{\bm{x}})\rangle\Big)-\phi\Big({1\over N}y\tilde{y}\langle \bm{f}^{0}_{\pi}(\bm{x}),\bm{f}^{0}_{\pi}(\tilde{\bm{x}})\rangle\Big)\Big\|^{r}_{L_{r}(P)}\\ \nonumber
&\stackrel{\rm{(a)}}{\leq} {(2Nm)^{r-1}\over N^{r}} \sum_{k=1}^{N}\sum_{i=1}^{m}\omega_{i}^{r}\|F^{2} \|_{L_{r}(Q)}^{r/2}\cdot  \|(f_{i}^{k}-f^{0}_{\pi(i,k)})^{2}\|^{r/2}_{L_{r}(Q)}\\ \label{Eq:Obtain}
&= {(Nm)^{r-1}\over N^{r}}2^{r}L^{r} \sum_{k=1}^{N}\sum_{i=1}^{m}\omega_{i}^{r}\|F\|_{L_{2r}(Q)}^{r}\cdot  \|f_{i}^{k}-f^{0}_{\pi(i,k)}\|^{r}_{L_{2r}(Q)},
\end{align}
where $\rm{(a)}$ is due to the generalized H\"{o}lder inequality $\||f|^{\theta}\cdot |g|^{1-\theta}\|_{p}\leq \|f\|_{p_{1}}^{\theta}\cdot \|g\|_{p_{2}}^{1-\theta}$ such that ${1\over p}={\theta\over p_{1}}+{(1-\theta)\over p_{2}}$ for $\theta\in (0,1)$. Due to the fact that $f_{i}^{k}\in \ball_{2r}^{\epsilon}\Big(f^{0}_{\pi(i,k)}\Big)$, we obtain from Eq. \eqref{Eq:Obtain} that
\begin{align}
\Big\|\phi\Big({1\over N}y\tilde{y}\langle \bm{f}_{\bm{\omega}}(\bm{x}),\bm{f}_{\bm{\omega}}(\tilde{\bm{x}})\rangle\Big)-\phi\Big({1\over N}y\tilde{y}\langle \bm{f}^{0}_{\pi}(\bm{x}),\bm{f}^{0}_{\pi}(\tilde{\bm{x}})\rangle\Big)\Big\|^{r}_{L_{r}(P)}\leq m^{r-1}(2L\|F\|_{L_{2r}(Q)}\epsilon\|\bm{\omega}\|_{r})^{r}.
\end{align}
Since $\bm{\omega}\in \mathrm{S}^{+}_{m}$, $\|\bm{\omega} \|_{r}\leq \|\bm{\omega}\|_{1}=1$ for $r\geq 1$. Therefore, since $m\geq 1$, we can simplify the upper bound as follows
\begin{align}
\nonumber
\Big\|\phi\Big({1\over N}y\tilde{y}\langle \bm{f}_{\bm{\omega}}(\bm{x}),\bm{f}_{\bm{\omega}}(\tilde{\bm{x}})\rangle\Big)-\phi\Big({1\over N}y\tilde{y}\langle \bm{f}^{0}_{\pi}(\bm{x}),\bm{f}^{0}_{\pi}(\tilde{\bm{x}})\rangle\Big)\Big\|^{r}_{L_{r}(P)}\leq (2mL\|F\|_{L_{2r}(Q)}\epsilon)^{r}.
\end{align}
Therefore, $\phi\in \ball_{r}^{\epsilon'}(\phi_{k}^{0})$, where $\epsilon'\df (2mL\|F\|_{L_{2r}(Q)}\epsilon)^{r}$, $\phi_{k}^{0}\df \phi\Big({1\over N}y\tilde{y}\langle \bm{f}^{0}_{\pi}(\bm{x}),\bm{f}^{0}_{\pi}(\tilde{\bm{x}})\rangle\Big)$, and the index $k\in \{0,1,\cdots, (\mathcal{N}(\epsilon,\mathcal{F},L_{2r}(Q)))^{mN}\}$. Consequently,
\begin{align}
\mathcal{N}((2mL\|F\|_{L_{2r}(Q)}\epsilon)^{r},\mathcal{F}',L_{r}(P))\leq (\mathcal{N}(\epsilon,\mathcal{F},L_{2r}(Q)))^{mN}.
\end{align}
Alternatively,
\begin{align}
\log \mathcal{N}(\epsilon',\mathcal{F}',L_{r}(P))\leq mM\log \left(\mathcal{N}\left(\dfrac{(\epsilon')^{1\over r}}{2mL\|F\|_{L_{2r}(Q)}},\mathcal{F},L_{2r}(Q)\right)\right).
\end{align}
\hfill $\blacksquare$
\section{Useful Properties of the Sub-Exponential and Sub-Gaussian Random Variables}
\label{Appendix:Properties of Sub-Exponential and Sub-Gaussian Random Variables}
Recall the notions of the sub-Gaussian and sub-exponential norms from Definition \ref{Def:Orlicz}. In the sequel we collect several useful lemmas regarding the properties of the sub-Gaussian and sub-exponential random variables:

\begin{lemma} \textsc{(Sum of Sub-Gaussian and Sub-Exponential Random Variables)}
	\label{Lemma:Sum_Sub_Gaussian}	
	There exists a universal constant $C_{s}$, such that the sum of two dependent sub-Gaussian random variables with parameters $K_{1}$ and $K_{2}$ is $C_{s}(K_{1} +K_{2})$-sub-Gaussian, and the sum of two dependent sub-exponential random variables with parameters $K_{1}$ and $K_{2}$ is $C_{s}(K_{1}+K_{2})$-sub-exponential. $\hfill \square$
\end{lemma}

\begin{lemma}\textsc{(Norm of Difference of Sub-Gaussian Random Variables)}
	\label{Lemma:Norm_of_Difference}
	Suppose $\bm{X}\in \real^{d}$ and $\bm{Y}\in \real^{d}$ are two sub-Gaussian random vectors such that $\|\bm{X}\|_{\psi_{2}}= K_{1}$ and $\|\bm{Y}\|_{\psi_{2}}= K_{2}$. Then, $Z=\|\bm{X}-\bm{Y}\|^{2}_{2}$ is sub-exponential with the Orlicz constant of
	\begin{align}
	\label{Eq:Orlicz_Bound_Difference}
	\|Z\|_{\psi_{1}}\leq 2K_{1}+2K_{2}+K_{1}K_{2}.
	\end{align}
	Furthermore, its moments satisfy the following upper bounds
	\begin{align}
	\label{Eq:Satisfy_moments}
	\expect[|Z|^{p}]\leq 2^{2p-1}d^{p}C_{2p}\|\bm{X}\|_{\psi_{2}}^{2p}+2^{2p-1}d^{p}C_{2p}\|\bm{Y}\|^{2p}_{\psi_{2}}.
\end{align}
$\hfill \square$
\end{lemma} 

\textit{Proof}. We begin the proof by expanding the squared $\ell_{2}$-norm as follows
\begin{align}
\label{Eq:expansion}
Z=\|\bm{X}\|_{2}^{2}+\|\bm{Y}\|_{2}^{2}-2\langle \bm{X},\bm{Y} \rangle
\end{align}
We now prove that each term on the right hand side is sub-exponential. First, consider the squared terms $\|\bm{X}\|_{2}^{2}$ and $\|\bm{Y}\|_{2}^{2}$. We can assume $\|\bm{X}\|_{\psi_{2}}=1$ by scale invariance. Let $\bm{W}\sim \mathsf{N}(0,\bm{I}_{d\times d})$ be independent of $\bm{X}$. It is known that $\expect[e^{\lambda Z^{2}}]=\dfrac{1}{\sqrt{1-2\lambda}}, Z\sim \mathsf{N}(0,1)$, and $\lambda<1/2$. So, we have that
\begin{align*}
\expect[\lambda\|\bm{W}\|_{2}^{2}]=\prod_{i=1}^{d}\expect[\lambda W_{i}^{2}]=\dfrac{1}{(1-2\lambda)^{d/2}},
\end{align*}
for any $\lambda<1/2$. Now, for any $\lambda>0$, we evaluate the quantity $\expect[\sqrt{2\lambda}\langle \bm{X},\bm{W} \rangle]$ in two ways by conditioning on $\bm{X}$ and then $\bm{W}$. We have
\begin{align}
\nonumber 
\expect[\exp(\sqrt{2\lambda}\langle \bm{X},\bm{W} \rangle)]&=\expect\left[\expect\left[\exp(\sqrt{2\lambda}\langle \bm{X},\bm{W} \rangle)\right]|\bm{X}\right]\\ \nonumber
&= \expect\left[\exp(\|\sqrt{2\lambda}\bm{X}\|_{2}^{2}/2)\right]\\ \label{Eq:COMBI_1}
&=\expect[e^{\lambda\|\bm{X}\|_{2}^{2}}].
\end{align}
On the other hand, we have
\begin{align}
\nonumber
\expect\Big[\exp(\sqrt{2\lambda}\langle \bm{X},\bm{W} \rangle)\Big]&=\expect\left[\expect\left[\exp(\sqrt{2\lambda}\langle \bm{X},\bm{W} \rangle)\right]\Bigg|\bm{W}\right]\\ \nonumber
&\leq \expect\left[e^{\|\sqrt{2\lambda}\bm{X}\|_{2}/2}\right]\\  \nonumber
&=\expect\left[e^{\lambda\|\bm{W}\|_{2}^{2}}\right]\\  \label{Eq:COMBI_2}
&=\dfrac{1}{(1-2\lambda)^{d/2}}.
\end{align}
Now, by Jensen's inequality $\expect\Big[e^{\lambda \|\bm{X}\|_{2}^{2}}\Big]\geq e^{\lambda\expect[\|\bm{X}\|_{2}^{2}]}$. Further, putting together Eqs. \eqref{Eq:COMBI_1} and \eqref{Eq:COMBI_2}, yields
\begin{align*}
\expect\left[e^{\lambda (\|\bm{X}\|_{2}^{2}-\expect[\|\bm{X}\|_{2}^{2}])}\right]&\leq \expect\left[e^{\lambda\|\bm{X}\|_{2}^{2}}\right]\\
&\leq \dfrac{1}{(1-2\lambda)^{d/2}}\\
&\leq \exp(4\lambda^{2}),
\end{align*}  
\textit{i.e.}, $\|\bm{X}\|_{2}^{2}$ satisfies the Laplace's property for $\lambda<1/2$, and is thus sub-exponential with the Orlicz norm of $\| \|\bm{X}\|_{2}^{2}\|_{\psi_{1}}\leq 2K_{1}$. Similarly, $\| \|\bm{Y}\|_{2}^{2}\|_{\psi_{1}}\leq 2K_{2}$. Now, consider the inner product term $\langle \bm{X},\bm{Y} \rangle$. Using the iterated law of expectation and the Laplace inequality of sub-Gaussian random variables, we obtain that for all $\lambda>0$ 
\begin{align}
\nonumber
\expect\Big[\exp(\lambda\langle \bm{X},\bm{Y} \rangle)\Big]&=\expect[\expect\Big[\exp(\lambda \langle \bm{X},\bm{Y} \rangle)|\bm{Y}\Big]]\\
\nonumber
&=\expect\left[\expect\left[\exp\left({\lambda\|\bm{Y}\|_{2}} \left\langle \bm{X},{\bm{Y}\over \|\bm{Y}\|_{2}} \right\rangle\right)\Bigg|\bm{Y}\right]\right]\\
&= \expect\left[\exp\left({1\over 2}\lambda^{2}K_{1}^{2} \|\bm{Y}\|_{2}^{2}\right)\right],
\end{align}
where we used the fact that $\bm{X}\in \real^{d}$ is a sub-Gaussian vector if $\langle \bm{u},\bm{X}\rangle$ is sub-Gaussian for any vector $\bm{u}\in \mathrm{S}^{d-1}$. Now, since $\|\bm{Y}\|_{2}^{2}$ is $2K_{2}$-sub-exponential, we conclude that if $\lambda\leq \dfrac{1}{2K_{1}^{2}K^{2}_{2}}$, we have that 
\begin{align}
\nonumber
\expect\Big[\exp(\lambda\langle \bm{X},\bm{Y} \rangle)\Big]&= \expect\Big[\exp({1\over 2}\lambda^{2}K_{1}^{2} \|\bm{Y}\|_{2}^{2})\Big],\\ \nonumber
&=\sum_{p\geq 0}\dfrac{\lambda^{2p}K_{1}^{2p}}{2^{p}p!}\expect\left[\|\bm{Y}\|_{2}^{2p}\right]\\ \nonumber
&\leq \sum_{p\geq 0}\dfrac{\lambda^{2p}K_{1}^{2p}}{p!}(2K_{2})^{2p}\\
&=\exp(\lambda^{2}K_{1}^{2}K_{2}^{2}),
\end{align}
where the inequality follows from Definition \ref{Definition:Alternative} of the sub-exponential random variable. Thus, we proved that $\|\langle \bm{X},\bm{Y} \rangle\|_{\psi_{1}}\leq K_{1}K_{2}$. Now, from the expansion in Eq. \eqref{Eq:expansion}, we observe that $Z$ is the sum of sub-exponential random variables. By Lemma \ref{Lemma:Sum_Sub_Gaussian}, $Z$ is also sub-exponential with the constant $\|Z\|_{\psi_{1}}=2K_{1}+2K_{2}+K_{1}K_{2}$.

To compute the moments of the random variable $Z$, we use the following expansion
\begin{align}
\nonumber
\expect[|Z|^{p}]&=\expect\left[\|\bm{X}-\bm{Y}\|^{2p}_{2}\right]\\
\nonumber
&\stackrel{\rm{(a)}}{\leq} \expect\left[(\|\bm{X}\|_{2}+\|\bm{Y}\|_{2})^{2p}\right]\\ \label{Eq:Hey_Hey}
&\stackrel{\rm{(b)}}{\leq} 2^{2p-1}\expect\Big[\|\bm{X}\|_{2}^{2p}\Big]+2^{2p-1}\expect\Big[\|\bm{Y}\|_{2}^{2p}\Big].
\end{align}
where $\rm{(a)}$ is due to the triangle inequality, and $\rm{(b)}$ is due to the basic inequality $|x-y|^{p}\leq 2^{p-1}(|x|^{p}+|y|^{p})$. Now, for a sub-Gaussian random vector $\bm{X}\in \real^{d}$, the following inequality holds for all $\bm{u}\in \real^{d}$ independent of $\bm{X}$ (cf. \cite[Theorem 2.1]{boucheron2013concentration}),
\begin{align}
\expect\big[\langle \bm{u},\bm{X} \rangle|^{2p}\big]\leq C_{2p}\|\bm{u}\|_{2}^{2p}\|\bm{X}\|_{\psi_{2}}^{2p},
\end{align}
where $C_{2p}=p!2^{p+1}$. Letting $\bm{u}=\bm{e}_{j}\in \real^{d}$, yields that
\begin{align}
\label{Eq:Plugged}
\expect\big[|X_{j}|_{2}^{2p}\big]\leq C_{2p}\|\bm{X}\|_{\psi_{2}}^{2p},
\end{align}
where we used the fact that $\|\bm{e}_{j}\|_{2}=1$. Now, since
\begin{align}
\expect[\|\bm{X}\|_{2}^{2p}]=\expect\left[\left(\sum_{j=1}^{d}X^{2}_{j}\right)^{p}\right]\leq d^{p-1}\sum_{j=1}^{d} \expect\big[|X_{j}|_{2}^{2p}\big]\leq C_{2p}d^{p}\|\bm{X}\|_{\psi_{2}}^{2p}.
\end{align}
Plugging Inequality \eqref{Eq:Plugged} into \eqref{Eq:Hey_Hey} yields
\begin{align}
\expect[|Z|^{p}]\leq 2^{2p-1}d^{p}C_{2p} \|\bm{X}\|_{\psi_{2}}^{2p}+2^{2p-1}d^{p}C_{2p}\|\bm{Y}\|^{2p}_{\psi_{2}}.
\end{align} 
\hfill $\blacksquare$

The proof of lemma \ref{Lemma:Sum_Sub_Gaussian} follows directly from the equivalet definition of sub-Gaussian and sub-exponential random variables based on the Orlicz norms.

\begin{lemma}\textsc{(Bernstein inequality for sub-exponential random variables)} 
\label{Lemma:Bernstein inequality}	
Let $X_{1}, \cdots,X_{n}$ be independent sub-exponential random variables with $\|X_{i}\|_{\psi_{1}}\leq b$. Then, there exists a universal constant $c>0$ such that, for all $\delta>0$,
\begin{align}
	\label{Eq:Bernstein_Inequality}
	\prob\left(\left|\sum_{i=1}^{n}(X_{i}-\expect[X_{i}])\right|\geq \delta\right)\leq \exp\left(-c\cdot \min\left({\delta^{2}\over nb^{2}},{\delta\over b}\right)\right).
\end{align}
\end{lemma}

The following lemma provides the Orlicz norm of a sub-Gaussian random variable satisfying a concentration inequality:
\begin{lemma}
	\label{Lemma:Con_to_Sub_Gaussian}
Let $Z$ denotes a random variable with zero mean $\expect[Z]=0$ satisfying the following concentration inequality:
\begin{align}
\prob\left(|Z|\geq \delta\right)\leq \exp(-\lambda\delta^{2}).
\end{align}
Then, $Z$ is sub-Gaussian with the Orlicz norm of $\|Z\|_{\psi_{2}}\leq 1/\sqrt{2\lambda}$.
\end{lemma}

\textit{Proof}: We establish an upper bound on the $2p$-moments of the random variable $|Z|$ as follows
\begin{align}
\nonumber
\expect[|Z|^{2p}]&=\int_{0}^{\infty}2pu^{2p-1} \prob(|Z|\geq u)\mathrm{d}u\\
\nonumber
&\leq \int_{0}^{\infty}2pu^{2p-1}\exp(-\lambda u^{2})\mathrm{d}u\\
&\leq p\lambda^{-p}  \Gamma\left(p \right),
\end{align}
for $p\in \integer$, where we used the fact that $\Gamma(x)\df \int_{0}^{\infty}t^{x-1}\exp(-t)\mathrm{d}t$. Now, since $\Gamma(p)=(p-1)!$, we have 
\begin{align}
\expect[\exp(|Z|^{2}/\beta^{2})]&=\sum_{p=0}^{\infty}\dfrac{1}{p!\beta^{2p}}\expect[|Z|^{2p}]\\
&\leq \sum_{p=0}^{\infty}\left(\dfrac{1}{\beta^{2}\lambda}\right)^{p}\\
&\leq \dfrac{1}{1-\beta^{2}\lambda}.
\end{align}
Therefore, when $\beta\leq1/\sqrt{2\lambda}$, we have  $\expect[\exp(|Z|^{2}/\beta^{2})]\leq 2$, \textit{i.e.}, $\|Z\|_{\psi_{2}}\leq 1/\sqrt{2\lambda}$. \hfill $\blacksquare$
		
\begin{lemma}
\label{Lemma:Generalized Sub-Gaussian Maximal Inequality}
\textsc{(Sub-Gaussian Maximal Inequality)} Let $X_{1},\cdots,X_{n}$ denotes centered sub-Gaussian random variables with the variance proxy of $\sigma^{2}$. Then, for any $p\geq 1$, we have 
\begin{align}
\expect\Big[\max_{1\leq i\leq N} |X_{i}|^{p}\Big]\leq \sigma^{p}\left(\log(2N)\right)^{p\over 2}.
\end{align}
\begin{proof}
The proof for the case of $p=1$ can be found in literature; see, \textit{e.g.}, \cite{van1996maximal}. To establish the proof for the general case, consider the Orlicz function $\psi(x)=e^{x^{2\over p}}-1$. Since for $p\geq 2$, $\psi$ is a non-negative, covex, strictly increasing function on $\real_{+}$, we have
\begin{align}
\nonumber
\psi\left(\expect\left[\max_{1\leq i\leq N} \dfrac{|X_{i}|^{p}_{i}}{\sigma^{p}}\right] \right)&\leq \expect\left[\max_{1\leq i\leq N}\psi \left(\dfrac{|X_{i}|^{p}_{i}}{\sigma^{p}}  \right)\right]\\
\nonumber
&\leq \sum_{i\leq N}\expect\left[\psi\left(\dfrac{|X_{i}|^{p}}{\sigma^{p}} \right)\right]\\
\nonumber
&\leq \sum_{i\leq N}\expect\left[\exp\left(\dfrac{|X_{i}|^{2}}{\sigma^{2}} \right)\right]-1\\
\nonumber
&\leq N,
\end{align}
where the last step follows from the definition of the sub-Gaussian random variable. Taking the inverse we obtain
\begin{align}
\nonumber
\expect\left[\max_{1\leq i\leq N} {|X_{i}|^{p}_{i}}\right] &\leq \sigma^{p}\psi^{-1}(N)\\ \nonumber
&\leq \sigma^{p}(\log(N+1))^{p\over 2}\\
&\leq \sigma^{p}(\log(2N))^{p\over 2}.
\end{align}

\end{proof}
\end{lemma}

\bibliographystyle{amsplain}
\bibliography{mybib19}

\providecommand{\bysame}{\leavevmode\hbox to3em{\hrulefill}\thinspace}
\providecommand{\MR}{\relax\ifhmode\unskip\space\fi MR }
\providecommand{\MRhref}[2]{%
  \href{http://www.ams.org/mathscinet-getitem?mr=#1}{#2}
}
\providecommand{\href}[2]{#2}
\begin{thebibliography}{10}

\bibitem{abadeh2015distributionally}
Soroosh~Shafieezadeh Abadeh, Peyman Mohajerin~Mohajerin Esfahani, and Daniel
  Kuhn, \emph{Distributionally robust logistic regression}, Advances in Neural
  Information Processing Systems, 2015, pp.~1576--1584.

\bibitem{aiolli2015easymkl}
Fabio Aiolli and Michele Donini, \emph{Easy{MKL}: a scalable multiple kernel
  learning algorithm}, Neurocomputing \textbf{169} (2015), 215--224.

\bibitem{athalye2018robustness}
Anish Athalye and Nicholas Carlini, \emph{On the robustness of the cvpr 2018
  white-box adversarial example defenses}, arXiv preprint arXiv:1804.03286
  (2018).

\bibitem{athalye2018obfuscated}
Anish Athalye, Nicholas Carlini, and David Wagner, \emph{Obfuscated gradients
  give a false sense of security: Circumventing defenses to adversarial
  examples}, arXiv preprint arXiv:1802.00420 (2018).

\bibitem{bartlett2002rademacher}
Peter~L Bartlett and Shahar Mendelson, \emph{Rademacher and gaussian
  complexities: Risk bounds and structural results}, Journal of Machine
  Learning Research \textbf{3} (2002), no.~Nov, 463--482.

\bibitem{bellec2014concentration}
P~Bellec, \emph{Concentration of quadratic forms under a {B}ernstein moment
  assumption}, Tech. report, Technical report, Ecole Polytechnique, 2014.

\bibitem{ben2013robust}
Aharon Ben-Tal, Dick Den~Hertog, Anja De~Waegenaere, Bertrand Melenberg, and
  Gijs Rennen, \emph{Robust solutions of optimization problems affected by
  uncertain probabilities}, Management Science \textbf{59} (2013), no.~2,
  341--357.

\bibitem{bertsimas2018data}
Dimitris Bertsimas, Vishal Gupta, and Nathan Kallus, \emph{Data-driven robust
  optimization}, Mathematical Programming \textbf{167} (2018), no.~2, 235--292.

\bibitem{bhatia1997matrix}
Rajendra Bhatia, \emph{Matrix analysis graduate texts in mathematics, 169},
  1997.

\bibitem{blanchet2017data}
J~Blanchet, Y~Kang, F~Zhang, and K~Murthy, \emph{Data-driven optimal transport
  cost selection for distributionally robust optimization}, stat \textbf{1050}
  (2017), 19.

\bibitem{blanchet2019robust}
Jose Blanchet, Yang Kang, and Karthyek Murthy, \emph{Robust wasserstein profile
  inference and applications to machine learning}, Journal of Applied
  Probability \textbf{56} (2019), no.~3, 830--857.

\bibitem{bochner2005harmonic}
Salomon Bochner, \emph{Harmonic analysis and the theory of probability},
  Courier Corporation, 2005.

\bibitem{bonnans2013perturbation}
J~Fr{\'e}d{\'e}ric Bonnans and Alexander Shapiro, \emph{Perturbation analysis
  of optimization problems}, Springer Science \& Business Media, 2013.

\bibitem{boucheron2013concentration}
St{\'e}phane Boucheron, G{\'a}bor Lugosi, and Pascal Massart,
  \emph{Concentration inequalities: A nonasymptotic theory of independence},
  Oxford university press, 2013.

\bibitem{bunel2018unified}
Rudy~R Bunel, Ilker Turkaslan, Philip Torr, Pushmeet Kohli, and Pawan~K
  Mudigonda, \emph{A unified view of piecewise linear neural network
  verification}, Advances in Neural Information Processing Systems, 2018,
  pp.~4790--4799.

\bibitem{cao2017mitigating}
Xiaoyu Cao and Neil~Zhenqiang Gong, \emph{Mitigating evasion attacks to deep
  neural networks via region-based classification}, Proceedings of the 33rd
  Annual Computer Security Applications Conference, 2017, pp.~278--287.

\bibitem{carlini2017adversarial}
Nicholas Carlini and David Wagner, \emph{Adversarial examples are not easily
  detected: Bypassing ten detection methods}, Proceedings of the 10th ACM
  Workshop on Artificial Intelligence and Security, 2017, pp.~3--14.

\bibitem{carlini2017towards}
\bysame, \emph{Towards evaluating the robustness of neural networks}, 2017 ieee
  symposium on security and privacy (sp), IEEE, 2017, pp.~39--57.

\bibitem{chen2007robust}
Xin Chen, Melvyn Sim, and Peng Sun, \emph{A robust optimization perspective on
  stochastic programming}, Operations research \textbf{55} (2007), no.~6,
  1058--1071.

\bibitem{cheng2017maximum}
Chih-Hong Cheng, Georg N{\"u}hrenberg, and Harald Ruess, \emph{Maximum
  resilience of artificial neural networks}, International Symposium on
  Automated Technology for Verification and Analysis, Springer, 2017,
  pp.~251--268.

\bibitem{cohen2019certified}
Jeremy~M Cohen, Elan Rosenfeld, and J~Zico Kolter, \emph{Certified adversarial
  robustness via randomized smoothing}, arXiv preprint arXiv:1902.02918 (2019).

\bibitem{cortes2009new}
Corinna Cortes, Mehryar Mohri, and Afshin Rostamizadeh, \emph{New
  generalization bounds for learning kernels}, arXiv preprint arXiv:0912.3309
  (2009).

\bibitem{cortes2010two}
\bysame, \emph{Two-stage learning kernel algorithms.}, ICML, 2010,
  pp.~239--246.

\bibitem{cortes2012algorithms}
\bysame, \emph{Algorithms for learning kernels based on centered alignment},
  Journal of Machine Learning Research \textbf{13} (2012), no.~Mar, 795--828.

\bibitem{croce2019provable}
Francesco Croce, Maksym Andriushchenko, and Matthias Hein, \emph{Provable
  robustness of relu networks via maximization of linear regions}, the 22nd
  International Conference on Artificial Intelligence and Statistics, 2019,
  pp.~2057--2066.

\bibitem{delage2010distributionally}
Erick Delage and Yinyu Ye, \emph{Distributionally robust optimization under
  moment uncertainty with application to data-driven problems}, Operations
  research \textbf{58} (2010), no.~3, 595--612.

\bibitem{do2009margin}
Huyen Do, Alexandros Kalousis, Adam Woznica, and Melanie Hilario, \emph{Margin
  and radius based multiple kernel learning}, Joint European Conference on
  Machine Learning and Knowledge Discovery in Databases, Springer, 2009,
  pp.~330--343.

\bibitem{duchi2016statistics}
John Duchi, Peter Glynn, and Hongseok Namkoong, \emph{Statistics of robust
  optimization: A generalized empirical likelihood approach}, arXiv preprint
  arXiv:1610.03425 (2016).

\bibitem{dudley1978central}
Richard~M Dudley, \emph{Central limit theorems for empirical measures}, The
  Annals of Probability (1978), 899--929.

\bibitem{dutta2018output}
Souradeep Dutta, Susmit Jha, Sriram Sankaranarayanan, and Ashish Tiwari,
  \emph{Output range analysis for deep feedforward neural networks}, NASA
  Formal Methods Symposium, Springer, 2018, pp.~121--138.

\bibitem{dvijotham2018training}
Krishnamurthy Dvijotham, Sven Gowal, Robert Stanforth, Relja Arandjelovic,
  Brendan O'Donoghue, Jonathan Uesato, and Pushmeet Kohli, \emph{Training
  verified learners with learned verifiers}, arXiv preprint arXiv:1805.10265
  (2018).

\bibitem{ehlers2017formal}
Ruediger Ehlers, \emph{Formal verification of piece-wise linear feed-forward
  neural networks}, International Symposium on Automated Technology for
  Verification and Analysis, Springer, 2017, pp.~269--286.

\bibitem{el2010spectrum}
Noureddine El~Karoui et~al., \emph{The spectrum of kernel random matrices}, The
  Annals of Statistics \textbf{38} (2010), no.~1, 1--50.

\bibitem{endou2008lebesgue}
Noboru Endou, Keiko Narita, and Yasunari Shidama, \emph{The lebesgue monotone
  convergence theorem}, Formalized Mathematics \textbf{16} (2008), no.~2,
  167--175.

\bibitem{esfahani2018data}
Peyman~Mohajerin Esfahani and Daniel Kuhn, \emph{Data-driven distributionally
  robust optimization using the wasserstein metric: Performance guarantees and
  tractable reformulations}, Mathematical Programming \textbf{171} (2018),
  no.~1-2, 115--166.

\bibitem{fan2018dnn}
Yingying Fan, Jinchi Lv, and Jingbo Wang, \emph{{DNN}: A two-scale
  distributional tale of heterogeneous treatment effect inference}, Available
  at SSRN 3238897 (2018).

\bibitem{fan2015spectral}
Zhou Fan and Andrea Montanari, \emph{The spectral norm of random inner-product
  kernel matrices}, Probability Theory and Related Fields (2015), 1--59.

\bibitem{gao2016distributionally}
Rui Gao and Anton~J Kleywegt, \emph{Distributionally robust stochastic
  optimization with wasserstein distance}, arXiv preprint arXiv:1604.02199
  (2016).

\bibitem{goodfellow2014explaining}
Ian~J Goodfellow, Jonathon Shlens, and Christian Szegedy, \emph{Explaining and
  harnessing adversarial examples}, arXiv preprint arXiv:1412.6572 (2014).

\bibitem{gretton2012kernel}
Arthur Gretton, Karsten~M Borgwardt, Malte~J Rasch, Bernhard Sch{\"o}lkopf, and
  Alexander Smola, \emph{A kernel two-sample test}, Journal of Machine Learning
  Research \textbf{13} (2012), no.~Mar, 723--773.

\bibitem{haussler1995sphere}
David Haussler, \emph{Sphere packing numbers for subsets of the boolean n-cube
  with bounded vapnik-chervonenkis dimension}, Journal of Combinatorial Theory,
  Series A \textbf{69} (1995), no.~2, 217--232.

\bibitem{hazan2012partition}
Tamir Hazan and Tommi Jaakkola, \emph{On the partition function and random
  maximum a-posteriori perturbations}, arXiv preprint arXiv:1206.6410 (2012).

\bibitem{he2015delving}
Kaiming He, Xiangyu Zhang, Shaoqing Ren, and Jian Sun, \emph{Delving deep into
  rectifiers: Surpassing human-level performance on imagenet classification},
  Proceedings of the IEEE international conference on computer vision, 2015,
  pp.~1026--1034.

\bibitem{hu2013kullback}
Zhaolin Hu and L~Jeff Hong, \emph{Kullback-{L}eibler divergence constrained
  distributionally robust optimization}, Available at Optimization Online
  (2013).

\bibitem{kasiviswanathan2015spectral}
Shiva~Prasad Kasiviswanathan and Mark Rudelson, \emph{Spectral norm of random
  kernel matrices with applications to privacy}, arXiv preprint
  arXiv:1504.05880 (2015).

\bibitem{khosravi2019non}
Khashayar Khosravi, Greg Lewis, and Vasilis Syrgkanis, \emph{Non-parametric
  inference adaptive to intrinsic dimension}, arXiv preprint arXiv:1901.03719
  (2019).

\bibitem{kurakin2016adversarial}
Alexey Kurakin, Ian Goodfellow, and Samy Bengio, \emph{Adversarial machine
  learning at scale}, arXiv preprint arXiv:1611.01236 (2016).

\bibitem{lam2015quantifying}
H~Lam and E~Zhou, \emph{Quantifying input uncertainty in stochastic
  optimization}, Proceedings of the 2015 Winter Simulation Conference. IEEE,
  2015.

\bibitem{lanckriet2004learning}
Gert~RG Lanckriet, Nello Cristianini, Peter Bartlett, Laurent~El Ghaoui, and
  Michael~I Jordan, \emph{Learning the kernel matrix with semidefinite
  programming}, Journal of Machine learning research \textbf{5} (2004),
  no.~Jan, 27--72.

\bibitem{laurent2000adaptive}
Beatrice Laurent and Pascal Massart, \emph{Adaptive estimation of a quadratic
  functional by model selection}, Annals of Statistics (2000), 1302--1338.

\bibitem{lauriola2020mklpy}
Ivano Lauriola and Fabio Aiolli, \emph{Mklpy: a python-based framework for
  multiple kernel learning}, arXiv preprint arXiv:2007.09982 (2020).

\bibitem{lauriola2017radius}
Ivano Lauriola, Mirko Polato, and Fabio Aiolli, \emph{Radius-margin ratio
  optimization for dot-product boolean kernel learning}, International
  conference on artificial neural networks, Springer, 2017, pp.~183--191.

\bibitem{lauriola2018minimum}
\bysame, \emph{The minimum effort maximum output principle applied to multiple
  kernel learning.}, ESANN, 2018.

\bibitem{lecun1998gradient}
Yann LeCun, L{\'e}on Bottou, Yoshua Bengio, Patrick Haffner, et~al.,
  \emph{Gradient-based learning applied to document recognition}, Proceedings
  of the IEEE \textbf{86} (1998), no.~11, 2278--2324.

\bibitem{lecuyer2019certified}
Mathias Lecuyer, Vaggelis Atlidakis, Roxana Geambasu, Daniel Hsu, and Suman
  Jana, \emph{Certified robustness to adversarial examples with differential
  privacy}, 2019 IEEE Symposium on Security and Privacy (SP), IEEE, 2019,
  pp.~656--672.

\bibitem{li2018certified}
Bai Li, Changyou Chen, Wenlin Wang, and Lawrence Carin, \emph{Certified
  adversarial robustness with additive gaussian noise}, arXiv preprint
  arXiv:1809.03113 (2018).

\bibitem{li2020tilted}
Tian Li, Ahmad Beirami, Maziar Sanjabi, and Virginia Smith, \emph{Tilted
  empirical risk minimization}, arXiv preprint arXiv:2007.01162 (2020).

\bibitem{madry2017towards}
Aleksander Madry, Aleksandar Makelov, Ludwig Schmidt, Dimitris Tsipras, and
  Adrian Vladu, \emph{Towards deep learning models resistant to adversarial
  attacks}, arXiv preprint arXiv:1706.06083 (2017).

\bibitem{mcdiarmid1989method}
Colin McDiarmid, \emph{On the method of bounded differences}, Surveys in
  combinatorics \textbf{141} (1989), no.~1, 148--188.

\bibitem{miyato2018virtual}
Takeru Miyato, Shin-ichi Maeda, Masanori Koyama, and Shin Ishii, \emph{Virtual
  adversarial training: a regularization method for supervised and
  semi-supervised learning}, IEEE transactions on pattern analysis and machine
  intelligence \textbf{41} (2018), no.~8, 1979--1993.

\bibitem{namkoong2016stochastic}
Hongseok Namkoong and John~C Duchi, \emph{Stochastic gradient methods for
  distributionally robust optimization with f-divergences}, Advances in neural
  information processing systems, 2016, pp.~2208--2216.

\bibitem{naor2008concentration}
Assaf Naor, \emph{Concentration of measure}, lecture notes.

\bibitem{papernot2016distillation}
Nicolas Papernot, Patrick McDaniel, Xi~Wu, Somesh Jha, and Ananthram Swami,
  \emph{Distillation as a defense to adversarial perturbations against deep
  neural networks}, 2016 IEEE Symposium on Security and Privacy (SP), IEEE,
  2016, pp.~582--597.

\bibitem{parikh2014proximal}
Neal Parikh and Stephen Boyd, \emph{Proximal algorithms}, Foundations and
  Trends in optimization \textbf{1} (2014), no.~3, 127--239.

\bibitem{pena1999decoupling}
V{\'\i}ctor~H Pe{\~n}a and Evarist Gin{\'e}, \emph{Decoupling: From dependence
  to independence}, Springer, 1999.

\bibitem{pollard1990empirical}
David Pollard, \emph{Empirical processes: theory and applications}, NSF-CBMS
  regional conference series in probability and statistics, JSTOR, 1990,
  pp.~i--86.

\bibitem{qiu2008framework}
Shibin Qiu and Terran Lane, \emph{A framework for multiple kernel support
  vector regression and its applications to sirna efficacy prediction},
  IEEE/ACM Transactions on Computational Biology and Bioinformatics \textbf{6}
  (2008), no.~2, 190--199.

\bibitem{radon1921mengen}
Johann Radon, \emph{Mengen konvexer k{\"o}rper, die einen gemeinsamen punkt
  enthalten}, Mathematische Annalen \textbf{83} (1921), no.~1, 113--115.

\bibitem{rahimi2008random}
Ali Rahimi and Benjamin Recht, \emph{Random features for large-scale kernel
  machines}, Advances in neural information processing systems, 2008,
  pp.~1177--1184.

\bibitem{rahimi2009weighted}
\bysame, \emph{Weighted sums of random kitchen sinks: Replacing minimization
  with randomization in learning}, Advances in neural information processing
  systems, 2009, pp.~1313--1320.

\bibitem{rony2019decoupling}
J{\'e}r{\^o}me Rony, Luiz~G Hafemann, Luiz~S Oliveira, Ismail~Ben Ayed, Robert
  Sabourin, and Eric Granger, \emph{Decoupling direction and norm for efficient
  gradient-based l2 adversarial attacks and defenses}, Proceedings of the IEEE
  Conference on Computer Vision and Pattern Recognition, 2019, pp.~4322--4330.

\bibitem{rudelson2013hanson}
Mark Rudelson, Roman Vershynin, et~al., \emph{Hanson-wright inequality and
  sub-gaussian concentration}, Electronic Communications in Probability
  \textbf{18} (2013).

\bibitem{samangouei2018defense}
Pouya Samangouei, Maya Kabkab, and Rama Chellappa, \emph{Defense-gan:
  Protecting classifiers against adversarial attacks using generative models},
  arXiv preprint arXiv:1805.06605 (2018).

\bibitem{schoenberg1938metric}
Isaac~J Schoenberg, \emph{Metric spaces and completely monotone functions},
  Annals of Mathematics (1938), 811--841.

\bibitem{shahrampourlearning}
Shahin Shahrampour and Vahid Tarokh, \emph{Learning bounds for greedy
  approximation with explicit feature maps from multiple kernels}.

\bibitem{sinha2016learning}
Aman Sinha and John~C Duchi, \emph{Learning kernels with random features},
  Advances in Neural Information Processing Systems, 2016, pp.~1298--1306.

\bibitem{sinha2017certifying}
Aman Sinha, Hongseok Namkoong, and John Duchi, \emph{Certifying some
  distributional robustness with principled adversarial training}, arXiv
  preprint arXiv:1710.10571 (2017).

\bibitem{szegedy2013intriguing}
Christian Szegedy, Wojciech Zaremba, Ilya Sutskever, Joan Bruna, Dumitru Erhan,
  Ian Goodfellow, and Rob Fergus, \emph{Intriguing properties of neural
  networks}, arXiv preprint arXiv:1312.6199 (2013).

\bibitem{talagrand1995concentration}
Michel Talagrand, \emph{Concentration of measure and isoperimetric inequalities
  in product spaces}, Publications Math{\'e}matiques de l'Institut des Hautes
  Etudes Scientifiques \textbf{81} (1995), no.~1, 73--205.

\bibitem{tanabe2008simple}
Hiroaki Tanabe, Tu~Bao Ho, Canh~Hao Nguyen, and Saori Kawasaki, \emph{Simple
  but effective methods for combining kernels in computational biology}, 2008
  IEEE International Conference on Research, Innovation and Vision for the
  Future in Computing and Communication Technologies, IEEE, 2008, pp.~71--78.

\bibitem{tolstikhin2017minimax}
Ilya Tolstikhin, Bharath~K Sriperumbudur, and Krikamol Muandet, \emph{Minimax
  estimation of kernel mean embeddings}, The Journal of Machine Learning
  Research \textbf{18} (2017), no.~1, 3002--3048.

\bibitem{uesato2018adversarial}
Jonathan Uesato, Brendan O'Donoghue, Aaron van~den Oord, and Pushmeet Kohli,
  \emph{Adversarial risk and the dangers of evaluating against weak attacks},
  arXiv preprint arXiv:1802.05666 (2018).

\bibitem{van1996maximal}
Aad~W van~der Vaart and Jon~A Wellner, \emph{Maximal inequalities and covering
  numbers}, Weak Convergence and Empirical Processes, Springer, 1996,
  pp.~95--106.

\bibitem{van1996weak}
Aad~W Van Der~Vaart and Jon~A Wellner, \emph{Weak convergence}, Weak
  convergence and empirical processes, Springer, 1996, pp.~16--28.

\bibitem{vapnik2015uniform}
Vladimir~N Vapnik and A~Ya Chervonenkis, \emph{On the uniform convergence of
  relative frequencies of events to their probabilities}, Measures of
  complexity, Springer, 2015, pp.~11--30.

\bibitem{vershynin2010introduction}
Roman Vershynin, \emph{Introduction to the non-asymptotic analysis of random
  matrices}, arXiv preprint arXiv:1011.3027 (2010).

\bibitem{yang2012multiple}
Tianbao Yang, Mehrdad Mahdavi, Rong Jin, Lijun Zhang, and Yang Zhou,
  \emph{Multiple kernel learning from noisy labels by stochastic programming},
  arXiv preprint arXiv:1206.4629 (2012).

\bibitem{yu1997assouad}
Bin Yu, \emph{Assouad, fano, and le cam}, Festschrift for Lucien Le Cam,
  Springer, 1997, pp.~423--435.

\bibitem{zhang2018efficient}
Huan Zhang, Tsui-Wei Weng, Pin-Yu Chen, Cho-Jui Hsieh, and Luca Daniel,
  \emph{Efficient neural network robustness certification with general
  activation functions}, Advances in neural information processing systems,
  2018, pp.~4939--4948.

\end{thebibliography}



\end{document}